\tikzset{
    block/.style = {rectangle, draw, text width=5em, text centered, rounded corners, minimum height=4em},
    line/.style = {draw, -latex'}
}
\definecolor{darkgreen}{rgb}{0.0, 0.5, 0.0}
\newtheorem{theorem}{Theorem}
\newtheorem{proposition}[theorem]{Proposition}
\newtheorem{lemma}[theorem]{Lemma}
\newtheorem{corollary}[theorem]{Corollary}
\newtheorem{remark}[theorem]{Remark}
\newtheorem{conjecture}[theorem]{Conjecture}
\newcommand{\verteq}{\rotatebox{90}{$\,=$}}
\newcommand{\equalto}[2]{\underset{\scriptstyle\overset{\mkern4mu\verteq}{#2}}{#1}}
\title{Stereographic Spherical Sliced Wasserstein Distances}
\author[1]{Huy Tran*}
\author[1]{Yikun Bai*}
\author[1]{Abihith Kothapalli*}
\author[1]{\\Ashkan Shahbazi}
\author[1]{Xinran Liu}
\author[2]{Rocio Diaz Martin}
\author[1]{Soheil Kolouri}
\affil[1]{Department of Computer Science, Vanderbilt University}
\affil[2]{Department of Mathematics, Vanderbilt University}
\date{}
\begin{document}

\maketitle
\def\thefootnote{*}\footnotetext{These authors contributed equally to this work.}

\vspace{-.25in}
\begin{abstract}
Comparing spherical probability distributions is of great interest in various fields, including geology, medical domains, computer vision, and deep representation learning. The utility of optimal transport-based distances, such as the Wasserstein distance, for comparing probability measures has spurred active research in developing computationally efficient variations of these distances for spherical probability measures. This paper introduces a high-speed and highly parallelizable distance for comparing spherical measures using the stereographic projection and the generalized Radon transform, which we refer to as the Stereographic Spherical Sliced Wasserstein (S3W) distance. We carefully address the distance distortion caused by the stereographic projection and provide an extensive theoretical analysis of our proposed metric and its rotationally invariant variation. Finally, we evaluate the performance of the proposed metrics and compare them with recent baselines in terms of both speed and accuracy through a wide range of numerical studies, including gradient flows and self-supervised learning. Our code is available at \url{https://github.com/mint-vu/s3wd}.
\end{abstract}

\section{Introduction}
\label{sec:intro}

Applications involving distributions defined on a hypersphere are remarkably diverse, highlighting the importance of spherical geometries across various disciplines. These applications include: 1) mapping the distribution of geographic or geological features on celestial bodies, such as stars and planets \citep{jupp2020some,cabella2009statistical,perraudin2019deepsphere}, 2) magnetoencephalography (MEG) imaging \citep{vrba2001signal} in medical domains, 3) spherical image representations and $360\degree$ images \citep{coors2018spherenet,jiang20233d}, such as omnidirectional images in computer vision \citep{khasanova2017graph}, 4) texture mapping in computer graphics \citep{elad2005texture,ayelet2010texture}, and more recently, 5) deep representation learning, where the latent representation is often mapped to a bounded space, commonly a sphere, where cosine similarity is utilized for effective representation learning \citep{chen2020simple,wang2020understanding}.

The analysis of distributions on hyperspheres is traditionally approached through directional statistics, also referred to as circular/spherical statistics \citep{jammalamadaka2001topics,mardia2000directional, ley2017modern, pewsey2021recent}. This specialized field is dedicated to the statistical analysis of directions, orientations, and rotations. More recently, with the growing application of optimal transport theory \citep{villani2008optimal,peyre2018computational} in machine learning, due in part to its favorable statistical, geometrical, and topological properties, there has been an increasing interest in using optimal transport to compare spherical probability measures \cite{cui2019spherical,hamfeldt2022convergence}.

One of the main bottlenecks in optimal transport theory is its high computational cost, generally of cubic complexity. This has sparked extensive research to develop faster solvers \citep{cuturi2013sinkhorn, scetbon2022low, charikar2023fast} or computationally superior equivalent distances \citep{rabin2012wasserstein, bonneel2015sliced, kolouri2019generalized}. Notably, sliced variations of optimal transport distances, such as sliced Wasserstein distances \cite{rabin2012wasserstein} and their various extensions \cite{kolouri2019generalized,le2019tree,nguyen2022hierarchical}, have emerged as effective solutions. These methods use integral geometry and the Radon transform \cite{helgason2011integral} to represent high-dimensional distributions via a set of their one-dimensional marginals. By doing so, they take advantage of more efficient optimal transport solvers designed for one-dimensional probability measures, offering a pragmatic approach to comparing probability measures. Owing to their computational efficiency and implementation simplicity, these methods have been recently adapted for spherical measures, leading to the development of spherical sliced optimal transport methods \citep{bonet2022spherical, quellmalz2023sliced}. This comes as part of a broader effort to extend sliced Wasserstein distances to arbitrary manifolds; in this regard, we also highlight extensions of these methods to measures supported on compact manifolds \cite{Rustamov2020IntrinsicSW}, hyperbolic spaces \cite{bonet2023hyperbolic}, and symmetric positive~definite~matrices~\cite{bonet2023sliced}.

The main challenge in developing spherical sliced optimal transport methods lies in extending the classical Radon transform to its spherical counterparts. In the context of sliced optimal transport, such an extension must: 1) map probability measures on the hypersphere to a family of probability measures on one-dimensional domains (e.g., $\mathbb{R}$ or $\mathbb{S}^1$), and 2) be injective so that a sliced distance can be defined (otherwise, one will obtain only a pseudo-metric). These requirements rule out many of the existing extensions of the Radon transform to the sphere, e.g., the classic Funk-Radon transform, which takes integrals along all great circles \citep{helgason2011integral,quellmalz2020funk}. Recently, \citep{quellmalz2023sliced} proposed using two such spherical extensions for the Radon transform, namely, the vertical slice transform \citep{shepp1994spherical} and a normalized version of the semicircle transform \citep{groemer1998spherical}, to define sliced optimal transport on the sphere. Notably, the semicircle transform was also used in \citep{bonet2022spherical} to define a spherical sliced Wasserstein discrepancy for empirical probability measures. 

The recent works on sliced spherical optimal transport \cite{bonet2022spherical,quellmalz2023sliced} map a distribution defined on a hypersphere into its marginal distributions on a unit circle, thus requiring circular optimal transport to compare these marginals. Importantly, the calculation of optimal transport between two one-dimensional measures defined on a circle is more expensive (requiring an additional binary search) than when the measures are defined on the real line \cite{martin2023lcot,hundrieser2022statistics}. Motivated by this observation, in this paper, we explore a spherical Radon transform aimed at converting a spherical distribution into one-dimensional marginals along the real line. We utilize the stereographic projection, which is a smooth bijection, in composition with an injective (nonlinear) map to transform the hypersphere into a hyperplane, where we then apply the classic Radon transform. The key advantage of the injective map is its ability to manage the distortion of distances introduced by the stereographic projection. We support our proposed method with detailed theoretical analysis and comprehensive numerical studies.

\noindent\textbf{Contributions.} Our specific contributions are as follows: 
\begin{itemize}
\setlength\itemsep{-.05em}
    \item Introducing a computationally efficient transport distance, Stereographic Spherical Sliced Wasserstein distance (S3W), for spherical probability measures.
    \item Providing a rotationally invariant variation of the proposed distance, Rotationally Invariant Stereographic Spherical Sliced Wasserstein distance (RI-S3W).
    \item Offering theoretical analysis of the proposed distances.
    \item Demonstrating the performance, both in terms of speed and accuracy, of the proposed distances in diverse applications, including gradient flows on the sphere, representation learning with Sliced-Wasserstein Auto-Encoders (SWAEs), spherical density estimation via normalizing flows, sliced-Wasserstein variational inference on the sphere, and self-supervised learning.
\end{itemize}

\section{Background}
\label{sec:background}

\subsection{Stereographic Projection}
Stereographic projection is a mathematical technique to map points on a sphere onto a plane. Originating in Greek astronomy, it projects points from a sphere, typically from one of its poles, onto a plane tangential to the opposite pole or onto the ``equator plane.'' This projection is conformal, preserving angles and shapes locally, making it significant in fields like cartography, complex analysis, and computer graphics. It elegantly translates spherical geometry into planar terms, maintaining the intricate relationships between points and angles found on the sphere's surface.

\begin{figure*}[t!]
    \centering
    \includegraphics[width=\linewidth]{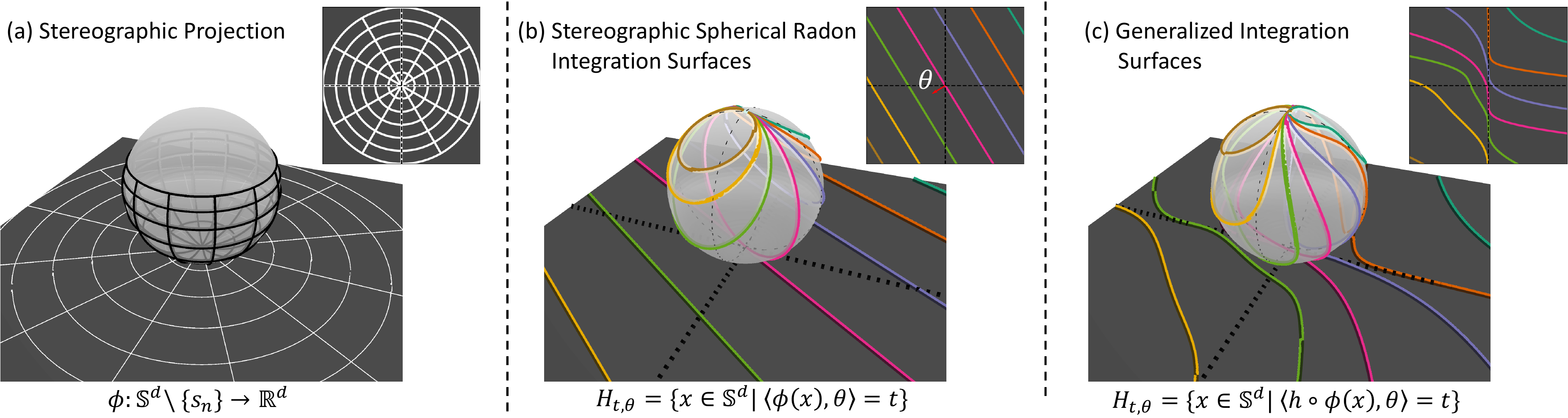}\vspace{-.1in}
    \caption{Depiction of stereographic projection from $\mathbb{S}^2\backslash \{s_n\}$ to $\mathbb{R}^2$ (a), the stereographic Radon transform integration surfaces on the sphere, i.e., the level sets of $\langle \phi(x),\theta\rangle$ for a fixed $\theta\in \mathbb{R}^d$ (b), and the generalized stereographic Radon transform integration surfaces on the sphere, i.e. the level sets of $\langle h\circ\phi(x),\theta\rangle$ for a fixed $\theta\in \mathbb{R}^{d'}$.}
\end{figure*}

Let \(\mathbb{S}^d\) denote the \(d\)-dimensional sphere in \(\mathbb{R}^{d+1}\) defined as 
\(\mathbb{S}^d := \{s \in \mathbb{R}^{d+1} : \|s\|_2 = 1\}\). The stereographic projection 
\(\phi: \mathbb{S}^d \setminus \{s_n\} \to \mathbb{R}^d\) maps a point \(s\in \mathbb{S}^d\) (excluding the ``North Pole," \(s_n = [0, \ldots, 0, 1]\)) to a point 
\(x\in \mathbb{R}^d\) by the formula:
\begin{equation}\label{eq: stereographic proj}
   [x]_i = \frac{2[s]_i}{1 - [s]_{d+1}}, \quad \text{for } i = 1, 2, \ldots, d.  
\end{equation}
This projection is a bijective and smooth mapping between \(\mathbb{S}^d \setminus \{s_n\}\) and 
\(\mathbb{R}^d\), i.e., the hyperplane tangent to the sphere at the south pole,~$s_{d+1}=-1$, which is commonly used as a way to visualize spherical geometries in an Euclidean space.

\subsection{Radon Transform}

The Radon transform is a fundamental tool in integral geometry, widely used for image reconstruction, particularly in computed tomography (CT). It transforms a function defined in the $d$-dimensional Euclidean space, $\mathbb{R}^d$, into an infinite set of its one-dimensional slices, i.e., its integrals over hyperplanes parametrized with unit vectors $\theta\in\mathbb{S}^{d-1}$. Radon transform is instrumental in medical imaging and has recently attracted ample attention from the machine learning community for its favorable characteristics for measuring distances between probability measures \cite{bonneel2015sliced,kolouri2016radon,kolouri2019generalized} and, more generally, positive measures \cite{bai2023sliced,sejourne2023unbalanced}, and more recently on theoretical analysis of deep neural networks \cite{parhi2023deep,unser2023ridges}. 

More formally, the classic Radon transform, denoted by \( \mathcal{R} \), maps a function \( f \) in the Lebesgue space \( L^1(\mathbb{R}^d) \) — the space of integrable functions on \( \mathbb{R}^d \) — to its integrals over the hyperplanes of \( \mathbb{R}^d \). Formally, it is defined as:
\begin{eqnarray}
    \mathcal{R} f(t,\theta) := \int_{\mathbb{R}^d} f(x)\delta(t-\langle x, \theta \rangle)dx,
\end{eqnarray}
for \( (t, \theta) \in \mathbb{R} \times \mathbb{S}^{d-1} \), where \( \delta(\cdot) \) is the one-dimensional Dirac delta function, and \( \langle \cdot, \cdot \rangle \) denotes the Euclidean inner product. Here, \( \mathcal{R} \) maps from \( L^1(\mathbb{R}^d) \) to \( L^1(\mathbb{R}\times \mathbb{S}^{d-1}) \). Each hyperplane, \( H(t, \theta) = \{x\in \mathbb{R}^d \ |\ \langle x, \theta \rangle = t\} \), corresponds to a level set of a function \( g: \mathbb{R}^d\times\mathbb{S}^{d-1}\rightarrow \mathbb{R} \), defined as \( g(x, \theta) = \langle x, \theta \rangle \). For a fixed \( \theta \), the set of all integrals over hyperplanes orthogonal to \( \theta \) yields the continuous function \( \mathcal{R}f(\cdot,\theta): \mathbb{R} \rightarrow \mathbb{R} \), representing a projection or slice of \( f \).

Notably, the Radon transform $\mathcal{R}:L^1(\mathbb{R}^d) \rightarrow L^1(\mathbb{R}\times \mathbb{S}^{d-1})$ is a linear bijection with a closed-form inversion formula. See the Appendix for details.


\subsection{Generalized Radon Transform}

The Generalized Radon Transform (GRT) extends the foundational concept of the classic Radon transform, as introduced by Radon in \cite{radon1917uber}, from integration over hyperplanes in $\mathbb{R}^d$ to integration over more complex structures, namely hypersurfaces or $(d-1)$-dimensional manifolds. This broader scope of the GRT has been developed and explored in various works \cite{beylkin1984inversion,denisyuk1994inversion,ehrenpreis2003universality,gel1969differential,kuchment2006generalized,homan2017injectivity}, in many applications from impedance and acoustic tomography 
 \cite{kuchment2006generalized} to machine learning \cite{kolouri2019generalized}.

The GRT of a function \( f \in L^1(\mathbb{R}^d) \) involves the integration of \( f \) over hypersurfaces in $\mathbb{R}^d$. These hypersurfaces are defined as the level sets of a `defining function' \cite{kolouri2019generalized}, \( g: \mathbb{R}^d \times (\mathbb{R}^{d'} \setminus \{0\}) \to \mathbb{R} \), 
characterized by \( H_{t,\theta} = \{ x \in \mathbb{R}^d | \, g(x, \theta) = t \} \). The GRT of \( f \), denoted as \( \mathcal{G}f \), is formally given by:
\begin{equation}
    \mathcal{G}f(t,\theta) := \int_{\mathbb{R}^d} f(x)\delta(t - g(x, \theta)) \, dx, \label{eq: GRT}
\end{equation}
where \( \delta \) represents the Dirac delta function, enabling the integration over the specific level sets defined by \( g \). Note that $g(x,\theta)=\langle x , \theta \rangle$ recovers the classic Radon transform. 

The injectivity of the GRT is essential for defining distances between measures through their generalized slices. The choice of \( g \) identifies whether GRT is injective or not. \cite{kolouri2019generalized} enumerate a set of necessary conditions on \(g\) to construct a bijective GRT. However, these conditions are not sufficient and are limited in practical utility, as they do not provide guidance on crafting specific defining functions that would ensure injectivity.

To achieve a practical injective GRT, \cite{chen2022augmented} recently introduced a variation of the GRT. They posited that by setting \( g(x,\theta) = \langle h(x), \theta \rangle \) for an injective function \( h: \mathbb{R}^d \rightarrow \mathbb{R}^{d'} \), one effectively applies the classic Radon transform, which is bijective, to the image of \( h \). This approach leads to an injective GRT, thus resolving the issue of invertibility. For consistency of notation with \cite{chen2022augmented}, we denote this variation of the GRT as: 
\vspace{-.05in}
\begin{eqnarray}
    \mathcal{H}f(t,\theta) := \int_{\mathbb{R}^d} f(x)\delta(t - \langle h(x), \theta\rangle) \, dx,
    \label{eq:augmented}
\end{eqnarray}
where $\mathcal{H}:L^1(\mathbb{R}^d)\rightarrow L^1(\mathbb{R}\times \mathbb{S}^{d'-1})$. Importantly, a neural network can effectively parametrize the injective function $h$. For instance, $h$ could be a normalizing flow \cite{kobyzev2020normalizing}. Alternatively, to avoid the complexities associated with normalizing flows, one can define $h(x)$ as $[x^T, \rho(x)^T]^T$, where $x^T$ and $\rho(x)^T$ are transposed vectors that are concatenated. Here, $\rho:\mathbb{R}^d\rightarrow \mathbb{R}^{d'-1}$ is any nonlinear function parametrized as a neural network of choice \cite{chen2022augmented}. In our experiments, we will use $\mathcal{H}$ as the default GRT due to its injectivity.

\subsection{GRT of Radon Measures}
\label{sec:grt_measures}
Skipping much of the theoretical details (refer to the appendix), for a Radon measure $\mu\in\mathcal{M}(\mathbb{R}^d)$, its Generalized Radon Transform  $\mathcal{G}(\mu)=\nu$ with respect to the defining function $g$, is defined as the measure $\nu\in \mathcal{M}(\mathbb{R}\times \mathbb{S}^{d'-1})$ such that for each $\psi\in C_0(\mathbb{R}\times\mathbb{S}^{d'-1})$, 
\begin{align}
\int_{\mathbb{R}\times \mathbb{S}^{d'-1}}\psi(t,\theta) \, d\nu(t,\theta)=\int_{\mathbb{R}^d}(\mathcal{G}^*(\psi))(x) \, d\mu(x), \label{eq: R(mu)}
\end{align}
where $\mathcal{G^*}$ is the dual operator (aka adjoint operator), which for any $\psi \in L^\infty(\mathbb{R}\times \mathbb{S}^{d'-1})$, is defined as
\begin{align}
    \mathcal{G}^*(\psi)(x)=\int_{ \mathbb{S}^{d'-1}}\psi(g(x,\theta),\theta) \, d\sigma_{d'}(\theta) \label{eq: GR*} \quad \forall x\in \mathbb{R}^d,
\end{align}
 where $\sigma_{d'}$ is the uniform probability measure defined in $\mathbb{S}^{d'-1}$. Importantly, the dual GRT operator satisfies,
$$\mathcal{G}(\mu)(\psi)=\mu(\mathcal{G}^*(\psi)).$$ 
Lastly, with a slight abuse of notation, we denote the corresponding slice for $\theta\in\mathbb{S}^{d'-1}$ as $\mathcal{G}(\mu)_\theta=g(\cdot,\theta)_\# \mu \in \mathcal{M}(\mathbb{R})$. Note that $f_\#\mu$ denotes the pushforward of measure $\mu$ with respect to $f$, defined as $f_\#\mu(A)=\mu(f^{-1}(A))$. When $\mu$ is a positive or a probability measure in $\mathbb{R}^d$, then $\mathcal{G}(\mu)_\theta$ is a positive/probability measure in $\mathbb{R}$.


\subsection{Wasserstein and Sliced Wasserstein Distances}

Let $M$ denote a Riemannian manifold endowed with the distance $d(\cdot,\cdot):M\times M \rightarrow \mathbb{R}_+$. For $1\leq p<\infty$, let $\mu, \nu\in \mathcal{P}_p(M):=\{\mu \in \mathcal{P}(M)| \, \int_M d^p(x,x_0) \, d\mu(x)<\infty~\text{for some}~ x_0\in M\}$ be two probability measures defined on manifold $M$ with a finite $p$-th moment. Then, the optimal transport (OT) problem \cite{villani2008optimal} seeks to transport the mass in $\mu$ into $\nu$ such that the expected traversed distance is minimized. This leads to the $p$-Wasserstein distance:
\begin{align}
    W_p^p(\mu,\nu):=\inf_{\gamma\in \Gamma(\mu,\nu)} \int_{M\times M} d^p(x,y)d\gamma(x,y),
    \label{eq:ot}
\end{align}
where $\Gamma(\mu,\nu)$ 
denotes the joint probability measures $\gamma\in\mathcal{P}(M\times M)$ with marginals $\mu$ and $\nu$. 
Unfortunately, for discrete probability measures with $N$ particles, solving \eqref{eq:ot} generally has a $\mathcal{O}(N^3 \log N)$ complexity. However, for $\mu, \nu \in \mathcal{P}_p(\mathbb{R})$ the problem can be solved in $\mathcal{O}(N\log N)$: 
\begin{align}
    W_p^p(\mu,\nu)=\int_{0}^1 \|F_{\mu}^{-1}(t)-F_{\nu}^{-1}(t)\|^p dt
\end{align}
where $F^{-1}_{\mu}$ and $F^{-1}_{\nu}$ are the quantile functions of $\mu$ and $\nu$. Notably, similar efficient solvers are developed for when $\mu,\nu\in \mathcal{P}_p(\mathbb{S}^{1})$ \cite{delon2010fast,hundrieser2022statistics,bonet2022spherical}.  For an injective generalized Radon transform $\mathcal{G}$, such that for $\mu\in\mathcal{P}_p(M)$
we have $\mathcal{G}(\mu)_\theta\in \mathcal{P}_p(\mathbb{R})$, the generalized Sliced-Wasserstein distance \cite{rabin2012wasserstein,kolouri2019generalized} can be defined as: 
\begin{align}    SW_{\mathcal{G},p}^p(\mu,\nu):=\int_{\mathbb{S}^{d'-1}} W_p^p(\mathcal{G}(\mu)_\theta,\mathcal{G}(\nu)_\theta)d\sigma_{d'}(\theta),
\end{align}
where $\sigma_{d'}\in\mathcal{P}(\mathbb{S}^{d'-1})$ is a probability measure possessing a non-zero density on the sphere $\mathbb{S}^{d'-1}$, often simply the uniform measure. Recently, \cite{bonet2022spherical} introduced the concept of Spherical Sliced Wasserstein distance, which involves projecting spherical measures onto great circles, resulting in $\mathcal{G}(\mu)_\theta\in \mathcal{P}_p(\mathbb{S}^1)$, and utilizing circular OT to measure distances between these slices. Notably, circular OT still necessitates solving an optimization problem to register cumulative distribution functions on a circle (i.e., finding an optimal cut). This results in a slower solver compared to OT on $\mathbb{R}$.

\section{Method}
\label{sec:method}
Based on the extended definition of GRT for probability measures in \ref{sec:grt_measures}, here we formally introduce the ``Stereographic Spherical Radon Transform'' and the corresponding sliced Wasserstein distance, ``Stereographic Spherical Sliced Wasserstein Distance,'' for spherical probability measures. 

\subsection{Stereographic Spherical Radon Transform}

Let \(\mu\in \mathcal{M}(\mathbb{S}^d)\) denote a Radon measure defined on \(\mathbb{S}^{d}\) that does not assign mass to the North Pole, i.e., $\mu(\{s_n\})=0$. We denote the stereographic projection as $\phi:\mathbb{S}^{d}\setminus \{s_n\}\to \mathbb{R}^d$, which is a bijection, and we have that $\phi_\#\mu$ is a Radon measure defined in $\mathbb{R}^d$. The \textbf{Stereographic Spherical Radon transform}, of $\mu$ is defined as 
\begin{align}
    \mathcal{S}_{\mathcal{R}}(\mu):=\mathcal{R}(\phi_\#\mu)\in \mathcal{M}(\mathbb{R}\times \mathbb{S}^{d-1}), \label{eq: SR} 
\end{align}
and we also define its generalized version as
\begin{align}
    \mathcal{S}_{\mathcal{G}}(\mu):=\mathcal{G}(\phi_\#\mu)\in \mathcal{M}(\mathbb{R}\times \mathbb{S}^{d'-1}). \label{eq: SG}
\end{align}
Similarly we define $\mathcal{S}_\mathcal{H}(\mu):= \mathcal{H}(\phi_\#\mu)$, which provides an invertible transformation from $\mathcal{M}(\mathbb{S}^d)$ to $\mathcal{M}(\mathbb{R}\times\mathbb{S}^{d'-1})$.

\begin{proposition}\label{pro: SSRT ori}
For $\mu\in \mathcal{M}(\mathbb{S}^{d})$ that does not give mass to the North Pole $\{s_n\}$
the Stereographic Spherical Radon transforms $\mathcal{S}_\mathcal{G}$ and $\mathcal{S}_\mathcal{H}$ satisfy the following properties: 
\begin{enumerate}
    \item[(1)]   
    $\mathcal{S}_\mathcal{G}(\mu),\mathcal{S}_\mathcal{H}(\mu)\in\mathcal{M}(\mathbb{R}\times \mathbb{S}^{d'-1})$. 
    In addition $\mathcal{S}_\mathcal{G}$ and $\mathcal{S}_\mathcal{H}$ preserves mass, and if  $\mu$ is a positive measure, then $\mathcal{S}_\mathcal{G}(\mu),\mathcal{S}_\mathcal{H}(\mu)$ are positive measures too. Finally, if $\mu\in \mathcal{P}(\mathbb{S}^{d}\setminus\{s_n\})$, then  
    $\mathcal{S}_\mathcal{G}(\mu),\mathcal{S}_\mathcal{H}(\mu)$ are probability measures defined on $\mathbb{R}\times \mathbb{S}^{d'-1}$.
    \item[(2)] The disintegration theorem gives a unique $\mathcal{S}_\mathcal{G}(\mu)-$a.s. set of measures $(\mathcal{S}_\mathcal{G}(\mu)_\theta)_{\theta\in \mathbb{S}^{d'-1}}\subset \mathcal{M}(\mathbb{R})$, and a unique $\mathcal{S}_\mathcal{H}(\mu)-$a.s. set of measures $(\mathcal{S}_\mathcal{H}(\mu)_\theta)_{\theta\in \mathbb{S}^{d'-1}}\subset \mathcal{M}(\mathbb{R})$  such that for any $\psi \in C_0(\mathbb{R}\times\mathbb{S}^{d'-1})$,
    \begin{align}
\int_{\mathbb{R}\times\mathbb{S}^{d'-1}}\psi(t,\theta) \, d\mathcal{S}_\mathcal{G}(\mu)(t,\theta) =\int_{\mathbb{S}^{d'-1}}\int_{\mathbb{R}}\psi(t,\theta) \, d\mathcal{S}_\mathcal{G}(\mu)_\theta(t) \, d\sigma_{d'}(\theta), \qquad 
\end{align}
    and similarly for $\mathcal{S}_{\mathcal{H}}$. Then, it holds that
    \begin{equation}\label{eq: slice general Sg and Sh}
      \mathcal{S}_\mathcal{G}(\mu)_\theta=(g(\cdot,\theta)\circ \phi)_\#\mu,
    \end{equation}
    and the same holds for $\mathcal{S}_\mathcal{H}$.
    \item[(3)] 
    $\mathcal{S}_\mathcal{H}$ is invertible. 
\end{enumerate}
\end{proposition}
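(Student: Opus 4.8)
The plan is to reduce every claim to two ingredients already in hand: that $\phi:\mathbb{S}^d\setminus\{s_n\}\to\mathbb{R}^d$ is a bijection and $\mu(\{s_n\})=0$, so that $\phi_\#\mu$ is a genuine Radon measure on $\mathbb{R}^d$; and the properties of $\mathcal{G},\mathcal{H}$ on Radon measures set up via the dual operator in Section~\ref{sec:grt_measures} (with the analytic details deferred to the appendix). Throughout I would work directly from the defining identity \eqref{eq: R(mu)}--\eqref{eq: GR*}, i.e.\ $\mathcal{S}_\mathcal{G}(\mu)(\psi)=(\phi_\#\mu)(\mathcal{G}^*\psi)$, and likewise for $\mathcal{S}_\mathcal{H}$.

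For (1), mass preservation follows by testing against $\psi\equiv 1$ (or an increasing sequence in $C_0$ converging to $1$): since $\mathcal{G}^*(1)(x)=\int_{\mathbb{S}^{d'-1}}1\,d\sigma_{d'}(\theta)=1$, the total mass of $\mathcal{S}_\mathcal{G}(\mu)$ equals $(\phi_\#\mu)(\mathbb{R}^d)=\mu(\mathbb{S}^d)$, and the same computation works for $\mathcal{S}_\mathcal{H}$. Positivity holds because $\psi\ge 0$ forces $\mathcal{G}^*(\psi)\ge 0$ pointwise — it is an average of values of $\psi$ against the probability measure $\sigma_{d'}$ — so $\mathcal{S}_\mathcal{G}(\mu)(\psi)=(\phi_\#\mu)(\mathcal{G}^*\psi)\ge 0$ when $\mu\ge 0$. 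Together these give that $\mathcal{S}_\mathcal{G}(\mu),\mathcal{S}_\mathcal{H}(\mu)$ are probability measures whenever $\mu$ is. The only substantive point — that these nonnegative linear functionals are continuous on $C_0(\mathbb{R}\times\mathbb{S}^{d'-1})$, hence representable by Radon measures — is precisely what Section~\ref{sec:grt_measures}/the appendix establishes, so here I would simply cite it.

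For (2), I would first identify the $\theta$-marginal: testing against $\psi(t,\theta)=\eta(\theta)$ gives $\mathcal{G}^*(\psi)(x)=\int\eta\,d\sigma_{d'}$, a constant, so the pushforward of $\mathcal{S}_\mathcal{G}(\mu)$ under $(t,\theta)\mapsto\theta$ is $\mu(\mathbb{S}^d)\,\sigma_{d'}$; the disintegration theorem along $\sigma_{d'}$ then produces the $\sigma_{d'}$-a.e.\ unique family $(\mathcal{S}_\mathcal{G}(\mu)_\theta)_\theta\subset\mathcal{M}(\mathbb{R})$ with the stated factorization. To pin down the slices, expand $\mathcal{S}_\mathcal{G}(\mu)(\psi)=\int_{\mathbb{R}^d}\int_{\mathbb{S}^{d'-1}}\psi(g(x,\theta),\theta)\,d\sigma_{d'}(\theta)\,d(\phi_\#\mu)(x)$, swap the order of integration by Fubini (valid since $\psi$ is bounded and both measures are finite), change variables $x=\phi(s)$, and recognize the inner integral as $\int_{\mathbb{R}}\psi(t,\theta)\,d[(g(\cdot,\theta)\circ\phi)_\#\mu](t)$. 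Comparing with the disintegration identity and using its uniqueness yields $\mathcal{S}_\mathcal{G}(\mu)_\theta=(g(\cdot,\theta)\circ\phi)_\#\mu$ for $\sigma_{d'}$-a.e.\ $\theta$; the argument for $\mathcal{S}_\mathcal{H}$ is verbatim with $g(x,\theta)$ replaced by $\langle h(x),\theta\rangle$.

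For (3), I would factor $\mathcal{S}_\mathcal{H}=\mathcal{R}\circ h_\#\circ\phi_\#$ and invert one factor at a time: $\mu\mapsto\phi_\#\mu$ is a bijection from measures on $\mathbb{S}^d$ not charging $s_n$ onto $\mathcal{M}(\mathbb{R}^d)$ with inverse $(\phi^{-1})_\#$; $h_\#$ is injective because $h$ is an injection admitting a measurable left inverse on its image (for the concatenation choice $h(x)=[x^\top,\rho(x)^\top]^\top$ this is just the coordinate projection, so $\phi_\#\mu$ is recovered by pushing forward along it); and $\mathcal{R}$ is a bijection on the relevant space of measures — the measure-level counterpart of the classical inversion formula, which I would take from the appendix. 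Composing the three inverses gives invertibility of $\mathcal{S}_\mathcal{H}$. I expect the main obstacle to be not any single computation but the measure-theoretic bookkeeping: specifying the exact domain/codomain on which $\mathcal{R}$ is a bijection at the level of measures, and verifying that $h_\#$ followed by the left-inverse pushforward is the identity, which is where injectivity of $h$ and measurability of $h^{-1}|_{h(\mathbb{R}^d)}$ enter.
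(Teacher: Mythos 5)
Your proposal is correct and follows essentially the same route as the paper: reduce everything to the dual-operator identity $\mathcal{S}_\mathcal{G}(\mu)(\psi)=(\phi_\#\mu)(\mathcal{G}^*\psi)$, test against $\psi\equiv 1$ and $\psi\ge 0$ for mass preservation and positivity, use Fubini plus uniqueness of the disintegration to identify the slices as $(g(\cdot,\theta)\circ\phi)_\#\mu$, and invert $\mathcal{S}_\mathcal{H}=\mathcal{R}\circ h_\#\circ\phi_\#$ factor by factor using the injectivity of $h$ and the measure-level invertibility of $\mathcal{R}$. Your explicit check that the $\theta$-marginal equals $\mu(\mathbb{S}^d)\,\sigma_{d'}$ and your remark on a measurable left inverse of $h$ are small technical refinements of steps the paper asserts without comment, not a different argument.
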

The proof of this proposition is included in the appendix Section \ref{sec:supp_ssr}. Being equipped with the proposed stereographic spherical Radon transform, we are now ready to define our proposed distance. 

\subsection{Stereographic Spherical Sliced Wasserstein }

Let $\mu,\nu\in \mathcal{P}_p(\mathbb{S}^{d})$ denote two probability measures on the unit sphere in $\mathbb{R}^{d+1}$. We introduce the formal definition of the Stereographic Spherical Sliced Wasserstein ($S3W$) distances as follows: 
\begin{align*}
S3W_{\mathcal{G},p}^p(\mu,\nu)&:=\int_{\mathbb{S}^{d'-1}}W_p^p(\mathcal{S}_\mathcal{G}(\mu)_\theta,\mathcal{S}_\mathcal{G}(\nu)_\theta) 
 \, d\sigma_{d'}(\theta).
\end{align*}
In this context, $\sigma_{d'}\in\mathcal{P}(\mathbb{S}^{d'-1})$ typically represents a probability measure possessing a non-zero density on the sphere $\mathbb{S}^{d'-1}\subset\mathbb{R}^{d'}$. However, for the sake of simplicity and in line with common practice, we opt to consider the uniform measure on $\mathbb{S}^{d'-1}$. Similarly, we can define $S3W_{\mathcal{H},p}^p$ by integrating over $W_p^p(\mathcal{S}_\mathcal{H}(\mu)_\theta,\mathcal{S}_\mathcal{H}(\nu)_\theta)$.

\begin{theorem}\label{th:main} The proposed 
 $S3W_{\mathcal{G},p}(\cdot,\cdot)$ and $S3W_{\mathcal{H},p}(\cdot,\cdot)$ are well-defined. 
Furthermore, $S3W_{\mathcal{G},p}(\cdot,\cdot)$ is generally a pseudo-metric in $\mathcal{P}_p(\mathbb{S}^{d}\setminus \{s_n\})$, i.e., it is non-negative, symmetric and satisfies triangular inequality. In addition, $S3W_{\mathcal{H},p}(\cdot,\cdot)$ defines a metric in $\mathcal{P}_p(\mathbb{S}^{d}\setminus\{s_n\})$. 
\end{theorem}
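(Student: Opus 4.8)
The plan is to establish the theorem in three stages, corresponding to (i) well-definedness, (ii) the pseudo-metric properties of $S3W_{\mathcal{G},p}$, and (iii) the strengthening to a genuine metric for $S3W_{\mathcal{H},p}$. For \textbf{well-definedness}, I would first invoke Proposition~\ref{pro: SSRT ori}: for $\mu\in\mathcal{P}_p(\mathbb{S}^d\setminus\{s_n\})$, each slice $\mathcal{S}_\mathcal{G}(\mu)_\theta = (g(\cdot,\theta)\circ\phi)_\#\mu$ is a probability measure on $\mathbb{R}$. I need it to lie in $\mathcal{P}_p(\mathbb{R})$ so that $W_p$ between slices is finite; this requires that $g(\cdot,\theta)\circ\phi$ has at most $p$-th order growth controlled by the spherical distance, which for $\mathcal{H}$ with $h(x)=[x^T,\rho(x)^T]^T$ needs a mild growth assumption on $\rho$ (or one restricts to bounded $\rho$, which is the practical case) — I would state this as the standing hypothesis under which the slice is $\mathcal{P}_p$. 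Then measurability of $\theta\mapsto W_p^p(\mathcal{S}_\mathcal{G}(\mu)_\theta,\mathcal{S}_\mathcal{G}(\nu)_\theta)$ follows from the disintegration in Proposition~\ref{pro: SSRT ori}(2) together with continuity properties of $W_p$ in weak topology, and finiteness of the integral over the compact $\mathbb{S}^{d'-1}$ follows by a uniform moment bound.

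For the \textbf{pseudo-metric properties} of $S3W_{\mathcal{G},p}$, I would argue pointwise in $\theta$ and then integrate. Non-negativity is immediate since $W_p\ge 0$. Symmetry follows from symmetry of $W_p$. For the triangle inequality, fix $\theta$ and use the triangle inequality for the $1$-dimensional Wasserstein distance $W_p(\cdot,\cdot)$ on $\mathcal{P}_p(\mathbb{R})$ applied to the three slices $\mathcal{S}_\mathcal{G}(\mu)_\theta,\mathcal{S}_\mathcal{G}(\rho)_\theta,\mathcal{S}_\mathcal{G}(\nu)_\theta$; this gives $\theta\mapsto W_p(\mathcal{S}_\mathcal{G}(\mu)_\theta,\mathcal{S}_\mathcal{G}(\nu)_\theta)$ bounded above by a sum, and then Minkowski's inequality in $L^p(\mathbb{S}^{d'-1},\sigma_{d'})$ upgrades this to the triangle inequality for $S3W_{\mathcal{G},p}$ itself — this is the standard argument for sliced Wasserstein distances and transfers verbatim because $S3W_{\mathcal{G},p}(\mu,\nu) = \|\,\theta\mapsto W_p(\mathcal{S}_\mathcal{G}(\mu)_\theta,\mathcal{S}_\mathcal{G}(\nu)_\theta)\,\|_{L^p(\sigma_{d'})}$.

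For the \textbf{metric property} of $S3W_{\mathcal{H},p}$, the only missing piece beyond the above is the identity of indiscernibles: $S3W_{\mathcal{H},p}(\mu,\nu)=0\implies\mu=\nu$. If the distance vanishes, then $W_p(\mathcal{S}_\mathcal{H}(\mu)_\theta,\mathcal{S}_\mathcal{H}(\nu)_\theta)=0$ for $\sigma_{d'}$-a.e.\ $\theta$, hence $\mathcal{S}_\mathcal{H}(\mu)_\theta=\mathcal{S}_\mathcal{H}(\nu)_\theta$ for a.e.\ $\theta$; by the disintegration identity in Proposition~\ref{pro: SSRT ori}(2) and continuity of $\theta\mapsto\mathcal{S}_\mathcal{H}(\mu)_\theta$ in weak topology this forces $\mathcal{S}_\mathcal{H}(\mu)=\mathcal{S}_\mathcal{H}(\nu)$ as measures on $\mathbb{R}\times\mathbb{S}^{d'-1}$. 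Now I invoke injectivity of $\mathcal{S}_\mathcal{H}$ from Proposition~\ref{pro: SSRT ori}(3) to conclude $\mu=\nu$. The main obstacle is the a.e.-in-$\theta$ to everywhere upgrade: I would handle it by showing $\theta\mapsto\mathcal{S}_\mathcal{H}(\mu)_\theta$ (equivalently $\theta\mapsto(\langle h(\cdot),\theta\rangle\circ\phi)_\#\mu$) is weakly continuous — which holds since $\theta\mapsto\langle h(x),\theta\rangle$ is continuous and $\mu$ is a fixed finite measure, so dominated convergence applies to test functions — and noting $\sigma_{d'}$ has full support; alternatively one can bypass this by testing $\mathcal{S}_\mathcal{H}(\mu)=\mathcal{S}_\mathcal{H}(\nu)$ directly against $\psi(t,\theta)$ of product form and using Fubini, which is cleaner. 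A secondary subtlety worth flagging is that $S3W_{\mathcal{G},p}$ is only a pseudo-metric precisely because $\mathcal{G}$ (unlike $\mathcal{H}$) need not be injective, so no identity-of-indiscernibles argument is available there; this is consistent with the discussion of defining functions in Section~\ref{sec:background}.
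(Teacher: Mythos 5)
Your proposal follows essentially the same route as the paper's proof: well-definedness is drawn from the properties of the stereographic spherical Radon transform in Proposition~\ref{pro: SSRT ori}, the pseudo-metric properties come from the pointwise triangle inequality for $W_p$ on $\mathcal{P}(\mathbb{R})$ upgraded by Minkowski's inequality in $L^p(\mathbb{S}^{d'-1},\sigma_{d'})$, and the identity of indiscernibles for $S3W_{\mathcal{H},p}$ is obtained by passing from $\sigma_{d'}$-a.e.\ equality of the slices to $\mathcal{S}_\mathcal{H}(\mu)=\mathcal{S}_\mathcal{H}(\nu)$ via the disintegration identity and then invoking the invertibility of $\mathcal{S}_\mathcal{H}$. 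Your extra attention to finite $p$-th moments of the slices and to measurability of $\theta\mapsto W_p^p$ is a refinement the paper does not spell out, but it does not alter the structure of the argument.
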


The proof is in the appendix Section \ref{sec:supp_ssr}.




\subsection{Distance Distortion}\label{sec:dist_distortion}


The Stereographic Projection, while being conformal (i.e., preserving angles), severely distorts distances. To demonstrate the extent of this distortion, consider points $s=(\epsilon,0,\ldots,0, \sqrt{1-\epsilon^2})$ and $s'=(-\epsilon,0,\ldots, 0, \sqrt{1-\epsilon^2})$. Then, as $\epsilon\rightarrow 0$, we have $\arccos(\langle s, s'\rangle)\to 0$, while $\|\phi(s)-\phi(s')\|\to \infty$! This distortion implies that the transportation cost after the stereographic projection would be significantly different from the transportation cost on the sphere, making the naive application of this projection with optimal transport unsuitable. 


In this section, we aim to construct an injective function $h$ that closely approximates the arclength on the sphere $\mathbb{S}^d$ with the Euclidean distance in the embedded space. Specifically, we seek to satisfy:
$$\|h(\phi(s))-h(\phi(s'))\|\approx \arccos(\langle s, s'\rangle),~\forall s,s'\in\mathbb{S}^d.$$ We introduce two variants of $h$: an analytical function $h_1(\cdot)$ and a neural network-based learnable function $h_{NN}(\cdot)$, both mapping to a nearly-isometric embedding from $\phi(s)$.

We start by defining the analytic function:
\begin{align}
    h_1(x):= \arccos\left(\frac{\|x\|^2-1}{\|x\|^2+1}\right)\frac{x}{\|x\|}, \quad \forall x\in\mathbb{R}^d
    \label{eq: h_1}
\end{align}
and provide the following proposition. 

\begin{proposition}
\label{pro:distortion}
For $s_0,s,s'\in \mathbb{S}^{d}$, where $s_0$ denotes the South Pole, the stereographic projection $\phi$, and $h_1$ as defined in Eq. \eqref{eq: h_1} we have: 
\begin{itemize}
\item $h_1(\phi(s))=\angle(s,s_0)\frac{s[1:d]}{\|s[1:d]\|}$. Thus, we have
    $$ \|h_1(\phi(s))-h_1(\phi(s'))\|\leq 2\pi$$
    \item The following inequality holds,
    \begin{align*}
        \arccos(\langle s,& s'\rangle) \leq \|h_1(\phi(s))-h_1(\phi(s'))\|+\epsilon(s,s')
    \end{align*}
    where the equality holds when $s$, $s'$, and $s_0$ are in the same great circle, 
    and $\epsilon(s,s')\to 0$ as $d_{\mathbb{S}^d}(s, s')\to 0$. 
\end{itemize}
\end{proposition}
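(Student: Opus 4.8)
The plan is to work directly with the explicit formula for the stereographic projection and simplify $h_1 \circ \phi$ by hand, then deduce the two bounds from elementary spherical trigonometry. First I would compute $\|\phi(s)\|^2$: writing $s = (s[1:d], s_{d+1})$ with $\|s[1:d]\|^2 = 1 - s_{d+1}^2$, the formula \eqref{eq: stereographic proj} gives $\|\phi(s)\|^2 = \frac{4\|s[1:d]\|^2}{(1-s_{d+1})^2} = \frac{4(1-s_{d+1}^2)}{(1-s_{d+1})^2} = \frac{4(1+s_{d+1})}{1-s_{d+1}}$. Substituting this into $\frac{\|x\|^2-1}{\|x\|^2+1}$ with $x = \phi(s)$ yields, after clearing denominators, a quantity that simplifies to exactly $s_{d+1}$ (this is the main routine computation; it is short). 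Hence $\arccos\!\left(\frac{\|\phi(s)\|^2-1}{\|\phi(s)\|^2+1}\right) = \arccos(s_{d+1}) = \angle(s, s_0)$, since $s_0 = (0,\dots,0,-1)$ gives $\langle s, s_0\rangle = -s_{d+1}$... — I should be careful here with the sign/pole convention, but $\arccos(s_{d+1})$ is precisely the polar angle measured from the north pole, equivalently $\pi - \angle(s,s_0)$; I would state it in whichever form matches the claim, noting $h_1(\phi(s)) = \angle(s,s_0)\frac{s[1:d]}{\|s[1:d]\|}$ after the correct identification. Finally, since $\frac{\phi(s)}{\|\phi(s)\|} = \frac{s[1:d]}{\|s[1:d]\|}$ directly from \eqref{eq: stereographic proj} (the projection scales the first $d$ coordinates by a positive factor), this gives the claimed closed form. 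The bound $\|h_1(\phi(s)) - h_1(\phi(s'))\| \le 2\pi$ is then immediate: each vector $h_1(\phi(s))$ is a unit vector scaled by a factor in $[0,\pi]$, so each has norm at most $\pi$, and the triangle inequality gives $2\pi$.

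For the second part, I would interpret $h_1(\phi(s))$ geometrically: it is the point in $\mathbb{R}^d$ at distance $\angle(s, s_0)$ from the origin in the direction of $s$'s "horizontal" component — i.e., it is essentially the image of $s$ under the inverse exponential map $\exp_{s_0}^{-1}$ at the south pole, identifying the tangent space $T_{s_0}\mathbb{S}^d$ with $\mathbb{R}^d$. Thus $\|h_1(\phi(s))\| = d_{\mathbb{S}^d}(s, s_0)$ exactly, and $h_1 \circ \phi$ is radially isometric through $s_0$. The equality case follows: when $s, s', s_0$ lie on a common great circle, the direction vectors $\frac{s[1:d]}{\|s[1:d]\|}$ and $\frac{s'[1:d]}{\|s'[1:d]\|}$ are either equal or antipodal in $\mathbb{R}^d$, so $\|h_1(\phi(s)) - h_1(\phi(s'))\|$ equals $|\angle(s,s_0) \pm \angle(s',s_0)|$, which is exactly the geodesic distance $d_{\mathbb{S}^d}(s,s') = \arccos(\langle s, s'\rangle)$ along that great circle. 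For the general inequality, I would set $\epsilon(s,s') := \arccos(\langle s, s'\rangle) - \|h_1(\phi(s)) - h_1(\phi(s'))\|$ and show it is nonnegative and vanishes as $d_{\mathbb{S}^d}(s,s') \to 0$. Nonnegativity is the substantive point: the inverse exponential map at a point on a sphere of nonnegative curvature is distance-nonincreasing (a standard comparison-geometry fact — Rauch/Toponogov comparison, or here an elementary argument via the spherical law of cosines), so $\|h_1(\phi(s)) - h_1(\phi(s'))\| = \|\exp_{s_0}^{-1}(s) - \exp_{s_0}^{-1}(s')\| \le d_{\mathbb{S}^d}(s,s')$. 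The vanishing of $\epsilon$ as $d_{\mathbb{S}^d}(s,s') \to 0$ then follows from the fact that $\exp_{s_0}^{-1}$ is a smooth local diffeomorphism that is a first-order isometry at every point, so distances and their images agree to first order.

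The main obstacle, and the step I would spend the most care on, is establishing $\epsilon(s,s') \ge 0$ cleanly — i.e., that passing through the south-pole chart $h_1 \circ \phi$ never increases distances. The slick route is to invoke the comparison inequality for the inverse exponential map on a round sphere, but since the paper is largely self-contained I would prefer an elementary proof: parametrize $s = \exp_{s_0}(u)$, $s' = \exp_{s_0}(v)$ with $u = h_1(\phi(s))$, $v = h_1(\phi(s'))$, write $\langle s, s'\rangle$ via the spherical law of cosines in terms of $\|u\|$, $\|v\|$, and the angle between $u$ and $v$, and then compare $\arccos$ of that expression with $\|u - v\|$ using the Euclidean law of cosines; the inequality reduces to a monotonicity/convexity statement about $\arccos$ and the function $\frac{\sin r}{r}$ on $[0,\pi]$. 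I would also double-check the domain of validity — the formula for $\|h_1(\phi(s))\|$ requires $s \ne s_n$ (already excluded) and the direction $\frac{s[1:d]}{\|s[1:d]\|}$ is undefined when $s = s_0$, a measure-zero case handled by continuity ($h_1(\phi(s_0)) = 0$).
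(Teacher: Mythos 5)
Your first bullet is essentially fine and matches the paper's own computation (note it only closes with the appendix convention $\phi(s)=s[1:d]/(1-s_{d+1})$ and the sign $\arccos(-s_{d+1})$; with the main-text factor-$2$ formula the algebra you sketch does not give $s_{d+1}$, so the convention issue you flag is real, though harmless). The problem is in the second bullet: the key comparison fact is stated backwards. On the positively curved sphere it is the exponential map $\exp_{s_0}$ that is distance-nonincreasing (its differential has singular values $1$ in the radial direction and $\sin r/r\le 1$ tangentially), hence the log map $h_1\circ\phi=\exp_{s_0}^{-1}$ is distance-\emph{expanding}: $\|h_1(\phi(s))-h_1(\phi(s'))\|\ge d_{\mathbb{S}^d}(s,s')$, not $\le$. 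Concretely, two equator points at longitude difference $\beta\in(0,\pi)$ have spherical distance $\beta$ but embedded distance $\pi\sin(\beta/2)>\beta$, so your claimed inequality and your ``nonnegativity of $\epsilon$'' (spherical minus embedded $\ge 0$) are both false; the paper's same-latitude lemma and its empirical plot of $d_{\mathbb{S}^2}$ versus $d_h$ assert exactly the opposite direction. Moreover, with $\epsilon$ defined as the signed difference, the claim that $\epsilon(s,s')\to 0$ as $d_{\mathbb{S}^d}(s,s')\to 0$ also fails: for $s,s'$ approaching the north pole from opposite sides, $d_{\mathbb{S}^d}(s,s')\to 0$ while $\|h_1(\phi(s))-h_1(\phi(s'))\|\to 2\pi$, so your $\epsilon\to -2\pi$. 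Thus both pillars of your plan for the second bullet fail as stated.

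The irony is that the correct direction of the very fact you invoke proves the proposition immediately and in stronger form: since $\exp_{s_0}$ is $1$-Lipschitz on the ball of radius $\pi$ in $T_{s_0}\mathbb{S}^d$, one gets $d_{\mathbb{S}^d}(s,s')\le \|h_1(\phi(s))-h_1(\phi(s'))\|$ globally for all $s,s'\ne s_n$, i.e.\ the proposition holds with $\epsilon\equiv 0$ and no small-distance analysis is needed. Your equality claim on a common great circle also needs the caveat the appendix includes: when the minimizing arc passes over the north pole, the embedded distance equals $2\pi-d_{\mathbb{S}^d}(s,s')$, so equality only holds after taking $\min\{\|\cdot\|,\,2\pi-\|\cdot\|\}$. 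For comparison, the paper argues quite differently: a same-latitude lemma proved by calculus, an angle comparison $C\le C_H$ between the spherical and projected angles, and the spherical law of cosines with a Taylor-expansion error, which is where its $\epsilon(s,s')$ comes from. Your exponential-map route, once the inequality direction is fixed (either via the Jacobi-field/Lipschitz argument above or via your law-of-cosines-plus-$\sin r/r$ reduction aimed at the correct inequality), would be a genuinely different and in fact cleaner argument yielding an $\epsilon$-free bound --- but as written the central step is wrong.
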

The proof is in the appendix Section \ref{sec:supp_dist}.

Next, we consider training a neural network to obtain a nearly-isometric Euclidean embedding. To do so, we define 
\begin{equation}
    h_{NN}(x):=[h^T_1(x)/C,\rho^T(x)]^T \label{eq:h_NN}
\end{equation}
where $\rho:\mathbb{R}^{d}\to\mathbb{R}^{d'-d}$ is a neural network, and $C\geq 2\pi$ a constant. We first note that such $h_{NN}:\mathbb{R}^d\rightarrow \mathbb{R}^{d'}$ is injective, then train $\rho$ by minimizing: 
\begin{align}
    \mathcal{L}(\rho)=\mathbb{E}_{s,s'} \Big[&(\arccos(\langle s, s'\rangle) \nonumber \\
    &~~- \|h_{NN}(\phi(s))-h_{NN}(\phi(s'))\|)^2\Big],
    \label{eq: loss_rho}
\end{align}
where $s$ and $s'$ are sampled according to the uniform distribution in the sphere $\mathbb{S}^d\subset\mathbb{R}^{d+1}$, i.e., $(s,s')\sim \sigma_{d+1}\times \sigma_{d+1}$. Figure \ref{fig:distortion} illustrates the arclength versus the distance in the embedding for random pairs of samples $s,s'\in\mathbb{S}^{d}$ with respect to the various scenarios proposed in this section. It is evident that an injective function $h_{NN}$ parameterized with a neural network can yield a nearly-isometric embedding. We discuss the use of this neural-network based $h_{NN}$ further in the appendix Section \ref{sec:h_nn_appendix}.

\begin{figure}[H]
    \centering
    \includegraphics[width=0.7\columnwidth]{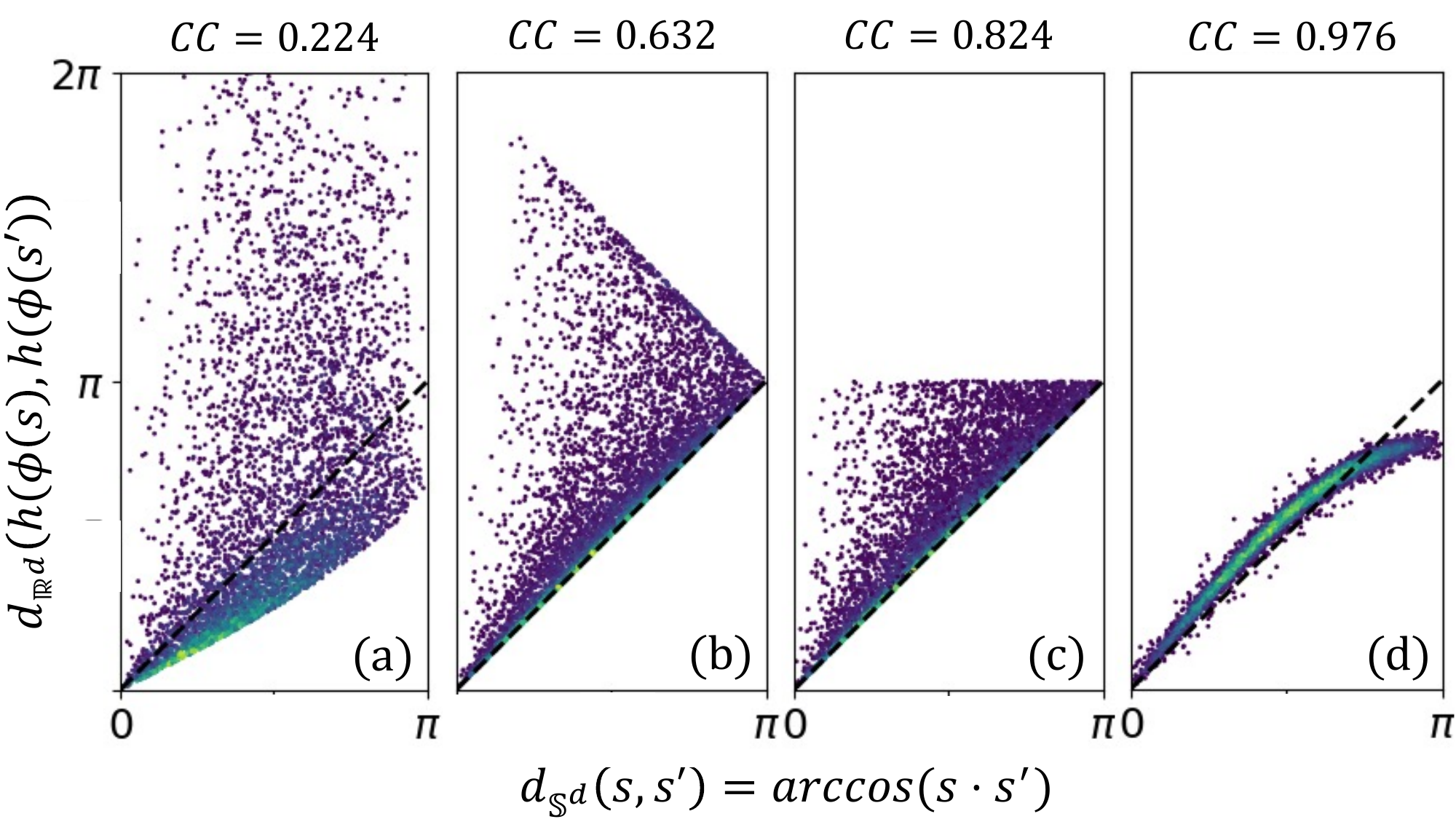}
    \caption{ Spherical distance (i.e., the arclength) versus the distance after stereographic projection, where CC denotes Pearson's correlation coefficient.  From left to right, when the injective function $h=id$, and the distance is $\|\phi(s)-\phi(s')\|$ (a),  when $h(x)=h_1(x)$ (see Eq. \eqref{eq: h_1}) and the distance is $\|h(\phi(s))-h(\phi(s'))\|$ (b),  when $h(x)=h_1(x)$ and the distance is $\min(\|h(\phi(s))-h(\phi(s'))\|,2\pi-\|h(\phi(s))-h(\phi(s'))\|)$ (c), and finally when $h(x)=h_{NN}(x)$ (see Eq. \eqref{eq:h_NN}) where $\rho(x)$ is a trained neural network minimizing Eq. \eqref{eq: loss_rho} and $C\geq 2\pi$ (d).}
    \label{fig:distortion}
\end{figure}

To avoid any potential confusion, we emphasize that irrespective of the specific choice of $h$, as long as it maintains injectivity, $S3W_{\mathcal{H},p}$ remains a valid metric in $\mathcal{P}_p(\mathbb{S}^d)$. The discussion in this section is, however, significant, particularly when we aim to ensure that the transportation cost in the embedding closely resembles the spherical distance.

\subsection{Rotationally Invariant Extension of $S3W$}

Rotational symmetry is a significant property when dealing with probability metrics on the sphere. This symmetry implies that the probability distribution remains unchanged under rotations, making it a key consideration in spherical statistics and related applications. It simplifies calculations and enhances our understanding of the underlying processes on a spherical surface.
Notably, the spherical OT leads to a rotationally symmetric metric on $\mathcal{P}_p(\mathbb{S}^d)$. However, our proposed Stereographic Spherical Sliced Wasserstein ($S3W$) metric is not rotationally invariant. Here, we propose a rotationally invariant variation of $S3W$, which leads to a robust and easy-to-implement rotationally invariant metric. 

Let $\mathrm{SO}(d+1)$ denote the special orthogonal group in $\mathbb{R}^{d+1}$, and let $R\in \mathrm{SO}(d+1)$ denote a rotation matrix. For $\mu\in \mathcal{P}_p(\mathbb{S}^d)$ we denote the rotated measure as $R_\# \mu$. Given probability measures $\mu, \nu \in \mathcal{P}_p(\mathbb{S}^d)$, we define the rotationally invariant extension of $S3W$ as: 
\begin{align}
RI\text{-}S3W_{\mathcal{G},p}(\mu,\nu) := \mathbb{E}_{R\sim \omega} [S3W_{\mathcal{G},p}(R_\#\mu, R_\#\nu)], \label{eq: ri-s3w} 
\end{align}
where $\omega$ denotes the normalized Haar measure on $\mathrm{SO}(d+1)$ (i.e., the uniform probability measure which is left-rotation-invariant\footnote{The Haar measure on a locally compact group $G$ is the unique, up to positive constants, left-translation-invariant regular Borel measure on $G$. If $G=\mathrm{SO}(d+1)$, the group translations are rotations. Here, we normalize it to obtain a probability measure.}). We similarly define $\text{RI-S3W}_{\mathcal{H},p}$ using $\text{S3W}_{\mathcal{H},p}$.







\begin{theorem}\label{thm: ri}
$RI\text{-}S3W_{\mathcal{G},p}(\cdot,\cdot)$ and $RI\text{-}S3W_{\mathcal{H},p}(\cdot,\cdot)$  are well-defined.  Furthermore, $RI\text{-}S3W_{\mathcal{G},p}(\cdot,\cdot)$ is generally a pseudo-metric in $\mathcal{P}_p(\mathbb{S}^{d})$, i.e., it is non-negative, symmetric and satisfies triangular inequality. In addition, $RI\text{-}S3W_{\mathcal{H},p}(\cdot,\cdot)$ defines a metric in $\mathcal{P}_p(\mathbb{S}^{d})$. 
\end{theorem}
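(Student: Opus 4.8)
The plan is to reduce Theorem~\ref{thm: ri} to Theorem~\ref{th:main} by exploiting the structure of $RI\text{-}S3W$ as an expectation of $S3W$ over rotations, treating the $\mathcal{G}$ (pseudo-metric) and $\mathcal{H}$ (metric) cases in parallel and only distinguishing them at the point where positive-definiteness is needed.

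\textbf{Step 1: Well-definedness.} First I would check that the integrand $R\mapsto S3W_{\mathcal{G},p}(R_\#\mu,R_\#\mu)$ (resp.\ with $\mathcal{H}$) is measurable in $R\in\mathrm{SO}(d+1)$ and integrable against the normalized Haar measure $\omega$. Measurability follows from the continuity of the action $R\mapsto R_\#\mu$ (weak-$*$ continuity, since $R$ acts continuously on $\mathbb{S}^d$) composed with the continuity/measurability of $S3W_{\mathcal{G},p}$ established in Theorem~\ref{th:main}. For integrability, since $\mathrm{SO}(d+1)$ is compact and $\omega$ a probability measure, it suffices to bound $S3W_{\mathcal{G},p}(R_\#\mu,R_\#\nu)$ uniformly in $R$; one way is to control it by the spherical Wasserstein distance $W_p(R_\#\mu,R_\#\nu)=W_p(\mu,\nu)$ (rotation-invariance of the ground metric on $\mathbb{S}^d$) up to the Lipschitz-type distortion constants coming from $h$ and $\phi$, or more simply to observe $R_\#\mu,R_\#\nu$ remain in $\mathcal{P}_p(\mathbb{S}^d\setminus\{s_n\})$ for $\omega$-a.e.\ $R$ and invoke finiteness of $S3W_{\mathcal{G},p}$ pointwise from Theorem~\ref{th:main} together with a dominated-convergence/compactness argument. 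A subtlety: a given $R$ may push mass of $\mu$ onto the North Pole; but the set of such $R$ has $\omega$-measure zero (it is a positive-codimension subvariety for a measure not concentrated on a single point, or handled by the convention that the integrand is set to an arbitrary value there), so the expectation is still well-defined.

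\textbf{Step 2: Non-negativity and symmetry.} These are immediate: for each fixed $R$, $S3W_{\mathcal{G},p}(R_\#\mu,R_\#\nu)\ge 0$ and $S3W_{\mathcal{G},p}(R_\#\mu,R_\#\nu)=S3W_{\mathcal{G},p}(R_\#\nu,R_\#\mu)$ by Theorem~\ref{th:main}, and taking $\mathbb{E}_{R\sim\omega}$ preserves both.

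\textbf{Step 3: Triangle inequality.} For fixed $R$, Theorem~\ref{th:main} gives
\[
S3W_{\mathcal{G},p}(R_\#\mu,R_\#\nu)\le S3W_{\mathcal{G},p}(R_\#\mu,R_\#\lambda)+S3W_{\mathcal{G},p}(R_\#\lambda,R_\#\nu),
\]
and integrating against $\omega$ and using linearity/monotonicity of the integral yields the triangle inequality for $RI\text{-}S3W_{\mathcal{G},p}$; the same for $\mathcal{H}$.

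\textbf{Step 4: Identity of indiscernibles for the $\mathcal{H}$ case.} This is the only genuinely non-routine step. Clearly $\mu=\nu\Rightarrow RI\text{-}S3W_{\mathcal{H},p}(\mu,\nu)=0$. Conversely, suppose $RI\text{-}S3W_{\mathcal{H},p}(\mu,\nu)=0$. Since the integrand is non-negative, it vanishes for $\omega$-a.e.\ $R$; pick any such $R$ for which additionally $R_\#\mu,R_\#\nu$ give no mass to $s_n$ (the intersection of two full-measure sets is still full-measure, hence nonempty). Then $S3W_{\mathcal{H},p}(R_\#\mu,R_\#\nu)=0$, and since $S3W_{\mathcal{H},p}$ is a genuine metric on $\mathcal{P}_p(\mathbb{S}^d\setminus\{s_n\})$ by Theorem~\ref{th:main}, we get $R_\#\mu=R_\#\nu$, hence $\mu=\nu$ after applying $R^{-1}_\#$. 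For the $\mathcal{G}$ case we only claim a pseudo-metric, so nothing more is needed.

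\textbf{Main obstacle.} I expect the technical heart to be Step~1 (and the measure-zero caveat it shares with Step~4): making rigorous that the set of rotations sending mass to the North Pole is $\omega$-negligible, and that the integrand is measurable and integrable, so that the defining expectation in Eq.~\eqref{eq: ri-s3w} actually makes sense. Once well-definedness is secured, the metric axioms transfer from Theorem~\ref{th:main} essentially for free by integrating pointwise inequalities, with the single extra observation in Step~4 that a full-$\omega$-measure set of ``good'' rotations is nonempty. One should also note explicitly that $RI\text{-}S3W$ is now genuinely rotationally invariant by the left-invariance of the Haar measure $\omega$ (substituting $R\mapsto RR_0$), though that is a bonus property rather than part of the stated claim.
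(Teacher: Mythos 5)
Your proposal follows essentially the same route as the paper: reduce everything to Theorem~\ref{th:main}, taking expectation over $\omega$ for non-negativity, symmetry and the triangle inequality, and for the $\mathcal{H}$-case pick a single "good" rotation in a full-$\omega$-measure set, apply the metric property of $S3W_{\mathcal{H},p}$, and invert the rotation. Your Step~4 is word-for-word the paper's argument. The one place where the paper does work you only gesture at is well-definedness: the paper states and proves a separate lemma (Lemma~\ref{lem: omega}) that for a \emph{fixed} point $s\in\mathbb{S}^d$ the set $A(s)=\{R:\ Rs=s_n\}$ has $\omega$-measure zero, then takes the (countable) union over the atoms of $\mu$ and $\nu$. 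Your phrasing "it is a positive-codimension subvariety for a measure not concentrated on a single point" is correct in spirit about each individual $A(s)$ but the qualifier "for a measure not concentrated on a single point" is spurious — the measure-zero claim holds even for $\mu=\delta_s$ — and the alternative of "setting the integrand to an arbitrary value" does not by itself make the expectation well-defined; one still needs the measure-zero fact to know the value is independent of that arbitrary choice. You also flag measurability of $R\mapsto S3W_{\mathcal{G},p}(R_\#\mu,R_\#\nu)$, which the paper silently omits; this is a reasonable additional concern, though the continuity sketch you give would suffice. Modulo tightening that one paragraph, your proof is correct and essentially identical in structure to the paper's.
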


The proof is included in the appendix Section \ref{sec:ri-ssr}.

\section{Numerical Implementation Details}

\subsection*{Stereographic Projection (SP)} 
A key issue in implementing the SP for $S3W$ concerns the numerical handling of points near the North Pole. To ensure numerical stability, we introduce an \(\epsilon\)-cap around the North Pole, which serves to effectively establish an upper bound for the norm of the projected points. Specifically, any point where \(x_{d+1} > 1 - \epsilon\) is initially mapped to the circle \(x_{d+1} = 1 - \epsilon\), and then projected using SP. We refer to this modified SP as \(\phi_\epsilon\). We discuss the stability of this $\epsilon$-cap in the appendix Section \ref{sec:eps_stability}.



\subsection*{The $S3W$ Distances} 
Let $\phi_{\epsilon}$ denote the Stereographic Projection operator that excludes the $\epsilon$-cap around $s_n\in\mathbb{S}^{d}$, and $h:\mathbb{R}^d\rightarrow\mathbb{R}^{d'}$ an injective defining function, $L$ the number of slices, and $\theta_l\in\mathbb{S}^{d'-1}$ a slicing direction. For simplicity, we initially consider two data distributions with an equal number of samples and uniform mass distribution (the general case involving different numbers of samples and non-uniform mass distribution is discussed in the appendix). 
Let $\hat{\mu}=\frac{1}{M}\Sigma_{m=1}^M\delta_{x_m}$ and $\hat{\nu}=\frac{1}{M}\Sigma_{m=1}^M\delta_{y_m}$ denote the empirical distributions, where $\delta_{x_m}$ denotes a Dirac measure centered at $x_m$. Then, $S3W_{\mathcal{H},p}^p(\hat{\mu},\hat{\nu})$ can be approximated via the following Monte Carlo estimator:
\begin{align*}
\frac{1}{L} \sum_{l=1}^L \sum_{m=1}^M \left| \langle h(\phi_{\epsilon}(x_{\pi_l[m]})), \theta_l \rangle - \langle h(\phi_{\epsilon}(y_{\pi'_l[m]})), \theta_l \rangle \right|^p
\end{align*}
where $\pi_l[m]$ and $\pi'_l[m]$ are the sorted indices of the projected samples on $\theta_l$ (see Algorithm \ref{alg:s3w} for the procedure).


\begin{algorithm}[tb]
   \caption{$S3W$}
   \label{alg:s3w}
    \begin{algorithmic}
       \STATE {\bfseries Input:} $\{x_i\}_{i=1}^M \sim \mu$, $\{y_j\}_{j=1}^M \sim \nu$, $L$ projections, \\~~~~~~~~~~~~~$p$-th order, $\epsilon$ for excluding the $\epsilon$-cap around $s_n$.
       \STATE Initialize: $h$ (injective map), $\{\theta_l\}_{l=1}^L$ (projections)
       \STATE Compute $\{u_i=h(\phi_\epsilon(x_i))\}$ and $\{v_j=h(\phi_\epsilon(y_j))\}$
       \STATE Initialize distance $d = 0$
       \FOR{$l=1$ {\bfseries to} $L$}
        \STATE Compute ${u^l_{i}} = { \langle u_i, \theta_l\rangle }$, ${v^l_{j}} = { \langle v_j, \theta_l \rangle }$
        \STATE Sort $\{u^l_i\}$, $\{v_j^l\}$, s.t  $u^l_{\pi_l[i]} \leq u^l_{\pi_l[i+1]}$, $v^l_{\pi_l'[j]} \leq v^l_{\pi_l'[j+1]}$
        \STATE $d = d + \frac{1}{L}\sum_{i=1}^M |u^l_{\pi_l[i]} - v^l_{\pi_l'[i]}|^p$
       \ENDFOR
    \STATE \textbf{Return}~$d^{\frac{1}{p}}$
    \end{algorithmic}
\end{algorithm}

\subsection*{The $RI\text{-}S3W$ Distances} 
The Monte Carlo approximation of $RI\text{-}S3W$ distances can be written as:
\begin{align*}
RI\text{-}S3W_p(\hat{\mu}, \hat{\nu}) \approx \frac{1}{N_R} \sum_{n=1}^{N_R} S3W_p((R_n)_{\#}\hat{\mu},(R_n)_{\#}\hat{\nu}),
\end{align*}
where $\{R_n\}_{n=1}^{N_R}\subset \mathrm{SO}(d+1)$ are random rotation matrices. To generate $R_n$, we adopt the GeoTorch library \cite{lezcano2019trivializations}, which provides a direct method to sample from $\mathrm{SO}(d+1)$. Our approach is efficient with vectorization and parallel processing on GPU. We note that generating $R_n$ is generally $\mathcal{O}(N_R \cdot d^3)$, which could become expensive for high-dimensional data or a large number of rotations. Instead, we could amortize this cost by presampling a rotation pool which could then be subsampled for every distance calculation. We denote this implementation as $ARI\text{-}S3W$, which involves a trade-off in memory and potential increased bias. In practice, we observe highly favorable performance in a variety of settings. We discuss this further in the appendix Section \ref{sec:numerical}.



\subsection*{Computational Complexity} 
Stereographic projection requires $\mathcal{O}(Nd)$ where $N=n+m$ is the total number of data points from the source and target, and $(d+1)$ is the data dimensionality. If $n \gg m$, then we let $N=n$, and the same analysis holds. Applying $h(\cdot)=h_1(\cdot)$ to the projected data also requires $\mathcal{O}(Nd)$. Thereafter, slicing the data is done in $\mathcal{O}(LNd)$, where $L$ is the number of projections; sorting is done in $\mathcal{O}(LN\log N)$; and finally, the distance calculation requires $\mathcal{O}(LN)$. The overall time complexity for $S3W$ is therefore $\mathcal{O}(LN(d + \log N))$. For $RI\text{-}S3W$, the cost of calculating $S3W$ for all rotations is $\mathcal{O}(N_R L N(d + \log N))$. Generating the random rotations takes an overhead of $\mathcal{O}(N_R d^3 )$ (which can be amortized), and applying these rotations to the data takes $\mathcal{O}(N_R N d^2)$. The total complexity is $\mathcal{O}(N_R (d^3+ N d^2+ LN(d + \log N))$. If we amortize generating the rotation matrices, then the per-operation cost becomes $\mathcal{O}(N_R N(d^2 + Ld + L \log N)).$




\subsection*{Runtime Comparison} 
We compare the runtime to compute different distances between the uniform distribution and a von Mises-Fisher distribution on $\mathbb{S}^{100}$. The results in Figure \ref{fig:runtime_main} are averaged over $50$ iterations for varying sample sizes of each distribution. We use $L=200$ projections for all sliced methods, $N_R=10$ rotations for $ARI\text{-}S3W_2$ and $RI\text{-}S3W_2$, and a pool size of $100$ for $ARI\text{-}S3W_2$. We do not include the time required to generate the rotation pool in our $ARI\text{-}S3W_2$ measurements.  Results for Wasserstein and Sinkhorn are based on the Python OT library \cite{flamary2021pot}.

\begin{figure}[H]
    \centering
    \includegraphics[width=0.8\columnwidth]{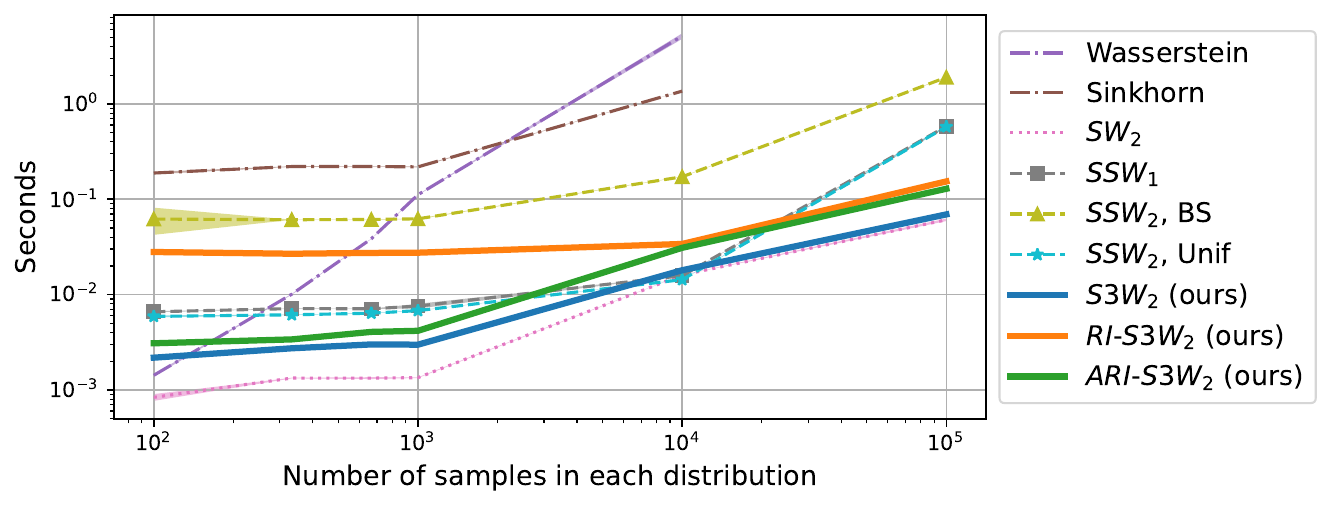}
    \caption{Runtime comparison for Wasserstein distance, Sinkhorn distance \cite{cuturi2013sinkhorn} with geodesic distance as cost function, $SW_2$ (sliced Wasserstein) distance, $SSW_1$ distance (using level median formula) \cite{bonet2022spherical}, $SSW_2$ distance with binary search (BS) and antipodal closed form (for uniform distribution) \cite{bonet2022spherical}, $S3W_2$ distance (ours), $RI\text{-}S3W_2$ distance (ours), and $ARI\text{-}S3W_2$ distance (ours).}
    \label{fig:runtime_main}
\end{figure}

\section{Experiments}
Here, we present key results from our numerical experiments. We defer much of the visualization, discussions, and further results to the appendix Section \ref{sec:numerical}. All our experiments were executed on a Linux server with an AMD EPYC 7713 64-Core Processor, 8 $\times$ 32GB DIMM DDR4, 3200 MHz, and a NVIDIA RTX A6000 GPU.

\subsection{Gradient Flow On The Sphere}
\label{subsec:gf}
Similar to the work of \cite{bonet2022spherical}, we apply our proposed distances as a loss function for the gradient flow problem. We consider a challenging target probability measure $\nu$ with 12 von Mises-Fisher distributions (vMFs), and aim to solve $\text{argmin}_{\mu} S3W(\mu,\nu)$. Suppose we have access to the target measure only via i.i.d. samples $\{y_j\}_{j=1}^M$, i.e., $\hat{\nu}=\frac{1}{M}\sum_{j=1}^M \delta_{y_j}$ where $M=2400$. We initialize $2400$ particles uniformly sampled on $\mathbb{S}^2$, and directly optimize these particles with full-batch projected gradient descent on the surface of the sphere. Figure \ref{fig: gf_main} shows the converged loss curves after $500$ iterations and reports the runtime, negative log-likelihood (NLL), and the logarithm of the 2-Wasserstein distance between the distributions for $SSW$, $S3W$, $RI\text{-}S3W$ with $N_R\in\{1,5\}$, and $ARI\text{-}S3W$ with $N_R=30$ and pool size of $1000$. We observe that the proposed distances provide on-par or better performance while being significantly faster than $SSW$ (up to $20$X). Additionally, we provide mini-batch results in the appendix Section \ref{section:gf}. When the target is only known up to a constant, we use sliced-Wasserstein variational inference \cite{Yi2022SlicedWV} (see the appendix, Section \ref{section:vi}). 

\begin{figure}[H]
    \centering
    \subfloat{\includegraphics[width=0.7\columnwidth]{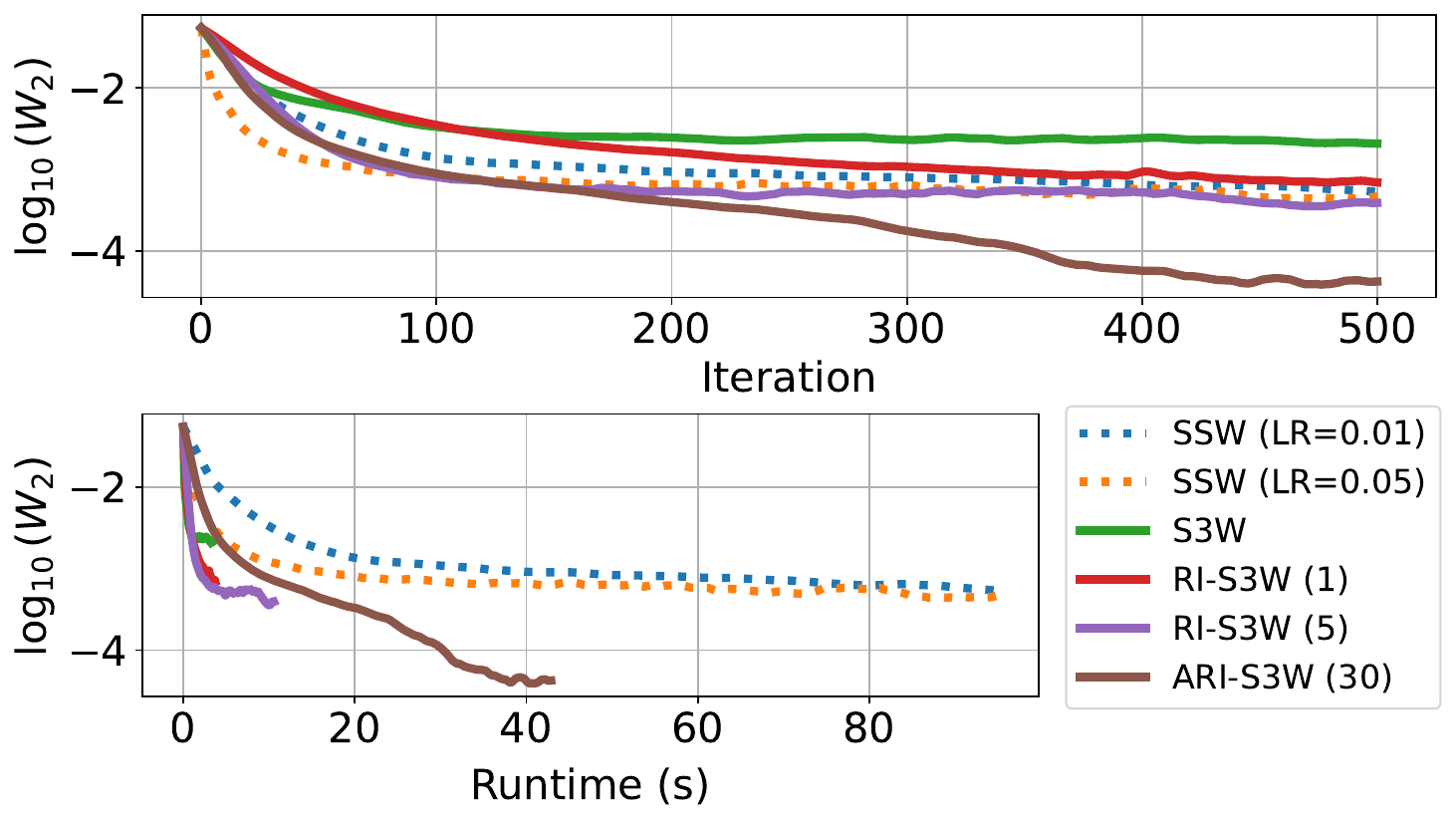}} \\
    {\footnotesize
    \begin{tabularx}{0.7\columnwidth}{|X|>{\centering\arraybackslash}X|>{\centering\arraybackslash}X|>{\centering\arraybackslash}X|}
    \hline
    \rowcolor[gray]{0.99}
    Method & Runtime(s) & NLL $\downarrow$ & $\log W_2$ $\downarrow$ \\
    \hline
    $SSW$ (LR=$0.01$) & 94.97 $\pm$ 0.96 & -4974.50 $\pm$ 2.27 & -3.27 $\pm$ 0.17 \\
    $SSW$ (LR=$0.05$) & 94.98 $\pm$ 0.27 & -4974.46 $\pm$ 3.50 & -3.32 $\pm$ 0.13 \\ 
    \hline
    $S3W$ & \textbf{3.32 $\pm$ 0.62} & -4753.81 $\pm$ 83.70 & -2.57 $\pm$ 0.19 \\
    $RI\text{-}S3W$ (1) & 4.00 $\pm$ 0.83 & -4957.22 $\pm$ 34.70 & -3.15 $\pm$ 0.26 \\
    $RI\text{-}S3W$ (5) & 10.58 $\pm$ 0.27 & -4983.56 $\pm$ 6.62 & -3.50 $\pm$ 0.17 \\
    $ARI\text{-}S3W$ (30) & 42.88 $\pm$ 0.05 & \textbf{-5025.37} $\pm$ 5.57 & \textbf{-4.37 $\pm$ 0.21} \\
    \hline
    \end{tabularx}
    }

    \caption[Convergence comparison]{Learning a mixture of $12$ vMFs. $ARI\text{-}S3W$ (30) has $30$ rotations, pool size of $1000$. $S3W$ variants use $\text{LR}=0.01$. $SSW$ has an additional $\text{LR}=0.05$ for better comparison. The plots show convergence of different distances w.r.t. iterations and runtime. The table summarizes numerical results for $10$ independent runs. We provide more details of the plots in the appendix Section \ref{subsec:evo_loss_curve}.}
    \label{fig: gf_main}
\end{figure}

\subsection{Self-Supervised Learning (SSL)}

We now show that our method can be an effective loss for contrastive SSL on the sphere. We adopt the contrastive objective proposed in \cite{wang2020understanding}, composed of alignment and uniformity loss terms, and replace the Gaussian kernel uniformity loss with an $S3W$-based loss:
\begin{equation*}
    \mathcal{L}= \frac{1}{n}\sum_{i=1}^n \lVert z^A_i - z^B_i \rVert^2_2 + \frac{\lambda}{2} \left(\text{S3W}_2(z^A, \nu) + \text{S3W}_2(z^B, \nu)\right),
\end{equation*}
where $z^A,z^B\in\mathbb R^{n\times (d+1)}$ are two encoded views of the same images, $\nu =\text{Unif}(\mathbb{S}^{d})$ is the uniform distribution on $\mathbb{S}^{d}$ and $\lambda>0$ is the regularization coefficient. We run our experiments on CIFAR-10 using a ResNet18 encoder. Figure \ref{fig:ssl_dim_3_main} visualizes the learned embeddings when $d=2$, and Table \ref{table:ssl_dim_10} assesses the quality of the learned embeddings for $d=9$ using the standard linear classifier evaluation. The details of these experiments are included in the appendix Section \ref{section:ssl}. We can see that the proposed metrics consistently lead to embeddings that are competitive both in terms of performance and runtime.

\begin{figure}[H]
    \centering
    \includegraphics[width=0.7 \columnwidth]{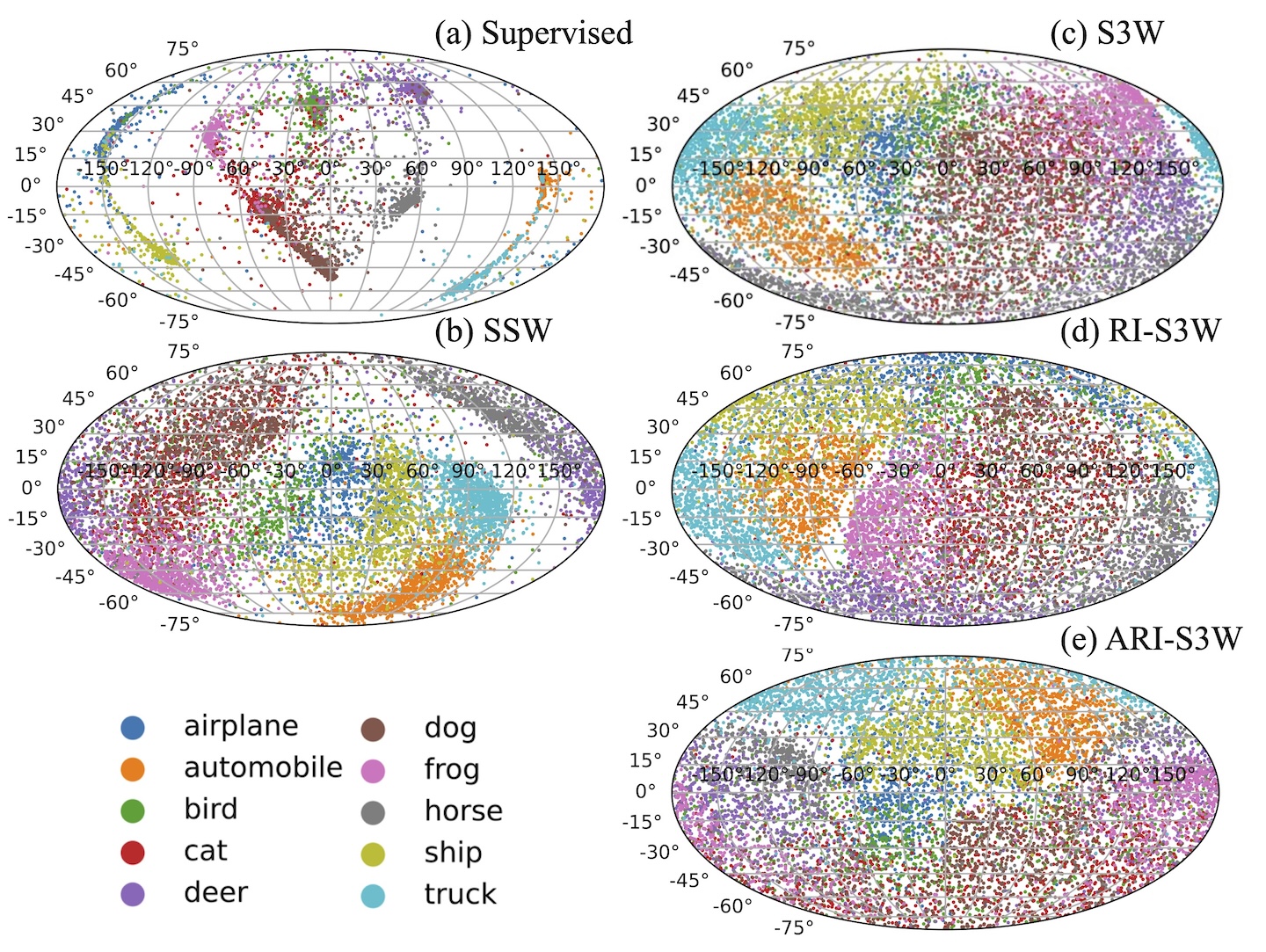}
    \caption{Projected features on $\mathbb{S}^2$ for CIFAR-10.}    \label{fig:ssl_dim_3_main}
\end{figure}

\begin{table}[H]
\centering
\footnotesize
\begin{tabularx}{0.7\columnwidth}{|>{\raggedright\arraybackslash}X|>{\centering\arraybackslash}X|>{\centering\arraybackslash}X|}
\hline
\rowcolor[gray]{0.99}
Method & Acc.(\%) E/P $\uparrow$ & Time(s/ep.) \\
\hline
Supervised & 92.38 / 91.77 & ----- \\
\hline \hline
Hypersphere \cite{wang2020understanding} & 79.76 / 74.57 & 24.28 \\
SimCLR \cite{chen2020simple} & 79.69 / 72.78 & \textbf{20.94} \\
\hline
$SSW$ \cite{bonet2022spherical} & 70.46 / 64.52 & 33.14 \\
$SW$ & 74.45 / 68.35 & 21.09 \\
\hline
$S3W$ & 78.54 / 73.84 & 21.36 \\
$RI\text{-}S3W$ $(5)$& \textbf{79.97} / 74.27 & 21.59 \\
$ARI\text{-}S3W$ $(5)$& 79.92 / \textbf{75.07} & 21.51 \\
\hline
\end{tabularx}
\caption{Standard linear evaluation on CIFAR-10. Accuracy is for encoded (E) features and projected (P) features on $\mathbb{S}^9$ (encoder feature dimension is 10). Time reported is per epoch of pretraining.}
\label{table:ssl_dim_10}
\end{table}

\subsection{Sliced-Wasserstein Autoencoder (SWAE)}
\label{subsec:ssw}

We now adopt the SWAE framework proposed by \cite{kolouri2018sliced} and demonstrate the application of $S3W$ distances in generative modeling. Let $\varphi: \mathcal{X} \rightarrow \mathbb{S}^{d}$ denote an encoder network and $\psi: \mathbb{S}^d \rightarrow \mathcal{X}$ denote a decoder network. The SWAE framework aims to enforce that the encoded data follow a prior distribution in the latent space. In our experiments, we use a mixture of vMF distributions with $10$ components on $\mathbb{S}^2$ as our prior distribution, which we denote as $q$. Then, the training objective for the modified SWAE is:
\begin{equation*}
\min_{\varphi, \psi}\mathbb{E}_{x \sim p} [c(x, \psi(\varphi (x))] + \lambda \cdot S3W(\varphi_{\#}p, q)
\end{equation*}
where $\lambda$ is the regularization coefficient, $c(\cdot, \cdot)$ the reconstruction loss, for which we use the standard Binary Cross Entropy (BCE) loss, and $p$ denotes the data distribution. Details about the network architectures and results on the MNIST benchmark can be found in the appendix Section \ref{section:swae}.

\begin{table}[H]
\centering
\setlength{\tabcolsep}{2pt}
\begin{tabular}{|p{3.0cm}|>{\centering\arraybackslash}p{2.5cm}|>{\centering\arraybackslash}p{2.5cm}|>{\centering\arraybackslash}p{2.5cm}|>
{\centering\arraybackslash}p{2.5cm}|}
\hline
\rowcolor[gray]{0.99}
Method & $\log W_2$ $\downarrow$ & NLL $\downarrow$ & BCE $\downarrow$ & Time (s/ep.)\\ 
\hline
Supervised & -0.5132 & 0.0060 & 0.6319 & \textbf{5.3243} \\ 
\hline\hline
$SSW$ & -2.1949 & 0.0052 & 0.6323 & 15.4651 \\ 
$SW$ & -3.3229 & -0.0007 & 0.6348 & 5.4661 \\ 
\hline
$S3W$ & -3.3381 & 0.0025 & \textbf{0.6318} & 5.7511 \\
$RI$-$S3W$ $(5)$ & -3.1424 & \textbf{-0.0043} & 0.6376 & 7.5443 \\
$ARI$-$S3W$ $(5)$ & \textbf{-3.3853} & 0.0028 & 0.6332 & 5.8316 \\
\hline
\end{tabular}
\caption{CIFAR-10 results for SWAE. We evaluate the latent regularization loss ($\log(W_2)$ and NLL), along with the BCE reconstruction loss on the test data.}
\label{table:main_swae}
\end{table}

\subsection{Earth Density Estimation} 
We extend our $S3W$ distances to the task of density estimation with normalizing flows on $\mathbb{S}^2$. Our focus is on the three datasets introduced by \cite{mathieu2020riemannian}, representing the Earth's surface as a perfect spherical manifold: Earthquake \cite{earthquakedataset}, Flood \cite{Brakenridge2017FloodArchive}, and Fire \cite{firedataset}. Similar to \cite{bonet2022spherical}, we use an exponential map normalizing flow model \cite{rezende2020normalizing} (see \ref{section:de}) optimizing $\min_T S3W(T_{\#}\nu, q)$, where $T$ is the transformation induced by the model, $\nu$ is the data distribution known via samples $\{y_j\}_{j=1}^M$, and $q$ is the prior distribution on $\mathbb{S}^2$. The learned density at $y\in\mathbb{S}^2$ can then be approximated as $f_{\nu}(y)=q(T(y))|\operatorname{det}J_T(y)|$ where $J_T(y)$ denotes the Jacobian of $T$ at $y$.

\begin{table}[ht]
\centering
\begin{tabular}{|p{3.0cm}|>{\centering\arraybackslash}p{2.5cm}|>{\centering\arraybackslash}p{2.5cm}|>{\centering\arraybackslash}p{2.5cm}|}
\hline
\rowcolor[gray]{0.99}
Method & Quake $\downarrow$ & Flood $\downarrow$ & Fire $\downarrow$ \\ \hline
Stereo & 2.04 ± 0.19 & 1.85 ± 0.03 & 1.34 ± 0.11 \\ \hline \hline
$SW$ & 1.12 ± 0.07 & 1.58 ± 0.02 & 0.55 ± 0.18 \\
$SSW$ & 0.84 ± 0.05 & 1.26 ± 0.03 & 0.24 ± 0.18 \\ \hline
$S3W$ & 0.88 ± 0.09  & 1.33 ± 0.05 & 0.36 ± 0.04 \\
$RI\text{-}S3W (1)$ & 0.79 ± 0.07 & 1.25 ± 0.02 & 0.15 ± 0.06 \\
$ARI\text{-}S3W (50)$ & \textbf{0.78 ± 0.06} & \textbf{1.24 ± 0.04} & \textbf{0.10 ± 0.04} \\
\hline
\end{tabular}
\caption{Earth datasets results. We use a pool size of $100$ for $ARI\text{-}S3W$. Stereo denotes the approach introduced by \cite{gemici2016normalizing} which involves stereographically projecting the sphere $\mathbb{S}^d$ onto $\mathbb{R}^{d}$ and then performing RealNVP. The results are compared with methods cited from \cite{dinh2016density}.}
\label{table:earth_density_nll}
\end{table}

\section{Conclusion}
We introduced a new class of sliced Wasserstein ($SW$) distances for spherical data using the stereographic projection (SP). We rigorously addressed the distortion issue caused by SP and presented several high-speed, high-performing variants of our approach. $S3W$ maps data to a generalized hypersurface with minimal distortion (via composing SP with a novel injective function) and efficiently computes $SW$ over an order of magnitude faster than existing baselines in many settings. $RI\text{-}S3W$ encodes rotation invariance into $S3W$, further boosting its performance. Although $RI\text{-}S3W$ is highly parallelizable, we achieved additional efficiency with our implementation of amortization (denoted as $ARI\text{-}S3W$). Given that our approach uses stereographic projection to map the hypersphere into Euclidean space, there may exist novel extensions of our slicing framework to unbalanced settings by leveraging recent advancements in unbalanced and partial OT on $\mathbb{R}$ \cite{bai2023sliced, sejourne2023unbalanced}. Moreover, our approach opens up interesting directions to bridging the spherical manifold and classical $SW$ literature via pushforwards to appropriate hypersurfaces. Lastly, we highlight recent advancements in improving the projection complexity of sliced Wasserstein distances \cite{deshpande2019max,nguyen2020distributional,nguyen2022amortized,nguyen2022hierarchical,nguyen2024energy,nguyen2024markovian}. These strategies are compatible with our proposed $S3W$ distances and can be integrated with our method to further enhance the projection efficiency of our metric in comparing high-dimensional spherical measures.

\section*{Acknowledgements}
This research was supported by the NSF CAREER Award No. 2339898. We are grateful to Clément Bonet for providing insights into the SSW implementation.

\clearpage
\bibliography{genesis}
\bibliographystyle{unsrt}

\newpage
\clearpage
\onecolumn
\appendix

\section{Notation}
\begin{itemize}
    \item $(\mathbb{R}^{d},\|\cdot\|)$: d-dimensional Euclidean space, where $\|\cdot\|$ is the Euclidean norm (or $2-$norm): given $x=[x_1,\dots,x_d]\in\mathbb{R}^d$, $ \|x\|=\sqrt{x_1^2+\dots+x_d^2}$.  Sometimes we will write $\|\cdot\|_2$ to stress that we are considering this ``$2-$norm''.
    \item $\langle \cdot,\cdot\rangle$: canonical inner product in Euclidean spaces.
    \item $L^p(\Omega)$, where $p\ge 1$ and $\Omega\subseteq \mathbb{R}^d$: functional space defined by 
    $$L^p(\Omega):=\{f:\Omega\to\mathbb{R}| \, \int_{\Omega}|f|^p<\infty\}$$
    endowed with the norm 
    $$\|f\|_p=\left(\int_{\Omega}|f|^p\right)^{\frac{1}{p}}.$$
    In the extreme case  $p=\infty$, we have 
    \begin{align}
    L^\infty(\Omega):=\{f:\Omega\to\mathbb{R}| \, \sup|f|<\infty\}.
    \end{align} 
    In our case, $\Omega$ will be $\mathbb{R}^d$, $\mathbb{R}\times\mathbb{S}^{d-1}$, $\mathbb{R}\times\mathbb{R}^{d'}\setminus\{0\}$ or $\mathbb{S}^{d}$.
    \item $C(X)$: space of real-valued continuous functions defined on the space $X$.
    \item $C_0(\mathbb{R}^d)$: set of continuous functions ``that vanish at infinity''.
    \item $C^\infty$: class of infinite differentiable functions. 
    \item $\mathbb{S}^{d-1}$: the unit sphere in $\mathbb{R}^d$, defined as $\mathbb{S}^{d-1}:=\{x\in\mathbb{R}^d: \|x\|^2=1\}.$
        \item $\sigma_{d}$: the uniform  probability measure defined in the sphere $\mathbb{S}^{d-1}\subset\mathbb{R}^d$.
    \item $s_n=[0,\dots,0,1]\in\mathbb{R}^{d+1}$ is the North Pole and $s_0=[0,\dots,0,-1]\in\mathbb{R}^{d+1}$ is the South Pole in $\mathbb{S}^{d}\subset\mathbb{R}^{d+1}$.
    \item $\phi$: stereographic projection (SP).
    \item $\phi_{\epsilon}$: stereographic projection operator whose domain excludes the $\epsilon$-cap around $s_n$ in the sphere $\mathbb{S}^{d}$.
    \item $v^T$: transpose vector. 
    \item \( H(t, \theta) = \{x\in \mathbb{R}^d \ |\ \langle x, \theta \rangle = t\} \) hyperplane.
    \item \( H_{t,\theta} = \{ x \in \mathbb{R}^d | \, g(x, \theta) = t \} \) ``level set'' of $g:\mathbb{R}^d\times \mathbb{S}^{d-1}\to \mathbb{R}$ at level $t$ with fixed spherical variable $\theta$.
    \item $\mathcal{R}(\cdot)$: Radon transform with $\mathcal{R}: L^1(\mathbb{R}^d)\to L^1(\mathbb{R}\times\mathbb{S}^{d-1})$. 
  \item   $\mathcal{G}(\cdot)$:  Generalized Radon Transform (GRT) with 
$$\mathcal{G}(f)(t,\theta)=\int_{\mathbb{R}^{d}}f(x)\delta(t-g(x,\theta)) \, dx.$$
  In this formulation, $d'\ge d$ in general, and $g: \mathbb{R}^d\times (\mathbb{R}^{d'}\setminus \{0\})\to \mathbb{R}$ is a function which satisfies the following:
    \begin{enumerate}
        \item[(H.1)] $g(x,\theta)$ is $C^\infty$ function on $\mathbb{R}^d\times(\mathbb{R}^{d'}\setminus\{0\})$. 
        \item[(H.2)] $g(x,\theta)$ is homogeneous of degree one in $\theta$, i.e., 
        $g(x,\lambda\theta)=\lambda g(x,\theta)$ for all $\lambda\in \mathbb{R}$.
        \item[(H.3)] $g(x,\theta)$ is non-degenerate with respect to $x$ in the sense that 
        $\forall (x,\theta)\in \mathbb{R}^d\times(\mathbb{R}^{d'}\setminus\{0\}), \, \frac{\partial g}{\partial x}(x,\theta)\neq 0$.
        \item[(H.4)] The mixed Hessian of $g$ is strictly positive, i.e., 
        $\textrm{det}\left(\frac{\partial^2 g}{\partial x^i\partial \theta^j}\right)>0$.  
     \end{enumerate}
    By property (H.2), $\mathcal{G}:L^1(\mathbb{R}^d)\to L(\mathbb{R}\times\mathbb{S}^{d'-1})$. 
    We refer \cite{beylkin1984inversion,denisyuk1994inversion,gel1969differential,ehrenpreis2003universality,kuchment2006generalized,homan2017injectivity,kolouri2019generalized} for more details. 
    
    \item $\mathcal{H}(\cdot)$: a variant and simplified version of the Generalized Radon Transform defined by $\mathcal{H}:L^1(\mathbb{R}^d)\to L^1(\mathbb{R}\times \mathbb{S}^{d'-1})$,
    $$\mathcal{H}(f)(t,\theta)=\int_{\mathbb{R}^d}f(x)\delta(t-\langle h(x),\theta\rangle )dx,$$
    where $h:\mathbb{R}^d\to \mathbb{R}^{d'}$ is injective. 
    \item $h_1$: function defined by \eqref{eq: h_1}.
    \item $(t,\theta)$: the first and second inputs for (generalized) Radon transform, where $t\in \mathbb{R}$ and variable $\theta$ lies in a pre-defined sphere: $\theta\in \mathbb{S}^{d-1}$ for the classic Radon transform ($\mathcal{R}$), and $\theta \in \mathbb{S}^{d'-1}$ for the generalized cases $\mathcal{G}$ and $\mathcal{H}$. 
    \item $\mathcal{R}^*,\mathcal{G}^*,\mathcal{H}^*$: the Hermitian adjoint operators for $\mathcal{R}, \mathcal{G}, \mathcal{H}$. In particular, $\mathcal{R}^*:L^\infty(\mathbb{R}\times\mathbb{S}^{d-1})\to L^\infty(\mathbb{R}^d)$,  and $\mathcal{G}^*,\mathcal{H}^*:L^\infty(\mathbb{R}\times\mathbb{S}^{d'-1})\to L^\infty(\mathbb{R}^d)$ with 
    \begin{align}
      &\mathcal{R}^*(\psi)(x)=\int_{\mathbb{S}^{d-1}}\psi(\langle x,\theta\rangle,\theta)d\sigma_{d}(\theta) \nonumber \\ 
      &\mathcal{G}^*(\psi)(x)=\int_{\mathbb{S}^{d'-1}}\psi(g(x,\theta),\theta)d\sigma_{d'}(\theta) \nonumber \\
      &\mathcal{H}^*(\psi)(x)=\int_{\mathbb{S}^{d'-1}}\psi(\langle h(x),\theta\rangle,\theta)d\sigma_{d'}(\theta) \nonumber
    \end{align}

\item $\mathcal{M}(\Omega)$: Set of all real Radon measures (finite regular Borel measures not necessarily positive, that is, it includes signed measures) defined on $\Omega$. In this article $\Omega$ can be $\mathbb{R}^d$, $\mathbb{R}\times\mathbb{S}^{d-1}$, or $\mathbb{R}\times\mathbb{S}^{d'-1}$. 

$\mathcal{M}(\Omega)$ is endowed with the total variation norm
$$\|\mu\|_{TV}=\mu^+(\Omega)+\mu^-(\Omega)$$
where $\mu^{\pm}$ are the positive and negative parts of $\mu$.

\item $\mathcal{M}_+(\Omega)$: Set of all positive Radon measures defined on $\Omega$. It is endowed with the total variation norm
$\|\mu\|_{TV}=\mu(\Omega)$.

\item $\mathcal{P}(\Omega)$: Set of all probability measures defined on $\Omega$ (i.e., positive measures with $\|\mu\|_{TV}=\mu(\Omega)=1$). 
\item $\mathcal{P}_p(\Omega)$: Set of all probability measures defined on $\Omega$ with $p-$th finite moment, that is,
$$\mathcal{P}_p(\Omega)=\{\mu\in\mathcal{P}(\Omega): \, \int_{\Omega}\|x\|^p d\mu<\infty\}$$
We have the following relation: 
$$\mathcal{P}_p(\Omega)\subseteq\mathcal{P}(\Omega)\subseteq \mathcal{M}_+(\Omega)\subseteq \mathcal{M}(\Omega).$$
\item $T_\#\mu$: push-forward of the measure $\mu\in\mathcal{M}(X)$ by the measurable map $T:X\to Y$, which defines a measure on $\mathcal{M}(Y)$ such that $T_\#\mu(B)=\mu\left(\{x\in X:\, T(x)\in B\}\right)$ for every measurable set $B\subseteq Y$.
\item $W_p$: $p$-Wasserstein distance.
\item $\hat{\mu}=\frac{1}{M}\Sigma_{m=1}^M\delta_{x_m}$: empirical distribution.
\item $\text{Unif}(X),\sigma(X)$: Uniform distribution on a measure space $X$.
\item $F_\mu$, $F_\mu^{-1}$: cumulative distribution function (CDF) and quantile function of the measure $\mu$, respectively.
\item $\mathcal{S}_\mathcal{R}$, $\mathcal{S}_\mathcal{G}$, and $\mathcal{S}_\mathcal{H}$: Stereographic Spherical Radon Transforms (SSRT),
$$\mathcal{S}_\mathcal{R}(\mu)=\mathcal{R}(\phi_\#\mu), \qquad \mathcal{S}_\mathcal{G}(\mu)=\mathcal{G}(\phi_\#\mu),\qquad \mathcal{S}_\mathcal{H}(\mu)=\mathcal{H}(\phi_\#\mu). $$
\item $S3W_{\mathcal{R},p}$,  $S3W_{\mathcal{G},p}$, and $S3W_{\mathcal{H},p}$: Stereographic Spherical Sliced Wasserstein (S3W) distances
$$S3W_{\mathcal{R},p}^p(\mu,\nu)=\int_{\mathbb{S}^{d-1}}W_p^p(\mathcal{S}_\mathcal{R}(\mu)_\theta,\mathcal{S}_\mathcal{R}(\nu)_\theta) 
 \, d\sigma_{d}(\theta),$$
$$S3W_{\mathcal{G},p}^p(\mu,\nu)=\int_{\mathbb{S}^{d'-1}}W_p^p(\mathcal{S}_\mathcal{G}(\mu)_\theta,\mathcal{S}_\mathcal{G}(\nu)_\theta) 
 \, d\sigma_{d'}(\theta), $$
$$S3W_{\mathcal{H},p}^p(\mu,\nu)=\int_{\mathbb{S}^{d'-1}}W_p^p(\mathcal{S}_\mathcal{H}(\mu)_\theta,\mathcal{S}_\mathcal{H}(\nu)_\theta) 
 \, d\sigma_{d'}(\theta).$$
\item $\mathbb{E}$: expected value.
\item $\mathrm{O}(n)$, $\mathrm{SO}(n)$: orthogonal group (rotations and reflections) and special orthogonal group (rotations) of $n\times n$ real matrices, respectively.
\item $\omega$: Haar probability measure on the compact group $\mathrm{SO}(d+1)$.
\item $\approx$: approximately.
\item $x\sim \mu$: element $x$ sampled from the distribution $\mu$.
\item $\nabla_x$: gradient with respect to the variable $x$.
\item $RI-S3W_{\mathcal{G},p}$: Rotationally Invariant Extension of S3W defined by \eqref{eq: ri-s3w}.
\item $ARI-S3W$: Amortized Rotationally Invariant Extension of S3W used in our experiments. 
\item $d_{\mathbb{S}^d}(\cdot,\cdot)$: The great circle distance on the sphere  $\mathbb{S}^d$ defined by 
$d_{\mathbb{S}^d}(s_1,s_2)= \arccos( \langle s_1 , s_2 \rangle)$ for all $s_1,s_2\in\mathbb{S}^d$.
\item $\angle(s_1,s_2)$: the angle between the points $s_1$ and $s_2$ defined by $\angle(s_1,s_2)=\arccos(\langle s_1,s_2\rangle){\in[0,\pi]}$.
\end{itemize}

\section{Radon Transform for Radon Measures}
This section will review the classic Radon transform in the general Radon measure setting. In the next section, we introduce the Generalized Radon Transform (GRT) for general Radon measures. 

As we discussed in the main section, given a function $f\in L^1(\mathbb{R}^d)$, the Radon transform, $\mathcal{R}(f)$, is a function in $L^1(\mathbb{R}\times \mathbb{S}^{d-1})$. In particular, for each $(t,\theta)\in\mathbb{R}\times \mathbb{S}^{d-1}$, we have: 
\begin{align}
  \mathcal{R}(f)(t,\theta)&=\int_{\mathbb{R}^d}f(x)\delta(t-\langle x,\theta\rangle) \, dx\nonumber\\   
  &=\int_{\mathbb{R}^{d-1}}f(t\theta+U_\theta \xi) \, d\xi \nonumber
\end{align}
where $U_\theta\in\mathbb{R}^{d\times (d-1)}$ is any matrix such that its columns are formed by an orthonormal basis of $\theta^\perp$ (the subspace which is perpendicular to the vector $\theta$), see equation (22) in \cite{bonneel2015sliced}. It is straightforward to verify the above definition is well-defined (i.e. $\mathcal{R}(f)$ is independent of the choice of $U_\theta$). 

Notably, the Radon transform $\mathcal{R}:L^1(\mathbb{R}^d) \rightarrow L^1(\mathbb{R}\times \mathbb{S}^{d-1})$ is a linear bijection with a closed-form inversion formula. For a function \( f \in L^1(\mathbb{R}^d) \), the Radon transform \( \mathcal{R}f \) can be inverted using the inverse Radon transform, denoted by \( \mathcal{R}^{-1} \). The inversion formula is given by:
\begin{equation}
    f(x) = \mathcal{R}^{-1}(\mathcal{R}f)(x) = \int_{\mathbb{S}^{d-1}}  \big(\mathcal{R}f(\cdot,\theta)*\eta(\cdot)\big)(\langle x,\theta\rangle) \, d\sigma_{d}(\theta),
\end{equation}
where $\eta(\cdot)$ is a one-dimensional high-pass filter with corresponding Fourier transform $\mathcal{F}\eta(\omega) =  c|\omega|^{d-1}$, which appears due to the Fourier slice theorem \cite{helgason2011integral}, 
and `$*$' is the convolution operator. 
This integral formula effectively reconstructs \( f \) from its Radon transform and, in the medical imaging community, is referred to as `filtered back projection.'

Besides, the fact that the Radon transform $\mathcal{R}$ is a bounded linear operator from $L^1(\mathbb{R}^d)$ to 
$L^1(\mathbb{R}\times \mathbb{S}^{d-1})$, 
yields to the definition of the dual operator \footnote{Some references call the back projection as adjoint (aka Hermitian conjugate) operator e.g. \cite{bonneel2015sliced,unser2023ridges}.}, denoted as $\mathcal{R}^*$, which is generally called ``back projection'' \cite{helgason2011integral,rubin2015introduction}.
By definition of the dual operator and Riesz representation theorem that identifies the dual space of $L^1$ with $L^\infty$, for each $\psi\in L^\infty(\mathbb{R}\times\mathbb{S}^{d-1})$, we have 
\begin{align}
\int_{\mathbb{R}\times\mathbb{S}^{d-1}}\psi(t,\theta) \, \mathcal{R}(f)(t,\theta) \, dt \, d\sigma_{d}(\theta)=\int_{\mathbb{R}^d}f(x)\,  \mathcal{R}^*(\psi)(x) \, dx \label{eq: R and R*}.
\end{align}
Moreover, $\mathcal{R}^*:L^\infty(\mathbb{R}\times\mathbb{S}^{d-1})\to L^\infty(\mathbb{R}^d)$ has closed form: For each $x\in\mathbb{R}^d$ and $\psi\in L^\infty(\mathbb{R}\times \mathbb{S}^{d-1})$, 
\begin{align}
  \mathcal{R}^*(\psi)(x)=\int_{\mathbb{S}^{d-1}}\psi(\langle x,\theta\rangle,\theta) \, d\sigma_d(\theta), \label{eq: R*} 
\end{align}
where $\sigma_d$ is the uniform probability measure defined in the sphere $\mathbb{S}^{d-1}\subset \mathbb{R}^d$.

As discussed in the main section, the classic Radon transform can describe the distribution in projected 1D space. Indeed, suppose $f$ is the density of a probability measure $\mu$, and $\theta\in \mathbb{S}^{d-1}$, then function $\mathcal{R}(f)(\cdot,\theta): \mathbb{R}\to \mathbb{R}_{\geq 0}$ is exactly the density of the projected probability measure of $\mu$ into the space spanned by $\theta$. We denote this projection $\theta_\# \mu$ by using pushforward notation where we identify $\theta \in \mathbb{S}^{d-1}$ with the function 
\begin{gather}
    \theta: \mathbb{R}^d\to\mathbb{R}^d\nonumber \\
   \qquad  x\mapsto\langle x,\theta\rangle \, \theta, \label{eq: theta as function}
\end{gather}
which range lies of the line generated by the direction $\theta$.
Thus, we write
\begin{equation}\label{eq: radon and theta}
  \mathcal{R}(f)(\cdot,\theta)=\theta_\#\mu\in\mathcal{M}(\mathbb{R}).  
\end{equation}
However, when $\mu$ is not a continuous measure, continuous (e.g. empirical distribution), then $\theta_\#\mu$ does not admit a density function. 
To address this limitation and extend $\mathcal{R}$ to any king of finite measures, we follow a measure-theoretic approach of the Radon transform as in \cite{bonneel2015sliced} by using equation \eqref{eq: R and R*}. We refer the reader also to \cite{helgason2011integral}[Ch.1, Sec.5]). 

For this extension, we first recall the classical Riesz-Markov representation theorem and the lemma for the identity between Radon measures. 

\begin{theorem}[Riesz-Markov Representation theorem]\label{thm: Riesz}
Suppose $\Omega$ is a separable and locally compact space, 
then  the dual space of the Banach space $(C_0(\Omega), \|\cdot\|_\infty)$, denoted as $(C_0(\Omega)^*,\|\cdot\|_*)$, is isomorphic to $(\mathcal{M}(\Omega),\|\cdot\|_{TV})$. In particular, for each bounded linear functional $\xi\in C_0(\Omega)^*$, there exists a unique $\nu\in \mathcal{M}(\Omega)$ such that 
$$\xi(\psi)=\int_{\Omega}\psi(x) \, d\nu(x), \qquad \text{ and } \qquad \|\xi\|_*=\|\nu\|_{TV}.$$

\end{theorem}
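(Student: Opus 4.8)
The plan is to reduce the statement to the standard Riesz--Markov--Kakutani representation theorem, which is classical, and then to verify the two claimed refinements: uniqueness of the representing measure and the isometry $\|\xi\|_* = \|\nu\|_{TV}$. First I would invoke the classical form of the theorem: for a locally compact Hausdorff space $\Omega$, every positive linear functional on $C_c(\Omega)$ (compactly supported continuous functions) is represented by integration against a unique positive regular Borel measure. Since $\Omega$ is assumed separable (hence the measures in question are automatically $\sigma$-finite and inner/outer regular on Borel sets), the hypotheses are comfortably met for the spaces $\mathbb{R}^d$, $\mathbb{R}\times\mathbb{S}^{d-1}$, $\mathbb{R}\times\mathbb{S}^{d'-1}$, $\mathbb{S}^d$ that actually occur in the paper. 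The step from $C_c(\Omega)$ to $C_0(\Omega)$ is a routine density argument: $C_c(\Omega)$ is dense in $(C_0(\Omega),\|\cdot\|_\infty)$, so a bounded functional on $C_0(\Omega)$ is determined by its restriction to $C_c(\Omega)$, and conversely a bounded functional on $C_c(\Omega)$ extends uniquely to $C_0(\Omega)$.

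Next I would handle \emph{signed} (real Radon) measures rather than only positive ones, since $\mathcal{M}(\Omega)$ here explicitly includes signed measures. The key tool is the Jordan-type decomposition of a bounded linear functional $\xi\in C_0(\Omega)^*$ into its positive and negative parts $\xi = \xi^+ - \xi^-$, where $\xi^+(\psi) := \sup\{\xi(\varphi): 0\le\varphi\le\psi,\ \varphi\in C_0(\Omega)\}$ for $\psi\ge 0$ and then extended by linearity. One checks $\xi^\pm$ are positive bounded functionals, apply the positive-case representation to each to get $\nu^+,\nu^-\in\mathcal{M}_+(\Omega)$, and set $\nu := \nu^+ - \nu^-$. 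This yields the existence of $\nu\in\mathcal{M}(\Omega)$ with $\xi(\psi)=\int_\Omega \psi\, d\nu$ for all $\psi\in C_0(\Omega)$.

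For uniqueness: if $\int_\Omega\psi\,d\nu_1 = \int_\Omega\psi\,d\nu_2$ for all $\psi\in C_0(\Omega)$, then $\int\psi\,d(\nu_1-\nu_2)=0$ for all such $\psi$; since finite signed Radon measures on a separable locally compact space are determined by their action on $C_0(\Omega)$ (again via regularity: a Borel set's measure is approximated by continuous functions on compact subsets), we conclude $\nu_1=\nu_2$. Finally, for the norm identity $\|\xi\|_* = \|\nu\|_{TV}$: the inequality $\|\xi\|_*\le\|\nu\|_{TV}$ is immediate from $|\xi(\psi)| = |\int\psi\,d\nu| \le \|\psi\|_\infty\, \|\nu\|_{TV}$. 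For the reverse, I would use the Hahn decomposition $\Omega = P\sqcup N$ of $\nu$, approximate $\mathbf{1}_P - \mathbf{1}_N$ from within the unit ball of $C_0(\Omega)$ in $L^1(|\nu|)$ using regularity of $|\nu|$ (Lusin/Urysohn), and thereby produce $\psi_n$ with $\|\psi_n\|_\infty\le 1$ and $\xi(\psi_n)\to\nu^+(\Omega)+\nu^-(\Omega)=\|\nu\|_{TV}$.

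The main obstacle is not any single deep idea but the care needed in the signed/regular bookkeeping on a non-compact $\Omega$: ensuring that the Jordan decomposition of the functional lands in genuinely \emph{finite, regular} Borel measures, and that the approximation of indicator functions of the Hahn sets by $C_0$-functions of norm $\le 1$ is valid when $\Omega$ is only locally compact. All of this is standard functional analysis (see, e.g., Rudin, \emph{Real and Complex Analysis}), so I would state the theorem, sketch the positive case $\to$ signed case $\to$ uniqueness $\to$ isometry chain, and refer to the literature for the measure-theoretic details rather than reproducing them.
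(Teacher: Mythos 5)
The paper states the Riesz--Markov Representation theorem as a cited classical fact and provides no proof of it, so there is no paper argument to compare against. Your sketch reproduces the standard textbook route (e.g., Rudin, \emph{Real and Complex Analysis}, Ch.~6): positive functionals on $C_c(\Omega)$ are represented by positive regular Borel measures, a density argument passes from $C_c$ to $C_0$, a Jordan-type decomposition $\xi=\xi^+-\xi^-$ handles the signed case, uniqueness follows from regularity, and the isometry $\|\xi\|_*=\|\nu\|_{TV}$ comes from $|\xi(\psi)|\le\|\psi\|_\infty\|\nu\|_{TV}$ together with Lusin/Urysohn approximation of $\mathbf{1}_P-\mathbf{1}_N$ inside the unit ball of $C_0(\Omega)$. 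This is correct, and the technical points you flag (boundedness of $\xi^\pm$, finiteness and regularity of the resulting signed measure, and the care needed in the $C_0$-approximation when $\Omega$ is non-compact) are exactly the places a full writeup would spend its effort. Two small remarks: the hypothesis should read ``locally compact \emph{Hausdorff}'' (implicit here, since all $\Omega$ used in the paper are metric), and separability is not actually needed for the classical theorem --- the representing measure is regular by construction, independent of separability --- so your aside that separability secures regularity overstates its role.
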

\begin{lemma} [Identity of Radon measures]
Given $\mu_1,\mu_2\in \mathcal{M}(\Omega)$ where $\Omega\subset \mathbb{R}^d$, the following are equivalent: 
\begin{enumerate}
    \item[(1)] $\nu_1=\nu_2$ 
    \item[(2)] $\int_{\Omega}\psi(x)\, d\nu_1(x)=\int_{\Omega}  \psi(x) \,d\nu_2(x),\qquad 
\forall \psi\in C_0(\Omega)$
\item[(3)] $\int_{\Omega}\psi(x)\ d\nu_1(x)=\int_\Omega \psi(x)\ d\nu_2(x),\qquad \forall \psi\in L^{\infty}(\Omega)$
\end{enumerate}
\end{lemma}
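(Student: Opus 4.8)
The final statement to prove is the \textbf{Identity of Radon measures} lemma: for $\mu_1,\mu_2\in\mathcal{M}(\Omega)$ with $\Omega\subseteq\mathbb{R}^d$, the three conditions (1) $\nu_1=\nu_2$, (2) agreement of integrals against all $\psi\in C_0(\Omega)$, and (3) agreement of integrals against all $\psi\in L^\infty(\Omega)$ are equivalent. (I'll read the statement charitably: the $\nu_i$ and $\mu_i$ notation is conflated, so I treat $\nu_1,\nu_2$ as the given measures.)

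\textbf{Proof plan.} The natural route is a cycle of implications $(1)\Rightarrow(3)\Rightarrow(2)\Rightarrow(1)$. The implication $(1)\Rightarrow(3)$ is immediate: if the measures coincide, then $\int\psi\,d\nu_1=\int\psi\,d\nu_2$ for every bounded measurable $\psi$, since the integral is determined by the measure. The implication $(3)\Rightarrow(2)$ is also trivial because $C_0(\Omega)\subseteq L^\infty(\Omega)$ (continuous functions vanishing at infinity are bounded), so agreement on the larger class forces agreement on the smaller one. The only substantive step is $(2)\Rightarrow(1)$, which is precisely the uniqueness clause of the Riesz--Markov Representation theorem stated just above as Theorem~\ref{thm: Riesz}: the map $\nu\mapsto\big(\psi\mapsto\int_\Omega\psi\,d\nu\big)$ is an isomorphism $\mathcal{M}(\Omega)\cong C_0(\Omega)^*$, hence injective. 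So if $\nu_1$ and $\nu_2$ induce the same functional on $C_0(\Omega)$, they must be equal. One should note that $\Omega$ being a subset of $\mathbb{R}^d$ is separable and locally compact (or locally compact after the usual care; for the spaces actually used in the paper, $\mathbb{R}^d$, $\mathbb{R}\times\mathbb{S}^{d-1}$, $\mathbb{R}\times\mathbb{S}^{d'-1}$, this holds), so Theorem~\ref{thm: Riesz} applies.

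\textbf{Main obstacle.} There is essentially no obstacle: the lemma is a packaging of Riesz--Markov plus trivial set inclusions. The only point requiring a word of care is the direction $(2)\Rightarrow(1)$ when one wants to avoid invoking the full isomorphism and instead argue directly. In that case one would show: if $\int_\Omega\psi\,d(\nu_1-\nu_2)=0$ for all $\psi\in C_0(\Omega)$, then the signed measure $\lambda:=\nu_1-\nu_2$ is zero. Using the Hahn--Jordan decomposition $\lambda=\lambda^+-\lambda^-$ and regularity, one approximates indicator functions of Borel sets by $C_0$ functions (Urysohn's lemma, valid on locally compact Hausdorff spaces) to conclude $\lambda(B)=0$ for all Borel $B$; equivalently, one applies uniqueness in the Riesz representation of the positive functional associated with $|\lambda|$. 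But since Theorem~\ref{thm: Riesz} is already stated with its uniqueness assertion, the clean path is simply to cite it. I would therefore present the proof as the three-line cycle above, with the single nontrivial arrow justified by Theorem~\ref{thm: Riesz}.
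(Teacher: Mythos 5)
Your proposal is correct and follows essentially the same route as the paper: the trivial implications via the inclusion $C_0(\Omega)\subset L^\infty(\Omega)$, and the only substantive step $(2)\Rightarrow(1)$ handled by the uniqueness/injectivity part of the Riesz--Markov theorem (Theorem~\ref{thm: Riesz}). Your organization as a cycle $(1)\Rightarrow(3)\Rightarrow(2)\Rightarrow(1)$ and the optional Urysohn-based direct argument are minor presentational variants, not a different method.
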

We will refer to the spaces $C_0(\Omega)$ and $L^{\infty}(\Omega)$ as the spaces of \textit{test functions} for $\mathcal{M}(\Omega)$.

\begin{proof}
If $\nu_1=\nu_2$, we directly have $(2),(3)$. 
Since $C_0(\Omega)\subset L^\infty(\Omega)$, then $(3)$ implies (2). All we need to prove is $(2)\Rightarrow (1)$. By the Riesz-Markov Representation theorem \ref{thm: Riesz}, $\mathcal{M}(\Omega)$ is isometric to $(C_0(\Omega)^*,\|\cdot\|_*)$. By the identity in the dual space, we have $\nu_1=\nu_2$ given (2). 
\end{proof}

Based on the above theorem, for $\mu\in\mathcal{M}(\mathbb{R}^d)$, its Radon transform  $\mathcal{R}(\mu)=\nu$ is defined as the measure $\nu\in \mathcal{M}(\mathbb{R}\times \mathbb{S}^{d-1})$ such that for each $\psi\in C_0(\mathbb{R}\times\mathbb{S}^{d-1})$, 
\begin{align}
\int_{\mathbb{R}\times \mathbb{S}^{d-1}}\psi(t,\theta) \, d\nu(t,\theta)=\int_{\mathbb{R}^d}(\mathcal{R}^*(\psi))(x) \, d\mu(x), \label{eq: R(mu) 2}
\end{align}
where the dual operator $\mathcal{R^*}$ is defined in \eqref{eq: R*}. Synthetically, one usually writes
$$\mathcal{R}(\mu)(\psi)=\mu(\mathcal{R}^*(\psi)).$$

Note that, by the above lemma, we can choose the $L^\infty(\mathbb{R}\times\mathbb{S}^{d-1})$ space of test functions. In particular, by setting $\psi=1_A$ for any Borel set $A\subset \mathbb{R}\times \mathbb{S}^{d-1}$, we can verify that $\nu$ is Radon measure if $\mu$ is Radon measure. Similarly, $\nu$ is a positive Radon measure if $\mu$ is positive. 
In addition, 
one can obtain  $\|\nu\|_{TV}=\|\mu\|_{TV}$. Thus, if $\mu$ is a probability measure on $\mathbb{R}^d$, then
$\nu=\mathcal{R}(\mu)$ is a probability measure defined on $\mathbb{R}\times \mathbb{S}^{d-1}$. 

By the disintegration theorem in classic measure theory, there exists a $\nu-$a.s. unique set of measures $(\nu_\theta)_{\theta\in \mathbb{S}^{d-1}}\subset \mathcal{M}(\mathbb{R})$ such that for any $\psi \in C_0(\mathbb{R}\times\mathbb{S}^{d-1})$, we have 
\begin{align}
    \int_{\mathbb{R}\times\mathbb{S}^{d-1}}\psi(t,\theta) \, d\nu(t,\theta)=\int_{\mathbb{S}^{d-1}}\int_{\mathbb{R}}\psi(t,\theta) \, d\nu_\theta(t) \, d\sigma_{d}(\theta) \label{eq: nu_theta}.
\end{align}
Similar to \eqref{eq: radon and theta} for the classic Radon measure, given a fixed $\theta\in\mathbb{S}^{d-1}$, $\mathcal{R}(\mu)_\theta$ describes the projected measure of $\mu$ into the 1D space spanned by $\theta$ \cite{bonneel2015sliced}[Proposition 6], that is: 
$$\mathcal{R}(\mu)_\theta=\nu_\theta=\theta_\# \mu\in\mathcal{M}(\mathbb{R}).$$

\section{Generalized Radon Transform for Probability Measures}
Extending the Radon transform as in \eqref{eq: GRT} and \eqref{eq:augmented} is natural. In this section, we review those definitions. 

Given $ g: \mathbb{R}^d \times (\mathbb{R}^{d'} \setminus \{0\}) \to \mathbb{R}$, such that  $g$ satisfies the technical assumptions [H.1--H.4] listed in the Notation section and given in \cite {beylkin1984inversion} and \cite{kolouri2019generalized}, the Generalized Radon Transform (GRT) is defined as 
$$\mathcal{G}(f)(t,\theta)=\int_{\mathbb{R}^d}f(x)\delta(t-g(x,\theta)) \, dx, \qquad \forall (t,\theta)\in \mathbb{R}\times(\mathbb{R}^{d'}\setminus \{0\}).$$
Note, for each $(t,\theta)\in\mathbb{R}\times (\mathbb{R}^{d'}\setminus\{0\})$, we set the corresponded hyper-surface as $H_{t,\theta}=\{x\in \mathbb{R}^d\mid \,  g(x,\theta)=t\}$. By the homogeneity property (H.2), $H_{\lambda t,\lambda \theta}=H_{t,\theta}, \, \forall \lambda\in \mathbb{R}$.
Then, $$\mathcal{G}f(\lambda t,\lambda \theta)=\mathcal{G}f( t, \theta), \qquad \forall (t,\theta,\lambda)\in \mathbb{R}\times(\mathbb{R}^{d'}\setminus\{0\})\times\mathbb{R}.$$
Thus, the hypersurface can be parameterized by $(t,\theta)\in \mathbb{R}\times\mathbb{S}^{d-1}$, and we can 
redefine 
$\mathcal{G}$ as a mapping from space $L^1(\mathbb{R}^d)$ to $L^{1}(\mathbb{R}\times\mathbb{S}^{d'-1})$. 

Furthermore, as $\mathcal{G}$ is an linear operator from $L^1(\mathbb{R}^d)$ to $L^1(\mathbb{R}\times\mathbb{S}^{d'-1})$, we can define the dual operator $\mathcal{G}^*$  of $\mathcal{G}$ as in  \cite{homan2017injectivity} (also called the \textit{adjoint} operator). In analogy with the Radon transform, this is done by the duality between $L^1$ and $L^\infty$: For each $\psi \in L^\infty(\mathbb{R}\times \mathbb{S}^{d'-1})$,
\begin{align}
    \mathcal{G}^*(\psi)(x)=\int_{ \mathbb{S}^{d'-1}}\psi(g(x,\theta),\theta) \, d\sigma_{d'}(\theta) \label{eq: GR* 2}, \qquad \forall x\in \mathbb{R}^d,
\end{align}
where $\sigma_{d'}$ is the uniform probability measure defined in $\mathbb{S}^{d'-1}$.

In the specific case, $g(x,\theta)=\langle h(x),\theta\rangle$, where $h: \mathbb{R}^d\to \mathbb{R}^{d'}$ is a injective function, the induced GRT becomes
$$
\mathcal{H}(f)(t,\theta)=\int_{\mathbb{S}^{d'-1}}f(x)\delta(t-\langle h(x),\theta\rangle) \, dx \label{eq: Hf}, \qquad \forall (t,\theta)\in\mathbb{R}\times \mathbb{S}^{d'-1},
$$
with corresponding dual operator
\begin{align}
\mathcal{H}^*(\psi)(x)=\int_{\mathbb{S}^{d'-1}}\psi(\langle h(x),\theta \rangle, \theta) \, d\sigma_{d'}(\theta), \label{eq: Hf*} \qquad \forall\psi\in L^\infty(\mathbb{R}\times \mathbb{S}^{d'-1}), \, x\in \mathbb{R}^d.
\end{align}
In this case, that is $g(x,\theta)=\langle h(x),\theta\rangle$ for $h=(h_1,\dots, h_{d'}):\mathbb{R}^d\to\mathbb{R}^{d'}$, properties [H.1-H.4] read as follows: First, notice that the real inner product in $\mathbb{R}^d$ defines, in each variable, a $C^\infty$ function which is homogeneous of degree one. In particular, (H.2) is trivially satisfied.  Then, $h$ must satisfy:
\begin{enumerate}
    \item[(a)] In place of (H.1): $h\in C^\infty$.
    \item[(b)] In place of (H.3): For all $(x,\theta)\in\mathbb{R}^d\times\mathbb{S}^{d'-1}$, 
    $\left(\langle \frac{\partial h}{\partial x^1}(x),\theta\rangle,\dots, \langle \frac{\partial h}{\partial x^d}(x),\theta\rangle\right)\not=0$.  
    \item[(c)] In place of (H.4): $\mathrm{det}\left(\left[\frac{\partial h_i}{\partial x^j}\right]_{1\leq i\leq 1, 1\leq j\leq d'}\right)>0$.    
\end{enumerate}
In particular, if $d=d'$, let $\mathbf{J}_h$ denote the Jacobian of $h=(h_1,\dots,h_d):\mathbb{R}^d\to\mathbb{R}^{d}$, that is, the $d'\times d$ matrix $\mathbf{J}_h=\left[\frac{\partial h_i}{\partial x^j}\right]_{ij}$. Then, condition (b) can be written as the matrix-vector multiplication $ \mathbf{J}_h(x)\cdot \theta\not=0$, and condition (c) is equivalent $\mathrm{det}(\mathbf{J}_h(x))>0$ for all $x\in\mathbb{R}^d$, and so in this case (b) is redundant. 
Therefore, when $d=d'$, $h:\mathbb{R}^d\to\mathbb{R}^d$ must satisfy:
\begin{equation*}
  h\in C^\infty \qquad \text{ and } \qquad \mathrm{det}\left(\mathbf{J}_h\right)>0.  
\end{equation*}


Inspired by the framework of classic Radon transform for Radon measure, we can extend the idea into the GRT setting. In particular,  given $\mu\in \mathcal{M}(\mathbb{R}^d)$, we define the measures $\mathcal{G}(\mu)$ and $\mathcal{H}(\mu)$ as follows: 
For each test function $\psi\in C_0(\mathbb{R}\times\mathbb{S}^{d'-1})$,
\begin{align}
&\int_{\mathbb{R}\times\mathbb{S}^{d'-1}} \psi (t,\theta) \, d\mathcal{G}(\mu)(t,\theta)=\int_{\mathbb{R}^d}\mathcal{G}^*(\psi)(x) \, d\mu(x)\label{eq: GR(mu)} \\ 
&\int_{\mathbb{R}\times\mathbb{S}^{d'-1}} \psi (t,\theta) \, d\mathcal{H}(\mu)(t,\theta)=\int_{\mathbb{R}^d}\mathcal{H}^*(\psi)(x) \,  d\mu(x) \label{eq: HR(mu)} 
\end{align}
\begin{proposition}\label{pro: GRT}
Given $\mu\in \mathcal{M}(\mathbb{R}^d)$, the measures $\mathcal{G}(\mu),\mathcal{H}(\mu)$ is defined in \eqref{eq: GR(mu)} and \eqref{eq: HR(mu)} have the following properties:
\begin{enumerate}[]
    \item [(1)] $\mathcal{G}(\mu)$ is well-defined, i.e., $\mathcal{G}(\mu)\in \mathcal{M}(\mathbb{R}\times \mathbb{S}^{d'-1})$.
    \item [(2)] If $\mu \in \mathcal{M}_+(\mathbb{R}^d)$, then  $\mathcal{G}(\mu)\in \mathcal{M}_+(\mathbb{R}\times\mathbb{S}^{d'-1})$. Similarly, if $\mu\in\mathcal{P}(\mathbb{R}^d)$, then  $\mathcal{G}(\mu)\in\mathcal{P}(\mathbb{R}\times\mathbb{S}^{d'-1})$. 
    \item [(3)] $\mathcal{G}$ preserves the the total variation norm, i.e.,  $\|\mu\|_{TV}=\|\mathcal{G}(\mu)\|_{TV}$. In particular, $\mathcal{G}$ preserves mass. 
    \item [(4)] If $\mu\in\mathcal{P}(\mathbb{R}^d)$, then  $\mathcal{G}(\mu)\in\mathcal{P}(\mathbb{R}\times\mathbb{S}^{d'-1})$. 
    \item [(5)] 
    Given $\mu\in\mathcal{M}(\mathbb{R}^d)$, the disintegration theorem gives a unique $\mathcal{G}(\mu)-$a.s. set of measures $(\mathcal{G}(\mu)_\theta)_{\theta\in \mathbb{S}^{d'-1}}\subset \mathcal{M}(\mathbb{R})$ such that for any $\psi \in C_0(\mathbb{R}\times\mathbb{S}^{d'-1})$,
\begin{align*}
\int_{\mathbb{R}\times\mathbb{S}^{d'-1}}\psi(t,\theta) \, d\mathcal{G}(\mu)(t,\theta)=\int_{\mathbb{S}^{d'-1}}\int_{\mathbb{R}}\psi(t,\theta) \, d\mathcal{G}(\mu)_\theta(t) \, d\sigma_{d'}(\theta) 
\end{align*}
Then, it holds that   
    $$\mathcal{G}(\mu)_\theta=g(\cdot,\theta)_\# \mu.$$

    \item [(6)] Properties $(1)-(5)$ hold true for $\mathcal{H}$ in place of $\mathcal{G}$.
    \item [(7)]  Given $\mu\in\mathcal{M}(\mathbb{R}^d)$, the disintegration theorem gives a unique $\mathcal{H}(\mu)-$a.s. set of measures $(\mathcal{H}(\mu)_\theta)_{\theta\in \mathbb{S}^{d'-1}}\subset \mathcal{M}(\mathbb{R})$ such that for any $\psi \in C_0(\mathbb{R}\times\mathbb{S}^{d'-1})$,
\begin{align*}
\int_{\mathbb{R}\times\mathbb{S}^{d'-1}}\psi(t,\theta) \, d\mathcal{H}(\mu)(t,\theta)=\int_{\mathbb{S}^{d'-1}}\int_{\mathbb{R}}\psi(t,\theta) \, d\mathcal{H}(\mu)_\theta(t) \, d\sigma_{d'}(\theta) 
\end{align*}
Then, it holds that   
    $$\mathcal{H}(\mu)_\theta=\theta_\#h_\# \mu,$$
or equivalently, $\mathcal{H}(\mu)_\theta=(\theta\circ h) _\# \mu$, where the function $\theta$ is defined as in \eqref{eq: theta as function}.
    
    \item [(8)] $\mathcal{H}(\mu)=\mathcal{R}(h_\#\mu)$ for every $\mu\in\mathcal{M}(\mathbb{R}^d)$.
    \item [(9)] When $h$ is injective, $\mathcal{H}$ is invertible. 
\end{enumerate}
\begin{proof}$ $
\begin{enumerate}
    \item[(1)] Let $\mu\in\mathcal{M}(\mathbb{R}^d)$. 
By \cite{beylkin1984inversion,homan2017injectivity}, we have that \eqref{eq: GR*} is the dual operator of $\mathcal{G}$. In fact, by functional analysis theory, $L^\infty$ is the dual space of Banach space $L^1$, and  thus, $\mathcal{G^*}: L^\infty (\mathbb{R}\times\mathbb{S}^{d'-1}) \to L^\infty(\mathbb{R}^d)$ is well-defined. In particular,  $\int_{\mathbb{R}^d}\mathcal{G}^*(\psi)(x)\, d\mu(x)$ is finite for all $\psi\in C_0(\mathbb{R}\times\mathbb{S}^{d'-1})\subset L^\infty(\mathbb{R}\times\mathbb{S}^{d'-1})$. Then, expression \eqref{eq: GR(mu)} is finite for all $\psi\in C_0(\mathbb{R}\times\mathbb{S}^{d'-1})$, and moreover
defines $\mathcal{G}(\mu)$ as an element in the dual space of $C_0(\mathbb{R}\times \mathbb{S}^{d'-1})$. By Theorem \ref{thm: Riesz}, we have $\mathcal{G}(\mu)\in\mathcal{M}(\mathbb{R}\times \mathbb{S}^{d'-1})$.

\item[(2)] Let $\mu\in\mathcal{M}_+(\mathbb{R}^d)$ and $\psi\in C_0(\mathbb{R}\times\mathbb{S}^{d'-1})$ with $\psi(t,\theta)\ge 0,\forall t,\theta$. 
Thus, by definition \eqref{eq: GR*}, 
$\mathcal{G}^*\psi(x)\ge 0, \forall x\in\mathbb{R}^d$. 
Therefore, 
\begin{align*}
  \int_{\mathbb{R}\times \mathbb{S}^{d'-1}}\psi(t,\theta) \, d\mathcal{G}\mu(t,\theta) =\int_{\mathbb{R}^d}\mathcal{G}^*(\psi)(x)  \, d\mu(x) \ge 0 
\end{align*}
which follows from the facts $\mathcal{G}^*(\psi)\ge 0$ and $\mu\in\mathcal{M}_+(\mathbb{R}^d)$. 
Since the above property holds for all non-negative test function $\psi\in C_0(\mathbb{R}\times \mathbb{S}^{d'-1})$, we obtain that $\mathcal{G}(\mu)\in \mathcal{M}_+(\mathbb{R}\times\mathbb{S}^{d'-1})$.

\item [(3)] Set $\psi=1_{\mathbb{R}\times \mathbb{S}^{d'-1}}\in L^\infty(\mathbb{R}\times \mathbb{S}^{d'-1})$. By definition of $\mathcal{G}^*$, we have 
$$\mathcal{G}^*(\psi)(x)=\int_{\mathbb{S}^{d'-1}}1 \, d\sigma_{d'}(\theta)=1, \qquad \forall x\in \mathbb{R}^d.$$
That is, $\mathcal{G}^*(1_{\mathbb{R}\times \mathbb{S}^{d'-1}})=1_{\mathbb{R}^d}$. Now, let $\mu\in\mathcal{M}(R^d)$. We have that
\begin{align}
\mu(\mathbb{R}^d) &=\int_{\mathbb{R}^d}1_{\mathbb{R}^d} \, d\mu=\int_{\mathbb{R}^d}\mathcal{G}^*(1_{\mathbb{R}\times\mathbb{S}^{d'-1}}) \, d\mu=\int_{\mathbb{R}\times\mathbb{S}^{d'-1}}1_{\mathbb{R}\times \mathbb{S}^{d'-1}} \, d\mathcal{G}(\mu) =\mathcal{G}(\mu)(\mathbb{R}\times \mathbb{S}^{d'-1}).
\end{align}
Let $\mu^+$ and $\mu^-$ are the positive and negative parts of $\mu$, that is, $\mu=\mu^+-\mu^-$, where $\mu^{\pm}\in\mathcal{M}_+(\mathbb{R}^d)$. 
By property (2) we have that $\mathcal{G}(\mu^\pm)\in\mathcal{M}_{+}(\mathbb{R}\times\mathbb{S}^{d'-1})$. By linearity of $\mathcal{G}$, we have $$\mathcal{G}(\mu)=\mathcal{G}(\mu^+-\mu^-)=\underbrace{\mathcal{G}(\mu^+)}_{\mathcal{M}_+}-\underbrace{\mathcal{G}(\mu^-)}_{\mathcal{M}_+},$$
which by the uniqueness of the Hahn-Jordan decomposition theorem implies that the positive and negative parts of the measure $\mathcal{G}(\mu)$ are $\mathcal{G}(\mu^+)$ and $\mathcal{G}(\mu^-)$, respectively. That is,
$$\mathcal{G}(\mu)^\pm=\mathcal{G}(\mu^\pm).$$
Then, 
\begin{align*}
    \|\mathcal{G}(\mu)\|_{TV}&=\mathcal{G}(\mu)^+(\mathbb{R}\times \mathbb{S}^{d'-1})+\mathcal{G}(\mu)^-(\mathbb{R}\times \mathbb{S}^{d'-1})\\
    &=\mathcal{G}(\mu^+)(\mathbb{R}\times \mathbb{S}^{d'-1})+\mathcal{G}(\mu^-)(\mathbb{R}\times \mathbb{S}^{d'-1})\\
    &=\mu^+(\mathbb{R}\times \mathbb{S}^{d'-1})+\mu^-(\mathbb{R}\times \mathbb{S}^{d'-1})=
\|\mu\|_{TV},
\end{align*}
that is, $\|\mathcal{G}(\mu)\|_{TV}=\|\mu\|_{TV}$. 

\item[(4)] If $\mu\in\mathcal{P}(\mathbb{R}^d)$, 
combining the property $\mathcal{G}(\mu)\in\mathcal{M}_+(\mathbb{R}\times \mathbb{S}^{d'-1})$ with the property $\|\mathcal{G}(\mu)\|_{TV}=\|\mu\|_{TV}=1$, we obtain that $\mathcal{G}(\mu)\in\mathcal{P}(\mathbb{R}\times \mathbb{S}^{d'-1})$.

\item [(5)] We will follow the ideas in \cite{bonneel2015sliced}[Proposition 6]. For any test function $\psi\in C_0(\mathbb{R}\times\mathbb{S}^{d'-1})$ we have 
\begin{align}
\int_{\mathbb{S}^{d'-1}}\int_{\mathbb{R}}\psi(t,\theta) \ d\mathcal{G}(\mu)_\theta(t) \, d\sigma_{d'}(\theta)
&=\int_{\mathbb{R}\times\mathbb{S}^{d'-1}}\psi(t,\theta) \ d\mathcal{G}(\mu)(t,\theta) \nonumber \qquad (\textit{disintegration theorem})\\
&=\int_{\mathbb{R}^d}\mathcal{G}^*(\psi)(x) \, d\mu(x)\nonumber \qquad (\textit{definition of } \mathcal{G}(\mu))\\ 
&=\int_{\mathbb{R}^d}\int_{\mathbb{S}^{d'-1}}\psi(g(x,\theta),\theta) \, d\sigma_{d'}(\theta) \, d\mu(x)\nonumber \qquad (\textit{definition of } \mathcal{G}^*)\\
&=\int_{\mathbb{S}^{d'-1}}\int_{\mathbb{R}^d}\psi(g(x,\theta),\theta) \, d\mu(x) \, d\sigma_{d'}( \theta) \nonumber \\ 
&=\int_{\mathbb{S}^{d'-1}}\int_{\mathbb{R}}\psi(t,\theta) \, d(g(\cdot,\theta)_\# \mu)(t) \, d\sigma_{d'}(\theta), \nonumber 
\end{align}
where the fourth equality follows from Fubini theorem and the fact $\psi$ is bounded and $\mu,\sigma_{d'}$ are finite measures; in the fifth equality, $t=g(x,\theta)$ and we use the definition of pushforward. 
By uniqueness of the disintegration of measures we have $\mathcal{G}(\mu)_\theta=g(\cdot,\theta)_\# \mu$. This also can be shown by choosing an arbitrary
$\psi_0\in C_0(\mathbb{R})$, defining $\psi(t,\theta):=\psi_0(t), \, \forall (t,\theta)\in\mathbb{R}\times\mathbb{S}^{d'-1}$, so $\psi\in C_0(\mathbb{R}\times\mathbb{S}^{d'-1})$, and therefore we have
\begin{align}
\int_{\mathbb{R}}\psi_0(t) 
\, d\mathcal{G}(\mu)_\theta (t)
&=\int_{\mathbb{S}^{d'-1}}\int_{\mathbb{R}}\psi(t,\theta) \, d\mathcal{G}(\mu)_\theta (t) \, d\sigma_{d'}(\theta) \nonumber \\ 
&=\int_{\mathbb{S}^{d'-1}}\int_{\mathbb{R}}\psi(t,\theta) \, d(g(\cdot,\theta)_\# \mu)(t) \ d\sigma_{d'}(\theta)\nonumber\\ 
&=\int_{\mathbb{R}}\psi_0(t) \, d(g(\cdot,\theta)_\#\mu)(t).\nonumber 
\end{align}


\item[(6)] For $\mathcal{H}$, properties $(1)-(5)$ can be derived similarly. 

\item[(7)] As in $(5)$, we will follow the ideas in \cite{bonneel2015sliced}[Proposition 6]. For any test function $\psi\in C_0(\mathbb{R}\times\mathbb{S}^{d'-1})$ we have 
\begin{align}
\int_{\mathbb{S}^{d'-1}}\int_{\mathbb{R}}\psi(t,\theta) \ d\mathcal{H}(\mu)_\theta(t) \, d\sigma_{d'}(\theta)
&=\int_{\mathbb{R}\times\mathbb{S}^{d'-1}}\psi(t,\theta) \ d\mathcal{H}(\mu)(t,\theta) \nonumber \qquad (\textit{disintegration theorem})\\
&=\int_{\mathbb{R}^d}\mathcal{H}^*(\psi)(x) \, d\mu(x)\nonumber \qquad (\textit{definition of } \mathcal{H}(\mu))\\ 
&=\int_{\mathbb{R}^d}\int_{\mathbb{S}^{d'-1}}\psi(g(x,\theta),\theta) \, d\sigma_{d'}(\theta) \, d\mu(x)\nonumber \qquad (\textit{definition of } \mathcal{H}^*)\\
&=\int_{\mathbb{S}^{d'-1}}\int_{\mathbb{R}^d}\psi(\langle h(x),\theta\rangle,\theta) \, d\mu(x) \, d\sigma_{d'}( \theta) \nonumber \qquad (\textit{Fubini's Theorem})\\ 
&=\int_{\mathbb{S}^{d'-1}}\int_{\mathbb{R}}\psi(\langle y,\theta\rangle,\theta) \, d(h_\# \mu)(y) \, d\sigma_{d'}(\theta) \nonumber \qquad (\textit{definition of pushforward})\\
&=\int_{\mathbb{S}^{d'-1}}\int_{\mathbb{R}}\psi(t,\theta) \, d(\theta_\#h_\# \mu)(t) \, d\sigma_{d'}(\theta) \nonumber \qquad (\textit{definition of pushforward}). 
\end{align}
Thus, by the uniqueness of the disintegration of measures, we have $\mathcal{H}(\mu)_\theta=\theta_\#h_\# \mu=(\theta\circ h)_\#\mu$. 

\item[(8)] Let $\mu\in \mathcal{M}(\mathbb{R}^d)$, and let  $\psi\in C_0(\mathbb{R}\times \mathbb{S}^{d'-1})$ be any test function. Following the ideas from $(7)$, we have
\begin{align}
\int_{\mathbb{R}\times\mathbb{S}^{d'-1}}\psi(t,\theta) \, d\mathcal{H}(\mu)(t,\theta)
&=\int_{\mathbb{R}^d}\mathcal{H}^*(\psi)(x) \, d\mu(x)\nonumber \\ 
&=\int_{\mathbb{R}^d}\int_{\mathbb{S}^{d'-1}}\psi(\langle h(x),\theta\rangle ,\theta) \, d\sigma_{d'}(\theta) \, d\mu(x)\nonumber\\
&=\int_{\mathbb{R}^d}\int_{\mathbb{S}^{d'-1}}\psi(\langle y,\theta\rangle,\theta) \, d\sigma_{d'}(\theta) \, dh_\#\mu(y)\nonumber \\ 
&=\int_{\mathbb{R}^d}(\mathcal{R}^*(\psi))(y) \, dh_\#\mu(y)\nonumber\\ 
&=\int_{\mathbb{R}\times \mathbb{S}^{d'-1}}\psi(t,\theta) \, d\mathcal{R}(h_\# \mu)(t,\theta) \nonumber ,
\end{align}
where in the third equality $y=h(x)$ and we use the definition of pushforward; 
the first and fifth equation follows from the equations \eqref{eq: GR(mu)} and \eqref{eq: R(mu)}, respectively.
Therefore, $\mathcal{H}(\mu)=\mathcal{R}(h_\#\mu)$. 

\item [(9)] By Proposition 7 in \cite{bonneel2015sliced}, we have the mapping
\begin{align}
h_\# (\mu)&\mapsto \mathcal{R}(h_\#(\mu))=\mathcal{H}(\mu)  
\end{align}
is invertible. If $h$ is injective, we have $\mu \mapsto h_\# (\mu)$ is invertible. Thus, $\mu\mapsto \mathcal{H}(\mu)$ is invertible. 
\end{enumerate}

\end{proof}

\end{proposition}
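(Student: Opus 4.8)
The plan is to regard $\mathcal{G}$ and $\mathcal{H}$ as the adjoints of the bounded operators $\mathcal{G}^{*},\mathcal{H}^{*}$ acting on test functions, and to push everything through the Riesz--Markov identification $\mathcal{M}(\Omega)\cong C_0(\Omega)^{*}$ from Theorem~\ref{thm: Riesz}. For $(1)$, I would first observe that $\mathcal{G}^{*}\colon L^\infty(\mathbb{R}\times\mathbb{S}^{d'-1})\to L^\infty(\mathbb{R}^d)$ is well-defined and bounded, in fact $\|\mathcal{G}^{*}\psi\|_\infty\le\|\psi\|_\infty$ since $\sigma_{d'}$ is a probability measure; hence for every $\psi\in C_0(\mathbb{R}\times\mathbb{S}^{d'-1})\subset L^\infty$ the number $\int_{\mathbb{R}^d}\mathcal{G}^{*}(\psi)\,d\mu$ is finite and depends linearly and boundedly on $\psi$ because $\mu$ has finite total variation. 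This makes $\psi\mapsto\mu(\mathcal{G}^{*}\psi)$ a bounded linear functional on $C_0$, and Theorem~\ref{thm: Riesz} returns the unique representing measure $\mathcal{G}(\mu)\in\mathcal{M}(\mathbb{R}\times\mathbb{S}^{d'-1})$.

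Properties $(2)$--$(4)$ then drop out of the explicit formula $\mathcal{G}^{*}(\psi)(x)=\int_{\mathbb{S}^{d'-1}}\psi(g(x,\theta),\theta)\,d\sigma_{d'}(\theta)$: for $\psi\ge 0$ we get $\mathcal{G}^{*}\psi\ge 0$, so $\mathcal{G}$ maps positive measures to positive measures; for the constant $\mathbf{1}$ we get $\mathcal{G}^{*}(\mathbf{1})=\mathbf{1}$, hence $\mathcal{G}(\mu)(\mathbb{R}\times\mathbb{S}^{d'-1})=\mu(\mathbb{R}^d)$; and combining linearity with the uniqueness of the Hahn--Jordan decomposition (so that $\mathcal{G}(\mu)^{\pm}=\mathcal{G}(\mu^{\pm})$, each positive by $(2)$) gives $\|\mathcal{G}(\mu)\|_{TV}=\|\mu\|_{TV}$, hence probability measures go to probability measures. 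For $(5)$ and $(7)$ I would invoke the disintegration theorem to produce the slices $(\mathcal{G}(\mu)_\theta)_\theta$ and identify them: testing against $\psi(t,\theta)=\psi_0(t)$ (or a general $\psi$) and rewriting $\int_{\mathbb{S}^{d'-1}}\int_{\mathbb{R}}\psi\,d\mathcal{G}(\mu)_\theta\,d\sigma_{d'}=\mu(\mathcal{G}^{*}\psi)=\int_{\mathbb{R}^d}\int_{\mathbb{S}^{d'-1}}\psi(g(x,\theta),\theta)\,d\sigma_{d'}\,d\mu$, one swaps the order of integration (Fubini, valid since $\psi$ is bounded and $\mu,\sigma_{d'}$ are finite) and reads the change of variable $t=g(x,\theta)$ as a pushforward; uniqueness of disintegration yields $\mathcal{G}(\mu)_\theta=g(\cdot,\theta)_{\#}\mu$, and for $\mathcal{H}$ the factorization $\langle h(\cdot),\theta\rangle=\theta\circ h$ upgrades this to $\mathcal{H}(\mu)_\theta=\theta_{\#}h_{\#}\mu$.

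Property $(8)$ is the same chase once more: $\int\psi\,d\mathcal{H}(\mu)=\mu(\mathcal{H}^{*}\psi)=\int_{\mathbb{R}^d}\int_{\mathbb{S}^{d'-1}}\psi(\langle h(x),\theta\rangle,\theta)\,d\sigma_{d'}\,d\mu$, then push $h$ onto $\mu$ so the inner integral becomes $\mathcal{R}^{*}(\psi)$ evaluated against $h_{\#}\mu$, and conclude $\mathcal{H}(\mu)=\mathcal{R}(h_{\#}\mu)$ from the identity lemma for Radon measures. Finally $(9)$ follows by composing two injections: the measure-valued Radon transform $\nu\mapsto\mathcal{R}(\nu)$ is invertible by Proposition~7 of~\cite{bonneel2015sliced}, and when $h$ is injective the pushforward $\mu\mapsto h_{\#}\mu$ is injective, so $\mathcal{H}=\mathcal{R}\circ h_{\#}$ is invertible on its range. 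I expect the only genuinely delicate points to be the well-definedness in $(1)$---checking that $\psi\mapsto\mu(\mathcal{G}^{*}\psi)$ is \emph{continuous}, not merely linear, on $C_0$ so that Riesz--Markov truly applies---and the measurability bookkeeping behind the injectivity of $h_{\#}$ in step $(9)$; the remaining steps are routine duality-and-Fubini computations.
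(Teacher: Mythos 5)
Your proposal follows essentially the same route as the paper's proof: Riesz--Markov applied to the bounded functional $\psi\mapsto\mu(\mathcal{G}^{*}\psi)$ for well-definedness, the explicit dual formula for positivity and mass, Hahn--Jordan for the total variation identity, disintegration plus Fubini and the pushforward change of variables for the slices, and the factorization $\mathcal{H}=\mathcal{R}\circ h_{\#}$ with Proposition~7 of \cite{bonneel2015sliced} for invertibility. Your explicit check that $\|\mathcal{G}^{*}\psi\|_\infty\le\|\psi\|_\infty$ (so the functional is continuous, not merely linear) is a small refinement the paper leaves implicit, but the argument is otherwise the same.
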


\section{Stereographic Spherical Radon Transform and Stereographic Spherical Sliced Wasserstein Distance for Probability Measures}\label{sec:supp_ssr}
Based on the extended definition of GRT for probability measures, in this section, we formally introduce the ``Stereographic Spherical Radon Transform'' and the corresponding sliced Wasserstein distance, ``Stereographic Spherical Sliced Wasserstein distance'' for general probability measures.

Given $\mu\in \mathcal{M}(\mathbb{S}^d)$ and the stereographic projection $\phi:\mathbb{S}^{d}\setminus \{s_n\}\to \mathbb{R}^d$, which is a bijection, we have that $\phi_\#\mu$ is a measure defined in $\mathbb{R}^d$. The \textbf{generalized Stereopgraphic Spherical Radon transform}, 
 of $\mu$ has two versions $\mathcal{S}_\mathcal{G}(\mu)$ and $\mathcal{S}_\mathcal{H}(\mu)$ defined as follows:
 \begin{equation}\label{eq: def of S}
      \mathcal{S}_\mathcal{G}(\mu):=\mathcal{G}(\phi_\#\mu)\qquad \text{ and } \qquad \mathcal{S}_\mathcal{H}(\mu):= \mathcal{H}(\phi_\#\mu).
 \end{equation}
 That is, from \eqref{eq: GR(mu)} and \eqref{eq: HR(mu)}, we have that for each test function $\psi\in C_0(\mathbb{R}\times \mathbb{S}^{d'-1})$
\begin{align}
&\int_{\mathbb{R}\times \mathbb{S}^{d'-1}}\psi(t,\theta) \, d\mathcal{S}_\mathcal{G}(\mu)(t,\theta)= \int_{\mathbb{R}^d}\mathcal{G}^*(\psi)(x) \, d\phi_\#\mu(x)=\int_{\mathbb{S}^{d}\setminus\{s_n\}}\mathcal{G}^*(\psi)(\phi(s))d\mu(s) \label{eq: GSSRT mu}, \\   
&\int_{\mathbb{R}\times \mathbb{S}^{d'-1}}\psi(t,\theta) \, d\mathcal{S}_\mathcal{H}(\mu)(t,\theta)= \int_{\mathbb{R}^d}\mathcal{H}^*(\psi)(x) \, d\phi_\#\mu(x)=\int_{\mathbb{S}^d\setminus\{s_n\}}\mathcal{H}^*(\psi)(\phi(s)) \, d\mu(s) \label{eq: HSSRT mu}.  
\end{align}

\begin{proposition}\label{pro: SSRT}
Let $\mu\in \mathcal{M}(\mathbb{S}^{d})$ that does not give mass to the North Pole $\{s_n\}$. The 
Stereographic Spherical Radon transforms defined in \eqref{eq: def of S} have the following properties: 
\begin{enumerate}
    \item[(1)]   
    $\mathcal{S}_\mathcal{G}(\mu),\mathcal{S}_\mathcal{H}(\mu)\in\mathcal{M}(\mathbb{R}\times \mathbb{S}^{d'-1})$. 
    In addition $\mathcal{S}_\mathcal{G}$ and $\mathcal{S}_\mathcal{H}$ preserves mass, and if  $\mu$ is a positive measure, then $\mathcal{S}_\mathcal{G}(\mu),\mathcal{S}_\mathcal{H}(\mu)$ are positive measures too. Finally, if $\mu\in \mathcal{P}(\mathbb{S}^{d}\setminus\{s_n\})$, then  
    $\mathcal{S}_\mathcal{G}(\mu),\mathcal{S}_\mathcal{H}(\mu)$ are probability measures defined on $\mathbb{R}\times \mathbb{S}^{d'-1}$.
    \item[(2)] The disintegration theorem gives a unique $\mathcal{S}_\mathcal{G}(\mu)-$a.s. set of measures $(\mathcal{S}_\mathcal{G}(\mu)_\theta)_{\theta\in \mathbb{S}^{d'-1}}\subset \mathcal{M}(\mathbb{R})$, and a unique $\mathcal{S}_\mathcal{H}(\mu)-$a.s. set of measures $(\mathcal{S}_\mathcal{H}(\mu)_\theta)_{\theta\in \mathbb{S}^{d'-1}}\subset \mathcal{M}(\mathbb{R})$  such that for any $\psi \in C_0(\mathbb{R}\times\mathbb{S}^{d'-1})$,
    \begin{align}
\int_{\mathbb{R}\times\mathbb{S}^{d'-1}}\psi(t,\theta) \, d\mathcal{S}_\mathcal{G}(\mu)(t,\theta)&=\int_{\mathbb{S}^{d'-1}}\int_{\mathbb{R}}\psi(t,\theta) \, d\mathcal{S}_\mathcal{G}(\mu)_\theta(t) \, d\sigma_{d'}(\theta), \qquad \text{and} \nonumber\\ 
\int_{\mathbb{R}\times\mathbb{S}^{d'-1}}\psi(t,\theta) \, d\mathcal{S}_\mathcal{H}(\mu)(t,\theta)&=\int_{\mathbb{S}^{d'-1}}\int_{\mathbb{R}}\psi(t,\theta) \, d\mathcal{S}_\mathcal{H}(\mu)_\theta(t) \, d\sigma_{d'}(\theta). \label{eq: disintegration Sh}
\end{align}
    Then, it holds that
    \begin{equation}
      \mathcal{S}_\mathcal{G}(\mu)_\theta=(g(\cdot,\theta)\circ \phi)_\#\mu, \qquad \text{ and } \qquad \mathcal{S}_\mathcal{H}(\mu)_\theta=
    (\theta\circ h\circ \phi)_\# \mu,  
    \end{equation}
    where we recall that the function $\theta$ is defined as in \eqref{eq: theta as function}  
    \item[(3)] 
    $\mathcal{S}_\mathcal{H}$ is invertible. 
\end{enumerate}
\end{proposition}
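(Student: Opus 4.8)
\emph{Proof plan.} The whole statement reduces to Proposition~\ref{pro: GRT} once we understand how the pushforward operator $\phi_\#$ behaves. So the first step is to record the elementary facts about $\phi$: the stereographic projection is a smooth bijection $\phi:\mathbb{S}^d\setminus\{s_n\}\to\mathbb{R}^d$ with smooth inverse, hence a Borel isomorphism. Consequently $\phi_\#$ is a \emph{linear bijection} from $\mathcal{M}(\mathbb{S}^d\setminus\{s_n\})$ onto $\mathcal{M}(\mathbb{R}^d)$ with inverse $(\phi^{-1})_\#$; it maps positive measures to positive measures; it preserves the total variation norm, because the Hahn--Jordan decomposition is transported along $\phi$, i.e. $(\phi_\#\mu)^\pm=\phi_\#(\mu^\pm)$; and therefore it sends $\mathcal{P}(\mathbb{S}^d\setminus\{s_n\})$ into $\mathcal{P}(\mathbb{R}^d)$. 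The hypothesis $\mu(\{s_n\})=0$ is precisely what allows us to regard $\mu$ as an element of $\mathcal{M}(\mathbb{S}^d\setminus\{s_n\})$, so that $\phi_\#\mu$ is well defined. I would also recall the functoriality identity $(f\circ\phi)_\#=f_\#\circ\phi_\#$, which will be used repeatedly.

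For part~(1): since $\mathcal{S}_\mathcal{G}(\mu)=\mathcal{G}(\phi_\#\mu)$ and $\mathcal{S}_\mathcal{H}(\mu)=\mathcal{H}(\phi_\#\mu)$ with $\phi_\#\mu\in\mathcal{M}(\mathbb{R}^d)$, parts~(1)--(4) and~(6) of Proposition~\ref{pro: GRT} apply verbatim to $\phi_\#\mu$, giving $\mathcal{G}(\phi_\#\mu),\mathcal{H}(\phi_\#\mu)\in\mathcal{M}(\mathbb{R}\times\mathbb{S}^{d'-1})$, positivity when $\phi_\#\mu$ is positive, and $\|\mathcal{G}(\phi_\#\mu)\|_{TV}=\|\mathcal{H}(\phi_\#\mu)\|_{TV}=\|\phi_\#\mu\|_{TV}$. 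Chaining with the first paragraph (norm, positivity, and probability preservation of $\phi_\#$) yields the claim; in particular $\mu\in\mathcal{P}(\mathbb{S}^d\setminus\{s_n\})$ forces $\phi_\#\mu\in\mathcal{P}(\mathbb{R}^d)$ and hence $\mathcal{S}_\mathcal{G}(\mu),\mathcal{S}_\mathcal{H}(\mu)\in\mathcal{P}(\mathbb{R}\times\mathbb{S}^{d'-1})$. For part~(2): the existence of the $\mathcal{S}_\mathcal{G}(\mu)$- (resp. $\mathcal{S}_\mathcal{H}(\mu)$-) a.s.\ unique disintegration along $(t,\theta)\mapsto\theta$ is just the disintegration theorem applied to the finite measures obtained in part~(1). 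Proposition~\ref{pro: GRT}(5),(7) applied to $\phi_\#\mu$ give $\mathcal{G}(\phi_\#\mu)_\theta=g(\cdot,\theta)_\#(\phi_\#\mu)$ and $\mathcal{H}(\phi_\#\mu)_\theta=(\theta\circ h)_\#(\phi_\#\mu)$, and functoriality rewrites these as $(g(\cdot,\theta)\circ\phi)_\#\mu$ and $(\theta\circ h\circ\phi)_\#\mu$. Alternatively, one can re-run the short computation of Proposition~\ref{pro: GRT}(5),(7) directly from \eqref{eq: GSSRT mu}--\eqref{eq: HSSRT mu} with test functions of the form $\psi(t,\theta)=\psi_0(t)$ and the change of variables $x=\phi(s)$, which makes the argument self-contained.

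For part~(3): write $\mathcal{S}_\mathcal{H}=\mathcal{H}\circ\phi_\#$ as a composition of maps on measures. By Proposition~\ref{pro: GRT}(9), $\mathcal{H}$ is invertible since $h$ is injective (indeed $h=h_1$ or $h=h_{NN}$ is injective), and by the first paragraph $\phi_\#$ is invertible with inverse $(\phi^{-1})_\#$; hence $\mathcal{S}_\mathcal{H}$ is invertible on its domain $\{\mu\in\mathcal{M}(\mathbb{S}^d):\mu(\{s_n\})=0\}\cong\mathcal{M}(\mathbb{S}^d\setminus\{s_n\})$, with $\mathcal{S}_\mathcal{H}^{-1}=(\phi^{-1})_\#\circ\mathcal{H}^{-1}$. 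There is no deep obstacle here: all the analytic content was already dispatched in Proposition~\ref{pro: GRT}, and the only care required is the bookkeeping around $\phi_\#$ — that a diffeomorphism of $\mathbb{S}^d\setminus\{s_n\}$ onto $\mathbb{R}^d$ induces a norm-preserving linear isomorphism on spaces of Radon measures that intertwines positivity and the probability constraint, together with $(f\circ\phi)_\#=f_\#\circ\phi_\#$ and the role of the hypothesis $\mu(\{s_n\})=0$.
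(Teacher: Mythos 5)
Your proposal is correct and follows essentially the same route as the paper: reduce everything to Proposition~\ref{pro: GRT} applied to $\phi_\#\mu$, using that $\phi$ is a bijection so that $\phi_\#$ preserves mass, positivity, and the probability constraint, and then conclude invertibility of $\mathcal{S}_\mathcal{H}$ as a composition of invertible maps. The only difference is that you spell out the bookkeeping for $\phi_\#$ (Hahn--Jordan transport, functoriality of pushforwards) in more detail than the paper does, which is fine but not a different argument.
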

\begin{proof}$ $
\begin{enumerate}
    \item[(1)] If $\mu\in \mathcal{M}(\mathbb{S}^{d})$ is such that does not give mass to $\{s_n\}$, then $\phi_\# \mu$ is a  measure defined in $\mathbb{R}^{d}$ with the same total mass. Thus, since $\mathcal{S}_\mathcal{G}(\mu)=\mathcal{G}(\phi_\#\mu)$, 
    the conclusion follows from by (1)--(4) in Proposition \eqref{pro: GRT}. Similar to $\mathcal{S}_\mathcal{H}(\mu)$.
    

    \item[(2)] By definition of $\mathcal{S}_\mathcal{G}$ and (5) in Proposition \eqref{pro: GRT} we have 
$$\mathcal{S}_\mathcal{G}(\mu)_\theta=g(\cdot,\theta)_\# (\phi_\#\mu)=g(\phi(\cdot),\theta)_\#\mu.
$$  
Similarly, by definition of $\mathcal{S}_\mathcal{H}$ and (7) in Proposition \eqref{pro: GRT} we have
$$\mathcal{S}_\mathcal{H}(\mu)_\theta=(\theta\circ h)_\#\phi_\# \mu =(\theta\circ h\circ \phi)_\# \mu.$$

    \item[(3)] Finally, since the stenographic projection $\phi:\mathbb{S}^{d}\setminus\{s_n\}\to\mathbb{R}^d$ is invertible, then following composition 
\begin{align*}
 \mathcal{M}(\mathbb{S}^d\setminus\{s_n\})&\to \mathcal{M}(\mathbb{R}^d)\to \mathcal{M}(\mathbb{R}\times\mathbb{S}^{d'-1}) \nonumber\\
\mu&\longmapsto \phi_\#\mu\ \ \longmapsto \mathcal{H}(\phi_\# \mu)=\mathcal{S}_\mathcal{H}(\mu)
\end{align*}
is invertible as each mapping is invertible.
Similarly,
\begin{align*}
 \mathcal{P}(\mathbb{S}^d\setminus\{s_n\})&\to \mathcal{P}(\mathbb{R}^d)\to \mathcal{M}(\mathbb{R}\times\mathbb{S}^{d'-1}) \nonumber\\
\mu&\longmapsto \phi_\#\mu\ \ \longmapsto \mathcal{H}(\phi_\# \mu)=\mathcal{S}_\mathcal{H}(\mu)
\end{align*}
is invertible.
Thus $\mu\mapsto \mathcal{S}_\mathcal{H}(\mu)$ is an invertible mapping either general measures or for probability measures. 
\end{enumerate}
\end{proof}

\begin{remark}
    If $g(x,\theta)=\langle x,\theta\rangle$, then the generalized Radon transform $\mathcal{G}$ coincides with the classical Radon transform $\mathcal{R}$, and the corresponding \textit{Stereographic Spherical Radon transform} coincides with the one defined in \eqref{eq: SR}:
    $$\mathcal{S}_\mathcal{R}(\mu)=\mathcal{R}(\phi_\#\mu)$$
for any Radon measure $\mu\in \mathcal{M}(\mathbb{S}^{d})$ that does not assign mass to $\{s_n\}$
\end{remark}

We introduce the notation $\phi_g^\theta, \phi_h^\theta:\mathbb{S}^d\backslash\{s_n\}\rightarrow \mathbb{R}^{d'}$ 
\begin{equation*}
    \phi_g^\theta(z):=g(\phi(z),\theta)\theta,\qquad \phi_h^\theta(z):=\langle h(\phi(z) ,\theta\rangle\theta, 
\end{equation*} 
having range in the line generated by $\theta\in\mathbb{S}^{d'-1}$.
The expressions \eqref{eq: slice general Sg and Sh} in (2) in the above Proposition can be written as
\begin{equation*}
    \mathcal{S}_\mathcal{G}(\mu)_\theta = {\phi_g^\theta}_\# \mu, \qquad   \mathcal{S}_\mathcal{H}(\mu)_\theta = {\phi_h^\theta}_\# \mu
\end{equation*}
and we call them \textit{slices} of the SSRT of $\mu$. 

Given two measures $\mu_1,\mu_2\in \mathcal{P}(\mathbb{S}^{d})$, we introduce the formal definition of Stereographic Spherical Sliced Wasserstein (S3W) distance as follows: 
\begin{align}
S3W_{\mathcal{G},p}^p(\mu_1,\mu_2)&:=\int_{\mathbb{S}^{d'-1}}W_p^p(\mathcal{S}_\mathcal{G}(\mu_1)_\theta,\mathcal{S}_\mathcal{G}(\mu_2)_\theta) 
 \, d\sigma_{d'}(\theta)\nonumber\\ 
&=\int_{\mathbb{S}^{d'-1}}W_p^p({\phi_g^\theta}_\#  \mu_1,{\phi_g^\theta}_\# \mu_2) 
 \, d\sigma_{d'}(\theta) \label{eq: $S3W$ G}\\
S3W_{\mathcal{H},p}^p(\mu_1,\mu_2)
&:=\int_{\mathbb{S}^{d'-1}}W_p^p(\mathcal{S}_\mathcal{H}(\mu_1)_\theta,\mathcal{S}_\mathcal{H}(\mu_2)_\theta) \,  d\sigma_{d'}(\theta)\nonumber\\ 
&=\int_{\mathbb{S}^{d'}}W_p^p( {\phi_h^\theta}_\#  \mu_1, {\phi_h^\theta}_\# \mu_2) \, d\sigma_{d'}(\theta) \label{eq: $S3W$ H},
\end{align}
where $\sigma_{d'}\in\mathcal{P}(\mathbb{S}^{d'-1})$ is the uniform measure on $\mathbb{S}^{d'-1}$. 




\begin{theorem}
\label{thm:s3w}
The above definitions $S3W_{\mathcal{G},p}(\cdot,\cdot)$ and $S3W_{\mathcal{H},p}(\cdot,\cdot)$ are well-defined. 
Furthermore, $S3W_{\mathcal{G},p}(\cdot,\cdot)$ is a pseudo-metric in $\mathcal{P}(\mathbb{S}^{d}\setminus \{s_n\})$, i.e., it is non-negative, symmetric and satisfies triangular inequality. 
In addition, 
$S3W_{\mathcal{H},p}(\cdot,\cdot)$ defines a metric in $\mathcal{P}(\mathbb{S}^{d}\setminus\{s_n\})$. 
\end{theorem}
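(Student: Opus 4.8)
The plan is to establish, in this order: well-definedness of both distances, the three pseudo-metric axioms (non-negativity, symmetry, triangle inequality) for $S3W_{\mathcal{G},p}$ and $S3W_{\mathcal{H},p}$, and finally the identity of indiscernibles for $S3W_{\mathcal{H},p}$. The structural facts I will rely on are all from Proposition~\ref{pro: SSRT}: part~(1), that each slice $\mathcal{S}_\mathcal{G}(\mu)_\theta$, $\mathcal{S}_\mathcal{H}(\mu)_\theta$ is a probability measure on $\mathbb{R}$; part~(2), the explicit pushforward form of the slices; and part~(3), invertibility of $\mathcal{S}_\mathcal{H}$.

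First I would handle well-definedness. For a fixed $\theta$ the quantity $W_p^p(\mathcal{S}_\mathcal{G}(\mu)_\theta,\mathcal{S}_\mathcal{G}(\nu)_\theta)$ is finite provided the two one-dimensional slices have finite $p$-th moment: for the canonical choice $h=h_1$ the map $h_1\circ\phi$ has norm at most $\pi$ by Proposition~\ref{pro:distortion}, so the $\mathcal{S}_\mathcal{H}$-slices are compactly supported, while in general it is enough that $g(\cdot,\theta)\circ\phi\in L^p(\mu)$; then the elementary bound $W_p^p(\alpha,\beta)\le 2^{p-1}\big(\int|t|^p\,d\alpha+\int|t|^p\,d\beta\big)$ together with Fubini (interchanging the $\theta$ and $x$ integrations) shows the $\theta$-integral is finite. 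Measurability of $\theta\mapsto W_p^p(\mathcal{S}_\mathcal{G}(\mu)_\theta,\mathcal{S}_\mathcal{G}(\nu)_\theta)$ I would get from the weak continuity of $\theta\mapsto\mathcal{S}_\mathcal{G}(\mu)_\theta=(g(\cdot,\theta)\circ\phi)_\#\mu$ (because $(x,\theta)\mapsto g(x,\theta)$ is continuous) and the lower semicontinuity of $W_p$ under weak convergence; the same reasoning applies verbatim to $\mathcal{S}_\mathcal{H}$.

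Non-negativity and symmetry then follow immediately from the same properties of $W_p$ and the non-negativity of $\sigma_{d'}$. For the triangle inequality I would note that $S3W_{\mathcal{G},p}(\mu,\nu)$ is exactly the $L^p(\sigma_{d'})$-norm of the nonnegative function $\theta\mapsto W_p(\mathcal{S}_\mathcal{G}(\mu)_\theta,\mathcal{S}_\mathcal{G}(\nu)_\theta)$; given a third measure $\lambda$, apply the triangle inequality for $W_p$ on $\mathbb{R}$ pointwise in $\theta$ and then Minkowski's inequality in $L^p(\sigma_{d'})$ to the resulting sum, which gives $S3W_{\mathcal{G},p}(\mu,\nu)\le S3W_{\mathcal{G},p}(\mu,\lambda)+S3W_{\mathcal{G},p}(\lambda,\nu)$, and the identical computation works for $\mathcal{H}$. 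For the identity of indiscernibles, I would argue: if $S3W_{\mathcal{H},p}(\mu,\nu)=0$ then $W_p(\mathcal{S}_\mathcal{H}(\mu)_\theta,\mathcal{S}_\mathcal{H}(\nu)_\theta)=0$, hence $\mathcal{S}_\mathcal{H}(\mu)_\theta=\mathcal{S}_\mathcal{H}(\nu)_\theta$, for $\sigma_{d'}$-a.e.\ $\theta$; substituting into the disintegration identity~\eqref{eq: disintegration Sh} yields $\int\psi\,d\mathcal{S}_\mathcal{H}(\mu)=\int\psi\,d\mathcal{S}_\mathcal{H}(\nu)$ for every $\psi\in C_0(\mathbb{R}\times\mathbb{S}^{d'-1})$, hence $\mathcal{S}_\mathcal{H}(\mu)=\mathcal{S}_\mathcal{H}(\nu)$, and invertibility of $\mathcal{S}_\mathcal{H}$ (Proposition~\ref{pro: SSRT}(3)) then forces $\mu=\nu$. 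The same chain collapses for $\mathcal{S}_\mathcal{G}$ only at the final step: a generic defining function $g$ need not make the GRT injective, so one concludes only $\mathcal{S}_\mathcal{G}(\mu)=\mathcal{S}_\mathcal{G}(\nu)$, leaving $S3W_{\mathcal{G},p}$ a pseudo-metric.

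I expect the main obstacle to be the well-definedness rather than the metric axioms, which are mechanical once Proposition~\ref{pro: SSRT} and Minkowski's inequality are in hand. The delicate point is the control of the $p$-th moments of the slices: the stereographic projection pushes mass near $s_n$ off to infinity, so without a boundedness or integrability property of $h$ — which $h_1$ supplies but a generic injective $h$ need not — the Wasserstein integrand, and hence the distance itself, could be infinite. A secondary, purely technical point is the measurability of the $\theta$-integrand, which I would dispatch via the weak continuity of the slicing map and lower semicontinuity of $W_p$.
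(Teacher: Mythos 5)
Your proposal follows essentially the same route as the paper's proof: well-definedness from Proposition \ref{pro: SSRT} (slices are probability measures on $\mathbb{R}$), non-negativity and symmetry inherited from $W_p$, the triangle inequality via the pointwise $W_p$ triangle inequality followed by Minkowski in $L^p(\sigma_{d'})$, and identity of indiscernibles for the $\mathcal{H}$ version by passing from a.e.\ equality of slices through the disintegration identity to $\mathcal{S}_\mathcal{H}(\mu)=\mathcal{S}_\mathcal{H}(\nu)$ and then invoking invertibility of $\mathcal{S}_\mathcal{H}$. Your additional care about finiteness of the slice moments and measurability of the $\theta$-integrand goes beyond what the paper writes down (it simply declares well-definedness from Proposition \ref{pro: SSRT}), but it does not change the structure of the argument and is a sound refinement.
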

\begin{proof}
Given probability measures $\mu_1,\mu_2\in\mathcal{P}(\mathbb{S}^{d}\setminus\{s_n\})$, by (1) in Proposition \ref{pro: SSRT} we have that $\mathcal{S}_\mathcal{G}(\mu_1),\mathcal{S}_\mathcal{G}(\mu_2)$ are probability measures defined on $\mathbb{R}\times\mathbb{S}^{d'-1}$. Thus, by (2) in Proposition \ref{pro: SSRT}, 
$\mathcal{S}_\mathcal{G}(\mu_1)_\theta,\mathcal{S}_\mathcal{G}(\mu_2)_\theta$ are probability measure on $\mathbb{R}$. Then,  
$W_p^p(\mathcal{S}_\mathcal{G}(\mu_1)_\theta,\mathcal{S}_\mathcal{G}(\mu_2)_\theta)$ is well-defined and thus 
$S3W_{\mathcal{G},p}^p(\cdot,\cdot)$ is well defined. Analogously, $S3W_{\mathcal{H},p}^p(\cdot,\cdot)$ is well-defined.

Since the Wasserstein distance $W_p(\cdot,\cdot)$ in $\mathcal{P}(\mathbb{R})$ is non-negative and symmetric and $\sigma_{d'}$ is a positive measure
we have that $S3W_{\mathcal{G},p}(\cdot,\cdot)$ and $S3W_{\mathcal{H},p}(\cdot,\cdot)$ are non-negative and symmetric. 

For triangular inequality, let $\mu_1,\mu_2,\mu_3\in\mathcal{P}(\mathbb{S}^{d}\setminus\{s_n\})$,  
\begin{align}
S3W_{\mathcal{G},p}(\mu_1,\mu_3)
&=\left(\int_{\mathbb{S}^{d'-1}}W_p^p(\mathcal{S}_\mathcal{G}(\mu_1)_\theta,\mathcal{S}_\mathcal{G}(\mu_3)_\theta)d\sigma_{d'}(\theta)\right)^{1/p}\nonumber\\ 
&\leq \left(\int_{\mathbb{S}^{d'-1}}\left(W_p(\mathcal{S}_\mathcal{G}(\mu_1)_\theta,\mathcal{S}_\mathcal{G}(\mu_2)_\theta)+W_p(\mathcal{S}_\mathcal{G}(\mu_2)_\theta,\mathcal{S}_\mathcal{G}(\mu_3)_\theta)\right)^{p}d\sigma_{d'}(\theta)\right)^{1/p}\nonumber\\
&\leq \left(\int_{\mathbb{S}^{d'-1}}W_p^p(\mathcal{S}_\mathcal{G}(\mu_1)_\theta,\mathcal{S}_\mathcal{G}(\mu_2)_\theta)\right)^{1/p}+\left(\int_{\mathbb{S}^{d'-1}}W_p^p(\mathcal{S}_\mathcal{G}(\mu_2)_\theta,\mathcal{S}_\mathcal{G}(\mu_3)_\theta)\right)^{1/p}\nonumber\\ 
&=S3W_{\mathcal{G},p}(\mu_1,\mu_2)+S3W_{\mathcal{G},p}(\mu_2,\mu_3) \nonumber 
\end{align}
where the first inequality holds from the fact $W_p(\cdot,\cdot)$ in $\mathcal{P}(\mathbb{R})$ is a metric; the second inequality follows from Minkowski inequality in $L^p(\mathbb{S}^{d'-1})$.
Then, $S3W_{\mathcal{G},p}(\cdot,\cdot)$ is a Pseudo-metric. Analogously for $S3W_{\mathcal{H},p}(\cdot,\cdot)$. 

In addition, suppose $S3W_{\mathcal{H},p}(\mu_1,\mu_2)=0$, we have 
$\mathcal{S}_\mathcal{H}(\mu_1)_\theta=\mathcal{S}_\mathcal{H}(\mu_2)_\theta$
where the equality holds $\sigma_{d'}(\theta)-$a.s. Then, from \eqref{eq: disintegration Sh} we can deduce that $\mathcal{S}_\mathcal{H}(\mu_1)=\mathcal{S}_\mathcal{H}(\mu_2)$. Since $\mathcal{S}_\mathcal{H}$ is invertible by Proposition \eqref{pro: SSRT}, we have $\mu_1=\mu_2$. Thus, $S3W_{\mathcal{H},p}(\cdot,\cdot)$ is a metric. 

\end{proof}




Lastly, and similar to the max-sliced Wasserstein distance, here we define the max-S3W distance as: 
\begin{align}    \text{Max-}S3W_{\mathcal{G},p}^p(\mu_1,\mu_2) := \sup_{\theta\in \mathbb{S}^{d'-1}} ~~W_p^p(\mathcal{S}_\mathcal{G}(\mu_1)_\theta,\mathcal{S}_\mathcal{G}(\mu_2)_\theta),
\end{align}
and similarly for $\mathcal{H}$ we define $\text{Max-}S3W_{\mathcal{H},p}^p$.

\section{Rotationally Invariant Stereographic Spherical Sliced Wasserstein Distances (RI-S3W)}
\label{sec:ri-ssr}
In this section, we discuss the Rotationally Invariant $S3W$ distance \eqref{eq: ri-s3w}: 
$$RI\text{-}S3W_{\mathcal{G},p}(\mu,\nu) := \mathbb{E}_{R\sim \omega} [S3W_{\mathcal{G},p}(R_\#\mu, R_\#\nu)].$$

First, we describe the \textit{uniform (Haar) distribution} in $\mathrm{O}(d+1)$ denoted as $\omega$. Precisely, we will describe how a $(d+1)\times (d+1)$ orthonormal matrix $R$ with rows $v_1,\dots,v_{d+1}$ can be randomly generated as a sample of $\omega$. The procedure relies on the Gram-Schmidt algorithm: 

First, we randomly select $v_1\in\mathbb{S}^d$, in particular, we denote it as $v_1\sim \sigma_{d+1}$, that is, we sample $v_1$ according to $\sigma_{d+1}$ which is the uniform distribution in $\mathbb{S}^d$. 

Next, $v_2$ should selected from  $\text{span}(v_1)^\perp=H(v_1,0)=\{v: \langle v_1, v\rangle=0\}$ and satisfy $\|v_2\|=1$, where $\text{span}(v_1)$ is the 1D sub-space in $\mathbb{R}^d$ spanned by $v_1$. Denote  $\mathbb{S}^{d-1}(v_1):=\mathbb{S}^{d}\cap H(v_1,0)$. Note, it is essentially a $d-1$ sphere. We randomly select  $v_2\in \mathbb{S}^{d-1}(v_1)$, denoted as $v_2\sim \sigma_{d}$ given $v_1$, where $\sigma_{d}$ denotes the uniform distribution in sphere $\mathbb{S}^{d-1}(v_1)$. 

In the third step, similarly, let $\mathbb{S}^{d-2}(v_1,v_2):=\mathbb{S}^{d}\cap H(v_1,0)\cap H(v_2,0)$, and sample $v_3\sim \sigma_{d-1}$ given $v_1,v_2$, 
where $\sigma_{d-1}$ is the uniform distribution in the $d-2$ sphere $\mathbb{S}^{d-2}(v_1,v_2)$. 

Continue this process recursively until step $d+1$. Note that at the last iteration, we have an orthonormal set of vectors  $\{v_1,\ldots, v_d\}$, and $\mathbb{S}^{0}(v_1,\ldots v_d)$ is a $0$-dimensional sphere, that is, it is a set of two points $\{u_1,u_2\}$. We randomly select $v_{d+1}$ as one of those two points in $\mathbb{S}^{0}(v_1,\ldots v_d)$, denoting $v_{d+1}\sim \sigma_{1}$, where $\sigma_1$ is the uniform distribution in $\mathbb{S}^{0}(v_1,\ldots, v_d)$ (that is, $\sigma_1=0.5\delta_{u_{1}}+0.5\delta_{u_2}$).

Thus, the uniform probability measure $\omega$ in $\mathrm{O}(d+1)$ can be described in the following way: Consider an arbitrary Borel set $A\subset \mathrm{O}(d+1)$, and let $A_i:=\{v\in \mathbb{S}^{d}: \, v\text{ is the $i-$th row of }R, \,  R\in A\}$ for $i=1,2,\ldots d+1$, then
\begin{align}
\omega(A):=\int_{v_1\in A_1,\ldots,v_{d+1}\in A_{d+1}} d\sigma_{1}(v_{d+1}) \ d\sigma_{2}(v_{d}) \ldots d\sigma_{d}(v_2) \,  d\sigma_{d+1}(v_1).
 \label{eq: omega}
\end{align}
where we are using the above notation, that is,   $\sigma_{d+1}$ is the uniform distribution on $\mathbb{S}^d$, and $\sigma_{d+1-k}$ is the uniform distribution on $\mathbb{S}^d(v_1,\dots, v_k)$ for $|\leq k\leq d$
(i.e., $\sigma_{d}$ is the uniform distribution on $\mathbb{S}^d(v_1)$, $\ldots$ , $\sigma_{2}$ is the uniform distribution on $\mathbb{S}^d(v_1,\dots, v_{d-1})$, $\sigma_{1}$ is the uniform distribution on $\mathbb{S}^d(v_1,\dots, v_{d})$). 
In particular, $\omega$ is constant, in the sense that its density function is constant:
Indeed, let $f_{\sigma_{d+1}},\dots,f_{\sigma_2}$ denote the density functions of $\sigma_{d+1},\dots, \sigma_2$ (which are all constant equal to the reciprocal of the ``surface area'' of the corresponding sphere). Then, let $f_{\omega}$ denote the density function of probability measure $\omega$, and so, for each  $R=[v_1^T,v_2^T,\ldots, v_{d+1}^T]\in \mathrm{O}(d+1)$
\begin{align}
f_\omega(R)=f_{\sigma_{d+1}}(v_1)\cdot \dots\cdot f_{\sigma_2}(v_d)\cdot{\sigma_1}(v_{d+1})=\frac{1}{2}\prod_{i=1}^d \frac{1}{\|\mathbb{S}^{i}\|_i}
\end{align}
Thus we call $\omega$ uniform distribution. We mention that it is also called a Haar measure for the special orthogonal group $\mathrm{SO}(d+1)$. 
Finally, we can normalize it so that we have a probability distribution. 

Next, we discuss the proof of Theorem \ref{thm: ri}. We first introduce the following lemma. 
\begin{lemma}\label{lem: omega}
Consider $s_n=[0,\dots,0,1]\in\mathbb{R}^{d+1}$ the North Pole of  $\mathbb{S}^d$. Let $s_1\in\mathbb{S}^d$ and   $A=\{R\in \mathrm{O}(d+1): \, Rs_1=s_n\}$. Then $A=\{R\in \mathrm{O}(d+1): \text{ the $(d+1)-$th row of $R$ is } s_1\}$. Furthermore, $\omega(A)=0$. 
\end{lemma}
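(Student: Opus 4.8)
The plan is to treat the two claims in turn. For the set identity, I would write $R\in\mathrm{O}(d+1)$ with rows $v_1,\dots,v_{d+1}\in\mathbb{R}^{d+1}$; since $R$ is orthogonal these are unit vectors and pairwise orthogonal, and the $i$-th coordinate of $Rs_1$ equals $\langle v_i,s_1\rangle$. Hence $Rs_1=s_n=[0,\dots,0,1]$ is equivalent to $\langle v_i,s_1\rangle=0$ for $i=1,\dots,d$ together with $\langle v_{d+1},s_1\rangle=1$. By Cauchy--Schwarz, $\langle v_{d+1},s_1\rangle\le\|v_{d+1}\|\,\|s_1\|=1$ with equality if and only if $v_{d+1}=s_1$, so the single scalar equation $\langle v_{d+1},s_1\rangle=1$ already forces $v_{d+1}=s_1$. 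Conversely, if $v_{d+1}=s_1$ then $\langle v_i,s_1\rangle=\langle v_i,v_{d+1}\rangle=0$ for $i\le d$ by orthonormality of the rows, so $Rs_1=s_n$. This yields $A=\{R\in\mathrm{O}(d+1):\text{the }(d+1)\text{-th row of }R\text{ is }s_1\}$.

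For $\omega(A)=0$, the key remark is that every $R\in A$ has its first row $v_1$ orthogonal to $v_{d+1}=s_1$, so $A\subseteq\{R\in\mathrm{O}(d+1):\langle v_1,s_1\rangle=0\}$. I would then unwind the generative formula \eqref{eq: omega}: its outermost integration variable is $v_1$ distributed according to $\sigma_{d+1}$, the uniform probability measure on $\mathbb{S}^d$, and the remaining $d$ integrations are against probability measures, hence contribute a factor in $[0,1]$. Therefore $\omega(A)\le\sigma_{d+1}\big(\{v\in\mathbb{S}^d:\langle v,s_1\rangle=0\}\big)=\sigma_{d+1}(\mathbb{S}^d\cap s_1^{\perp})$.

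It then remains to observe that $\mathbb{S}^d\cap s_1^{\perp}$ is a great subsphere of dimension $d-1$ inside $\mathbb{S}^d$, hence is $\sigma_{d+1}$-null for $d\ge 1$: indeed $\sigma_{d+1}$ is a normalization of the $d$-dimensional surface (Riemannian) measure on $\mathbb{S}^d$, under which any submanifold of dimension $\le d-1$ has measure zero. Consequently $\omega(A)=0$, completing the proof.

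The routine parts are the Cauchy--Schwarz step and the bound obtained by collapsing the inner integrals in \eqref{eq: omega}; the one point to handle with a little care is that a codimension-one great subsphere carries no $\sigma_{d+1}$-mass, which is where absolute continuity of $\sigma_{d+1}$ with respect to the Riemannian volume is used. As an alternative that bypasses the nested-integral bookkeeping, one can invoke invariance of the Haar measure under right translation $R\mapsto RQ$: this preserves $\omega$ and maps the last row $v_{d+1}$ to $Q^{\top}v_{d+1}$, so the law of the last row is invariant under the transitive action of $\mathrm{O}(d+1)$ on $\mathbb{S}^d$ and therefore equals $\sigma_{d+1}$; since $\sigma_{d+1}$ is non-atomic for $d\ge 1$, $\omega(A)=\sigma_{d+1}(\{s_1\})=0$.
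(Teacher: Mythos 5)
Your proof is correct, and for the measure-zero half it takes a genuinely different (and in one respect tighter) route than the paper. For the set identity you argue exactly as one should: $\langle v_{d+1},s_1\rangle=1$ with two unit vectors forces $v_{d+1}=s_1$ by the equality case of Cauchy--Schwarz, and conversely orthonormality of the rows gives the zero inner products; the paper's own write-up of this step is terser and leaves the ``equality forces $v_{d+1}=s_1$'' observation implicit, so your version is if anything more complete. For $\omega(A)=0$, the paper replaces $A$ by $A'=\{R:\text{first row of }R\text{ is }s_1\}$, asserts $\omega(A)=\omega(A')$ (an invariance-under-row-permutation claim it does not justify), and then reads off from \eqref{eq: omega} that the event ``$v_1=s_1$'' has $\sigma_{d+1}$-measure zero since a single point is null. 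You instead keep the constraint on the last row, observe that every $R\in A$ must have its first row in the great subsphere $\mathbb{S}^d\cap s_1^{\perp}$, and bound $\omega(A)\le\sigma_{d+1}(\mathbb{S}^d\cap s_1^{\perp})=0$ because a codimension-one subsphere is null for the normalized surface measure. This buys you an argument that uses only the generative formula \eqref{eq: omega} as written, with no appeal to the unproved symmetry $\omega(A)=\omega(A')$; the price is the (standard, but slightly less elementary) fact that a $(d-1)$-dimensional subsphere is $\sigma_{d+1}$-null, whereas the paper only needs that a point is null. Your alternative via right-invariance of the Haar measure, showing the law of the last row is exactly $\sigma_{d+1}$ and hence $\omega(A)=\sigma_{d+1}(\{s_1\})=0$, is also sound and is arguably the cleanest formulation of what the paper's row-swapping argument is trying to do.
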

\begin{proof}
Let $B=\{R\in \mathrm{O}(d+1): \, \text{the $(d+1)-$th row of $R$ is } s_1\}$. It is straightforward to verify $B\subset A$. For the other direction, for any $R=[v_1^T,\ldots, v_{d+1}^T]\in A$ we have
\begin{align}
 v_i^Ts_1=0 &\text{ if }i\neq d+1 \nonumber\\
 v_{d+1}^Ts_{1}=1  &\text{ if }i=d+1  \nonumber 
\end{align}
Thus, $Rs_1=s_n$, and we have $A\subset B$.  

Next, we will show $\omega(A)=0$. Let $A'=\{R\in \mathrm{O}(d+1): \text{ 1st row of $R$ is }s_1\}$. We have $\omega(A')=\omega(A)$. 
Then $A'_1=\{s_1\}$. 
and \eqref{eq: omega}, we have 
\begin{align*}
\omega(A')&\leq \int_{v_1=s_1}\left(\int_{v_2,\ldots v_d \in \mathbb{S}^d}d\sigma_{1}(v_{d+1}) \ldots d\sigma_{d}(v_2)\right) d\sigma_{d+1}(v_1)=0,
\end{align*}
which completes the proof. 
\end{proof}

\begin{proof} [Proof of Theorem \ref{thm: ri}]$ $

We first claim that \eqref{eq: ri-s3w} is well-defined. 

Let 
$S(\mu,\nu)=\{s\in\mathbb{S}^d: \min(\mu(\{s\}),\nu(\{s\}))>0\}$. 
Since $\mu,\nu$ are probability measures, we have that $S(\mu,\nu)$ is a finite set. 
For each $s\in S$, let  $A(s)=\{R\in \mathrm{O}(d+1): \, Rs=s_n\}$. Note that $S3W_{\mathcal{G},p}(R_\#\mu,R_\#\nu)$ is well-defined for each $R\in \mathrm{O}(d+1)\setminus A(\mu,\nu)$, where $A(\mu,\nu)=\bigcup_{s\in S(\mu,\nu)}A(s)$. By lemma \ref{lem: omega}, 
$$\omega(A(\mu,\nu))=\omega\left(\bigcup_{s\in S(\mu,\nu)}A(s)\right)\leq \sum_{s\in S(\mu,\nu)}\omega(A(s))=0.$$ 
Thus, we have that
\begin{align}
RI-S3W_{\mathcal{G},p}(\mu,\nu)
:= \mathbb{E}_{R\sim \omega} [S3W_{\mathcal{G},p}(R_\#\mu, R_\#\nu)]=\int_{R\in \mathrm{O}(d+1)\setminus A(\mu,\nu)}
 S3W_{\mathcal{G},p}(R_\#\mu, R_\#\nu) \, d\omega(R) \nonumber
\end{align}
is well-defined. 

Now, we will verify its properties. 

By Theorem \ref{thm:s3w}, $S3W_{\mathcal{G},p}$ is non-negative, and symmetric. Thus $RI\text{-}S3W_{\mathcal{G},p}(\cdot,\cdot)$ is also non-negative and symmetric.

For the triangle inequality, let $\mu_1,\mu_2,\mu_3\in \mathcal{P}(\mathbb{S}^{d})$.
Similarly to before, let $A(\mu_1,\mu_2,\mu_3)=\bigcup_{s\in S(\mu_1,\mu_2,\mu_3)}{A(s)}$. Since $\omega(A(\mu_1,\mu_2,\mu_3))=0$, we have 
\begin{align}
&RI-S3W_{\mathcal{G},p}(\mu_1,\mu_3)=\int_{\mathrm{O}(d+1)}S3W_{\mathcal{G},p}(R_\# \mu,R_\# \nu) \, d\omega(R) =\int_{\mathrm{SO}(d+1)\setminus A(\mu_1,\mu_2,\mu_3)}S3W_{\mathcal{G},p}(R_\# \mu,R_\# \nu) \, d\omega(R) \nonumber\\
&\leq \int_{\mathrm{SO}(d+1)\setminus A(\mu_1,\mu_2,\mu_3)}\left(S3W_{\mathcal{G},p}(R_\# \mu_1,R_\# \mu_2)+S3W_{\mathcal{G},p}(R_\# \mu_2,R_\# \mu_3)\right) \, d\omega(R) \qquad\text{(Triangular inequality)} \nonumber\\
&=\int_{\mathrm{SO}(d+1)\setminus A(\mu_1,\mu_2,\mu_3)}S3W_{\mathcal{G},p}(R_\# \mu_1,R_\# \mu_2) \, d\omega(R)+\int_{\mathrm{SO}(d+1)\setminus A(\mu_1,\mu_2,\mu_3)}S3W_{\mathcal{G},p}(R_\# \mu_2,R_\# \mu_3) \, d\omega(R) \nonumber\\
&=\int_{\mathrm{SO}(d+1)}S3W_{\mathcal{G},p}(R_\# \mu_1,R_\# \mu_2) \, d\omega(R)+\int_{\mathrm{SO}(d+1)}S3W_{\mathcal{G},p}(R_\# \mu_2,R_\# \mu_3) \, d\omega(R) \nonumber\\
&=RI-S3W_{\mathcal{G},p}(\mu_1,\mu_2)+RI-S3W_{\mathcal{G},p}(\mu_2,\mu_3) \nonumber
\end{align}
where the second and previous to last equalities hold from the fact $\omega(A(\mu_1,\mu_2,\mu_3))=0$. 
Thus, $RI\text{-}S3W_{\mathcal{G},p}(\cdot,\cdot)$ is a well-defined pseudo-metric. 

Similarly, we can prove $RI\text{-}S3W_{\mathcal{H},p}(\cdot,\cdot)$ is a well-defined pseudo-metric. 

Finally, it remains to show that $\mu=\nu$ if $RI\text{-}S3W_{\mathcal{H},p}(\mu,\nu)=0$. If that holds, then for all $R\in \mathrm{SO}(d+1)\setminus A(\mu,\nu)$, we have $S3W_{\mathcal{H},p}(R_{\#}\mu,R_\#\nu)=0$. Pick one of such rotations $R$. From Theorem \ref{thm:s3w}, and the fact $R_\# \mu, R_\# \nu\in \mathcal{P}(\mathbb{S}^{d}\setminus \{s_n\})$, we have $R_\#\mu=R_\# \nu$.  
Since 
$$
\mathbb{R}^{d+1}\ni v\mapsto Rv\in \mathbb{R}^{d+1}
$$
is a one-by-one mapping, we have 
$\mu=\nu$.

\end{proof}

\section{Funk Radon Transform and Stereographic Spherical Radon Transform}
The Funk-Radon Transform \cite{quinto1982null,helgason2011integral,quellmalz2017generalization,quellmalz2020funk}, which is also known as the Minkowski-Funk transform or the spherical Radon transform is a classical and significant mathematical tool used in integral geometry, with profound applications in image reconstruction and analysis. It plays a crucial role in fields like tomography, allowing for the reconstruction of images from projection data. In this section, we discuss the relation between the Funk Transform and the Stereographic Spherical Radon Transform. 

First, we review some basic concepts about the Funk transform. In the sphere $\mathbb{S}^{d}\subset \mathbb{R}^{d+1}$ centered at the origin with radius $1$, every $(d-1)-$dimensional sub-sphere (``circle'') is the intersection of $\mathbb{S}^{d}$ with a 
$d-$dimensional hyperplane, that is, 
$$S(s,t)=\{s'\in\mathbb{S}^d\mid \,  \langle s, s'\rangle =t\},$$
where $s\in\mathbb{S}^d$ is normal to the hyperplane and $t\in[-1,1]$ is the signed distance of the hyperplane to origin. For example, if $t=\pm 1$, $S(s,t)$ consists of only the singleton $\{\pm s\}$. As another example, when $t=0$, for any $s\in\mathbb{S}^d$, the sub-sphere $S(s,0)$ is a ``great circle'' (a sub-sphere of dimension $d-1$ in $\mathbb{R}^{d+1}$ centered at the origin and with radius $1$). 

The \textit{Spherical Transform} or \textit{Spherical Mean Operator} \cite{quellmalz2017generalization} of a function defines on  $\mathbb{S}^d$ is formally defined as a function on $\mathbb{S}^d\times [-1,1]$ 
by the surface integral
\begin{align}\label{eq: sph mean operator}
\mathcal{U}f(s,t):=\frac{1}{\|S(s,t)\|_{d-1}}\int_{S(s,t)}f \, dS(s,t)=\frac{1}{\|S(s,t)\|_{d-1}}\int_{\mathbb{R}^{d+1}}f(s')\delta(t-\langle s,s'\rangle) \, ds', \quad (s,t)\in \mathbb{S}^d\times [-1,1],
\end{align}
where $\|S(s,t)\|_{d-1}$ denotes the ``surface area'' of the sub-sphere  $S(s,t)$, and $dS(s,t)$ is the induced volume form on the surface $S(s,t)$.
In the special case $d=2$, given $f$ defined on $\mathbb{S}^2\subset \mathbb{R}^3$, the integral in \eqref{eq: sph mean operator} is carried out with respect to the arclength $dS$ of the circle
$S(s,t)$, that is,
\begin{align*}
\mathcal{U}f(s,t)=\frac{1}{2\pi \cdot \text{radius}(S(s,t))}\int_{S(s,t)}f \, dS(s,t):=\frac{1}{2\pi\cdot \text{radius}(S(s,t))}\int_a^b f(r(u)) \|r'(u)\|\, du,
\end{align*}
where $\text{radius}({S(s,t)})$ denotes the radius of the circle $S(s,t)$, $r(u)$ for $a\leq u\leq b$ is a parametrization of $S(s,t)$, and $\|r'(u)\|$ is the magnitude of the tangent vector $r'(u)$ to $S(s,t)$. Besides, in the special case $d=3$, given $f$ defined on $\mathbb{S}^3\subset \mathbb{R}^4$, we have that $S(s,t)$ is $2$-dimensional sphere and we get 
\begin{align*}
\mathcal{U}f(s,t)=\frac{1}{\frac{4}{3}\pi \cdot \text{radius}(S(s,t))^3}\int_{S(s,t)}f \, dS(s,t):=\frac{1}{\frac{4}{3}\pi\cdot \text{radius}(S(s,t))^3}\int\int f(r(u,v)) \|r_u\times r_v\|\, du \, dv,
\end{align*}
where $\text{radius}({S(s,t)})$ denotes the radius of the sphere $S(s,t)$, $r(u,v)$ is a parametrization of $S(s,t)$, and $\|r_u\times r_v\|$ is the magnitude of the cross-product between the partial derivatives $r_u$ and $r_v$, which is a normal vector to $S(s,t)$.  

The inversion of this Spherical Mean Operator is an over-determined problem (for instance, $\mathcal{U}f(s,t)=f(s)$ for all $s\in\mathbb{S}^{d}$). In application, one has access restrictions of $\mathcal{U}f(s,t)$ to specific sub-spheres $S(s,t)$, and this yields different transforms.

One important example arises when we restrict $\mathcal{U}f(s,t)$ to ``great circles'' $S(s,0)$ getting the classical Funk-Radon transform $\mathcal{F}_1$,  that is, 
\begin{align}
\mathcal{F}_1f(s):=\mathcal{U}f(s,0)\label{eq: Funk-Radon transform}, \qquad s\in \mathbb{S}^d.
\end{align}

Another example, studied in \cite{Abouelaz1993, gindikin1994spherical, helgason2011integral}
arises when considering the restriction to the family of sub-spheres containing the North Pole
$s_n=[0,\dots,0,1]\in\mathbb{R}^{d+1}$.

Similarly as in \cite{quellmalz2017generalization,quellmalz2020funk}, we will interested in a generalized Funck-Radon transform given by the following restriction of $\mathcal{U}$.  Let $s=[s_1,\dots,s_{d+1}]\in\mathbb{S}^d\subset \mathbb{R}^{d+1}$, and $\xi\in [-1,1]$. Consider the point 
$[0,\ldots,0, \xi s_{d+1}]\in\mathbb{R}^{d+1}$ located in the same positive axis as the North Pole $s_n=[0,\dots,0,1]$ and inside the sphere $\mathbb{S}^d$\footnote{In \cite{quellmalz2017generalization,quellmalz2020funk}, $\xi\in [0,1)$ and so $[0,\ldots,0, \xi s_{d+1}]$ lies strictly inside the sphere.}.
Then, we restrict $\mathcal{U}f$ to the integrals over
the intersections of the sphere $\mathbb{S}^d$ with hyperplanes containing the point $[0,\ldots,0, \xi s_{d+1}]$, that is,  
$$\zeta_\xi^s:=S(s, \xi s_{d+1})=\{s'\in\mathbb{S}^{d}| \, \langle s,s' \rangle=\xi 
s_{d+1}\}, \qquad \xi\in [-1,1], \, s\in \mathbb{S}^d.
$$
Then, the generalized Funk-Radon transform is defined by
\begin{align}
\mathcal{U}_\xi f(s):= \mathcal{U}f(s,\xi s_{d+1})=\frac{1}{\|\zeta_\xi^s\|_{d-1}}\int_{\zeta_\xi^s}f \,  d\zeta_\xi^s
=\frac{1}{\|\zeta_\xi^s\|_{d-1}}\int_{\mathbb{R}^{d+1}}f(s')\delta(\xi s_{d+1}-\langle s,s'\rangle) \, ds' \label{eq: Spherical transform}, 
\end{align}
where $\|\zeta_\xi^s\|_{d-1}$ is the volume (``surface area'') of sub-sphere (circle, if $d=2$) $\zeta_\xi^s$ and $d\zeta_\xi^s$ is the integration over the surface $\zeta_\xi^s$ (path, if $d=2$).\\
\begin{remark}
Note, the generalized Funk-Radon transform has been further generalized by \cite{rubin2022spherical} under the name ``spherical slice transform.'' In particular, choose $\alpha\in \mathbb{R}^{d+1}$ and let $\mathcal{T}(\alpha,k)$ denote the set of all $k$-dimensional affine planes that pass through $\alpha$ and intersect $\mathbb{S}^n$:  
$$\mathcal{T}(\alpha,k)=\{H(a)=\alpha+H: H \text{ is hyperplane}, \operatorname{dim}(H)=k, H(a)\cap \mathbb{S}^{d}\neq \emptyset\}.$$
Pick $H\in \mathcal{T}(\alpha,k)$, then $H\cap \mathbb{S}^d$ is a $k-1$ dimensional sub-circle (or a point). The spherical slice transform of function $f\in L_1(\mathbb{S}^d)$ is defined as: 
\begin{align}
\mathcal{SS}_\alpha f(H)=\int_{H\cap \mathbb{S}^d}f(s) dH\cap \mathbb{S}^d, \qquad \forall H\in \mathcal{T}(\alpha,k) \label{eq:SSf}.
\end{align}

When $k=d$, pick $s\in \mathbb{S}^d$ and $\xi\in[-1,1]$, we define affine plane
$$H(s,\xi s_n)=\xi s_n+s^\perp,$$
we have $H\in \mathcal{T}(\xi s_n,d)$ and $\zeta_xi^s=H(s)\cap \mathbb{S}^d$. Thus 
$$\mathcal{SS}f(H(s,\xi s_n))=U_\xi(f)(s).$$
That is, the generalized Funk transform \eqref{eq: Funk-Radon transform} is a special case of the spherical slice transform \eqref{eq:SSf}. 
\end{remark}

As said before, the goal of this section is to discuss the 
relationship between a generalized Funk-Radon transform \eqref{eq: Spherical transform} and the Stereographic Radon transform. For doing that let us first consider the \textit{stereographic projection }
\(\phi: \mathbb{S}^d \setminus \{s_n\} \to \mathbb{R}^d\) given by the formula
\begin{equation}\label{eq: stereographic proj without 2}
  \phi(s)=\frac{s[1:d]}{1-s_{d+1}}
  , \qquad \forall s=[s_1,\dots,s_d,s_{d+1}]\in\mathbb{S}^d,  
\end{equation}
where $s[1:d]:=[s_1,\dots,s_d]\in\mathbb{R}^d$,
which corresponds to the intersection with the equator plane $x_{d+1}=0$, rather than \eqref{eq: stereographic proj}
which corresponds to projection onto the plane $x_{d+1}=-1$ (which has a factor of $2$ in \eqref{eq: stereographic proj without 2}).

It is straightforward to verify that the inverse of $\phi$, denoted as $\phi^{-1}$, is as follows:  
\begin{equation}\label{eq: inv stereo}
    \phi^{-1}(x)=[(1-s_{d+1})x,s_{d+1}], \quad \text{where } s_{d+1}=\frac{\|x\|^2-1}{\|x\|^2+1}, \qquad \forall x\in \mathbb{R}^d.
\end{equation}
See Figures \ref{fig: cartoon stereo d=1}, \ref{fig: stereo 2} and \ref{fig: stereo3}. 

\begin{figure}[H]
    \centering
\includegraphics[width=0.6\linewidth]{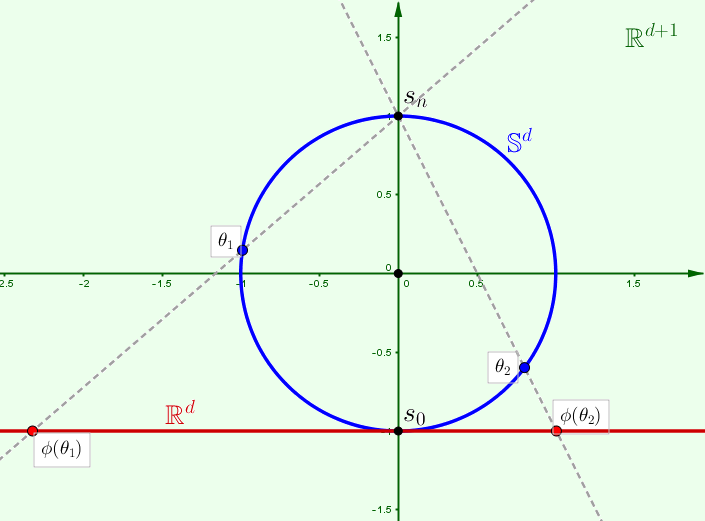}
    \caption{Visualization for $d=1$ of the \textit{stereographic projection} $\phi$ defined by the formula \eqref{eq: stereographic proj}. The ambient space $\mathbb{R}^{d+1}$ is depicted in green, while the unit sphere (circle) $\mathbb{S}^{d}$ is in blue. The North and South Poles are labeled as $s_n$ and $s_0$, respectively. The projected space $\mathbb{R}^d=\phi(\mathbb{S}^d\setminus{s_n})$ is highlighted in red, corresponding to the plane (line) in $\mathbb{R}^{d+1}$ defined by all points with the last coordinate equal to $-1$. Two points, $\theta_1$ and $\theta_2$ (blue dots), from $\mathbb{S}^d$ are projected through $\phi$ obtaining the points $\phi(\theta_1)$ and $\phi(\theta_2)$ (red dots).}
    \label{fig: cartoon stereo d=1}
\end{figure}
\vspace{-0.15in}
\begin{figure}[H]
    \centering
\includegraphics[width=0.6\linewidth]{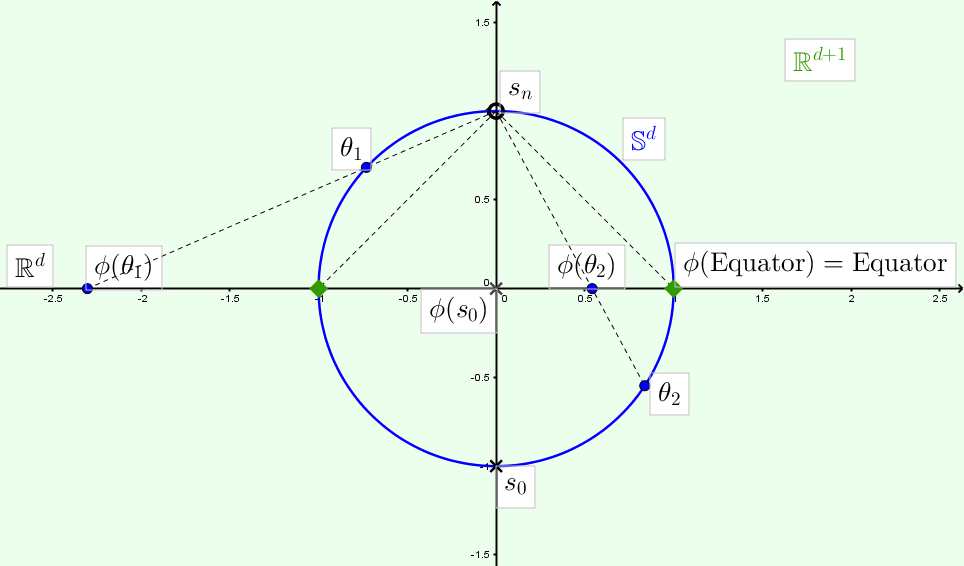}
    \caption{Visualization for $d=1$ of the \textit{stereographic projection} $\phi$ defined by the formula \eqref{eq: stereographic proj without 2}. The ambient space $\mathbb{R}^{d+1}$ is depicted in green, while the unit sphere (circle) $\mathbb{S}^{d}$ is in blue. The North and South Poles are labeled as $s_n$ and $s_0$, respectively. The projected space $\mathbb{R}^d=\phi(\mathbb{S}^d\setminus{s_n})$ coincides with the hyperplane $x_{d+1}=0$, which in the figure is nothing but the horizontal axis. Two points, $\theta_1$ and $\theta_2$ (blue dots), from $\mathbb{S}^d$ are projected through $\phi$ obtaining the points $\phi(\theta_1)$ and $\phi(\theta_2)$ (red dots). The South Pole is projected to the origin in $\mathbb{R}^d$ and the points in the Equator are fixed points for this stereographic projection.}
    \label{fig: stereo 2}
\end{figure}
\vspace{-0.15in}
\begin{figure}[H]
    \centering    \includegraphics[width=0.6\linewidth]{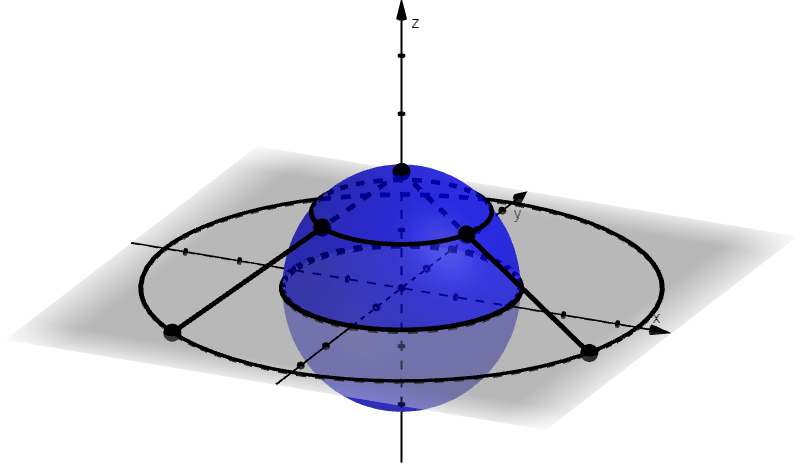}
    \caption{Visualization for $d=2$ of the \textit{stereographic projection} $\phi$ defined by the formula \eqref{eq: stereographic proj without 2}. The ambient space is the 3D space $\mathbb{R}^3$. The sphere $\mathbb{S}^{2}$ is depicted in blue. The projected space $\mathbb{R}^2=\phi(\mathbb{S}^2\setminus{s_n})$ is depicted in grey. The points in the Equator are fixed points for this stereographic projection, and circles parallel to the Equator are mapped to circles in the plane as shown in the plot: If the height of the circle is positive, its projection is a circle with radius greater than 1 as the case depicted in the figure; if the height is negative, its projection is a circle with radius smaller than 1.}
    \label{fig: stereo3}
\end{figure}

When we set $\xi=1$ we have the following relations between the sub-spheres $\zeta_1^s$ and the hyperplanes $H(t,\theta)$ through the stereographic projection $\phi$.

\begin{proposition}\label{pro: hyperplane and circle}
Consider the sub-sphere (circle) $\zeta_1^s=\{s'\in \mathbb{S}^d| \, \langle s',s\rangle=s_{d+1} \}$, where $s=[s_1,\dots,s_d,s_{d+1}]\in \mathbb{S}^d$ is such that $s\neq s_n=[0,\ldots,0,1]$. 
Then, $\phi(\zeta_1^s)$ is a hyperplane in $\mathbb{R}^d$, precisely, 
$$\phi(\zeta_1^s)=\left\{x\in\mathbb{R}^d\mid \,  \langle x,\theta \rangle=t\right\},$$
where $\theta=\frac{s[1:d]}{\|s[1:d]\|}$ and $t=\frac{s_{d+1}}{\sqrt{1-s_{d+1}^2}}$. 

Conversely, given hyperplane $H(t,\theta)=\{x\in\mathbb{R}^d \mid \, \langle x,\theta\rangle=t\}$ for $\theta\in\mathbb{S}^{d-1}$ and $t\in\mathbb{R}$, we have 
$$\phi^{-1}(H(t,\theta))=\zeta_1^s,$$ 
where $s=[\sqrt{1-s_{d+1}^2}\theta,s_{d+1}]$ with $s_{d+1}=\text{sign}(t)\frac{t^2}{t^2+1}$.
\end{proposition}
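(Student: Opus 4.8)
The plan is to reduce both directions to the inverse stereographic projection \eqref{eq: inv stereo}, rewritten in a convenient form. Since $1-s_{d+1}=1-\frac{\|x\|^2-1}{\|x\|^2+1}=\frac{2}{\|x\|^2+1}$, we have
$$\phi^{-1}(x)=\left[\frac{2x}{\|x\|^2+1},\ \frac{\|x\|^2-1}{\|x\|^2+1}\right],\qquad x\in\mathbb{R}^d,$$
and one checks directly that $\|\phi^{-1}(x)\|=1$ and $\phi^{-1}(x)\ne s_n$ for every $x$. For the forward direction I would fix $s=[s_1,\dots,s_d,s_{d+1}]\in\mathbb{S}^d$ with $s\notin\{s_n,s_0\}$, so that $s[1:d]\ne 0$ and $\|s[1:d]\|=\sqrt{1-s_{d+1}^2}>0$. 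A point $x\in\mathbb{R}^d$ belongs to $\phi(\zeta_1^s)$ exactly when $\phi^{-1}(x)\in\zeta_1^s$, i.e. when $\langle s,\phi^{-1}(x)\rangle=s_{d+1}$; substituting the formula above,
$$\langle s,\phi^{-1}(x)\rangle=\frac{2\,\langle s[1:d],x\rangle+s_{d+1}\left(\|x\|^2-1\right)}{\|x\|^2+1}=s_{d+1}.$$
Multiplying through by the strictly positive factor $\|x\|^2+1$, the quadratic terms $s_{d+1}\|x\|^2$ cancel on both sides, and the condition collapses to the affine equation $\langle s[1:d],x\rangle=s_{d+1}$. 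Dividing by $\|s[1:d]\|$ gives precisely $\langle x,\theta\rangle=t$ with $\theta=s[1:d]/\|s[1:d]\|$ and $t=s_{d+1}/\sqrt{1-s_{d+1}^2}$. Since every step is an equivalence valid for all $x\in\mathbb{R}^d$, and since $s_n\in\zeta_1^s$ always (because $\langle s,s_n\rangle=s_{d+1}$) while $\phi^{-1}$ misses exactly the point $s_n$, one gets $\phi(\zeta_1^s)=\phi(\zeta_1^s\setminus\{s_n\})=H(t,\theta)$, the full hyperplane.

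For the converse, given $H(t,\theta)$ with $\theta\in\mathbb{S}^{d-1}$ and $t\in\mathbb{R}$, I would run the same correspondence backwards: I seek $s\in\mathbb{S}^d$ with $s[1:d]/\|s[1:d]\|=\theta$ and $s_{d+1}/\sqrt{1-s_{d+1}^2}=t$. Squaring the scalar relation yields $s_{d+1}^2=t^2/(1+t^2)$; since $\sqrt{1-s_{d+1}^2}>0$ forces $\operatorname{sign}(s_{d+1})=\operatorname{sign}(t)$, the correct root is $s_{d+1}=t/\sqrt{1+t^2}$. Then $\|s[1:d]\|=\sqrt{1-s_{d+1}^2}=1/\sqrt{1+t^2}$, hence $s[1:d]=\theta/\sqrt{1+t^2}$, and $s=[\,\sqrt{1-s_{d+1}^2}\,\theta,\ s_{d+1}\,]=\frac{1}{\sqrt{1+t^2}}[\theta,t]$, which is a unit vector lying in $\mathbb{S}^d\setminus\{s_n,s_0\}$. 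Applying the forward direction to this particular $s$ shows $\phi(\zeta_1^s)=H(t,\theta)$, hence $\phi^{-1}(H(t,\theta))=\zeta_1^s\setminus\{s_n\}$, which is the claimed identity (every such subsphere contains $s_n$, which $\phi$ does not see).

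The argument is essentially bookkeeping, so I do not expect a genuine obstacle; the three places that deserve attention are: (i) making explicit the cancellation of the $\|x\|^2$ term — this is the real mechanism of the proposition, as it is what turns the spherical constraint $\langle s,s'\rangle=s_{d+1}$ into an affine one after pulling back by $\phi^{-1}$; (ii) the sign analysis when solving $t=s_{d+1}/\sqrt{1-s_{d+1}^2}$ for $s_{d+1}$, where one must keep the branch with $\operatorname{sign}(s_{d+1})=\operatorname{sign}(t)$; and (iii) the (implicitly) excluded degenerate cases $s\in\{s_n,s_0\}$, for which $s[1:d]=0$, so $\theta$ is undefined and in fact $\zeta_1^s=\{s_n\}$ reduces to a single point with empty $\phi$-image. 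None of these presents a real difficulty.
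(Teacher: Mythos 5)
Your proof is correct and follows essentially the same route as the paper's: a direct computation with the stereographic projection formulas, the only cosmetic difference being that you substitute the rational formula for $\phi^{-1}(x)$ and clear the denominator (the cancellation of the $s_{d+1}\|x\|^2$ terms) where the paper parametrizes $s'=[(1-s'_{d+1})\phi(s'),\,s'_{d+1}]$ and cancels the factor $1-s'_{d+1}$. Your value $s_{d+1}=t/\sqrt{1+t^2}$ in the converse agrees with the paper's own proof, which uses $\mathrm{sign}(t)\sqrt{t^2/(t^2+1)}$ (the expression $\mathrm{sign}(t)\tfrac{t^2}{t^2+1}$ in the proposition statement is missing a square root), and your explicit handling of the North Pole and of the degenerate case $s=s_0$ is a welcome extra precision that the paper glosses over.
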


\begin{proof}
Let $s\in\mathbb{S}^d$, and consider $s'\in \zeta_1^s$. Let $x'=\phi(s')=\frac{s'[1:d]}{1-s'_{d+1}}$. Then, we can write $s'=[(1-s_{d+1}')x',s'_{d+1}]$.  
Besides, since
$$s_1^2+\dots+s_{d}^2+s_{d+1}^2=1,$$
by defining $\theta=\frac{s[1:d]}{\|s[1:d]\|}$, we can write $s=\left[\sqrt{1-s_{d+1}^2}\theta,s_{d+1}\right]$. 
Then, we have that 
\begin{align}\label{eq: proof stereo proj of circle}
s_{d+1}&=\langle s',s\rangle\nonumber\\ 
&=s'_{d+1}s_{d+1}+\langle (1-s_{d+1}')x',\sqrt{1-s_{d+1}^2} \theta\rangle\nonumber\\ 
&=s'_{d+1}s_{d+1}+(1-s'_{d+1})\sqrt{1-s^2_{d+1}}\langle x',\theta\rangle.
\end{align}
That is, 
$$\langle x',\theta\rangle=\frac{s_{d+1}}{\sqrt{1-s_{d+1}^2}}.$$
Thus, $$\phi(\zeta_1^s)=\left\{x\in\mathbb{R}^d\mid \,  \langle x, \theta\rangle=\frac{s_{d+1}}{\sqrt{1-s_{d+1}^2}}\right\}.$$

For the converse, given $\theta\in\mathbb{S}^{d-1}$ and $t\in\mathbb{R}$, consider $x'\in H(t,\theta)$.
By using \eqref{eq: inv stereo}, let 
$$s'=    \phi^{-1}(x)=[(1-s'_{d+1})x,s'_{d+1}], \qquad \forall x\in \mathbb{R}^d$$
where $s'_{d+1}=\frac{\|x'\|^2-1}{\|x'\|^2+1}$.  

We set $s=[\sqrt{1-s_{d+1}^2}\theta,s_{d+1}]$ with  
$s_{d+1}=\text{sign}(t)\sqrt{\frac{t^2}{t^2+1}}$. So, we have 
that $s\in\mathbb{S}^d$ because
$$(1-s_{d+1}^2)\underbrace{\theta^2}_{=1}+s_{d+1}^2=1,$$
and also we have that
$t=\frac{s_{d+1}}{\sqrt{1-s_{d+1}^2}}$.
Thus, we have that
$$\langle x',\theta\rangle=t=\frac{s_{d+1}}{\sqrt{1-s_{d+1}^2}}$$
and by reversing the above proof (see \eqref{eq: proof stereo proj of circle}), we have that
it implies that
$$s_{d+1}=\langle s',s\rangle,$$
and so $s'\in \zeta_1^s.$
Combining with the fact $\phi$ is invertible, we have 
$$\phi^{-1}(H(t,\theta))=\zeta_1^s.$$
\end{proof}

\begin{lemma}\label{lem: s and theta,t}
Let $s\in\mathbb{S}^d$ different from the North Pole $s_n=[0,\dots,0, 1]$, the South Pole $s_0=[0,\dots, 0,-1]$, and consider $S_0$ the unique 
great circle (\textit{meridian}) that passes through the North Pole $s_n$, the South Pole $s_0$, and the point $s$.
For each $\hat s\in S_0$, we define the functions $$\theta(\hat s):=\frac{\hat s[1:d]}{\|\hat s[1:d]\|}, \qquad t(\hat s):=\frac{s_{d+1}}{\sqrt{1-s_{d+1}^2}}.$$  
For all $\hat s\in S_0$, 
$\theta(\hat{s})=\theta(s)$. 
In addition $t: S_0\setminus\{s_n\}\to \mathbb{R}$ is a bijection. 
\end{lemma}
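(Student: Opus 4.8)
The plan is to reduce both claims to elementary one‑variable statements by parametrizing the meridian explicitly. Since $s\neq s_n,s_0$ we have $s[1:d]\neq 0$, so $\theta_0:=s[1:d]/\|s[1:d]\|\in\mathbb{S}^{d-1}$ is well defined and $\theta(s)=\theta_0$. First I would identify $S_0$ — the half great circle joining the poles and passing through $s$ — with the arc $\{(\sqrt{1-u^2}\,\theta_0,\,u):u\in[-1,1]\}\subset\mathbb{S}^d$, where $u=1$ gives $s_n$ and $u=-1$ gives $s_0$; equivalently $S_0$ is $\mathbb{S}^d$ intersected with $\mathrm{span}\{e_{d+1},(\theta_0,0)\}$, restricted to the half on which the $(\theta_0,0)$‑component is nonnegative. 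One should check at the outset that $s$ itself lies on this arc, which is immediate from $\|s[1:d]\|^2+s_{d+1}^2=1$. In particular the last‑coordinate map $\hat s\mapsto\hat s_{d+1}$ restricts to a bijection $S_0\setminus\{s_n,s_0\}\to(-1,1)$, with inverse $u\mapsto(\sqrt{1-u^2}\,\theta_0,u)$.

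For the constancy of $\theta$, note that any $\hat s\in S_0\setminus\{s_n,s_0\}$ equals $(\sqrt{1-u^2}\,\theta_0,u)$ with $\sqrt{1-u^2}>0$, so $\hat s[1:d]$ is a \emph{positive} multiple of $\theta_0$ and hence $\theta(\hat s)=\theta_0=\theta(s)$. I want to emphasize that this is precisely where the meridian (half‑circle) hypothesis is used: on a full great circle the first $d$ coordinates of a point could be a negative multiple of $\theta_0$, and then $\theta$ would flip to $-\theta_0$.

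For the bijectivity of $t$, I would factor $t$ on $S_0\setminus\{s_n,s_0\}$ as $g\circ(\hat s\mapsto\hat s_{d+1})$ where $g(u)=u/\sqrt{1-u^2}$. The map $g:(-1,1)\to\mathbb{R}$ is a bijection: it is continuous and strictly increasing since $g'(u)=(1-u^2)^{-3/2}>0$, and $g(u)\to\mp\infty$ as $u\to\mp1$; indeed its inverse is the explicit map $t\mapsto t/\sqrt{1+t^2}$. Composing the bijection $\hat s\mapsto\hat s_{d+1}$ onto $(-1,1)$ with the bijection $g$ onto $\mathbb{R}$ shows that $t$ is a bijection onto $\mathbb{R}$.

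None of these steps is genuinely difficult; the only thing requiring care is the bookkeeping at the poles — both $\theta$ and $t$ are undefined there (the latter blows up to $\pm\infty$) — so the honest domain for the bijection statement is $S_0\setminus\{s_n,s_0\}$ rather than $S_0\setminus\{s_n\}$. Optionally I would also record the geometric content behind the lemma: under $\phi$ it says that the circles $\zeta_1^{\hat s}$ with $\hat s$ ranging over the meridian $S_0$ are mapped exactly onto the pencil of all hyperplanes orthogonal to $\theta_0$, parametrized bijectively by their signed distance $t$ to the origin.
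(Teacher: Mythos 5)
Your proof is correct and follows essentially the same route as the paper: an explicit parametrization of the meridian together with the elementary observation that $u\mapsto u/\sqrt{1-u^2}$ is a strictly increasing bijection of $(-1,1)$ onto $\mathbb{R}$. In fact your write-up is more careful than the paper's own two-line argument, which parametrizes the \emph{full} great circle by $\hat\alpha\in[-\tfrac{3}{2}\pi,\tfrac{1}{2}\pi]$ and asserts both $\theta(\hat s_{\hat\alpha})=[1,0,\ldots,0]$ and injectivity of $t(\hat\alpha)=\sin\hat\alpha/|\cos\hat\alpha|$; as you observe, both assertions fail on the half of the circle where the first-coordinate block is a negative multiple of $\theta(s)$, so the statement must indeed be read with $S_0$ the half-meridian, and the South Pole (where $t$ has a vanishing denominator) excluded from the domain of the bijection, exactly as you do.
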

\begin{proof} 
Without loss of generality, we can suppose $s=[\cos\alpha,0,\ldots,0,\sin\alpha]$ for some $\alpha\in(-\frac{3}{2}\pi,\frac{1}{2}\pi)$. Thus, $\theta(s)=[1,0,\ldots 0]$, and 
$$S_0=\left\{\hat{s}_\alpha=[\cos\hat\alpha,0,\ldots,0,\sin\hat\alpha]: \, \hat\alpha\in[-\frac{3}{2}\pi,\frac{1}{2}\pi]\right\}.$$
Pick $\hat s_\alpha\in S_0\setminus\{s_n\}$, we have 
$\theta(\hat s_\alpha)=[1,0,\ldots,0]$. 
In addition, $t(\hat{s}_\alpha)=\frac{\sin\alpha}{|\cos\alpha|}$ is a one by one mapping. 
\end{proof}

Notice that for $\mathbb{S}^2\subset{\mathbb{R}^3}$, if we visualize $\mathbb{S}^2$ as the Earth and we pick the point $s\in \mathbb{S}^2$ to be the  Royal Observatory in the town of Greenwich, London, England, then $S_0$ is the \textit{Greenwich meridian} (or ``prime meridian'').

\bigskip

Finally, we discuss the relation between the variant Funk transform defined in \eqref{eq: Spherical transform} with $\xi=1$ (i.e., $\mathcal{U}_1$) and the \textit{Stereographic Spherical Radon transform} defined in \eqref{eq: SR} as
$$\mathcal{S}_\mathcal{R}(\mu)=\mathcal{R}(\phi_\#\mu)$$
for any Radon measure $\mu\in \mathcal{M}(\mathbb{S}^{d})$ that does not assign mass to $\{s_n\}$. In particular, we can define the Stereographic Spherical Radon transform for $L^1$ functions $f:\mathbb{S}^d\setminus\{s_n\}\to \mathbb{R}$, as they can be viewed as densities of continuous measures. 

Indeed, let $\mu$ be the corresponding measure for $f$, that is,
$$\int_{\mathbb{S}^d\setminus\{s_n\}} \psi_0(s)f(s) \, d\sigma_{d+1}(s)=\int_{\mathbb{S}^d\setminus\{s_n\}}\psi_0(s) \, d\mu(s),\qquad \forall \psi_0\in C(\mathbb{S}^{d}),$$ 
where $\sigma_{d+1}$ is the uniform measure on $\mathbb{S}^d\subset \mathbb{R}^{d+1}$.

Let $\hat{f}$ denote the density of $\phi_\#\mu$. By definition of push forward measure, we have:
\begin{align}
  \int_{\mathbb{S}^d\setminus\{s_n\}}\psi(\phi(s)) f(s)d\sigma_{d+1}(s)=\int_{\mathbb{S}^d\setminus\{s_n\}}\psi(\phi(s))d\mu(s)=\int_{\mathbb{R}^d}\psi(x)d\phi_\#\mu(x)=\int_{\mathbb{R}^d}\psi(x)\hat{f}(x)dx, \label{eq: f_hat}  
\end{align}
for all test functions $\psi\in C_0(\mathbb{R}^d)$ that decay to zero. 
In short, we usually write $d\mu=f d\sigma_{d+1}$, and $d\phi_\#\mu=\hat f dx$, where $dx$ represents the Lebesgue measure in $\mathbb{R}^d$.

Thus, $\mathcal{S}_{\mathcal{R}}(f)$ is defined as 
\begin{align}
  \mathcal{S}_\mathcal{R}(f)(t,\theta)=\mathcal{R}(\hat{f})(t,\theta)=\int_{\mathbb{R}^d}\hat{f}(x)\delta(t-\langle x,\theta\rangle) \, dx=\int_{H(t,\theta)} \hat{f}\  dH(t,\theta),
  \label{eq: SR 2}  
\end{align}
where $dH(t,\theta)$ is the surface/line integral over the hyperplane/plane/line
 $H(t,\theta)$.

\begin{proposition}
Let $f\in L^1(\mathbb{S}^d\setminus\{s_n\})$, 
and  let $s\in \mathbb{S}^d$.
Then, 
$$\|\zeta^s_1\|_{d-1} \, \mathcal{U}_1f(s)=\mathcal{S}_\mathcal{R}f(t,\theta),
$$
where
$$\theta=\frac{s[1:d]}{\|s[1:d]\|}, \qquad t=\frac{s_{d+1}}{\sqrt{1-s_{d+1}^2}}.$$
\end{proposition}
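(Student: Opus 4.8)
The plan is to reduce both sides to integrals of $f$ over the same subsphere $\zeta_1^s$ and match them through the change of variables induced by the stereographic projection $\phi$. For the left-hand side, since $\mathcal{U}_1 f(s) = \mathcal{U}f(s, s_{d+1})$ and the surface $S(s,s_{d+1})$ is precisely $\zeta_1^s = \{s' \in \mathbb{S}^d : \langle s, s'\rangle = s_{d+1}\}$, the normalizing factor cancels and
\[
\|\zeta_1^s\|_{d-1}\,\mathcal{U}_1 f(s) = \int_{\zeta_1^s} f \, d\zeta_1^s ,
\]
the integral of $f$ against the induced $(d-1)$-dimensional surface measure on $\zeta_1^s$; note that $s_n \in \zeta_1^s$, but it is a null point where neither $\phi$ nor $f$ is defined, so one really integrates over $\zeta_1^s \setminus \{s_n\}$. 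For the right-hand side I would use the displayed identity $\mathcal{S}_\mathcal{R} f(t,\theta) = \int_{H(t,\theta)} \hat f \, dH(t,\theta)$, with $\hat f$ the density of $\phi_\#\mu$ determined by Eq.~\eqref{eq: f_hat}.

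The bridge between the two is Proposition~\ref{pro: hyperplane and circle}: for $\theta = s[1:d]/\|s[1:d]\|$ and $t = s_{d+1}/\sqrt{1-s_{d+1}^2}$, which are exactly the $\theta$ and $t$ in the statement, one has $\phi(\zeta_1^s \setminus \{s_n\}) = H(t,\theta)$, so $\phi$ restricts to a diffeomorphism between these two $(d-1)$-manifolds. I would then transport the hyperplane integral back to $\zeta_1^s$ along this diffeomorphism, which requires two change-of-variables ingredients, both consequences of the conformality of $\phi$: (i) the relation between $\hat f$ and $f$, obtained from Eq.~\eqref{eq: f_hat} by the $d$-dimensional substitution $x = \phi(s')$, which brings in the Jacobian of $\phi$ — equivalently the conformal factor of $\phi^{-1}$, which on $\mathbb{S}^d$ evaluates to $1 - s'_{d+1}$ at $s' = \phi^{-1}(x)$; and (ii) the relation between the $(d-1)$-dimensional surface measures on $H(t,\theta)$ and on $\zeta_1^s$, which again differ by a power of $1 - s'_{d+1}$ since $\phi^{-1}$ restricted to the hyperplane is conformal with the same factor. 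Substituting (i) and (ii) into $\int_{H(t,\theta)} \hat f \, dH(t,\theta)$ and simplifying, the powers of the conformal factor combine and the expression reduces to an integral over $\zeta_1^s$ of exactly the form on the left-hand side.

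The main obstacle is the careful bookkeeping of these conformal/Jacobian factors: one must be precise about whether the surface measures are taken normalized or unnormalized, and verify that the power of $1 - s'_{d+1}$ produced by the pushforward defining $\hat f$ and the power produced by restricting $\phi^{-1}$ to the hyperplane combine exactly as the statement requires. The remaining pieces are routine — the closed form for the conformal factor of $\phi^{-1}$, and the observation that the $\theta$ and $t$ above are literally those in Proposition~\ref{pro: hyperplane and circle}. The degenerate positions should be dispatched separately: $s = s_n$ is excluded by hypothesis, and $s = s_0$ (where $\|s[1:d]\| = 0$, so $\theta$ is undefined and $t$ is infinite) is a genuine exceptional case that can be excluded or handled by continuity.
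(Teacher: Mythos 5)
Your route is genuinely different from the paper's. The paper never computes a Jacobian: it uses the pushforward identity $(\phi^{-1})_\#\phi_\#\mu=\mu$ together with two simultaneous foliations --- the sphere by the circles $\zeta_1^{\hat s}$ indexed by points $\hat s$ of the meridian $S_0$ (via Lemma \ref{lem: s and theta,t}), and $\mathbb{R}^d$ by the parallel hyperplanes $H(\hat t,\theta)$ --- and then matches the two slicings of the same quantity. You instead work on a single slice through an explicit conformal change of variables. That is a legitimate strategy in principle, but the step you defer as ``careful bookkeeping'' is exactly where the argument fails as described. With the projection \eqref{eq: stereographic proj without 2}, the conformal factor of $\phi^{-1}$ is $\lambda(x)=\tfrac{2}{1+\|x\|^2}=1-s'_{d+1}$ at $s'=\phi^{-1}(x)$, so the density obtained from \eqref{eq: f_hat} is $\hat f(x)= c\, f(\phi^{-1}(x))\,\lambda(x)^{d}$ (with $c=1$ if $\mu=f\cdot$ surface measure, and $c$ the reciprocal of the surface area of $\mathbb{S}^d$ if, as in the paper's notation, $\sigma_{d+1}$ is the uniform probability measure), whereas the $(d-1)$-dimensional surface measures on $H(t,\theta)$ and on $\zeta_1^s$ are related by the power $\lambda^{d-1}$, not $\lambda^{d}$. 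Substituting both into \eqref{eq: SR 2} therefore gives
\begin{equation*}
\int_{H(t,\theta)}\hat f \, dH(t,\theta)\;=\;c\int_{\zeta_1^s} f(s')\,\bigl(1-s'_{d+1}\bigr)\, d\zeta_1^s(s'),
\end{equation*}
so a net factor $1-s'_{d+1}$ (and possibly the constant $c$) survives inside the integral; the powers do not ``combine exactly as the statement requires,'' and the right-hand side does not reduce to $\int_{\zeta_1^s} f\, d\zeta_1^s=\|\zeta_1^s\|_{d-1}\,\mathcal{U}_1 f(s)$.

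So the gap is not mere bookkeeping: the exact cancellation your plan relies on is false for the standard induced surface measure on $\zeta_1^s$ and Lebesgue measure on $H(t,\theta)$, and a completed version of your argument would have to either reinterpret the slice measures so that they absorb this weight or settle for a weighted identity. (The paper's own proof sidesteps Jacobians by asserting slicing identities such as $\int_{\mathbb{S}^d}\psi_0\,d\sigma_{d+1}=\int_{S_0}\int_{\zeta_1^{\hat s}}\psi_0\,d\zeta_1^{\hat s}\,dS_0(\hat s)$, which likewise hold only if co-area weights are built into the slice measures; your single-slice computation, carried out honestly, is precisely what exposes this issue, but as written it does not prove the stated equality.) Your peripheral remarks are fine: Proposition \ref{pro: hyperplane and circle} does identify $\phi(\zeta_1^s\setminus\{s_n\})$ with $H(t,\theta)$ for the stated $t,\theta$; the point $s_n\in\zeta_1^s$ is negligible for the surface integral; and $s=s_0$ (where $\theta$ is undefined and $t$ is infinite) and $s=s_n$ are genuine exceptional cases to exclude.
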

\begin{proof}
For convenience, we can extend $f$ to $\mathbb{S}^d$ by setting $f(s_n)=0$.

Without loss of generality, we suppose $s=[\cos\theta, 0,\ldots,0,\sin\theta]$ where $\theta\in(-\frac{3}{2}\pi,\frac{1}{2}\pi)$.

As before, let $\mu$ be the measure having density $f$ and let $\hat{f}$ denote the density of $\phi_\#\mu$. Since $\phi$ is invertible, it is straightforward to verify that 
$(\phi^{-1})_\#(\phi_\#)(\mu)=\mu$, that is, 
\begin{align}
\int_{\mathbb{R}^d}\psi_0(\phi^{-1}(x))\hat{f}(x)dx=\int_{\mathbb{S}^d}\psi_0(s)f(s)d\sigma_{d+1}(s), \label{eq: f_hat inv} \qquad \forall\psi_0\in C(\mathbb{S}^{d}) 
\end{align}
In short, $d\mu=fd\sigma_{d+1}$ and $d\phi_\#\mu=\hat f dx$.

Let $S_0$ denote the unique great circle (\textit{meridian}) that passes through the North Pole $s_n=[0,\dots,0, 1]$, the South Pole $s_0=[0,\dots, 0,-1]$ and the point $s$. It is straightforward to verify 
$$\mathbb{S}^d=\bigcup_{\hat s\in S_0}\zeta_1^{\hat s}$$
and that for any distinct $\hat s_1,\hat s_2\in S_0$, $\zeta_1^{\hat s_1}\cap \zeta_1^{\hat s_2}=\{s_n\}$. 

Similarly, $$\mathbb{R}^d=\bigcup_{\hat{t}\in \mathbb{R}}H({\hat{t},\theta})$$
and for any distinct $\hat t_1,\hat t_2\in \mathbb{R}$, $H({\hat{t}_1,\theta})\cap H({\hat{t}_2,\theta})=\emptyset$.

Thus, we can ``slice'' and integrate in the following way:
\begin{align}\label{eq: int over Sd}
\int_{\mathbb{S}^d}\psi_0 \, d\sigma_{d+1}=\int_{\hat{s}\in S_0}\int_{\zeta_1^{\hat s}} \psi_0 \, d\zeta_1^{\hat s} \, dS_0(\hat{s}),
\end{align}
where $d\zeta_1^{\hat s}$ denotes the surface/path integral over $\zeta_1^{\hat s}$, and $dS_0$ denotes the surface/path integral over $S_0$. 

Also, by using Proposition \ref{pro: hyperplane and circle}, we can ``slice'' and integrate in the following way:
\begin{align}\label{eq: int over Rd}
\int_{\mathbb{R}^d}\psi(x) \, dx=\int_{\hat s\in \mathbb{R}}\int_{H(t(\hat s),\theta(\hat s))} \psi \, dH(t(\hat s),\theta(\hat s)) \, d\hat s,
\end{align}
where 
for each $\hat s\in S_0$ we define $\theta(\hat s):=\frac{\hat s[1:d]}{\|\hat s[1:d]\|},t(\hat s):=\frac{\hat s_{d+1}}{\sqrt{1-\hat s_{d+1}^2}}$,   
and where
$dH(t(\hat s),\theta(\hat s))$ denotes the surface/line integral over $H(t(\hat s), \theta(\hat s))$, and $d\hat s$ denotes the integral over $\mathbb{R}=\phi(S_0\setminus\{s_n\})$.
(Notice that if $d=2$, \eqref{eq: int over Rd} is nothing but the Fubini-Tonelli expression $\int_{\mathbb{R^2}}\phi(z) \, dz=\int_\mathbb{R}\int_\mathbb{R}\psi(x,y) \, dx \, dy$, where $dz$ is the Lebesgue measure on $\mathbb{R}^2$ and $dx$, $dy$ the Lebesgue measure on $\mathbb{R}$.)

Refer to Figure \ref{fig: slides circles and lines} for an illustrative visualization. 
\begin{figure}[H]
    \centering
    \includegraphics[width=0.75\linewidth]{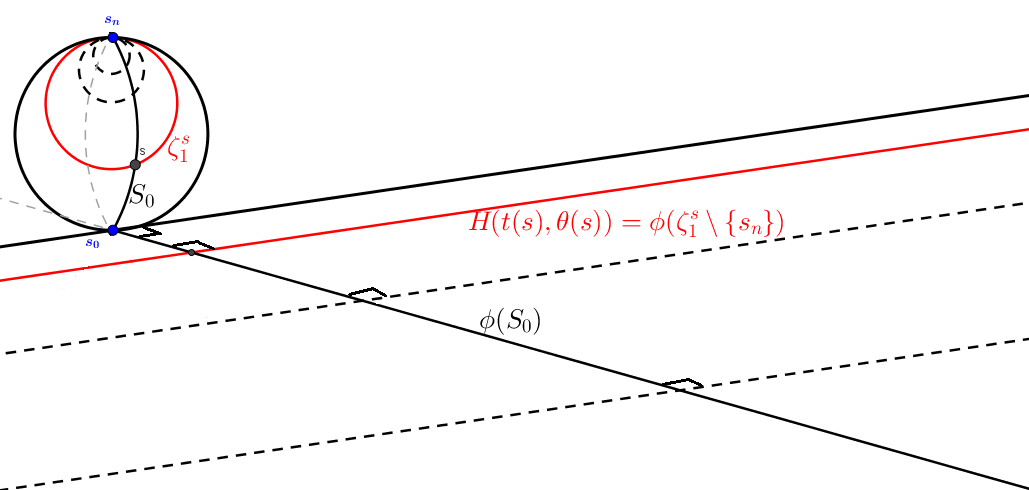}
    \caption{Representation of a unit sphere $\mathbb{S}^2$ in $\mathbb{R}^3$, featuring the North and South Poles ($s_n$ and $s_0$ in blue). The sketch includes five great circles passing through a point $s$: four intersecting $s_n$ and one meridian labeled as $S_0$. The stereographic projection of these five great circles is depicted as straight lines, which lie on the same plane. For the stereographic projection $\phi$ we used formula \eqref{eq: stereographic proj}. The four circles passing through both $s_n$ and $s$ project onto parallel lines. One such circle is highlighted in red and labeled as $\zeta_1^s$, with its corresponding projection also in red, labeled as $H(t(s),\theta(s))$. These four parallel lines are perpendicular to the line labeled as $\phi(S_0)$, representing the stereographic projection of the meridian $S_0$.}
    \label{fig: slides circles and lines}
\end{figure}

By using Lemma \ref{lem: s and theta,t} combined with identity \eqref{eq: f_hat inv}, together with with \eqref{eq: int over Sd} and \eqref{eq: int over Rd}, we have
\begin{align*}
    \equalto{\int_{\mathbb{R}^d}\psi_0(\phi^{-1}(x))\hat{f}(x)dx}{\int_{\hat s\in \mathbb{R}}\int_{H(t(\hat s),\theta(\hat s))} \psi_0\circ\phi^{-1} \, \hat f \, dH(t(\hat s),\theta(\hat s)) \, d\hat s} = \equalto{\int_{\mathbb{S}^d}\psi_0(s)f(s)d\sigma_{d+1}(s)}{\int_{\hat{s}\in S_0}\int_{\zeta_1^{\hat s}} \psi_0 \, f \, d\zeta_1^{\hat s} \, dS_0(\hat{s})} 
\end{align*}
Since the above equality holds for all $\psi_0\in C(\mathbb{S}^d)$, by setting $\psi_0\equiv 1$ we obtain 
\begin{align*}
S_{\mathcal{R}}f(t,\theta)=\int_{H{(t,\theta)}} \hat f \,  dH{(t,\theta)}=\int_{\zeta^s_1} f \,  d\zeta_1^s=\|\zeta^s_1\|_{d-1} \, \mathcal{U}_1f(s)
\end{align*}
and we complete the proof. 
\end{proof}



\section{Related Work: Vertical Sliced Transform, Semi-Circle Transform, and Related Wasserstein Distances}
This section introduces the \textbf{vertical sliced transform} \cite{shepp1994spherical}, \textbf{semi-circle transform} \cite{groemer1998spherical}, and the related optimal transport distances \cite{quellmalz2023sliced}. 
\subsection{Vertical Slice Transform and Vertical Sliced Wasserstein Distance}
\label{sec:vsw}
We first introduce the vertical slice transform technique introduced in \cite{quellmalz2023sliced}, which can only be defined in $\mathbb{S}^2$ space. Next, we extend it into $\mathbb{S}^d$ for $d\ge 2$. Finally, we discuss the vertical slice of Wasserstein distance and its relation to the classical slice of Wasserstein distance. 

\subsubsection*{Parametrization in $\mathbb{S}^2$.}
For each point $s\in \mathbb{S}^2$, we can parameterize it as 
\begin{align}
s=[\cos\alpha\sin\beta,\sin\alpha\sin\beta,\cos \beta]\label{eq:s},
\end{align}
where $\alpha\in[0,2\pi),\beta\in[0,\pi]$. 

When $s$ is the north (or south) pole, i.e., 
$s=[0,0,1]$ (or $s=[0,0,-1]$), the above parameterization in particular is not uniquely determined. In this case, we set $\alpha=0,\beta=0$ (or $\alpha=0, \beta=\pi$). Otherwise, $\alpha,\beta$ are uniquely determined. 

In particular, $\alpha$ is called the azimuth angle, denoted as $\alpha=\operatorname{azi}(s)$, and $\beta$ is called the zenith angle, denoted as $\beta = \operatorname{zen}(s)$. 

In summary, the following mapping is bijective: 
\begin{align}
\Phi:\left((0,2\pi)\times (0,\pi)\right) \cup \left(\{0\}\times\{0,\pi\}\right)&\to \mathbb{S}^2\nonumber\\
 (\alpha,\beta)&\mapsto \Phi(\alpha,\beta)=s=[\cos\alpha\sin\beta,\sin\alpha\sin\beta,\cos\beta]\label{eq:Phi}.
\end{align}

\subsubsection*{Vertical Slice Transform and Vertical Sliced Wasserstein Distance.}

The equator in $\mathbb{S}^2$ can be defined by 
\begin{align}
\mathcal{E}:=\{\Phi(\alpha,0)=[\cos\alpha,\sin\alpha,0]:\alpha\in[0,2\pi)\}\label{eq:E_S2}.
\end{align}

Each $\theta=\Phi(\alpha,0)\in \mathcal{E}$ and $t\in[-1,1]$ can determine a ``vertical'' circle
\begin{align}
\mathcal{VS}(\alpha,t):=\mathcal{VS}(\theta,t):=\{s\in \mathbb{S}^2: \langle \theta,s\rangle=t\},
\end{align}
characterizing a circle centered at $t\theta$ with radius $\sqrt{1-t^2}$. Thus $\|\mathcal{VS}(\theta,t)\|_{1}=2\pi \sqrt{1-t^2}$. 

Given $f\in \mathcal{L}_1(\mathbb{S}^2)$, the \textbf{vertical slice transform} $\mathcal{V}$ of $f$, denoted as $\mathcal{V}f$, is a $L_1$ function defined on $\mathcal{E}\times [-1,1]$: 
\begin{align}
\mathcal{V}f(\theta,t):=
\begin{cases}
\frac{1}{\|\mathcal{VS}(\theta,t)\|_1}\int_{\mathcal{VS}(\theta,t)}f(s)d\mathcal{VS}(\theta,t)(s)
=\frac{1}{2\pi \sqrt{1-t^2}}\int_{\mathbb{S}^2}f(s)\delta(t-\langle \theta,s\rangle )ds  &\text{if }t\in(-1,1)\\ 
f(\theta) &\text{if }t=1\\
f(-\theta) &\text{if }t=-1
\end{cases}
\label{eq:Vf}
\end{align}

Its adjoint operator $\mathcal{V}^*: L^1(\mathcal{E}\times[-1,1])\to L^1(\mathbb{S}^2)$ has closed form: 
\begin{align}
\mathcal{V}^*g(s)=\frac{1}{2\pi}\int_{\mathcal{E}}g(\theta,\langle \theta,s\rangle) d\mathcal{E} \label{eq:V*}.
\end{align}
Thus, we can extend the vertical transform to any (positive Radon) measure as follows.

For each $\mu\in\mathcal{M}_+(\mathbb{S}^2)$, $\mathcal{V}(\mu)$ is defined by:  

\begin{align}
\int_{\mathcal{E}\times[-1,1]} \psi(\theta,t) d\mathcal{V}\mu(\theta,t)=\int_{\mathbb{S}^2}\mathcal{V^*}\psi(s)(s) d\mu(s), \qquad \forall \psi\in C_0(\mathbb{E}\times [-1,1]) \label{eq:V(mu)}.
\end{align}

Note, $\mathcal{V}(\mu)$ can be equivalently defined as 
\begin{align}
\int \psi(\theta,t)d\mathcal{V}(\mu)(\theta,t)=\int_{\mathcal{E}}\int_{-1}^1 \psi(\theta,t)d \mathcal{V}(\mu)_\theta(t) d\sigma_\mathcal{E}(\theta)\label{eq:Vmu_2}
\end{align}
where $\sigma_\mathcal{E}$ is the uniformly probability measure defined on $\mathcal{E}$ and  $$\mathcal{V}(\mu)_\theta=\langle \theta,\cdot \rangle_\#\mu,\forall \theta\in\mathcal{E}.$$ is the integration of $\mathcal{V}\mu$ with respect to $\sigma_{\mathcal{E}}$.  

In the discrete case, $\mu=\sum_{i=1}^np_i\delta_{x_i}$, the above definition becomes 
$
\mathcal{V}(\mu)_\theta=\sum_{i=1}^np_i\delta_{\langle \theta,x_i \rangle} \label{eq:V(mu)_2_discrete}. $

Given $\mu,\nu\in \mathcal{P}(\mathbb{S}^2)$, 
the \textbf{vertical sliced Wasserstein distance} is defined by 
\begin{align}
VSW_p^p(\mu,\nu)&:=\int_{\mathcal{E}} W_p^p(\mathcal{V}(\mu)_\theta,\mathcal{V}(\nu)_\theta) d\sigma_{\mathcal{E}}(\theta)\label{eq:vsw}\\
&\approx \frac{1}{N}\sum_{t=1}^NW_p^p(\langle\theta,\cdot \rangle_\#\mu,\langle\theta,\cdot \rangle_\#\nu) \label{eq:vsw_approx}
\end{align}
where \eqref{eq:vsw_approx} is the Monte Carlo approximation and $\theta_1,\ldots \theta_N$ are uniformly selected from $\mathcal{E}$. 

By [Theorem 3.7, \cite{quellmalz2023sliced}], the vertical transform \eqref{eq:V(mu)} (or \eqref{eq:Vmu_2}) is injective. Thus, the above VSW problem defines a metric.

\subsubsection*{Relation with Classical Radon Transform.}

We refer to \eqref{eq: R(mu)} and \eqref{eq: R(mu) 2} for the classical Radon transform for a measure $\mu\in \mathcal{P}(\mathbb{R}^3)$. 

In addition, we have 
\begin{align}
\mathcal{R}(\mu)_\theta=\mathcal{V}(\mu)_\theta=\langle\theta,\cdot\rangle_\#\mu,\qquad \forall \theta\in \mathcal{E}\subset \mathbb{S}^2. \nonumber 
\end{align}
That is, the restricted transformed measure of $\mu$ under $\mathcal{R}$ and $\mathcal{V}$ coincide when $\theta\in\mathcal{E}$. 

Recall that the Sliced Wasserstein (SW) distance is defined by 
\begin{align}
SW_p^p(\mu,\nu)&:=\int_{\mathbb{S}^2} W_p^p(\mathcal{R}(\mu)_\theta,\mathcal{R}(\nu)_\theta)d\sigma_{\mathbb{S}^2}(\theta)\label{eq:sw}\\
&\approx \frac{1}{N}\sum_{i=1}^NW_p^p(\langle\theta,\cdot\rangle_\#\mu, \langle\theta,\cdot\rangle_\#\nu)\label{eq:sw_approx},
\end{align}
where \eqref{eq:sw_approx} is the Monte Carlo approximation of the SW problem and $\theta_1,\ldots \theta_N$ are uniformly sampled from $\mathbb{S}^2$. 

Comparing \eqref{eq:vsw_approx} and \eqref{eq:sw_approx}, we observe that when $\theta_1,\ldots \theta_N$ are sampled from $\mathcal{E}$, \eqref{eq:vsw_approx} and \eqref{eq:sw_approx} coincide. 

\subsubsection*{Generalization of Vertical Sliced Wasserstein in $\mathbb{S}^d$.}

It is natural to extend the vertical sliced transform given by \eqref{eq:Vf}, \eqref{eq:V(mu)}, \eqref{eq:Vmu_2} to $\mathbb{S}^{d}$ for $d\ge 2$.

Let $$\mathcal{E}=\{s\in \mathbb{S}^{d}: s_{d+1}=0\},$$
denote the equator. 

Choose $(\theta,t)\in \mathcal{E}\times [-1,1]$, 
the $d-1$ dimensional vertical sphere $\mathcal{VS}(\theta,t)$ is defined as 
$$\mathcal{VS}(\theta,t):=\{s\in \mathbb{S}^d: \langle\theta,s\rangle=t\}.$$

Choose $f\in L_1(\mathbb{S}^{d})$, $\mathcal{V}f$ is defined by 
\begin{align}
\mathcal{V}f(\theta,t):=\frac{1}{\|\mathcal{E}\|_{d-1}}\int_{VS(\theta,t)}f(s)dVS(\theta,t)=\frac{1}{\|\mathcal{E}\|_{d-1}}\int_{\mathbb{S}^{d}}f(s)\delta(\langle \theta,s\rangle-t)ds\label{eq:Vf_d}. \end{align}
It is straightforward to verify 
$\mathcal{V}f\in L_1(\mathcal{E}\times[-1,1])$. 

In addition, its adjoint operator  $\mathcal{V}^*$ becomes 
\begin{align}
\mathcal{V}^*g(s)=\frac{1}{\|\mathcal{E}\|_{d-1}}\int_{\mathcal{E}}g(\theta,\langle \theta,s\rangle) d\theta \label{eq:V*_d},\forall g\in L_1(\mathcal{E}\times[-1,1]).
\end{align}
And similarly, for each $\mu\in \mathcal{M}(\mathbb{S}^d)$, vertical transformed measure $\mathcal{V}\mu$ can be defined by \eqref{eq:V(mu)}, or equivalently, 
\begin{align}
\int_{\mathcal{E}\times[-1,1]} g(\theta,t) d \mathcal{V}\mu(\theta,t) =\int_{\mathcal{E}}
\int_{[-1,1]} g(\theta,t)d\mathcal{V}(\mu)_\theta(t)d\sigma_{\mathcal{E}}(\theta) \label{eq:V(mu)_2_d},
\end{align}
where $\mathcal{V}(\mu)_\theta=\langle \theta,\cdot\rangle_\#\mu$ is the integration of $d\mathcal{V}\mu$ with respect to $\sigma_\mathcal{E}$.

Thus, vertical sliced Wasserstein \eqref{eq:vsw}, \eqref{eq:vsw_approx} can be extended to the space $\mathbb{S}^d$. 

We provide additional results in Appendix \ref{section:gf}.

\subsection{Semi-Circle Transform and Spherical Sliced Wasserstein Distance}

We first introduce the semi-circle transform and then discuss its relation to spherical sliced Wasserstein (SSW). 

The semi-circle can be defined by the following three equivalent formulations. 

\noindent\textbf{Formulation 1 \cite{quellmalz2023sliced}}: 
In $\mathbb{S}^2$, we define 
\begin{align}
\Psi(\alpha,\beta,\gamma):=
\begin{bmatrix} 
\cos\alpha & -\sin\alpha &0\\
\sin\alpha &  \cos\alpha &0\\
0          &  0          &1
\end{bmatrix}
\begin{bmatrix} 
\cos\beta & 0 &-\sin\beta\\
0         & 1 &0\\
\sin\beta & 0 &\cos\beta
\end{bmatrix}
\begin{bmatrix} 
\cos\gamma & -\sin\gamma &0\\
\sin\gamma &  \cos\gamma &0\\
0          &  0          &1
\end{bmatrix}.
\end{align}

Choose $s^*=\Phi(\alpha^*,\beta^*)$ (defined in \eqref{eq:Phi}), $\xi\in [0,2\pi)$, 
the semi-circle is defined by 
\begin{align}
\mathcal{SC}(s^*,\xi)=\{s\in \mathbb{S}^2: \operatorname{azi}(\Psi(\alpha^*,\beta^*,0)^\top s)=\xi\}\cup \{\pm \Phi(\alpha^*,\beta^*)\}.\label{eq:SC_1}
\end{align}

\noindent\textbf{Formulation 2 \cite{groemer1998spherical}}: 

In $\mathbb{S}^{d}$, where $d\ge 2$, given $u,v\in \mathbb{S}^d$ with $u\perp v$, the semi-circle is
defined by 
\begin{align}
\mathcal{SC}(u,v):=\{s\in \mathbb{S}^d: s\perp u, \langle s,v\rangle\ge 0\}. \label{eq:SC_2}
\end{align}

\noindent\textbf{Formulation 3 \cite{bonet2023sliced}}:

In $\mathbb{S}^d$, choose $U\in V_2(d):=\{U'\in \mathbb{R}^{d\times 2}: (U')^\top U'=I_2\}$, and let $U^\perp$ denote the perpendicular space of $\text{Range}(U)$.
In addition, $U$ can determine a great circle: 
$$\mathcal{S}_U:=\{s\in \mathbb{R}^{d+1}:U^Ts=0\}\cap \mathbb{S}^d.$$

The projection from $\mathbb{S}^d$ to $\mathcal{S}_U$ admits the following closed form: 
\begin{align}
P_U(s)=\arg\min_{s'\in \mathcal{S}_U}d_\mathbb{S}(s,s')=\frac{UU^Ts}{\|U^Ts\|},\forall s\in \mathbb{S}^d\setminus U^\perp. \label{eq:proj_1}
\end{align}

The great circle $\mathcal{S}_U$ can be regarded as a $\mathbb{S}^{1}$ circle, and we can use $t\in [0,2\pi)$ to represent each point in $\mathcal{S}_U$. Thus, the above projection mapping can be rewritten as 
$s\mapsto \frac{U^Ts}{\|U^Ts\|}$. The new formulation induces the semi-circle: 
\begin{align}
\mathcal{SC}(U,t)=\left\{s\in \mathbb{S}^d: \frac{U^Ts}{\|U^Ts\|}=t\right\}\cup U^\perp 
\end{align}

When $d=2$, the above three formulations of semi-circle are equivalent. When $d\ge 2$, the second and third formulations are equivalent. For convenience, we select formulation 3.

\subsubsection*{Semi-Circle Transform.}
Given $f\in L_1(\mathbb{S}^d)$, we will introduce the semi-circle transform of $f$. Note, in \cite{quellmalz2023sliced,hielscher2018svd}, this transform is called the \textit{unnormalized semi-circle transform}; in \cite{bonet2023sliced}, this transform is called the \textit{spherical Radon transform}; and in \cite{groemer1998spherical}, it is called the \textit{hemispherical transform}: 

\begin{align}
\mathcal{SC}(f)(U,t):=\int_{\mathcal{SC}(U,t)}f(s) d\mathcal{SC}(U,t)\label{eq:SC(f)}. 
\end{align}

Similar to the Radon transform \eqref{eq: R(mu)} and the vertical slice transform \eqref{eq:V(mu)}, the above definition can be extended into $\mathcal{M}(\mathbb{S}^d)$. The corresponding Wasserstein problem is the spherical sliced Wasserstein problem \cite{bonet2022spherical}: 
\begin{align}
SSW(\mu,\nu):=\int_{V_2(d)} W_p^p(\mathcal{SC}(\mu)_U,\mathcal{SC}(\nu)_U)d\sigma_{V_2(d)}(U) \label{eq:ssw},
\end{align}
where $\mathcal{SC}(\mu)_U$ is the integration of $\mathcal{SC}(\mu)$ with respect to the uniformly measure $\sigma_{V_2(d)}$. In addition, with $\sigma_{V_2(d)}-a.s.$, we have $$\mathcal{SC}(\mu)_U=\left(\frac{U^T\cdot}{\|U^T\cdot\|}\right)_\#\mu.$$

\section{Distance Distortion}
\label{sec:supp_dist}

In $\mathbb{R}^d$ we consider the Euclidean distance and in $\mathbb{S}^d$ the \textit{great circle distance} 
$d_{\mathbb{S}^d}(\cdot,\cdot)$. While in Euclidean spaces the distance between two points is the length of a straight segment between them, in the sphere, we measure the distance between two points as the length of the shortest path that lies in the sphere and connects them. In particular, for two angles in the unit circle $\theta_1,\theta_2\in[0,2\pi)$ 
$$d_{\mathbb{S}^1}(\theta_1,\theta_2)=\min\{|\theta_1-\theta_2|,2\pi-|\theta_1-\theta_2|\}.$$
In general, the distance between $s_1,s_2\in\mathbb{S}^d$ given by the arclength of the shortest path can be expressed as 
$$d_{\mathbb{S}^d}(s_1,s_2) := \arccos( \langle s_1 , s_2 \rangle).$$

We point out that in the stereographic projection, significant distortion can occur. For instance,
\begin{align}
\text{as } d_{\mathbb{S}^d}(s_1,s_2)\longrightarrow 0,  \text{ we might have } \|\phi(s_1)-\phi(s_2)\|\longrightarrow \infty\nonumber.
\end{align}
See Figure \ref{fig: distortion} for a visualization.

\begin{figure}[H]
    \centering
\includegraphics[width=1\linewidth]{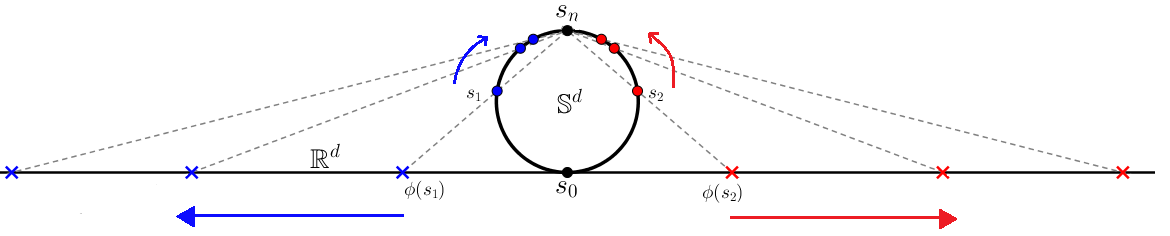}
    \caption{Illustration of the distortion phenomenon in one-dimensional space ($d=1$) as a result of applying the stereographic projection with formula \eqref{eq: stereographic proj}. As the points $s_1$ (blue) and $s_2$ (red) are brought closer within the sphere (circle), their respective stereographic projections, depicted as crosses, diverge from each other.}
    \label{fig: distortion}
\end{figure}

In this section, we aim to construct an injective function $h: \mathbb{R}^d\to\mathbb{R}^d$ such that the following holds:


$$\|h(\phi(s_1))-h(\phi(s_2))\|\approx d_{\mathbb{S}^d}(s_1,s_2).$$


We define $h_1: \mathbb{R}^d\to \mathbb{R}^d$ by 
\begin{equation}\label{eq: special h_1}
  h_1(x):=\arccos(-s_{d+1})\frac{x}{\|x\|}, \quad \text{where } s_{d+1}=\frac{\|x\|^2-1}{\|x\|^2+1}. 
\end{equation}

\begin{proposition}\label{prop: h prop1}
Let $s, s_1,s_2\in \mathbb{S}^d\setminus\{s_n\}$, and let $s_0=[0,\ldots,0,-1]^T\in \mathbb{S}^d$ denote the South Pole. Consider $\phi:\mathbb{S}^{d}\setminus\{s_n\}\to\mathbb{R}^d$ the stereographic projection as in \eqref{eq: stereographic proj without 2} with inverse \eqref{eq: inv stereo}, and also consider the function $h_1:\mathbb{R}^d\to\mathbb{R}^d$ defined by \eqref{eq: special h_1}. 
\begin{enumerate}
\item[(1)]
$h_1(\phi(s))=\angle(s,s_0)\frac{s[1:d]}{\|s[1:d]\|}$, and thus $\|h_1(\phi(s_1))-h_1(\phi(s_2))\|\leq 2\pi$, where $\angle(s,s_0)=\arccos(\langle s,s_0\rangle){\in[0,\pi]}$ denotes the angle between $s$ and $s_0$. 

\item[(2)]  If $s_1,s_2,s_0$ are in the same great circle, denoted as $S$, let  $d_S$ denote the circle distance in $S$, then 
    \begin{align}
      d_{\mathbb{S}^d}(s_1,s_2)=d_S(s_1,s_2)=\min\{\|h_1(\phi(s_1))-h_1(\phi(s_2)\|,2\pi-\|h_1(\phi(s_1))-h_1(\phi(s_2)\|\}\label{eq: d_h and d_S case1}
    \end{align} 
    
\end{enumerate}

\end{proposition}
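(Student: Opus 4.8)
The plan is to reduce the computation of $h_1\circ\phi$ to an explicit trigonometric identity, and then exploit the fact that when $s_1,s_2,s_0$ lie on a common great circle, everything collapses to a one-dimensional (circular) picture where the claim is immediate. First I would compute $h_1(\phi(s))$ directly. Write $s=[s_1,\dots,s_d,s_{d+1}]\in\mathbb{S}^d\setminus\{s_n\}$ and set $x=\phi(s)=\frac{s[1:d]}{1-s_{d+1}}$. A short calculation using $\|s[1:d]\|^2=1-s_{d+1}^2$ gives $\|x\|^2=\frac{1+s_{d+1}}{1-s_{d+1}}$, hence $\frac{\|x\|^2-1}{\|x\|^2+1}=s_{d+1}$, which confirms that the auxiliary quantity in \eqref{eq: special h_1} is indeed $s_{d+1}$, and also $\frac{x}{\|x\|}=\frac{s[1:d]}{\|s[1:d]\|}$ since $1-s_{d+1}>0$. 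Therefore $h_1(\phi(s))=\arccos(-s_{d+1})\frac{s[1:d]}{\|s[1:d]\|}$. Since $\langle s,s_0\rangle=-s_{d+1}$, we get $\arccos(-s_{d+1})=\angle(s,s_0)\in[0,\pi]$, which is exactly part (1). The bound $\|h_1(\phi(s_1))-h_1(\phi(s_2))\|\le 2\pi$ then follows from the triangle inequality, since each vector $h_1(\phi(s_i))$ has norm $\angle(s_i,s_0)\le\pi$.

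For part (2), assume $s_1,s_2,s_0$ all lie on a single great circle $S$. After an orthogonal change of coordinates fixing $s_0$ (which changes neither $d_{\mathbb{S}^d}$, nor $\angle(\cdot,s_0)$, nor the norms involved), I may assume $S$ lies in the $2$-plane spanned by $e_1$ and $e_{d+1}$, so that $s_i=[\sin\alpha_i,0,\dots,0,-\cos\alpha_i]$ with $\alpha_i=\angle(s_i,s_0)\in[0,\pi]$ — but to capture both ``sides'' of the circle I instead parametrize $s_i=[\sin\psi_i,0,\dots,0,-\cos\psi_i]$ with $\psi_i\in(-\pi,\pi]$, so $\angle(s_i,s_0)=|\psi_i|$ and $\frac{s_i[1:d]}{\|s_i[1:d]\|}=\operatorname{sign}(\sin\psi_i)\,e_1=\operatorname{sign}(\psi_i)\,e_1$. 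Then $h_1(\phi(s_i))=|\psi_i|\operatorname{sign}(\psi_i)\,e_1=\psi_i\,e_1$, so $\|h_1(\phi(s_1))-h_1(\phi(s_2))\|=|\psi_1-\psi_2|$. On the other hand the great-circle distance is $d_{\mathbb{S}^d}(s_1,s_2)=d_S(s_1,s_2)=\arccos(\langle s_1,s_2\rangle)=\arccos(\cos(\psi_1-\psi_2))=\min\{|\psi_1-\psi_2|,\,2\pi-|\psi_1-\psi_2|\}$, since $|\psi_1-\psi_2|\in[0,2\pi)$. Combining the two displays yields \eqref{eq: d_h and d_S case1}.

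The only genuinely delicate point is the sign/branch bookkeeping in part (2): one must be careful that $h_1(\phi(s))$ picks out $\frac{s[1:d]}{\|s[1:d]\|}$, which flips sign as $s$ crosses the meridian through $s_0$, while $\arccos(-s_{d+1})=|\psi|$ is always nonnegative — it is precisely the product $|\psi_i|\cdot\operatorname{sign}(\psi_i)$ that recovers the honest ``signed angle'' $\psi_i$ along the circle, and this is what makes $\|h_1(\phi(s_1))-h_1(\phi(s_2))\|$ equal to $|\psi_1-\psi_2|$ rather than $||\psi_1|-|\psi_2||$. I would also flag the boundary cases: if some $s_i$ equals $s_0$ (so $\psi_i=0$, $x=0$), the formula $h_1(\phi(s_0))=0$ holds by the convention $\arccos(-(-1))=\arccos(1)=0$ and the factor $\frac{x}{\|x\|}$ is multiplied by zero, so no division-by-zero issue actually arises; and $s_i=s_n$ is excluded by hypothesis. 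Everything else is routine trigonometry.
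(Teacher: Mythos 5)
Your argument is correct and follows essentially the same route as the paper: verify the closed form $h_1(\phi(s))=\angle(s,s_0)\frac{s[1:d]}{\|s[1:d]\|}$ by direct computation of $\|\phi(s)\|^2=\frac{1+s_{d+1}}{1-s_{d+1}}$, then for part (2) rotate so the great circle lies in the $(e_1,e_{d+1})$-plane and compare the signed-angle parametrization on both sides. The only cosmetic difference is your choice of parameter $\psi\in(-\pi,\pi)$ (the honest signed angle from $s_0$) versus the paper's shifted variable $\theta\in(-\tfrac{3}{2}\pi,\tfrac{1}{2}\pi)$; your version makes the sign bookkeeping and the $s_0$ edge case slightly more transparent, but the content is identical.
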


\begin{proof}$ $
\begin{itemize}
    \item[(1)] Note that 
\begin{align*}
h_1(\phi(s))&=\arccos(-s_{d+1})\frac{\phi(s)}{\|\phi(s)\|} =\arccos (\langle s, s_0\rangle) \frac{s[1:d]}{\|s[1:d]\|}=\angle(s,s_0) \underbrace{\frac{s[1:d]}{\|s[1:d]\|} }_{\leq 1},
\end{align*}
for all $ s\in\mathbb{S}^d\setminus\{s_n\}\subset\mathbb{R}^{d+1}.$

Thus, we have that
$$\|h_1(\phi(s_1))-h_1(\phi(s_2))\|\leq \angle(s_1,s_0)+\angle(s_2,s_0)\leq \pi+\pi=2\pi.$$ 

\item[(2)] Without loss of generality, we suppose that the great circle $S$ (meridian) is of the form $$S=\left\{s=[\cos \theta,0,\ldots,0,\sin\theta]: \,  \theta\in[-\frac{3}{2}\pi,\frac{1}{2}\pi]\right\}$$
(where we go through the circle clockwise having $-\pi/2$ in the middle). 
Thus, we assume that $S$ passes through $s_0=[0,\dots,0,-1]=[\cos(-\frac{\pi}{2}),0,\dots,0,\sin(-\frac{\pi}{2})]$, $s_1=[\cos \theta_1,0,\ldots 0,\sin\theta_1]$, and $s_2=[\cos\theta_2,0,\ldots,0,\sin\theta_2]$ in $S\setminus \{s_n\}$ with $\theta_1,\theta_2\in (-\frac{3}{2}\pi,\frac{1}{2}\pi)$. 
It follows that 
\begin{align}
h_1(\phi(s_1))&=\angle(s_1,s_0)\frac{s_1[1:d]}{\|s_1[1:d]\|}
=(\theta_1+\frac{\pi}{2})[1,0,\ldots,0] \nonumber 
\end{align}
and similarly, $h_1(\phi(s_2))=(\theta_2+\frac{\pi}{2})[1,0,\ldots,0]$.
Thus, the third term in \eqref{eq: d_h and d_S case1} becomes 
$$\min\{\|h_1(\phi(s_1))-h_1(\phi(s_2))\|,2\pi-\|h_1(\phi(s_1))-h_1(\phi(s_2))\|\}=\min\{|\theta_1-\theta_2|,2\pi-|\theta_1-\theta_2|\}.$$ 

For the first term in \eqref{eq: d_h and d_S case1}, we have 
$$d_{\mathbb{S}^d}(s_1,s_2)=\arccos(\cos\theta_1\cos\theta_2+\sin\theta_1\sin\theta_2)=\arccos(\cos(\theta_1-\theta_2))=\min\{|\theta_1-\theta_2|,2\pi-|\theta_1-\theta_2|\}.$$

For the second term, we have 
$d_{S}(s_1,s_2)=\min(|\theta_1-\theta_2|,2\pi-|\theta_1-\theta_2|)$. 

And we are done.

\end{itemize}
\end{proof}

\begin{remark}
As a corollary from part 1 in Proposition \ref{prop: h prop1}, we notice that $h_1$ preserves angles with the vertex at the origin in the projected space. (See angle $\beta$ in Figure \ref{fig: stereo for proof2}.) 

In addition, when $d=2$, we observe that $h_1(\phi)$ is similar to the  
\textbf{Lambert azimuthal equal-area projection} \cite{bradawl2003statistics}: 
\begin{align}
LP(s)=\sqrt{\frac{2}{1-s[3]}}s[1:2]\label{eq:LP}.
\end{align}
The function $LP$ projects points in $\mathbb{S}^2$ into $\mathbb{R}^2$, while preserving the area in $\mathbb{S}^2$, i.e. 
$$\|A\|_2=\|LP(A)\|_2.$$
Meanwhile, $h_1\circ \phi$ aims to preserve distance. 

In addition, if we write $s=[\cos\alpha\cos\beta,\sin\alpha\cos\beta,\sin\beta]$, then  
\begin{align}
h_1(\phi(s))&=\angle (s,s_0) \frac{s[1:2]}{\|s[1:2]\|}\nonumber\\
&=(\beta+\frac{\pi}{2})[\cos \alpha,\sin\alpha]\nonumber \\
&=-\underbrace{(\beta+\frac{\pi}{2})[\sin( \alpha-\pi/2),-\cos(\alpha-\pi/2)]}_A
\end{align}
where $A$ is exactly the \textbf{azimuthal equidistant projection} \cite{snyder1997flattening} of $s$ centered at south pole $[0,0,-1]$. 

Thus, $h_1\circ \phi$ and the azimuthal equidistance projection are equivalent. 
\end{remark}

\begin{conjecture}\label{prop: ds and dh}
Let $s, s_1,s_2\in \mathbb{S}^d\setminus\{s_n\}$, and let $s_0=[0,0,\ldots,-1]^T\in \mathbb{S}^d$ denote the South Pole. Consider $\phi:\mathbb{S}^{d}\setminus\{s_n\}\to\mathbb{R}^d$ the stereographic projection as in \eqref{eq: stereographic proj without 2} with inverse \eqref{eq: inv stereo}, and also consider the function $h_1:\mathbb{R}^d\to\mathbb{R}^d$ defined by \eqref{eq: special h_1}. 
Let $\mathrm{O}(d+1)$ denote the set of all $(d+1)\times (d+1)$ orthonormal matrices, then 
    \begin{align}
    d_{\mathbb{S}^d}(s_1,s_2)&=\min_{R\in \mathrm{O}(d+1)}\min\{\|h_1(\phi(R s_1))-h_1(\phi(R s_2))\|,2\pi-\|h_1(\phi(R s_1))-h_1(\phi(R s_2))\| \}\nonumber\\ 
    &=\min_{R\in \mathrm{O}(d+1)}\|h_1(\phi(R s_1))-h_1(\phi(R s_2))\|    
    \end{align}
\end{conjecture}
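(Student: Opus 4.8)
\textbf{Proof proposal for Conjecture~\ref{prop: ds and dh}.}
The plan is to reduce the minimization over $\mathrm{O}(d+1)$ to the two-dimensional situation already settled in Proposition~\ref{prop: h prop1}(2). First I would observe the obvious inequality: for every $R\in\mathrm{O}(d+1)$ we have, writing $s_i'=Rs_i$,
\begin{align*}
\|h_1(\phi(s_1'))-h_1(\phi(s_2'))\|
&=\|\angle(s_1',s_0)\tfrac{s_1'[1:d]}{\|s_1'[1:d]\|}-\angle(s_2',s_0)\tfrac{s_2'[1:d]}{\|s_2'[1:d]\|}\|,
\end{align*}
so by part (1) of Proposition~\ref{prop: h prop1} these quantities are at most $2\pi$, and the two minima in the statement coincide exactly when the attained value is $\le\pi$. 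Since rotations are isometries of $(\mathbb{S}^d,d_{\mathbb{S}^d})$, the quantity $d_{\mathbb{S}^d}(s_1',s_2')=d_{\mathbb{S}^d}(s_1,s_2)$ is $R$-invariant, so the heart of the matter is to show that for every $R$,
\[
d_{\mathbb{S}^d}(Rs_1,Rs_2)\le \|h_1(\phi(Rs_1))-h_1(\phi(Rs_2))\|,
\]
i.e. that $\|h_1(\phi(\cdot))-h_1(\phi(\cdot))\|$ always upper-bounds the great-circle distance (this is essentially the content of Proposition~\ref{pro:distortion} in the main text), and then to exhibit a particular $R_\star$ for which equality holds.

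For the lower bound $d_{\mathbb{S}^d}(s_1,s_2)\le\|h_1(\phi(s_1))-h_1(\phi(s_2))\|$, I would reduce to $d=2$ by restricting attention to the $2$-sphere $\Sigma=\mathbb{S}^d\cap\mathrm{span}(s_0,s_1,s_2)$ (generically a $2$-sphere; handle the degenerate coplanar-with-$s_0$ case separately, where Proposition~\ref{prop: h prop1}(2) gives equality directly). On $\Sigma$, parametrize $s_i$ by co-latitude $\beta_i\in(-\pi/2,\pi/2]$ measured from $s_0$ and longitude $\alpha_i$; then $h_1(\phi(s_i))=(\beta_i+\tfrac\pi2)(\cos\alpha_i,\sin\alpha_i)$ in suitable coordinates, so the Euclidean distance between the two images is the law-of-cosines expression $\sqrt{r_1^2+r_2^2-2r_1r_2\cos(\alpha_1-\alpha_2)}$ with $r_i=\beta_i+\tfrac\pi2\in(0,\pi]$, while $d_{\mathbb{S}^d}(s_1,s_2)=\arccos(\cos r_1\cos r_2+\sin r_1\sin r_2\cos(\alpha_1-\alpha_2))$ by the spherical law of cosines (viewing $s_0$ as the apex of a spherical triangle with legs $r_1,r_2$ and included angle $\alpha_1-\alpha_2$). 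The claimed inequality is then the elementary fact that the straight-line (chord in the flattened azimuthal-equidistant chart) never underestimates the spherical side length — equivalently, that the azimuthal equidistant projection centered at $s_0$ is distance-nondecreasing, a classical monotonicity estimate which I would prove by fixing $r_1,r_2$ and checking the inequality is tightest (an equality) at $\alpha_1=\alpha_2$ and monotone in $|\alpha_1-\alpha_2|$, reducing everything to the one-variable comparison $\arccos(\cos(r_1-r_2))\le|r_1-r_2|$, which is trivial, plus a convexity/monotonicity argument in the angle.

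For the matching upper bound — exhibiting $R_\star$ attaining equality — I would choose $R_\star\in\mathrm{O}(d+1)$ so that $R_\star s_1$ and $R_\star s_2$ lie on a common meridian through $s_0$ and $s_n$; such an $R_\star$ exists because $\mathrm{O}(d+1)$ acts transitively enough to map the $2$-plane $\mathrm{span}(s_1,s_2)$ to any chosen $2$-plane containing the $x_{d+1}$-axis while keeping $s_0\mapsto s_0$ (concretely: rotate within $\mathrm{span}(s_1,s_2)^\perp$-complemented coordinates so that the image great circle of $s_1,s_2$ becomes a meridian). Once $R_\star s_1, R_\star s_2, s_0$ are coplanar through the poles, Proposition~\ref{prop: h prop1}(2) applies verbatim and gives
\[
d_{\mathbb{S}^d}(R_\star s_1,R_\star s_2)=\min\{\|h_1(\phi(R_\star s_1))-h_1(\phi(R_\star s_2))\|,\,2\pi-\|\cdots\|\},
\]
and since $d_{\mathbb{S}^d}\le\pi$ the minimum on the right equals the first term, closing the argument; $R_\star$-invariance of the left side then finishes both equalities in the statement.

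The main obstacle I anticipate is the global monotonicity claim for the azimuthal equidistant projection — that $\|h_1(\phi(s_1))-h_1(\phi(s_2))\|\ge d_{\mathbb{S}^d}(s_1,s_2)$ for \emph{all} pairs, not just those on a meridian through $s_0$. Proposition~\ref{pro:distortion} in the main text only asserts $d_{\mathbb{S}^d}(s,s')\le\|h_1(\phi(s))-h_1(\phi(s'))\|+\epsilon(s,s')$ with a nonvanishing error term in general, so either that proposition already secretly contains the clean inequality (in which case I would cite it and be done) or the conjecture as stated requires the sharper statement $\epsilon\equiv 0$, which is exactly the classical geometric fact that azimuthal equidistant charts do not contract distances — provable but requiring the careful two-variable reduction sketched above, and this is likely why the authors left it as a conjecture rather than a theorem.
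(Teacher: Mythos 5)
First, a point of context: the paper does not prove this statement at all — it is explicitly left as a conjecture, supported numerically (Figure \ref{fig: ds dh}) and by the partial results Propositions \ref{pro: h1 lower_bound} and \ref{pro: h1 upper_bound}, the latter carrying the error term $\epsilon(s_1,s_2)$ precisely because the authors only control the upper bound through a small-triangle approximation of the spherical law of cosines. So you are not competing with an existing proof, and you correctly diagnose that everything hinges on the clean global inequality $d_{\mathbb{S}^d}(s_1,s_2)\le\|h_1(\phi(s_1))-h_1(\phi(s_2))\|$ (Proposition \ref{pro:distortion} does not "secretly" contain it). Your attainment step is essentially the paper's Proposition \ref{pro: h1 lower_bound}, except that you should rotate the geodesic midpoint to $s_0$, as the paper does: merely placing $R_\star s_1,R_\star s_2$ on a meridian can give $\|h_1(\phi(R_\star s_1))-h_1(\phi(R_\star s_2))\|=2\pi-d_{\mathbb{S}^d}(s_1,s_2)$ rather than $d_{\mathbb{S}^d}(s_1,s_2)$ (e.g. two points near $s_n$ on opposite sides of the meridian), and the observation "$d_{\mathbb{S}^d}\le\pi$" does not by itself select the first term of the minimum; you need $\|h_1(\phi(R_\star s_1))-h_1(\phi(R_\star s_2))\|\le\pi$, which the midpoint normalization provides.

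The genuine gap is in your argument for the key inequality. Writing $r_i=\angle(s_i,s_0)$ and $\theta$ for the azimuthal angle, the claim is $S(\theta):=\arccos(\cos r_1\cos r_2+\sin r_1\sin r_2\cos\theta)\le E(\theta):=\sqrt{r_1^2+r_2^2-2r_1r_2\cos\theta}$. This inequality is true — it is the hinge comparison between $\mathbb{S}^d$ and the Euclidean plane, or equivalently follows because the inverse of the azimuthal equidistant chart is $1$-Lipschitz (its differential has singular values $1$ and $\sin r/r\le1$), so pulling back the straight segment between the two images produces a spherical path from $s_1$ to $s_2$ of length at most $E$ — but your proposed route, "equality only at $\theta=0$ and monotonicity of the gap in $\theta$," is false. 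Indeed $E(\pi)-S(\pi)=0$ whenever $r_1+r_2\le\pi$ (this is exactly the coplanar-through-$s_0$ configuration of Proposition \ref{prop: h prop1}(2)), so $E-S$ rises from $0$ at $\theta=0$ and returns to $0$ at $\theta=\pi$; it cannot be monotone, and the reduction to the one-variable comparison $\arccos(\cos(r_1-r_2))\le|r_1-r_2|$ does not close the argument. A correct completion needs either the comparison/Lipschitz-pullback argument above or a genuinely two-variable analysis in the spirit of the paper's Lemma \ref{lem: same s_d+1} and Lemma \ref{lem: angle bound}. Finally, for the first displayed equality you must also verify $\|h_1(\phi(Rs_1))-h_1(\phi(Rs_2))\|\le2\pi-d_{\mathbb{S}^d}(s_1,s_2)$ for every $R$; this follows from $\|h_1(\phi(Rs_1))-h_1(\phi(Rs_2))\|\le r_1+r_2$ together with $d_{\mathbb{S}^d}(s_1,s_2)\le(\pi-r_1)+(\pi-r_2)$ (path through $s_n$), but your sketch does not address it.
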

Numerically, we have shown that statement holds (see Figure \ref{fig: ds dh}). We will leave the theoretical proof for a future study. 

Here we provide the intermediate results:  Proposition \ref{pro: h1 lower_bound} and Proposition \ref{pro: h1 upper_bound}. 

\begin{proposition}\label{pro: h1 lower_bound}
Let $s, s_1,s_2\in \mathbb{S}^d\setminus\{s_n\}$, and let $s_0=[0,0,\ldots,-1]^T\in \mathbb{S}^d$ denote the South Pole. Consider $\phi:\mathbb{S}^{d}\setminus\{s_n\}\to\mathbb{R}^d$ the stereographic projection as in \eqref{eq: stereographic proj without 2} with inverse \eqref{eq: inv stereo}, and also consider the function $h_1:\mathbb{R}^d\to\mathbb{R}^d$ defined by \eqref{eq: special h_1}. 
There exists $R^*\in \mathrm{SO}(d+1)$ such that
    \begin{align}
    d_{\mathbb{S}^d}(s_1,s_2)&=\min\{\|h_1(\phi(R^* s_1))-h_1(\phi(R^* s_2))\|,2\pi-\|h_1(\phi(R^* s_1))-h_1(\phi(R^* s_2))\| \}\nonumber\\
    &\ge\min_{R\in\mathrm{O}(d+1)} \min\{\|h_1(\phi(R s_1))-h_1(\phi(R s_2))\|,2\pi-\|h_1(\phi(R s_1))-h_1(\phi(R s_2))\| \}\nonumber
    \end{align}
\end{proposition}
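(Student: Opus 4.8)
## Proof proposal for Proposition \ref{pro: h1 lower_bound}

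The plan is to exhibit an explicit rotation $R^*$ that moves the pair $(s_1,s_2)$ into a standard position for which the identity $d_{\mathbb{S}^d}(s_1,s_2)=\min\{\|h_1(\phi(R^*s_1))-h_1(\phi(R^*s_2))\|,\ 2\pi-\|h_1(\phi(R^*s_1))-h_1(\phi(R^*s_2))\|\}$ holds, and then observe that the right-hand side of the claimed inequality, being a minimum over \emph{all} $R\in\mathrm{O}(d+1)$, is at most the value at this particular $R^*$. The standard position I have in mind is precisely the configuration of part (2) of Proposition \ref{prop: h prop1}: I want $R^*$ to be a rotation such that $R^*s_1$, $R^*s_2$, and the South Pole $s_0$ all lie on one common great circle. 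Indeed, any two points $s_1,s_2\in\mathbb{S}^d$ span (together with the origin) a plane, and there is a unique great circle through them (when they are not antipodal; the antipodal case can be handled separately or by a limiting argument, and in any case still admits a great circle through $s_0$). Since $\mathrm{SO}(d+1)$ acts transitively on great circles (in fact on flags), I can pick $R^*$ so that this great circle is mapped to a fixed meridian that contains $s_0$ — e.g., the meridian $\{[\cos\theta,0,\dots,0,\sin\theta]:\theta\}$ used in the proof of Proposition \ref{prop: h prop1}(2).

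Carrying this out, the first step is the \textbf{existence of the great circle}: given $s_1,s_2\in\mathbb{S}^d\setminus\{s_n\}$, let $P=\mathrm{span}\{s_1,s_2\}$ if $s_1,s_2$ are linearly independent (dimension $2$), and note $P\cap\mathbb{S}^d$ is a great circle through both points. The second step is to \textbf{construct $R^*$}: choose $R^*\in\mathrm{SO}(d+1)$ mapping $P$ onto the coordinate plane $\mathrm{span}\{e_1,e_{d+1}\}$; this can be arranged (and one can further insist the image of the great circle passes through $s_0=-e_{d+1}$, which it automatically does since $s_0\in\mathrm{span}\{e_1,e_{d+1}\}$). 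Then $R^*s_1, R^*s_2$ lie on the meridian $S=\{[\cos\theta,0,\dots,0,\sin\theta]\}$, which also contains $s_0$. The third step is to \textbf{invoke Proposition \ref{prop: h prop1}(2)} with $s_1\mapsto R^*s_1$, $s_2\mapsto R^*s_2$: since $R^*s_1,R^*s_2,s_0$ are on the same great circle $S$, we get
\begin{align*}
d_{\mathbb{S}^d}(R^*s_1,R^*s_2)=\min\{\|h_1(\phi(R^*s_1))-h_1(\phi(R^*s_2))\|,\ 2\pi-\|h_1(\phi(R^*s_1))-h_1(\phi(R^*s_2))\|\}.
\end{align*}
The fourth step is the \textbf{rotational invariance of the great circle distance}: $d_{\mathbb{S}^d}(R^*s_1,R^*s_2)=\arccos(\langle R^*s_1,R^*s_2\rangle)=\arccos(\langle s_1,s_2\rangle)=d_{\mathbb{S}^d}(s_1,s_2)$ because $R^*$ is orthogonal. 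Combining these gives the claimed equality, and the inequality $\ge\min_{R\in\mathrm{O}(d+1)}(\cdots)$ is then immediate since the minimum over the whole group is no larger than the value at $R^*$.

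The main obstacle, and the only place requiring real care, is the handling of \textbf{degenerate configurations}: when $s_1,s_2$ are antipodal (so they span a line, not a plane, and $d_{\mathbb{S}^d}(s_1,s_2)=\pi$), when $s_1=s_2$ (trivial, distance $0$), or when the rotation $R^*$ inadvertently sends one of the points to the excluded North Pole $s_n$ — recall $h_1\circ\phi$ is only defined on $\mathbb{S}^d\setminus\{s_n\}$. For the antipodal case one can take the great circle to be \emph{any} great circle through $s_1$, $s_2$, and $s_0$ (such a circle exists: $s_0$ together with an antipodal pair always lie on a common great circle as long as $s_0\ne\pm s_1$; if $s_0=\pm s_1$ the statement is checked directly). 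To avoid mapping a point to $s_n$, note that among the (generically many, when $d\ge 2$) rotations bringing the relevant great circle to the target meridian, one can always choose one whose action keeps $s_1,s_2$ away from $s_n$ — for instance, after fixing the meridian, a further rotation within $\mathrm{SO}(2)$ of that meridian, composed if necessary with the antipodal map $-\mathrm{id}\in\mathrm{O}(d+1)$ or a reflection, repositions the pair; since $s_1\ne s_n$ and $s_2\ne s_n$ one has enough freedom. I would dispatch these special cases in a short preliminary paragraph and then give the generic argument as above.
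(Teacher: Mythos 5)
Your argument is correct and follows essentially the same strategy as the paper's: rotate so that $s_1,s_2$ and the South Pole $s_0$ lie on a common great circle, invoke Proposition~\ref{prop: h prop1}(2) for that configuration, and finish by the rotational invariance of $d_{\mathbb{S}^d}$. The one genuine (if minor) difference is the choice of rotation. You map the plane $\mathrm{span}\{s_1,s_2\}$ onto a fixed coordinate plane containing $s_0$, which leaves the positions of $R^*s_1,R^*s_2$ within that meridian unconstrained, so you need a separate argument to rule out either image landing on the excluded North Pole. The paper instead takes $s_0'$ to be the geodesic \emph{midpoint} of $s_1,s_2$ and picks $R_0\in\mathrm{SO}(d+1)$ with $R_0 s_0'=s_0$. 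Since $d_{\mathbb{S}^d}(s_0',s_i)=\tfrac12 d_{\mathbb{S}^d}(s_1,s_2)\le \pi/2<\pi$, the images $R_0 s_1, R_0 s_2$ automatically stay away from $s_n$, so the degenerate case you flag never arises; as a further bonus, the midpoint choice makes the min in the display attain its first argument, yielding $d_{\mathbb{S}^d}(s_1,s_2)=\|h_1(\phi(R_0 s_1))-h_1(\phi(R_0 s_2))\|$ outright. Your antipodal and coincident-points special cases are handled correctly, but the midpoint trick would let you dispose of all of them uniformly.
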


\begin{proof}

Choose $s_1,s_2\in \mathbb{S}^{d}$ two different points. Let $\zeta$ denote the shortest path from $s_1$ to $s_2$. 

Then $\zeta$ can be parametrized as a function. In particular, there exists a mapping $\zeta:[0,1]\to \mathbb{S}^{d}$,  
$\zeta(0)=s_1,\zeta(1)=s_2$, $\zeta$ is differentiable, $\|\zeta'(t)\|$ is constant, and the range of $\zeta$ is exactly the shortest path. 
By definition of the great circle distance $d_{\mathbb{S}^d}$, it is exactly the length of $\zeta$.
That is, $$d_{\mathbb{S}^d}(s_1,s_2)=|\zeta|=\int_0^1\sqrt{(\zeta'(t))^2}dt.$$
Furthermore, $\zeta$ lies in a great circle, denoted as $S$. 

Let $s_0'=\zeta(t_0)$ denote the middle point between $s_1,s_2$ (for an appropriate $t_0\in (0,1)$),  

By the Gram-Schmidt algorithm, there exists $R_0\in \text{SO}(d+1)$ such that $R_0s_0'=s_0$ (for example,  first set $r_1=-s_0'$, then construct orthonormal vectors $r_2,\ldots, r_{d+1}$, and finally define the matrix $R_0$ having rows $r_1,r_2,\dots, r_{d+1}$). 

It holds that 
$$d_{\mathbb{S}^d}(s_1,s_2)=d_{\mathbb{S}^d}(R_0s_1,R_0s_2),$$
i.e., $d_{\mathbb{S}^d}$ is invariant under rotations and reflections. 

Besides, note that $R_0s_1,R_0s_2,R_0s_0'$ are in the same great circle. Thus, by using that $d_{\mathbb{S}^2}$ in invariant under orthogonal transformations and by using part (2) in Proposition \ref{prop: h prop1}, we have
\begin{align*}
d_{\mathbb{S}^d}(s_1,s_2)=d_{\mathbb{S}^d}(R_0s_1,R_0s_2)    &=\min\{\|h_1(\phi(R_0s_1))-h_1(\phi(R_0s_2))\|,2\pi-\|h_1(\phi(R_0s_1))-h_1(\phi(R_0s_2))\|\}\\
&=\|h_1(\phi(R_0s_1))-h_1(\phi(R_0s_2))\|
\end{align*}
where the last equality holds because $s_0=R_0s_0'$ is a point between $R_0s_1,R_0s_2$.

In particular, 
\begin{align}
\min_{R\in \mathrm{O}(d+1)}\min\{\|h_1(\phi(Rs_1))-h_1(\phi(Rs_2)\|,2\pi-\|h_1(\phi(Rs_1))-h_1(\phi(Rs_2)\|\} \leq
d_{\mathbb{S}^d}(s_1,s_2)\nonumber
\end{align}
\end{proof}

For the other direction, we first recall that $d_{\mathbb{S}^d}$ is invariant under orthogonal transformations:
$$d_{\mathbb{S}^d}(s_1,s_2)=d_{\mathbb{S}^d}(Rs_1,Rs_2) \qquad \forall R\in \mathrm{O}(d+1).$$
In particular, every $R'\in \mathrm{O}(d+1)$ is a minimizer of: 
 $d_{\mathbb{S}^d}(s_1, s_1)=d_{\mathbb{S}^d}(R' s_1, R' s_1)=\inf_{R\in \mathrm{O}(d+1)} d_{\mathbb{S}^d}(R s_1, R s_1).$
By using part (2) in Proposition \ref{prop: h prop1} we have that for all  $R^*$ such that $R^*s_1$, $R^*s_2$, and  $R^*s_0$ are in the same great circle it holds
\begin{equation}\label{eq: min on restricted set of rot}
  d_{\mathbb{S}^d}(s_1,s_2)=d_{\mathbb{S}^d}(R^*s_1,R^*s_2)=\min\{\|h_1(\phi(R^*s_1))-h_1(\phi(R^*s_2))\|,2\pi-\|h_!(\phi(R^*s_1))-h_1(\phi(R^*s_2))\|\}  
\end{equation}
However, we want to minimize over all the possible orthogonal matrices. At this point, the proof is more involved, and to prove this part we will need a series of auxiliary lemmas:
\begin{lemma}\label{lem: same s_d+1}
Given $s_1,s_2\in \mathbb{S}^{d}$, with $(s_1)_{d+1}=(s_2)_{d+1}$ (i.e., $s_1$ and $s_2$ are in the same \text{latitude}), then 
$$d_{\mathbb{S}^d}(s_1,s_2)\leq\min(2\pi-\|h(\phi(s_1))-h_1(\phi(s_2))\|,\|h_1(\phi(s_1))-h_1(\phi(s_2))\|).$$     
\end{lemma}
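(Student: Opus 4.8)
\textbf{Proof proposal for Lemma \ref{lem: same s_d+1}.}

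The plan is to reduce to the planar picture, since both points lie on the same latitude circle. Write $c := (s_1)_{d+1} = (s_2)_{d+1} \in [-1,1)$; then $s_i = [\sqrt{1-c^2}\,\omega_i,\, c]$ for some unit vectors $\omega_1,\omega_2 \in \mathbb{S}^{d-1}$. By part (1) of Proposition \ref{prop: h prop1}, $h_1(\phi(s_i)) = \arccos(-c)\,\omega_i =: \lambda\,\omega_i$, where $\lambda = \arccos(-c) = \angle(s_i,s_0) \in [0,\pi]$ is the same for both points. Hence $\|h_1(\phi(s_1)) - h_1(\phi(s_2))\| = \lambda\,\|\omega_1 - \omega_2\| = 2\lambda\sin(\alpha/2)$, where $\alpha := \angle(\omega_1,\omega_2) \in [0,\pi]$ is the angle between $\omega_1$ and $\omega_2$ inside $\mathbb{S}^{d-1}$. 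On the other side, $\langle s_1,s_2\rangle = (1-c^2)\cos\alpha + c^2$, so $d_{\mathbb{S}^d}(s_1,s_2) = \arccos\!\big((1-c^2)\cos\alpha + c^2\big) =: D$.

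First I would reduce the problem to dimension two. All quantities — $D$, $\lambda$, and $\|h_1(\phi(s_1)) - h_1(\phi(s_2))\|$ — depend only on the scalars $c$ and $\alpha$, not on the ambient dimension $d$ nor on the particular $\omega_i$. Indeed, one can rotate within the hyperplane $\{x_{d+1} = c\}$ (a rotation fixing the last coordinate, hence an element of $\mathrm{SO}(d+1)$ leaving $s_0,s_n$ fixed and commuting with $\phi$ and $h_1$ up to the same rotation) to send $\omega_1,\omega_2$ into the $2$-plane spanned by $e_1,e_2$; this changes neither $D$ nor the embedded distance. So it suffices to prove the inequality for $d = 2$, with $s_1 = [\sqrt{1-c^2}\cos\beta_1, \sqrt{1-c^2}\sin\beta_1, c]$ and similarly for $s_2$, where $\beta_2 - \beta_1 = \alpha$.

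Next I would establish the two one-dimensional inequalities. Setting $\lambda = \arccos(-c) \in [0,\pi]$ and $t = \alpha/2 \in [0,\pi/2]$, I must show
\begin{align}
\arccos\!\big((1-c^2)\cos(2t) + c^2\big) &\le 2\lambda\sin t, \label{eq:lem_ineq_A}\\
\arccos\!\big((1-c^2)\cos(2t) + c^2\big) &\le 2\pi - 2\lambda\sin t. \label{eq:lem_ineq_B}
\end{align}
For \eqref{eq:lem_ineq_B}, note the left side is at most $\pi$ while $2\pi - 2\lambda\sin t \ge 2\pi - 2\pi \cdot 1 \cdot \sin t$; a short estimate using $\lambda \le \pi$ and $\sin t \le t$ together with the fact that the left side never exceeds $\pi$ handles it, the extreme case being $c$ near $-1$. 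The substantive inequality is \eqref{eq:lem_ineq_A}. Using the half-angle identity $(1-c^2)\cos(2t) + c^2 = 1 - 2(1-c^2)\sin^2 t$, the left side becomes $\arccos(1 - 2(1-c^2)\sin^2 t) = 2\arcsin\!\big(\sqrt{1-c^2}\,\sin t\big)$ (valid since the argument of $\arcsin$ lies in $[0,1]$). So \eqref{eq:lem_ineq_A} is equivalent to
\begin{equation}
\arcsin\!\big(\sqrt{1-c^2}\,\sin t\big) \le \arccos(-c)\,\sin t \quad\text{for all } t \in [0,\pi/2],\ c \in [-1,1). \label{eq:lem_reduced}
\end{equation}
Writing $u = \sin t \in [0,1]$ and $a = \sqrt{1-c^2} = \sin(\arccos(-c))$, and letting $\gamma = \arccos(-c) \in (0,\pi]$ so $a = \sin\gamma$, I would prove $\arcsin(u\sin\gamma) \le \gamma u$ for $u \in [0,1]$. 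At $u=0$ both sides vanish; differentiating in $u$, it suffices that $\frac{\sin\gamma}{\sqrt{1 - u^2\sin^2\gamma}} \le \gamma$ for all such $u$, and the left side is maximized at $u = 1$, where it equals $\frac{\sin\gamma}{|\cos\gamma|}$ when $\gamma \ne \pi/2$; so one reduces to checking $\tan\gamma \le \gamma$ appropriately on $(0,\pi/2)$ and a direct bound on $[\pi/2,\pi]$ — actually the cleanest route is the concavity of $\arcsin$ on $[0,1]$ combined with $\arcsin(\sin\gamma) \le \gamma$, which gives $\arcsin(u\sin\gamma) \le u\arcsin(\sin\gamma) + (1-u)\arcsin(0) \le u\gamma$ by Jensen, since $\arcsin(\sin\gamma) \le \gamma$ holds for all $\gamma \in (0,\pi]$ (equality on $(0,\pi/2]$, strict inequality on $(\pi/2,\pi]$). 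That finishes \eqref{eq:lem_reduced} and hence the lemma.

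The main obstacle I expect is keeping track of the correct branch of $\arcsin$ and $\arccos$ — in particular verifying that $\sqrt{1-c^2}\,\sin t \in [0,1]$ (immediate) and that $\arccos(1 - 2(1-c^2)\sin^2 t) = 2\arcsin(\sqrt{1-c^2}\sin t)$ rather than $2\pi$ minus that — and then the clean reduction \eqref{eq:lem_reduced}, for which the concavity-of-$\arcsin$ argument is the shortest path but one must be careful that $\arcsin(\sin\gamma)$ can be strictly less than $\gamma$ when $\gamma > \pi/2$ (this only helps the inequality). The rotation-reduction to $d=2$ is routine but should be stated carefully so that $\phi$ and $h_1$ genuinely intertwine with the latitude-preserving rotation.
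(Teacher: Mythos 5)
Your proof of the substantive half of the lemma, $d_{\mathbb{S}^d}(s_1,s_2)\le\|h_1(\phi(s_1))-h_1(\phi(s_2))\|$, is correct and follows a genuinely different, much shorter route than the paper. The paper sets $L(\alpha,\beta)=2(\alpha+\tfrac{\pi}{2})\sin(\beta/2)-\arccos(\cos^2\alpha\cos\beta+\sin^2\alpha)$ and proves $L\ge 0$ by differentiating in $\beta$, locating critical points through a quadratic equation in $\cos\beta$, and running a case-by-case monotonicity analysis; your half-angle identity $\arccos\big(1-2(1-c^2)\sin^2 t\big)=2\arcsin\big(\sqrt{1-c^2}\,\sin t\big)$ collapses all of that to the one-variable inequality $\arcsin(u\sin\gamma)\le u\gamma$, which indeed follows from the chord inequality for $\arcsin$ on $[0,1]$ together with $\arcsin(\sin\gamma)\le\gamma$. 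One slip in wording: $\arcsin$ is \emph{convex} on $[0,1]$, not concave; the estimate you actually wrote, $\arcsin(u\sin\gamma)\le u\arcsin(\sin\gamma)+(1-u)\arcsin(0)$, is exactly the convexity inequality (with $\arcsin(0)=0$), so the step stands once relabelled. The rotation-reduction to $d=2$ is harmless but unnecessary, since every quantity involved depends only on $c$ and $\alpha$.

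The $2\pi$-side is where your sketch has a genuine gap. Bounding $2\pi-2\lambda\sin t$ below by $2\pi(1-\sin t)$ and invoking $d_{\mathbb{S}^d}\le\pi$ only settles the case $\sin t\le\tfrac12$; for $\sin t>\tfrac12$ the claim your displayed bound would require, $d_{\mathbb{S}^d}(s_1,s_2)\le 2\pi(1-\sin t)$, is false (two antipodal points on the equator give left side $\pi$ and right side $0$), and no slack-based "short estimate" can work because the inequality you need is an \emph{equality} whenever $\sin t=1$ and $\lambda\ge\pi/2$ (on the equator both $d_{\mathbb{S}^d}$ and $2\pi-\|h_1(\phi(s_1))-h_1(\phi(s_2))\|$ equal $\pi$). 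Relatedly, the tight regime is near the equator and toward the north pole with antipodal horizontal directions, not ``$c$ near $-1$'': near the south pole $\lambda=\arccos(-c)\approx 0$ and the bound is very slack. Fortunately your own convexity trick closes it. With $u=\sin t$ you need $\arcsin(u\sin\lambda)+\lambda u\le\pi$: if $\lambda\le\pi/2$ both terms are at most $\pi/2$; if $\lambda>\pi/2$, write $\sin\lambda=\sin(\pi-\lambda)$ and apply the same chord inequality to get $\arcsin(u\sin\lambda)\le u(\pi-\lambda)$, whence $\arcsin(u\sin\lambda)+\lambda u\le\pi u\le\pi$. The paper itself only says the $2\pi$-half follows ``through a similar process,'' so writing this case out properly is worthwhile, but as sketched your argument for it would fail.
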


\begin{proof} 
Suppose $(s_1)_{d+1}=(s_2)_{d+1}=\sin\alpha$ for some $\alpha\in [-\frac{\pi}{2},\frac{\pi}{2})$. Let $\beta=\angle(s_1[1:d],s_2[1:d])\in [0,{\pi}]$. 
We have 
\begin{align}
d_{\mathbb{S}^d}(s_1,s_2)&=\arccos(\langle s_1[1:d],s_2[1:d]\rangle+(s_1)_{d+1}(s_2)_{d+1})\nonumber\\
&=\arccos\left(\cos^2(\alpha)\cos(\beta)+\sin^2(\alpha)\right)\nonumber\\
\|h_1(\phi(s_1))-h_1(\phi(s_2))\|&=\sqrt{\|h_1(\phi(s_1)\|^2+\|h_1(\phi(s_2)\|^2-2\|h_1(\phi(s_1))\|\|h_1(\phi(s_2))\|\cos(\angle(s_1[1:d],s_2[1:d]))}\nonumber\\
&=\left(\alpha+\frac{\pi}{2}\right)\sqrt{2-2\cos(\beta)}\nonumber\\
&=\left(\alpha+\frac{\pi}{2}\right)2\sin\left(\frac{\beta}{2}\right)\nonumber
\end{align}
Let \begin{align}
 L(\alpha,\beta)&=\|h_1(\phi(s_1))-h_1(\phi(s_2))\|- d_{\mathbb{S}^d}(s_1,s_2)\nonumber\\
 &=2(\alpha+\frac{\pi}{2})\sin\left(\frac{\beta}{2}\right)-\arccos\left(\cos^2(\alpha)\cos(\beta)+\sin^2(\alpha)\right)\label{pf: L beta}
\end{align}
In what follows we will show that $$L(\alpha,\beta)\ge 0 \qquad \text{if } \alpha\in[-\frac{\pi}{2},\frac{\pi}{2}), \, \beta\in[0,\pi].$$ 

First, if $\alpha=-\frac{\pi}{2}$ or $\beta=0$, we have $L=0$.

It remains to consider the case $\alpha\in(-\frac{\pi}{2},\frac{\pi}{2})$, $\beta\in(0, \pi]$. 

The procedure will be the following: We fix $\alpha\in(-\frac{\pi}{2},\frac{\pi}{2})$ and consider the function $L(\alpha,\cdot)$ on the variable $\beta$. 

We have: 
\begin{align}
&\frac{d}{d\beta}L=\left(\alpha+\frac{\pi}{2}\right)\cos\left(\frac{\beta}{2}\right)+\frac{-\cos^2(\alpha)\sin(\beta)}{\sqrt{1-(\cos^2(\alpha)\cos(\beta)+\sin^2(\alpha))^2}}\label{pf: dL beta}
\end{align}
We set $\frac{d}{d\beta}L=0$, and obtain
\begin{align}\label{eq: deriv = 0}
\left(\alpha+\frac{\pi}{2}\right)^2\cos^2\left(\frac{\beta}{2}\right)=\frac{\cos^4(\alpha)\sin^2(\beta)}{1-(\cos^2(\alpha)\cos(\beta)+\sin^2(\alpha))^2}
\end{align}
Let $A=\cos (\beta)=2\cos^2\left(\frac{\beta}{2}\right)-1=1-2\sin^2(\frac{\beta}{2})$, $B=\cos^2(\alpha)=\sin^2 (\alpha')=1-\sin^2(\alpha)$, $C=(\alpha+\frac{\pi}{2})^2=(\alpha')^2$, where $\alpha'=\alpha+\frac{\pi}{2}\in(0,\pi)$.
The above equation \eqref{eq: deriv = 0} becomes: 
\begin{align*}
    \frac{1}{2}C(A+1)&=\frac{B^2(1-A^2)}{1-(AB+1-B)^2}\\
    &=\frac{B^2(1-A)(1+A)}{1-(A^2B^2 +2AB-2AB^2+1-2B+B^2)}\\
        &=\frac{B^2(1-A)(1+A)}{-A^2B^2 -2AB+2AB^2+2B-B^2}\\
    &=\frac{B(1-A)(1+A)}{-A^2B -2A+2AB+2-B}    
\end{align*}
thus, $C(-A^2B -2A+2AB+2-B)=2B(1-A)$, or equivalently, 
\begin{align}\label{eq: quadratic in A}
    (-CB)A^2+2(BC-C+B)A+ (2C-2B-BC)=0
\end{align}
Denote by $a=-CB,
b=2(BC-C+B),c=(2C-2B-BC)$ the coefficients of the above quadratic expression \eqref{eq: quadratic in A} as function of $A$: We have 
$$\Delta^2=b^2-4ac=4(B-C)^2\ge 0$$
and thus, we obtained real roots. 
Consider  the following cases: 
\begin{itemize}
    \item Case 1: $a=0$. Thus, $B=0$ or $C=0$, and so $\alpha=-\pi/2$, which was already analyzed.
    \item Case 2: $a\not=0$. Since $B,C\geq 0$, we have that $a<0$. 
\end{itemize}

Besides, the two roots $A_1,A_2$ of \eqref{eq: quadratic in A} are given by
\begin{align}
A_1&=\frac{BC+2(B-C)}{BC}  \qquad \text{ and } \qquad
A_2=1 \nonumber    
\end{align}
Since $A=\cos (\beta)\in(-1,1)$ as $\beta\in(0,\pi)$, once we fix $\alpha$, there exists at most one $\beta^*$ such that $\frac{d}{d\beta}(\alpha,\beta^*)L=0$. In fact, such $\beta^*$ exists if and only if
$-1<A_1<1$. 

In addition, we have that
\begin{align}
\frac{d}{d\beta}L(\alpha,0)&=\lim_{\beta\searrow 0} \frac{d}{d\beta}L=(\alpha+\frac{\pi}{2})>0 \nonumber\\
L(\alpha,0)&=0\nonumber \\
L(\alpha,\pi)&=
\begin{cases} 0 &\text{if } \alpha \in(-\frac{\pi}{2},0]\\
4\alpha &\text{if }\alpha\in[0,\frac{\pi}{2})
\end{cases}\ge 0 \nonumber 
\end{align}
We have the following: 

If $\cos(\beta^*)=A_1\in(-1,1)$, for some $\beta^*\in(0,\pi)$, we have $\frac{d}{d\beta}L(\alpha,\beta)\ge 0,\forall \beta\in[0, \beta^*]$, thus $L(\alpha,\beta)$ is an increasing function on $[0,\beta^*]$. Since $L(\alpha,0)>0$, we have $L(\alpha,\beta)\ge 0$ for all $\beta\in[0,\beta^*]$. 

On $(\beta^*,\pi]$, $\frac{dL}{d\beta}\neq 0$. By the Intermediate Value Theorem, we have $\frac{dL}{d\beta}<0$. 
Thus, $L$ is a decreasing function here. Since $L(\alpha,{\pi})\ge 0$, we have $L(\alpha,\beta)\ge 0$ on $(\beta^*,\pi]$. 

If $A_1\notin (-1,1)$, there is no $\beta^*\in(0,\pi)$ such that $\frac{d}{d\beta}L(\alpha,\beta)=0$. Combining this with the fact  $\frac{d}{d\beta}L(\alpha,0)>0$, we have $\frac{d}{d\beta}L(\alpha,\beta)> 0$ on $(0,{\pi})$. Thus, $L$ is an increasing function here, and thus $L(\alpha,\beta)\ge 0$.

Therefore, $L\geq 0$.

Through a similar process, one can show that
\begin{align}
2\pi-\|h_1(\phi(s_1))-h_1(\phi(s_2))\|-d_{\mathbb{S}^d}(s_1,s_2)\geq 0\nonumber    
\end{align}
and thus we complete the proof. 
\end{proof}

\begin{figure}[H]
    \centering
\includegraphics[width=0.85\linewidth]{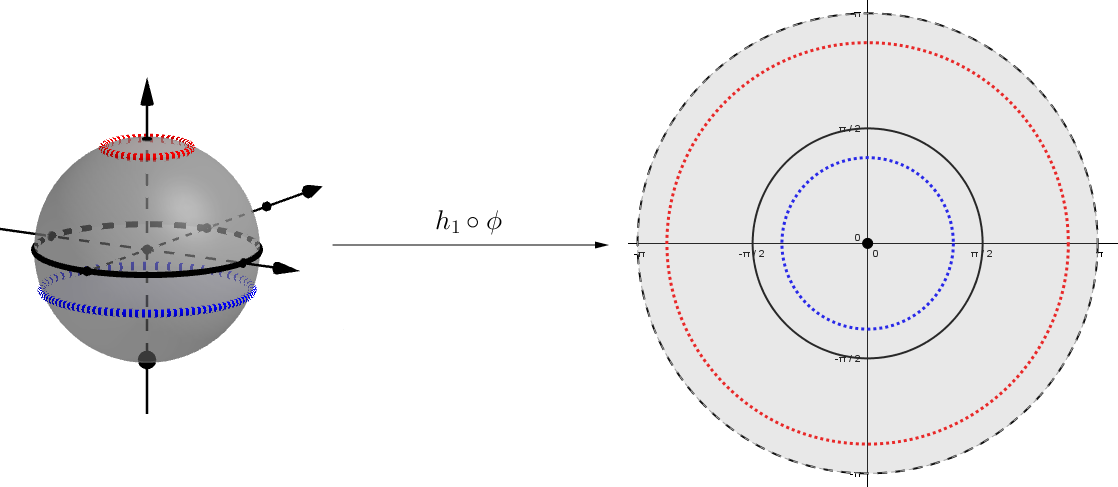}
    \caption{Depiction of the map $h_1\circ\phi:\mathbb{S}^2\setminus\{s_n\}\to\mathbb{R}^2$. The domain is depicted on the left.  Its image, on the right, is inside the open disk of radius $\pi$ centered at the origin. The Equator (black circle on the left) goes to the circle of radius $\pi/2$ (black circle on the right). The South Pole (black big dot on the left) goes to the origin of coordinates of the plane on the right (black dot). Dotted blue ``Parallel'' on the left at negative height (in the South Hemisphere), goes to the small blue dotted circle on the right. Dotted red ``Parallel'' on the left at positive height (in the North Hemisphere), goes to the big red dotted circle on the right. We recall that ``parallels'' are circles on the sphere that are parallel to the Equator.}
    \label{fig: h_phi}
\end{figure}

\begin{lemma}\label{lem: circles to circles}
    In 3D, the function $h_1\circ \phi:\mathbb{S}^2\setminus\{s_n\}\to\mathbb{R}^2$, where $\phi$ defined by \eqref{eq: stereographic proj without 2} and $h_1$ by  \eqref{eq: h_1} with $d=2$ has range contained in the circle of radius $\pi$ centered at the origin. Moreover,  $h_1\circ\phi$ maps circles parallel to the Equator to circles in the plane centered at the origin. 
    See Figure \ref{fig: h_phi} for a visualization.
    
    This can be generalized to the dimensions $d>2$, where we consider the function $h_1\circ \phi:\mathbb{S}^d\setminus\{s_n\}\to\mathbb{R}^d$, and in the preceding statement we have to change the word ``circle'' by ``sphere''. The Equator in $\mathbb{S}^{d}$ is the sub-sphere intersecting the hyperplane in $\mathbb{R}^{d+1}$ with equation $x_{d+1}=0$.
\end{lemma}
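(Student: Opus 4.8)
The plan is to reduce every assertion to the closed form for $h_1\circ\phi$ already obtained in Proposition \ref{prop: h prop1}(1): for every $s\in\mathbb{S}^d\setminus\{s_n\}$ one has $h_1(\phi(s))=\angle(s,s_0)\,\tfrac{s[1:d]}{\|s[1:d]\|}$, with the convention that the right-hand side is the zero vector when $s=s_0$ (there $s[1:d]=0$ while $\angle(s_0,s_0)=0$). Every claim then follows by inspecting the two factors separately: the nonnegative scalar $\angle(s,s_0)$ controls the norm, and the unit vector $\tfrac{s[1:d]}{\|s[1:d]\|}$ controls the direction.

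First I would establish the range bound. Since $\tfrac{s[1:d]}{\|s[1:d]\|}$ is a unit vector whenever it is defined, $\|h_1(\phi(s))\|=\angle(s,s_0)=\arccos(\langle s,s_0\rangle)=\arccos(-s_{d+1})$. For $s\neq s_n$ we have $s_{d+1}<1$, hence $-s_{d+1}>-1$ and therefore $\arccos(-s_{d+1})<\pi$; the value $s=s_0$ gives $h_1(\phi(s_0))=0$. Thus $\|h_1(\phi(s))\|<\pi$ for all $s$ in the domain, i.e. the image lies inside the open ball of radius $\pi$ about the origin of $\mathbb{R}^d$ — in particular, for $d=2$, inside the open disk of radius $\pi$, which is what the statement (and Figure \ref{fig: h_phi}) asserts.

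Then I would treat the ``parallels''. Fix a height $c\in[-1,1)$ and set $P_c=\{s\in\mathbb{S}^d:\ s_{d+1}=c\}$; geometrically $P_c$ is the $(d-1)$-sphere of Euclidean radius $\sqrt{1-c^2}$ lying in the hyperplane $x_{d+1}=c$ (and $P_{-1}=\{s_0\}$). On $P_c$ the angle $\angle(s,s_0)=\arccos(-c)=:r_c$ is \emph{constant}, so $h_1(\phi(s))=r_c\,\tfrac{s[1:d]}{\|s[1:d]\|}$. Writing $s=(w,c)$ with $w\in\mathbb{R}^d$ and $\|w\|=\sqrt{1-c^2}$, the normalization $w\mapsto w/\|w\|$ sends $P_c$ \emph{onto} the whole unit sphere $\mathbb{S}^{d-1}\subset\mathbb{R}^d$ as soon as $c\in(-1,1)$. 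Hence $h_1(\phi(P_c))=\{r_c u:\ u\in\mathbb{S}^{d-1}\}$ is exactly the origin-centered sphere of radius $\arccos(-c)$ in $\mathbb{R}^d$: a circle if $d=2$, a $(d-1)$-sphere if $d>2$. The degenerate parallel $P_{-1}=\{s_0\}$ maps to $\{0\}$, i.e. the radius-$0$ ``sphere'', and the Equator $P_0$ maps onto the circle/sphere of radius $\arccos 0=\pi/2$, matching Figure \ref{fig: h_phi}.

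I do not expect a genuine obstacle here: the lemma is an essentially immediate geometric consequence of Proposition \ref{prop: h prop1}(1). The only points that need a word of care are the degenerate behaviour at the poles (where $s[1:d]=0$), handled by the zero-vector convention, and the fact that $w\mapsto w/\|w\|$ is surjective onto $\mathbb{S}^{d-1}$ only for $c$ strictly between $-1$ and $1$ (the value $c=1$ being excluded from the domain in any case). Both are disposed of by the parenthetical remarks above, so the argument is short and self-contained once Proposition \ref{prop: h prop1}(1) is invoked.
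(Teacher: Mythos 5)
Your proposal is correct and follows essentially the same route as the paper: both proofs reduce the lemma to the closed form $h_1(\phi(s))=\angle(s,s_0)\,\frac{s[1:d]}{\|s[1:d]\|}$ from Proposition \ref{prop: h prop1}(1), observe that the angular factor is constant on each parallel (so each parallel maps onto an origin-centered circle/sphere of radius $\arccos(-s_{d+1})$), and obtain the radius-$\pi$ bound from $s_{d+1}<1$ on the domain. Your explicit handling of the degenerate cases ($s_0$ and the surjectivity of $w\mapsto w/\|w\|$ for heights strictly between $-1$ and $1$) is a minor tidiness improvement over the paper's proof but not a different argument.
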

\begin{proof}
This is a consequence of part 1 in Proposition \ref{prop: h prop1}, that is,
$h_1(\phi(s))=\angle(s,s_0)\frac{s[1:d]}{\|s[1:d]\|}
$ for every $s\in\mathbb{S}^d\setminus\{s_n\}$, and the fact that the stereographic projection $\phi$ with formula given by \eqref{eq: stereographic proj without 2}
maps circles (resp., sub-spheres of dimension $d-1$ in $\mathbb{R}^{d+1}$) parallel to the Equator to circles (resp., spheres of dimension $d-1$) in $\mathbb{R}^2$ (resp., $\mathbb{R}^d$). See Figure \ref{fig: stereo3}.

Given a circle $S_\alpha$ (sub-sphere) which is the intersection between the sphere $\mathbb{S}^d$ and the (hyper)plane $x_{d+1}=\sin(\alpha)$ for some 
$\alpha\in[-\pi/2,\pi/2)$
(and so, $-1\leq \sin(\alpha)< 1$), i.e.,
\begin{equation}\label{eq: S_alpha}
  S_\alpha=\mathbb{S}^d\cap\left\{x\in\mathbb{R}^{d+1}:\ x_{d+1}=\sin(\alpha)\right\},  
\end{equation}
we have that
$$\angle(s,s_0)=\angle(s',s_0)=\alpha+\frac{\pi}{2} \qquad \forall s,s'\in S_\alpha.$$
Thus, if for example we consider a representative $s=[0,\dots,0,\cos(\alpha),\sin(\alpha)]$ in $S_\alpha$,
 we have that
$h_1(\phi(s))=\angle(s,s_0)=\alpha+\frac{\pi}{2}$.
In general, $$h_1(\phi(s'))=(\alpha+\frac{\pi}{2})\frac{s'[1:d]}{\|s'[1:d]\|} \qquad \forall s'\in S_\alpha.$$
Since the points of the form $\frac{s'[1:d]}{\|s'[1:d]\|}$ belong to the unit ``circle'' in $\mathbb{R}^d$,  the function $h_1\circ \phi$ maps the ``circle'' $S_\alpha$ (which is parallel to the Equator at height $\sin(\alpha)$) to the ``circle'' centered at the origin with radius $\alpha+\frac{\pi}{2}$:
$$h_1(\phi(S_{\alpha}))=\left\{x\in\mathbb{R}^d:\, \|x\|=\alpha+\frac{\pi}{2}\right\}.$$
In particular, we have the following:
 \begin{itemize}
\item The Equator corresponds to $\alpha=0$ and so 
$$h_1(\phi(\text{Equator}))=\left\{x\in \mathbb{R}^d: \, \sqrt{x_1^2+\dots +x_d^2}=\frac{\pi}{2}\right\}.$$
\item The South Pole corresponds to $\alpha=-\pi/2$, i.e., $s_0=[0,\dots,0,-1]=[0,\dots,\cos(-\pi/2),\sin(-\pi/2)]$, and so $h_1(\phi(s_0))=[0,\dots,0]\in\mathbb{R}^{d}$. 
\item For small numbers $\epsilon>0$ we have that the ``circle'' $S^\epsilon$ parallel to the Equator at height $\pi/2-\epsilon$ defined by  $$S^\epsilon=\mathbb{S}^d\cap\left\{x\in\mathbb{R}^{d+1}: \, x_{d+1}=\frac{\pi}{2}-\epsilon\right\}$$
is sent to 
$$h_1(\phi(S^\epsilon))=\left\{ x\in\mathbb{R}^d: \, \sqrt{x_1^2+\dots +x_d^2}=\pi-\epsilon\right\}.$$
Thus, as $\epsilon\to 0$ we obtain that the range of $h_1\circ \phi$ is contained on the open sphere/circle in $\mathbb{R}^{d}$ of radius $\pi$ centered at the origin.

\item ``Circles'' at positive height, i.e., $S_\alpha$ as in \eqref{eq: S_alpha} with $0\leq\alpha<\pi/2$, are sent through $h_1\circ\phi$ to spheres/circles in $\mathbb{R}^d$ with radius grater than $\pi/2$ and smaller that $\pi$ centered at the origin. ``Circles'' at at negative height, i.e., $S_\alpha$ as in \eqref{eq: S_alpha} with $-\pi/2\leq\alpha<0$, are sent through $h_1\circ\phi$ to spheres/circles in $\mathbb{R}^d$ with radius between $0$ and $\pi/2$ centered at the origin. 
 \end{itemize}
\end{proof}

\begin{corollary}\label{coro: pi 2pi}
Given two arbitrary points $s_1,s_2\in\mathbb{S}^2\setminus\{s_n\}$, we have the following bounds
\begin{equation*}
    d_{\mathbb{S}^d}(s_1,s_2)\leq \pi \qquad \text{ and } \qquad\|h_1(\phi(s_1))-h_1(\phi(s_2))\|\leq 2\pi.
\end{equation*}
\end{corollary}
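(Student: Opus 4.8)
The proof of Corollary \ref{coro: pi 2pi} should follow almost immediately from results already established. The plan is to observe that both bounds are essentially restatements of facts proved in Proposition \ref{prop: h prop1} and in the preceding discussion of the great circle distance.

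For the first bound, recall that the great circle distance on $\mathbb{S}^d$ is defined by $d_{\mathbb{S}^d}(s_1,s_2)=\arccos(\langle s_1,s_2\rangle)$, and since $\langle s_1,s_2\rangle\in[-1,1]$ for unit vectors, the range of $\arccos$ gives $d_{\mathbb{S}^d}(s_1,s_2)\in[0,\pi]$. This requires no appeal to the stereographic projection at all; it is a property of the spherical metric itself. So the first inequality is immediate.

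For the second bound, I would invoke part (1) of Proposition \ref{prop: h prop1}, which states that $h_1(\phi(s))=\angle(s,s_0)\frac{s[1:d]}{\|s[1:d]\|}$ and hence $\|h_1(\phi(s))\|=\angle(s,s_0)\in[0,\pi]$ since $\frac{s[1:d]}{\|s[1:d]\|}$ is a unit vector and $\angle(s,s_0)=\arccos(\langle s,s_0\rangle)\in[0,\pi]$. Then by the triangle inequality in $\mathbb{R}^d$,
\begin{equation*}
\|h_1(\phi(s_1))-h_1(\phi(s_2))\|\leq \|h_1(\phi(s_1))\|+\|h_1(\phi(s_2))\|\leq \pi+\pi=2\pi.
\end{equation*}
This is in fact exactly the bound already derived within the proof of part (1) of Proposition \ref{prop: h prop1}, so the corollary is a direct quotation of that intermediate step, now stated separately for convenient reference. (Alternatively, one can use Lemma \ref{lem: circles to circles}, which shows that the range of $h_1\circ\phi$ is contained in the open ball of radius $\pi$ centered at the origin, so the distance between any two points in the image is strictly less than $2\pi$; but the triangle-inequality argument already gives the non-strict bound stated.)

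There is no real obstacle here — the only thing to be careful about is making sure the statement is restricted to $s_1,s_2\in\mathbb{S}^2\setminus\{s_n\}$ (or more generally $\mathbb{S}^d\setminus\{s_n\}$) so that $\phi$ and hence $h_1\circ\phi$ are well defined, and noting that the bounds are of course independent of dimension in the form stated. I would write the proof as two short sentences: one citing the definition of $d_{\mathbb{S}^d}$ via $\arccos$ for the first bound, and one citing part (1) of Proposition \ref{prop: h prop1} together with the triangle inequality for the second.
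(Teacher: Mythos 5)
Your proposal is correct and follows essentially the same route as the paper: the first bound comes from $d_{\mathbb{S}^d}(s_1,s_2)=\arccos(\langle s_1,s_2\rangle)\in[0,\pi]$, and the second rests on the fact that the image of $h_1\circ\phi$ lies in the ball of radius $\pi$ centered at the origin (you cite part (1) of Proposition \ref{prop: h prop1} plus the triangle inequality; the paper cites Lemma \ref{lem: circles to circles}, which encodes the same fact). No gap.
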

\begin{proof}
    In general $d_{\mathbb{S}^d}(s_1,s_2)$ is the minimum arc length between $s_1$ and $s_2$ taken among all possible paths on the sphere connecting those points, and so it is at most $\pi$. We recall that $d_{\mathbb{S}^d}(s_1,s_2)=\arccos(\langle s_1,s_2\rangle)\in [0,\pi]$.

    The other inequality is a consequence of Lemma \ref{lem: circles to circles}: The higher Ecudidean distance between points in a circle (sphere) with diameter $2\pi$ (radius $\pi$) is, in fact, $2\pi$.
\end{proof}

\bigskip

The above results (identity \eqref{eq: min on restricted set of rot}, Lemma \ref{lem: same s_d+1} and Corollary \ref{coro: pi 2pi}) suggest that given two points $s_1,s_2$ on the sphere we would have that $d_{\mathbb{S}^d}(s_1,s_2)\leq\|h_1(\phi(s_1))-h_1(\phi(s_2))\|$. This is shown experimentally in Figure \ref{fig: ds dh}.  We will need some more lemmas to then sketch the proof of this fact theoretically. 
\begin{figure}[H]
    \centering
    \includegraphics[width=0.6\linewidth]{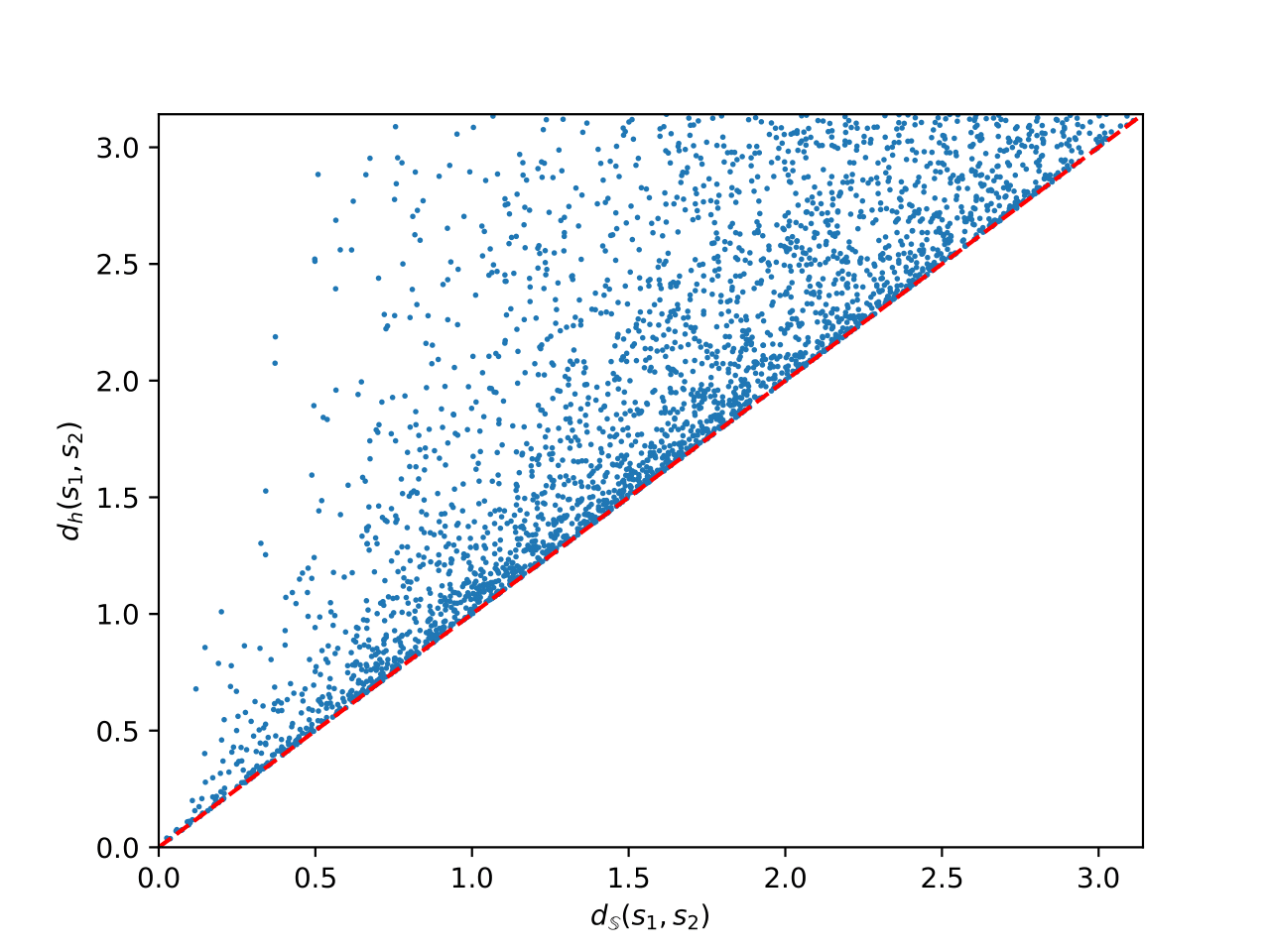}
    \caption{Experimental results for the relation $d_{\mathbb{S}^2}(s_1,s_2)\leq d_h(s_1,s_2):=\min\{\|h_1(\phi(s_1))-h_1(\phi(s_2))\|, 2\pi-\|h_1(\phi(s_1))-h_1(\phi(s_2))\|\}$. There were taken 
    $1000$ pairs $(s_1,s_2)$ uniformly at random on the unit sphere in $\mathbb{R}^3$.
    Additionally, we calculate the correlation between $d_{S^2}(s_1,s_2)$ and $d_h(s_1,s_2)$, resulting in an average correlation coefficient of $0.83$.
    }
    \label{fig: ds dh}
\end{figure}

\begin{lemma}\label{lem: Ch}
Consider $s_1,s_2\in \mathbb{S}^d$, with $(s_1)_{d+1}\leq (s_2)_{d+1}$, let $s_2'$ be the unique point such that 
\begin{equation}\label{eq: s_2'}
  (s_2')_{d+1}=(s_1)_{d+1} \qquad \text{ and } \qquad\angle(s_2'[1:d],s_2[1:d])=0.  
\end{equation}
Let $\beta= \angle(s_1[1:d], s_2[1:d])\in[0,\pi]$, and 
let $C_H$ denote the angle between line segments $[h_1(\phi(s_1)),h_1(\phi(s_2'))],[h_1(\phi(s_2')),h_1(\phi(s_2))]$.
Then, 
\begin{equation}\label{eq: Ch}
  C_H=\frac{\beta}{2}+\frac{\pi}{2}  
\end{equation}
\end{lemma}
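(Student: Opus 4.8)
The plan is to exploit the structure of $h_1\circ\phi$ established in Lemma \ref{lem: circles to circles}: all points at a fixed latitude $s_{d+1}=\sin\alpha$ are sent to the sphere in $\mathbb{R}^d$ of radius $\alpha+\tfrac{\pi}{2}$ centered at the origin, and the radial direction is preserved. So first I would set up coordinates. Write $(s_1)_{d+1}=\sin\alpha_1$ and $(s_2)_{d+1}=\sin\alpha_2$ with $\alpha_1\le\alpha_2$, and let $\beta=\angle(s_1[1:d],s_2[1:d])$. By part 1 of Proposition \ref{prop: h prop1}, $h_1(\phi(s_i))=(\alpha_i+\tfrac{\pi}{2})\,u_i$ where $u_i=s_i[1:d]/\|s_i[1:d]\|$ is a unit vector in $\mathbb{R}^d$, and $\angle(u_1,u_2)=\beta$. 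By the definition \eqref{eq: s_2'} of $s_2'$, we have $h_1(\phi(s_2'))=(\alpha_1+\tfrac{\pi}{2})\,u_2$, i.e.\ it lies on the same ray from the origin as $h_1(\phi(s_2))$ but at radius $\alpha_1+\tfrac{\pi}{2}$.

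The key observation is that the three points $P_1:=h_1(\phi(s_1))$, $Q:=h_1(\phi(s_2'))$, $P_2:=h_1(\phi(s_2))$ all lie in the two-dimensional plane $\mathrm{span}(u_1,u_2)$ (when $\beta\in(0,\pi)$; the degenerate cases $\beta=0,\pi$ should be checked separately, but there $C_H$ is read off directly). In this plane, put polar coordinates so that $u_1$ is at angle $0$ and $u_2$ at angle $\beta$. Then $P_1=(\alpha_1+\tfrac\pi2,\,0)$ in polar form, $Q=(\alpha_1+\tfrac\pi2,\,\beta)$, and $P_2=(\alpha_2+\tfrac\pi2,\,\beta)$. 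The second segment $[Q,P_2]$ is \emph{radial} — it points in the direction $u_2$ away from the origin. So $C_H$, the angle at vertex $Q$ between $[Q,P_1]$ and $[Q,P_2]$, equals the angle between the vector $P_1-Q$ and the vector $u_2$. Now $P_1$ and $Q$ have the same radius $r:=\alpha_1+\tfrac\pi2$ and are separated by central angle $\beta$, so the chord $P_1-Q$ is the base of an isosceles triangle with apex at the origin and apex angle $\beta$; the base angles are each $\tfrac{\pi-\beta}{2}$. Hence the vector $Q-P_1$ makes angle $\tfrac{\pi-\beta}{2}$ with the vector $-u_2$ (the direction from $Q$ toward the origin), equivalently $P_1-Q$ makes angle $\tfrac{\pi-\beta}{2}$ with $u_2$... wait — I need to be careful about which base angle and whether we measure toward the origin or away. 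Working it out cleanly: the angle between $Q-P_1$ (base, pointing from $P_1$ to $Q$) and $O-P_1$ (from $P_1$ to origin) is $\tfrac{\pi-\beta}{2}$; I then want the angle at $Q$ between $P_1-Q$ and $P_2-Q=$ (positive multiple of) $u_2$. Since $P_2-Q$ points radially outward and the interior angle of the isosceles triangle at $Q$ (between $P_1-Q$ and $O-Q=-ru_2$) is $\tfrac{\pi-\beta}{2}$, the angle between $P_1-Q$ and $+u_2$ is $\pi-\tfrac{\pi-\beta}{2}=\tfrac{\pi+\beta}{2}=\tfrac{\beta}{2}+\tfrac{\pi}{2}$, which is exactly \eqref{eq: Ch}. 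I would present this via an explicit dot-product computation: compute $\langle P_1-Q,\,u_2\rangle$ and $\|P_1-Q\|$ directly from $P_1-Q = r(u_1-u_2)$, getting $\langle u_1-u_2,u_2\rangle=\cos\beta-1$ and $\|u_1-u_2\|=2\sin(\beta/2)$, so $\cos C_H = (\cos\beta-1)/(2\sin(\beta/2)) = -\sin(\beta/2) = \cos(\tfrac{\pi}{2}+\tfrac{\beta}{2})$, and since $C_H\in[0,\pi]$ this gives $C_H=\tfrac{\beta}{2}+\tfrac{\pi}{2}$.

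The main obstacle is bookkeeping rather than mathematics: one must confirm that $P_1-Q$ is indeed a positive scalar times $u_1-u_2$ (it is, with factor $r=\alpha_1+\tfrac\pi2>0$ since $\alpha_1\ge-\tfrac\pi2$, with the boundary $\alpha_1=-\tfrac\pi2$ meaning $s_1$ is the South Pole and $P_1=Q=0$, a degenerate case to handle separately), and that $P_2-Q$ is a positive scalar times $u_2$ (factor $\alpha_2-\alpha_1\ge 0$; if $\alpha_1=\alpha_2$ then $P_2=Q$ and the angle $C_H$ is undefined/irrelevant, so we may assume $\alpha_1<\alpha_2$). With those non-degeneracy conditions noted, the computation $\cos C_H=-\sin(\beta/2)$ is immediate and forces \eqref{eq: Ch}. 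I would also remark that the formula only depends on $\beta$, not on $\alpha_1,\alpha_2$ individually, which matches the geometric picture that the second leg is purely radial.
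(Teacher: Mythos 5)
Your proposal is correct and follows essentially the same route as the paper: both reduce to the observation that $[h_1(\phi(s_2')),h_1(\phi(s_2))]$ is radial and that the triangle with vertices at the origin, $h_1(\phi(s_1))$, and $h_1(\phi(s_2'))$ is isosceles with apex angle $\beta$, from which $C_H=\frac{\beta}{2}+\frac{\pi}{2}$ follows. Your closing dot-product computation $\cos C_H=(\cos\beta-1)/(2\sin(\beta/2))=-\sin(\beta/2)$ and your explicit treatment of the degenerate cases are nice additions that make the angle bookkeeping airtight, but the underlying argument is the paper's.
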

\begin{proof}
We suggest the reader to look at Figure \ref{fig: stereo for proof2} for visualization purposes. 

Consider $C_H$ the angle between line segments $[h_1(\phi(s_1)),h_1(\phi(s_2'))]$ and $[h_1(\phi(s_2)),h_1(\phi(s_2'))]$ in the projected space. The way we have chosen $s_2'$ guarantees that the segment $[0, h_1(\phi(s_2'))]$ is contained in the segment $[0, h_1(\phi(s_2))]$. 

As an application of Lemma \eqref{lem: circles to circles}, we have that the segments  $[0, h_1(\phi(s_2'))]$ and  $[0, h_1(\phi(s_1))]$ have the same length as $s_2'$ and $s_1$ are at the same latitude. Then, the triangle in the projected space with vertices the origin or coordinates $O=[0,\dots,0]\in\mathbb{R}^d$, $s_1$ and $s_2'$ is isosceles. The angle of that triangle having vertex at $O$ is
\begin{align}\label{eq: beta}
 \beta=\angle(s_1[1:d], s_2[1:d])&=\angle(s_1[1:d], s_2[1:d]) =\angle(s_1[1:d], s_2'[1:d])\notag\\
 &=\angle(\phi(s_1), \phi(s_2'))=\angle(h_1(\phi(s_1)), h_1(\phi(s_2')))
\end{align}
and the other two angles are equal. Thus, since $C_H$ is adjacent to one of those two equal angles, and since the sum of the internal angles of a triangle is $180°$, we have $\pi=\beta+2(\pi-C_H)$ or, equivalently, $C_H=\frac{\beta}{2}+\frac{\pi}{2}$.
\end{proof}

\begin{lemma}\label{lem: angle bound}
Consider $s_1,s_2\in \mathbb{S}^d$, with $(s_1)_{d+1}\leq (s_2)_{d+1}$, let $s_2'$ be the unique point satisfying \eqref{eq: s_2'}. Let $C\in[0,\pi]$ denote the spherical angle between   $\text{arc}(s_1,s_2'),\text{arc}(s_2',s_2)$, and $C_H$ denote the angle between line segments $[h(\phi(s_1)),h(\phi(s_2')],[h_1(\phi(s_2')),h_1(\phi(s_2))]$.
Then, 
\begin{equation}\label{eq: c < ch}
  C\leq C_H.  
\end{equation}
\end{lemma}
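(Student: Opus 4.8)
\textbf{Proof plan for Lemma \ref{lem: angle bound}.}

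The plan is to compare the spherical angle $C$ at the vertex $s_2'$ between the two geodesic arcs $\text{arc}(s_1,s_2')$ and $\text{arc}(s_2',s_2)$ with the planar angle $C_H$ at $h_1(\phi(s_2'))$ between the two Euclidean segments, and show the map $h_1\circ\phi$ never decreases this particular angle. The starting point is Lemma \ref{lem: Ch}, which already gives the exact value $C_H = \frac{\beta}{2}+\frac{\pi}{2}$ where $\beta = \angle(s_1[1:d],s_2[1:d])\in[0,\pi]$. So the task reduces to proving the purely spherical inequality $C \le \frac{\beta}{2}+\frac{\pi}{2}$.

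First I would set up coordinates. Since $s_2'$ is chosen to lie at the same latitude as $s_1$ and on the same meridian as $s_2$ (i.e., $\angle(s_2'[1:d],s_2[1:d])=0$), the arc $\text{arc}(s_2',s_2)$ runs along a meridian toward the North Pole, so its tangent direction at $s_2'$ is the ``northward'' unit tangent. The arc $\text{arc}(s_1,s_2')$ joins two points at the same latitude $\alpha$ (with $(s_1)_{d+1}=(s_2')_{d+1}=\sin\alpha$) whose horizontal components differ by the angle $\beta$. I would compute the tangent vector to the geodesic $\text{arc}(s_1,s_2')$ at $s_2'$ explicitly: parametrize the great circle through $s_1$ and $s_2'$, differentiate at $s_2'$, and normalize. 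Then $C$ is the angle between this tangent and the northward tangent, and I would get a closed-form expression $C = C(\alpha,\beta)$ via an inner product / $\arccos$ formula. The key structural fact to exploit is that when the two latitude points are close ($\beta\to 0$) the geodesic between them is nearly the parallel (latitude circle) itself, which at $s_2'$ is perpendicular to the meridian, giving $C\to\pi/2$; and as $\beta\to\pi$, by symmetry $C\to\pi$. So both endpoints of the inequality $C\le \frac{\beta}{2}+\frac{\pi}{2}$ are tight, matching the pattern seen in Lemma \ref{lem: same s_d+1}.

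With the explicit formula in hand, I would prove $C(\alpha,\beta)\le \frac{\beta}{2}+\frac{\pi}{2}$ by a one-variable calculus argument: fix $\alpha$, define $F(\beta) = \frac{\beta}{2}+\frac{\pi}{2} - C(\alpha,\beta)$, check $F(0)=0$ and $F(\pi)\ge 0$, compute $F'$, locate its (at most one or two) critical points via an algebraic reduction — likely a quadratic in $\cos\beta$ analogous to \eqref{eq: quadratic in A} — and conclude $F\ge 0$ on $[0,\pi]$ by monotonicity on the resulting subintervals together with the boundary values. The northern-hemisphere constraint $(s_1)_{d+1}\le (s_2)_{d+1}$ should ensure the geodesic $\text{arc}(s_1,s_2')$ bends the ``right way'' so that no sign issues arise; I would verify this enters precisely at the step where the geodesic is shown to bow toward the nearer pole. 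Note that everything happens in the $2$-sphere spanned by the relevant three points, so the dimension $d$ plays no role beyond reducing to $d=2$.

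The main obstacle I anticipate is obtaining a clean enough closed form for the spherical angle $C(\alpha,\beta)$ so that the derivative test is tractable; spherical-trigonometry identities (the spherical law of cosines for angles, or a direct tangent-vector computation) tend to produce expressions with nested $\arccos$ and square roots, and showing the difference $F(\beta)$ has controlled sign may require the same careful case analysis on a discriminant that appears in the proof of Lemma \ref{lem: same s_d+1}. An alternative, possibly cleaner route would be to bypass coordinates: use the fact (Lemma \ref{lem: circles to circles}) that $h_1\circ\phi$ sends latitude spheres to concentric spheres centered at the origin and is ``radial'' in the sense that $h_1(\phi(s)) = \angle(s,s_0)\,\frac{s[1:d]}{\|s[1:d]\|}$, so that $h_1\circ\phi$ is the azimuthal equidistant projection centered at $s_0$; then $C\le C_H$ becomes a known distortion property of that projection (it preserves angles at the center and along meridians, and one must control the distortion of the non-radial direction). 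I would try the explicit computation first and fall back on the projection-theoretic argument if the algebra becomes unwieldy.
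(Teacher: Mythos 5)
Your plan follows the paper's proof essentially step for step: it invokes Lemma \ref{lem: Ch} to get $C_H=\frac{\beta}{2}+\frac{\pi}{2}$, reduces to the two-dimensional sphere spanned by $s_1,s_2,s_2'$, computes the spherical angle $C=C(\alpha,\beta)$ in closed form (the paper uses normals to the two planes containing the arcs rather than tangent vectors, which is equivalent), and then verifies $C\le\frac{\beta}{2}+\frac{\pi}{2}$. The only difference is the endgame, where no calculus is needed: by monotonicity of $\cos$ on $[0,\pi]$ the inequality becomes $\cos(C)\ge-\sin\left(\frac{\beta}{2}\right)$, which after algebraic simplification reduces to $\sin^2(\alpha)\le 1$ (and, as a minor aside, your heuristic claim that $C\to\pi$ as $\beta\to\pi$ holds only for $\alpha<0$, with $C\to 0$ when $\alpha>0$, though this does not affect the argument).
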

\begin{proof}
For better visualization and an easier way of parametrizing the points, we will work on the 3-dimensional space (see Figure \ref{fig: angle_sphere_+}).
Thus, assuming $d=2$, by using spherical coordinates let $\alpha\in[-\pi/2,\pi/2]$, $\beta= \angle(s_1[1:d], s_2[1:d])\in[0,\pi]$, and $\alpha'\in[\alpha,\frac{\pi}{2})$ and, without loss of generality, let us parametrize
\begin{equation}\label{eq: parametriz s1 s2 s2'}
    \begin{cases}
  &s_1=[\cos(\alpha),0,\sin(\alpha)]\\
  &s_2=[\cos(\alpha')\cos 2\beta,\cos(\alpha')\sin(\beta),\sin(\alpha)]\\
  &s_2'=[\cos\alpha\cos 2\beta,\cos(\alpha)\sin(\beta),\sin(\alpha)]   
    \end{cases}
\end{equation}

If $\beta=0$, $s_2'=s_1$, $C=C_H=0$ and there is nothing else to prove.

If $\beta>0$, the angle $C$ is characterized by
$$\cos(C)=\frac{\langle n_{12'}, n_{22}\rangle}{\|n_{12'}\|\|n_{22'}\|},$$
where $n_{12'}$ is a vector normal (perpendicular) to the plane containing the vectors $\vv{O s_1}$ and $\vv{O s_2'}$, and $n_{22'}$ is a vector normal to the plane containing the vectors $\vv{O s_2}$ and $\vv{O s_2'}$, where $O$ denotes the origin of coordinates $O=[0,\dots,0]\in\mathbb{R}^{d+1}$. 

Since $\arccos(\cdot)$ is a decreasing function, want to show that 
\begin{equation}\label{eq: cos C > cos Ch}
    \cos(C)\geq\cos(C_H).
\end{equation}
By using the parametrizations in \eqref{eq: parametriz s1 s2 s2'}, we can compute 
\begin{equation*}
    \begin{cases}
        n_{22'}=[\sin(\beta),-\cos(\beta),0] &\qquad \|n_{22'}\|=1\\
        n_{12'}=[\sin(\alpha)\sin(\beta),\sin(\alpha)(1-\cos(\beta)),\cos(\alpha)\sin(\beta)] & \qquad \|n_{12'}\|=\sqrt{\sin^2(\beta)+\sin^2(\alpha)(1-\cos(\beta))^2}\\
        \langle n_{12'},n_{22'}\rangle=\sin(\alpha)(1-\cos(\beta))
    \end{cases}
\end{equation*}
Thus,
\begin{equation}\label{eq: cos C}
    \cos(C)=\frac{\sin(\alpha)(1-\cos(\beta))}{\sqrt{\sin^2(\beta)+\sin^2(\alpha)(1-\cos(\beta))^2}}
\end{equation}
Notice that in the special case where $\alpha\in[0,\pi/2]$, that is,  $(s_1)_{d+1}\geq 0$, and so both $s_1,s_2$ are in the same ``positive hemisphere'' (as in Figure \ref{fig: angle_sphere_+}), we have that $\sin(\alpha)\geq 0$, so, 
$\cos(C)\geq 0$, and therefore
$0\leq C\leq \pi/2$. Thus, in this particular case, 
\begin{equation}\label{eq: c<pi/2<ch}
 C\leq \pi/2\leq C_H \qquad \text{if } 0\leq\alpha< \frac{\pi}{2}.   
\end{equation}
(See Figure \ref{fig: angle_sphere_+} for a visualization of this case.)

In general, by using \eqref{eq: cos C} and \eqref{eq: Ch}, the inequality \eqref{eq: cos C > cos Ch} reads as
\begin{equation*}
    \cos(C)=\frac{\sin(\alpha)(1-\cos(\beta))}{\sqrt{\sin^2(\beta)+\sin^2(\alpha)(1-\cos(\beta))^2}}\geq \cos\left(\frac{\beta}{2}+\frac{\pi}{2}\right)=-\sin\left(\frac{\beta}{2}\right)
\end{equation*}
By algebraic manipulations and by using trigonometric identities \eqref{eq: cos C > cos Ch} holds if and only if 
\begin{equation}\label{eq: positive for c ch}
    \sin\left(\frac{\beta}{2}\right)\left(1+\frac{\sin(\alpha)\sin\left(\frac{\beta}{2}\right)}{\sqrt{\cos^2\left(\frac{\beta}{2}\right)+\sin^2(\alpha)\sin^2\left(\frac{\beta}{2}\right)}}\right)\geq 0 \qquad \text{if } 0\leq \beta\leq \pi, \, \frac{-\pi}{2}\leq \alpha\leq \frac{\pi}{2}
\end{equation}
Let us rewrite the inequality in \eqref{eq: positive for c ch} (by using that $\sin(\beta/2)\geq 0$ in our range) as
\begin{equation}\label{eq: C analytical proof}
    -\sin(\alpha)\leq \sqrt{\cos^2\left(\frac{\beta}{2}\right)+\sin^2(\alpha)\sin^2\left(\frac{\beta}{2}\right)} 
\end{equation}
The parameter $\alpha$ is constraint to the interval $[-\pi/2,\pi/2)$. We have already proved that for $\alpha\geq 0$ everything works (see \eqref{eq: c<pi/2<ch}). Thus, let us suppose that $\alpha<0$: In this case, $\sin(\alpha)<0$ and so, both sides in \eqref{eq: C analytical proof} are positive. Therefore, it is equivalent to show that
$ \sin^2(\alpha)\leq \cos^2\left(\frac{\beta}{2}\right)+\sin^2(\alpha)\sin^2\left(\frac{\beta}{2}\right) 
$
or, equivalently, that  
$\sin^2(\alpha)\leq 1$,
which is always true.

\end{proof}

    \begin{figure}[H]
    \centering
    \includegraphics[width=0.7\linewidth]{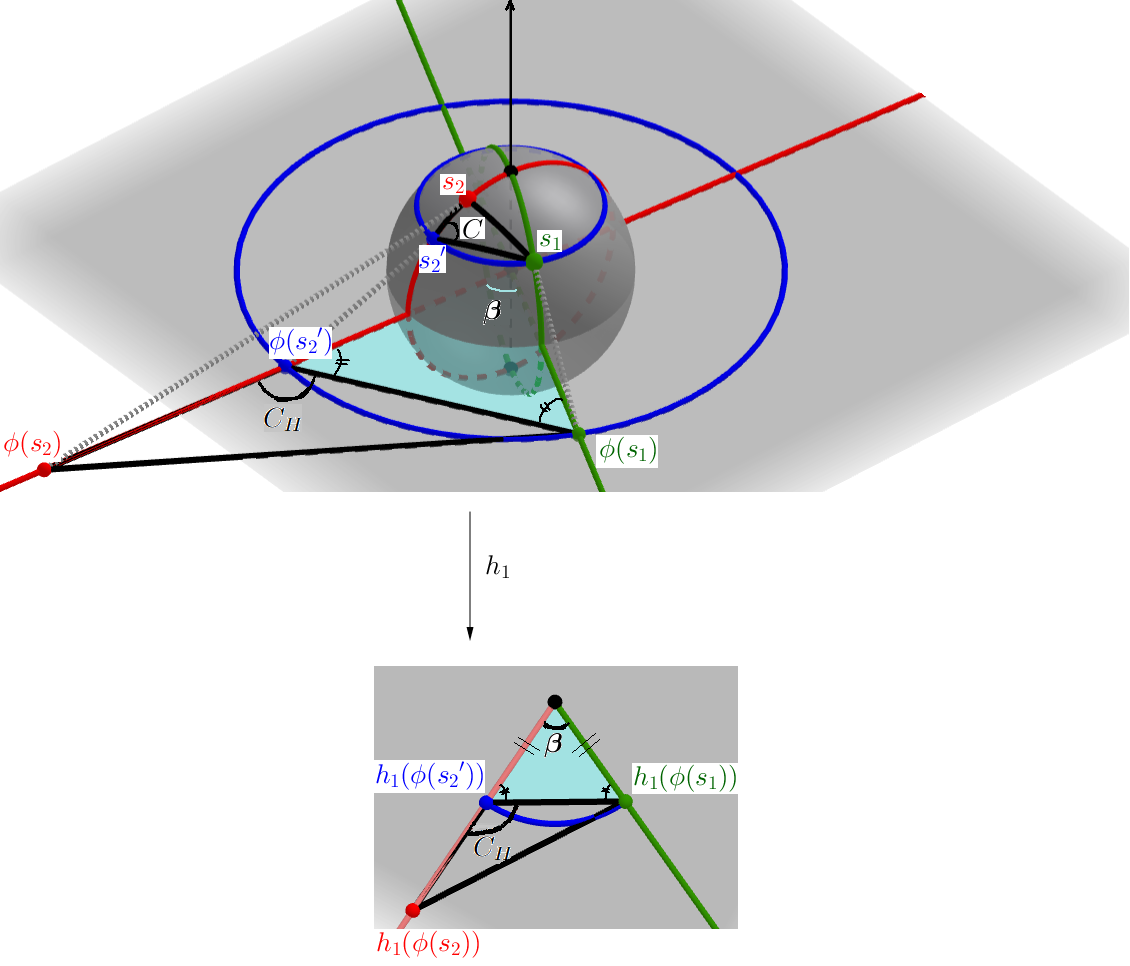}
    \caption{On top we have the sphere in the 3-dimensional space. Three points $s_1$, $s_2$, $s_2'$ are considered on the sphere. The point $s_2'$ satisfies \eqref{eq: s_2'}, that is, $s_2',s_1$ are at the same latitude, and $s_2',s_2$ are in the same meridian (and among the points that satisfy these two conditions always choose the $s_2'$ which is ``closer'' to $s_2$). The corresponding projections $\phi(s_1)$, $\phi(s_2)$, $\phi(s_2')$ are plotted on the 2-dimensional plane. The circle on the sphere parallel to the Equator that passes through $s_1$ and $s_2'$ and its projected circle by $\phi$ are depicted in blue. The circle on the sphere that passes through $s_2$, $s_2'$, and the poles (meridian) and its projection by $\phi$ (which is a line) are depicted in red. Similarly, the circle on the sphere that passes through $s_1$, and the poles (meridian) and its projection by $\phi$ (which is a line) are depicted in green. On the bottom, we sketch the image by $h_1$ of the projected space.
    The angles $\beta$ given by \eqref{eq: beta}, $C$ (spherical angle between the arcs $\text{arc}(s_1,s_2'),\text{arc}(s_2',s_2)$), and $C_H$ (defined by the line segments 
    $[\phi(s_1),\phi(s_2')]$ and $[\phi(s_2),\phi(s_2')]$ or, 
    equivalently, by the segments $[h_1(\phi(s_1)),h_1(\phi(s_2'))]$ and $[h_1(\phi(s_2)),h_1(\phi(s_2'))]$) are depicted. When applying the function $h_1$ after $\phi$, it preserves angles with vertex at the origin in the projected space, preserves the lines that pass through the origin (red and green lines), maps circles centered at the origin to circles centered at the origin (blue arc). Roughly speaking, the effect of $h_1$ is that it ``compresses'' the projected plane.}
    \label{fig: stereo for proof2}
\end{figure}

\begin{figure}[H]
    \centering
    \includegraphics[width=0.45\linewidth]{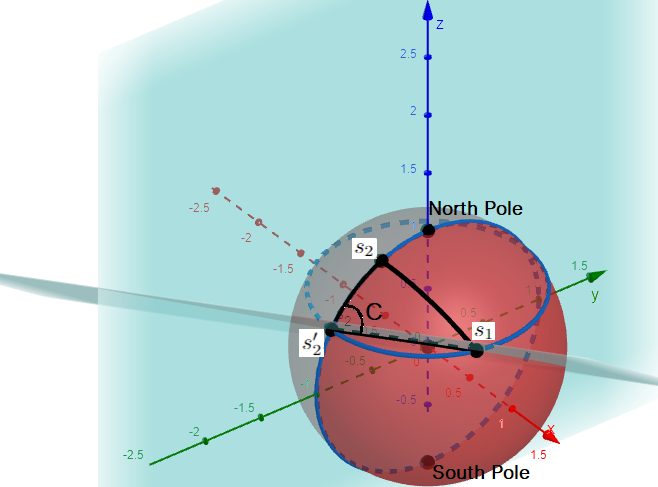}
    \caption{Given two arbitrary points $s_1,s_2\in\mathbb{S}^2\subset\mathbb{R}^3$ in the ``positive'' hemisphere. The point $s_1'$ is plotted such that it has the same latitude as $s_1$ and it is in the same meridian as $s_2$.  The angle $C$ is depicted as the angle between the two planes containing the arcs $arc (s_1, s_2')$ and $arc (s_2, s_2')$. A spherical triangle connecting the points $s_1,s_2',s_2$ is depicted in black.}
    \label{fig: angle_sphere_+}
\end{figure}

\begin{proposition}
\label{pro: h1 upper_bound}
Let $s, s_1,s_2\in \mathbb{S}^d\setminus\{s_n\}$, and let $s_0=[0,0,\ldots,-1]^T\in \mathbb{S}^d$ denote the South Pole. Consider $\phi:\mathbb{S}^{d}\setminus\{s_n\}\to\mathbb{R}^d$ the stereographic projection as in \eqref{eq: stereographic proj without 2} with inverse \eqref{eq: inv stereo}, and also consider the function $h_1:\mathbb{R}^d\to\mathbb{R}^d$ defined by \eqref{eq: special h_1}. Then
    \begin{align}
    d_{\mathbb{S}^d}(s_1,s_2)
    &\leq \min_{R\in\mathrm{O}(d+1)}\|h_1(\phi(R s_1))-h_1(\phi(R s_2))\|+\epsilon(s_1,s_2)\nonumber
    \end{align}
    where $\epsilon(s_1,s_2)\to0$ as $d_{\mathbb{S}^d}(s_1,s_2)\to 0$.
\end{proposition}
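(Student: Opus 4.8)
The plan is to prove the stronger, rotation-free and error-free inequality
\[
d_{\mathbb{S}^d}(s_1,s_2)\;\le\;\bigl\|h_1(\phi(s_1))-h_1(\phi(s_2))\bigr\|\qquad\text{for all }s_1,s_2\in\mathbb{S}^d\setminus\{s_n\},
\]
from which the statement follows with $\epsilon\equiv 0$: since $d_{\mathbb{S}^d}$ is $\mathrm{O}(d+1)$-invariant, applying this to $Rs_1,Rs_2$ and then minimizing over $R$ gives $d_{\mathbb{S}^d}(s_1,s_2)\le\min_{R\in\mathrm{O}(d+1)}\|h_1(\phi(Rs_1))-h_1(\phi(Rs_2))\|$. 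The decisive observation is that $h_1\circ\phi$ is exactly the inverse of the Riemannian exponential map of $\mathbb{S}^d$ based at the South Pole $s_0$: by part (1) of Proposition \ref{prop: h prop1}, $h_1(\phi(s))=\psi(s)\,u(s)$ with $\psi(s):=\angle(s,s_0)=d_{\mathbb{S}^d}(s_0,s)\in[0,\pi)$ and $u(s):=s[1:d]/\|s[1:d]\|\in\mathbb{S}^{d-1}$, and these are precisely the geodesic polar coordinates of $s$ about $s_0$ (the length and initial direction of the minimizing geodesic from $s_0$ to $s$) once $T_{s_0}\mathbb{S}^d=\{v\in\mathbb{R}^{d+1}:v_{d+1}=0\}$ is identified with $\mathbb{R}^d$ via its first $d$ coordinates. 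Hence $h_1\circ\phi$ is a bijection from $\mathbb{S}^d\setminus\{s_n\}$ onto the open ball of radius $\pi$ in $\mathbb{R}^d$, with inverse $\exp_{s_0}$ restricted to that ball (of radius $\pi$, the injectivity radius of $\mathbb{S}^d$).

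Next I would compare metrics. In the coordinates $(\psi,u)$ the round metric of $\mathbb{S}^d$ reads $d\psi^2+\sin^2\!\psi\,g_{\mathbb{S}^{d-1}}$, whereas the Euclidean metric of $\mathbb{R}^d$, written in polar form with radius $\psi$ and direction $u$, reads $d\psi^2+\psi^2\,g_{\mathbb{S}^{d-1}}$. Because $0\le\sin\psi\le\psi$ on $[0,\pi)$, the pullback by $h_1\circ\phi$ of the Euclidean metric dominates the round metric pointwise, with equality in the radial direction; equivalently, $h_1\circ\phi$ does not decrease curve lengths, so $(h_1\circ\phi)^{-1}$ does not increase them. (This is the constant-curvature instance of the Rauch comparison theorem.) To finish, fix $s_1,s_2$, let $\sigma$ be the straight segment in $\mathbb{R}^d$ from $h_1(\phi(s_1))$ to $h_1(\phi(s_2))$ — it stays in the convex ball of radius $\pi$ — and set $\gamma:=(h_1\circ\phi)^{-1}\circ\sigma$, a curve on $\mathbb{S}^d$ joining $s_1$ to $s_2$; then $d_{\mathbb{S}^d}(s_1,s_2)\le\ell_{\mathbb{S}^d}(\gamma)\le\ell(\sigma)=\|h_1(\phi(s_1))-h_1(\phi(s_2))\|$. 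Together with the rotation exhibited in the proof of Proposition \ref{pro: h1 lower_bound}, this would in fact also settle Conjecture \ref{prop: ds and dh}.

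This route is essentially obstacle-free; the only point needing (routine) care is the metric comparison, which I have reduced to the elementary inequality $\sin\psi\le\psi$. The genuine difficulty appears only if one insists on proving the bound within the coordinate-based framework set up just before the statement (Lemmas \ref{lem: same s_d+1}, \ref{lem: circles to circles}, \ref{lem: Ch}, \ref{lem: angle bound}): one then introduces the auxiliary point $s_2'$ lying at the latitude of $s_1$ and the longitude of $s_2$, bounds $d_{\mathbb{S}^d}(s_1,s_2)\le d_{\mathbb{S}^d}(s_1,s_2')+d_{\mathbb{S}^d}(s_2',s_2)$, uses that $h_1\circ\phi$ is a radial isometry along meridians (so $\|h_1(\phi(s_2'))-h_1(\phi(s_2))\|=d_{\mathbb{S}^d}(s_2',s_2)$) and that the planar angle $C_H=\tfrac{\beta}{2}+\tfrac{\pi}{2}\ge\tfrac{\pi}{2}$ at $h_1(\phi(s_2'))$ (Lemma \ref{lem: Ch}) makes the image triangle obtuse, and finally applies the planar law of cosines to relate $\|h_1(\phi(s_1))-h_1(\phi(s_2))\|$ to the two sub-distances. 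In that version $\epsilon(s_1,s_2)$ is exactly the slack in the non-tight parallel estimate of Lemma \ref{lem: same s_d+1}, and the real work is to show this slack vanishes uniformly as $d_{\mathbb{S}^d}(s_1,s_2)\to 0$, the delicate regime being pairs of points near the poles — a complication the exponential-map argument avoids entirely.
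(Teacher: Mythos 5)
Your proof is correct, but it takes a genuinely different route from the paper's. The paper works in coordinates: it introduces the auxiliary point $s_2'$ at the latitude of $s_1$ and the meridian of $s_2$, uses the same-latitude bound of Lemma \ref{lem: same s_d+1} and the angle comparison $C\leq C_H$ of Lemma \ref{lem: angle bound}, and then invokes the spherical law of cosines with a Taylor (``small spherical triangle'') approximation to pass to the planar law of cosines --- it is precisely this approximation step that produces the error term $\epsilon(s_1,s_2)$ and restricts the conclusion to the asymptotic regime $d_{\mathbb{S}^d}(s_1,s_2)\to 0$. You instead exploit the identity $h_1(\phi(s))=\angle(s,s_0)\,\frac{s[1:d]}{\|s[1:d]\|}$ from Proposition \ref{prop: h prop1} to recognize $h_1\circ\phi$ as the inverse of the exponential map at $s_0$ (equivalently, the azimuthal equidistant projection, which the paper itself notes in a remark), and then use the comparison $\sin\psi\leq\psi$ between the round metric $d\psi^2+\sin^2\!\psi\,g_{\mathbb{S}^{d-1}}$ and the Euclidean metric $d\psi^2+\psi^2 g_{\mathbb{S}^{d-1}}$ in geodesic polar coordinates, so that $\exp_{s_0}$ is $1$-Lipschitz on the ball of radius $\pi$ and pulling back the straight chord gives the exact inequality $d_{\mathbb{S}^d}(s_1,s_2)\leq\|h_1(\phi(s_1))-h_1(\phi(s_2))\|$ for \emph{all} pairs, hence the statement with $\epsilon\equiv 0$ after minimizing over rotations. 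This buys a strictly stronger, error-free bound with no case analysis, and together with the rotation constructed in Proposition \ref{pro: h1 lower_bound} it essentially settles Conjecture \ref{prop: ds and dh}, which the paper leaves open; the only point deserving explicit care (and which you correctly flag as routine) is the degeneracy of polar coordinates at the origin, i.e., when the chord passes through $h_1(\phi(s_0))=0$, which is handled by the standard fact that $\|d(\exp_{s_0})_v\|\leq 1$ for all $v$ in the ball. The paper's coordinate approach, by contrast, stays within elementary trigonometric estimates but pays for it with the residual $\epsilon$ term.
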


 \begin{proof}

Let $s_1$, $s_2$ in $\mathbb{S}^d$. Assume, without loss of generality, that
with $(s_2)_{d+1}\geq (s_1)_{d+1}$.

Define $s_2'$ as in Lemma \ref{lem: angle bound}: That is, such that $(s_2')_{d+1}=(s_1)_{d+1}$ (same latitude) and $s_2'$, $s_2$, $s_0$ are in the same great circle (same meridian).
Since there are two possible choices for $s_2'$, choose it not only such that $(s_2')_{d+1}=(s_1)_{d+1}$ but also such that
$\angle(s_2'[1:d],s_2[1:d])=0$. (This means we always choose the $s_2'$ which is ``closer'' to $s_2$.)

Let $C$ be the \textit{spherical angle} angle between the arcs $arc (s_1, s_2')$ and $arc (s_2, s_2')$ given by the smallest portions of the great circles connecting the corresponding points $s_1$ with $s_2'$ and $s_2$ with $s_2'$, with vertex the origin of coordinates $[0,\dots,0]\in\mathbb{R}^{d+1}$. We recall that the angle $C$ is measured by the angle between the planes containing those two arcs. 

See Figure \ref{fig: angle_sphere_+} for a visualization in $\mathbb{R}^3$.


 A \textit{spherical triangle} on the surface of the unit sphere is defined by the great circles connecting three points on the sphere, in our case consider $s_1,s_2,s_2'$.  
The \textit{spherical law of cosines} reads as follows:
$$\cos(d_{\mathbb{S}^d}(s_1,s_2))=\cos(d_{\mathbb{S}^d}(s_1,s_2'))\cos(d_{\mathbb{S}^d}(s_2,s_2'))+\sin(d_{\mathbb{S}^d}(s_1,s_2'))\sin(d_{\mathbb{S}^d}(s_2,s_2'))\cos(C).$$
As a consequence, by using Taylor expansions, 
\begin{align*}
  d_{\mathbb{S}^d}(s_1,s_2)^2&= d_{\mathbb{S}^d}(s_1,s_2')^2+d_{\mathbb{S}^d}(s_2,s_2')^2-2d_{\mathbb{S}^d}(s_1,s_2')d_{\mathbb{S}^d}(s_2,s_2')\cos( C )\\
  &\qquad +\mathcal{O}(d_{\mathbb{S}^d}^4(s_1,s_2))+\mathcal{O}(d_{\mathbb{S}^d}^4(s_1,s_2'))+\mathcal{O}(d_{\mathbb{S}^d}^4(s_2,s_2'))  
\end{align*}  
Thus, for ``small'' spherical triangles, i.e., when the lengths $d_{\mathbb{S}^d}(s_1,s_2)$, $d_{\mathbb{S}^d}(s_1,s_2')$, $d_{\mathbb{S}^d}(s_2,s_2')$ are small, we can use the ``planar'' approximation 
$$d_{\mathbb{S}^d}(s_1,s_2)^2\approx d_{\mathbb{S}^d}(s_1,s_2')^2+d_{\mathbb{S}^d}(s_2,s_2')^2-2 d_{\mathbb{S}^d}(s_1,s_2')d_{\mathbb{S}^d}(s_2,s_2')\cos (C).$$

Now, as in the above lemmas, consider $C_H$ the angle between line segments $[h_1(\phi(s_1)),h_1(\phi(s_2'))]$ and $[h(\phi(s_2)),h(\phi(s_2'))]$ in the projected space.


 By Lemma \ref{lem: same s_d+1} and by using $C\leq C_H$ together with fact that $\cos$ is a decreasing function on $[0,\pi]$, we have:
\begin{align*}
  &d_{\mathbb{S}^d}^2(s_1,s_2')+d_{\mathbb{S}^d}^2(s_2,s_2')-2 d_{\mathbb{S}^d}(s_1,s_2')d_{\mathbb{S}^d}(s_2,s_2')\cos(C) \\
  &\leq \|h_1(\phi(s_1))-h_1(\phi(s_2'))\|^2+\|h_1(\phi(s_2))-h_1(\phi(s_2'))\|^2\\
  &\quad-2\|h_1(\phi(s_1))-h_1(\phi(s_2'))\|\|h_1(\phi(s_2))-h_1(\phi(s_2'))\|\cos(C_H)\\
  &\quad \qquad=\|h_1(\phi(s_1))-h_1(\phi(s_2))\|^2
\end{align*}
where in the last step we used the classical law of cosines.

Therefore, by using the approximation given by the spherical law of cosines (assuming $d_{\mathbb{S}^d}(s_1,s_2)$ small), we obtain
$$d_{\mathbb{S}^d}(s_1,s_2)\leq \|h_1(\phi(s_1))-h_1(\phi(s_2))\|.$$
In particular,
$$d_{\mathbb{S}^d}(s_1,s_2)=d_{\mathbb{S}^d}(Rs_1,Rs_2)\leq \|h_1(\phi(Rs_1))-h_1(\phi(Rs_2))\| \qquad \forall R\in\mathrm{O}(d+1)$$
and so
$$d_{\mathbb{S}^d}(s_1,s_2)\leq\min_{R\in \mathrm{O}(d+1)}\|h_1(\phi(Rs_1))-h_1(\phi(Rs_2))\|.$$

\end{proof}



\subsection{Neural Network-Based Embedding} \label{sec:h_nn_appendix}

While any choice for $h$ which maintains injectivity will ensure $S3W_{\mathcal{H},p}$ remains a valid metric in $\mathcal{P}_p(\mathbb{S}^d)$, the particular choice for $h$ is significant in ensuring that the transportation cost in the embedding space resembles the spherical distance. As discussed in Section \ref{sec:dist_distortion}, as an alternative to the proposed analytic function $h_1(\cdot)$ (as given in Eq. \eqref{eq: h_1}), we may consider training a neural network to obtain a nearly-isometric Euclidean embedding and correct for the distance distortion caused by stereographic projection. 

Here, we consider the injective function defined by $h_{NN}(x):=[h^T_1(x)/C,\rho^T(x)]^T$ where $\rho:\mathbb{R}^{2}\to\mathbb{R}^{3}$ is a neural network, and $C\geq 2\pi$ a constant. We parameterize $\rho$ using a multi-layer perceptron (MLP) with two hidden layers, each consisting of 128 neurons, and train $\rho$ by minimizing
\begin{align}
    \mathcal{L}(\rho)=\mathbb{E}_{s,s'} \left[(\arccos(\langle s, s'\rangle) - \|h(\phi(s))-h(\phi(s'))\|)^2\right],
    \label{eq: loss_rho_2}
\end{align}
where $s$ and $s'$ are sampled according to the uniform distribution in the sphere $\mathbb{S}^d\subset\mathbb{R}^{d+1}$, i.e., $(s,s')\sim \sigma_{d+1}\times \sigma_{d+1}$. As shown in Figure \ref{fig:distortion}, this choice for $h$ can yield a nearly-isometric embedding. 

We then conduct two different gradient flow experiments to study the distinct behaviors of $h_1(\cdot)$ and $h_{NN}(\cdot)$. The visualizations demonstrating these gradient flows can be found at the following \href{https://github.com/mint-vu/s3wd/blob/c701ac49d63bd6b524da4f0d5681feab48b338ce/src/demos/gradient_flows/README.md}{GitHub URL}. We observe that the use of $h_{NN}(\cdot)$ provides more ``geodesic-like'' paths for the particles as compared to $h_1(\cdot)$, though the final outcomes are comparable.

We note that in our experiments, we primarily use only the analytic $h_1$ for its computational efficiency as opposed to the neural network-based learnable function $h_{NN}$. Specifically, in our experiments, the proposed distances are used in gradient-based optimization frameworks where the distance is computed in each iteration of (stochastic) gradient descent. Utilizing $h_{NN}$ in these optimization applications increases the computational cost of calculating the distance. More precisely, the computational cost would have the following components: 1) stereographic projection, 2) evaluation of $h_{NN}$, and 3) slicing, with a neural network becoming the computational bottleneck in the evaluation of $h_{NN}$.

\section{Numerical Experiments}
\label{sec:numerical}
In this section, we aim to demonstrate our proposed $S3W$ as an efficient metric suitable for various Machine Learning tasks. In \ref{section:gf}, we show that $S3W$ and its variants provide a high-speed alternative for $SSW$ as an effective loss for gradient-based optimization on the spherical manifold. In \ref{section:evolution}, we study the evolution of our proposed distances w.r.t. varying parameters. In \ref{section:runtime}, we discuss the computational efficiency of our method via runtime study. \ref{section:de}, \ref{section:ssl}, \ref{section:vi}, \ref{section:swae} provide experiments in practical ML settings. Throughout our experiments, we use $h(x) = h_1(x):= \arccos\left(\frac{\|x\|^2-1}{\|x\|^2+1}\right)\frac{x}{\|x\|}.$

\subsection{Additional Algorithmic and Implementation Details}
Let $\{ \alpha_i, x_i\}_{i=1}^M \sim \hat{\mu}$ and $\{\beta_j, y_j\}_{j=1}^N \sim \hat{\nu}$ be iid samples from the empirical measures $\hat{\mu}$ and $\hat{\nu}$, respectively, where $\sum_{i=1}^M \alpha_i = 1$ and $\sum_{j=1}^N \beta_j = 1$. We provide below a general formulation for computing $S3W_p$ between $\hat{\mu}$ and $\hat{\nu}$. We then define the procedure to compute $RI\text{-}S3W_p$ and $ARI\text{-}S3W_p$. By abuse of notation, we denote these Algorithms as $S3W$, $RI\text{-}S3W$, and $ARI\text{-}S3W$, respectively.

\clearpage
\subsubsection{Alternative Implementation to Compute $S3W_p$}

\begin{algorithm}[ht]
   \caption{Stereographic Spherical Sliced-Wasserstein ($S3W$)}
   \label{alg:s3w_alt}
    \begin{algorithmic}
       \STATE {\bfseries Input:} $\{\alpha_i, x_i\}_{i=1}^M \sim \hat{\mu}$, $\{\beta_j, y_j\}_{j=1}^N \sim \hat{\nu}$, $L$ projections, $p$-th order, $\epsilon$
       \STATE Initialize: $h$ (injective map), $\{\theta_l\}_{l=1}^L$ (projection directions)
       \STATE Initialize $d = 0$
       \STATE Calculate $\{u_i = h(\phi_{\epsilon}(x_i))\}_{i=1}^M$ and $\{v_j = h(\phi_{\epsilon}(y_j))\}_{j=1}^N$
       \FOR{$\ell=1$ {\bfseries to} $L$}
       \STATE Project onto $\theta_l$: $u_i^l = \langle u_i, \theta_l \rangle$ and $v_j^l = \langle v_j, \theta_l \rangle$
        \STATE Sort $\{u^l_i\}$, $\{v_j^l\}$, s.t  $u^l_{\pi_l[i]} \leq u^l_{\pi_l[i+1]}$, $v^l_{\pi_l'[j]} \leq v^l_{\pi_l'[j+1]}$
       \STATE Compute the (weighted) CDFs $\{\tilde{u}^l_i = \tilde{F}_{u^l}(u_{\pi_l[i]}^l)\}_{i=1}^M $, $\{\tilde{v}^l_j = \tilde{F}_{v^l}(v_{\pi_l[j]}^l)\}_{j=1}^N$
       
       \STATE Merge and sort the $\{\tilde{u}^l_i\}_i$ and $\{\tilde{v}^l_j\}_j$ into $\{z_k^l\}_{k=1}^{M+N}$ where $z^l_k \leq z^l_{k+1}$
        \STATE For each $z_k^l$, compute $\tilde{F}^{-1}_{u^l}(z_k^l)$ and $\tilde{F}^{-1}_{v^l}(z_k^l)$  by setting $\tilde{F}^{-1}_{u^l}(z_k^l)$ to the value of $u_i^l$ such that $\tilde{F}_{u^l}(u_i^l) $ is nearest to $z_k^l$. Proceed similarly for $\tilde{F}^{-1}_{v^l}(z_k^l)$.
        
        
       \STATE $d_\ell = \sum_{k=1}^{M+N} \left| \tilde{F}^{-1}_{u^l}(z^l_k) - \tilde{F}^{-1}_{v^l}(z^l_k) \right|^p(z_k^l - z_{k-1}^l)$
       \STATE Update total distance: $d = d + \frac{1}{L}d_\ell$
       \ENDFOR
       \STATE Return $\text{S3W}_p(\hat{\mu}, \hat{\nu}) \approx d^{\frac{1}{p}}$
    \end{algorithmic}
\end{algorithm}

\subsubsection{Implementation to Compute the $RI\text{-}S3W_p$}

\begin{algorithm}[ht]
   \caption{Rotationally Invariant $S3W$ ($RI\text{-}S3W$)}
   \label{alg:ris3w}
    \begin{algorithmic}
       \STATE {\bfseries Input:} $\{\alpha_i, x_i\}_{i=1}^M \sim \hat{\mu}$, $\{\beta_j, y_j\}_{j=1}^N \sim \hat{\nu}$, $L$ projections, $p$-th order, $\epsilon$, $N_R$ rotations
       \STATE Sample $\{R_r\}_{r=1}^{N_R} \in \mathrm{SO}(d+1)$
       \STATE Initialize $d = 0$
       \FOR{$r=1$ {\bfseries to} $N_R$}
         \STATE Apply rotation $R_r$ to obtain $X_r =  \{\alpha_i R_r(x_i)\}_{i=1}^M$ and $Y_r =  \{\beta_jR_r(y_j)\}_{j=1}^N$ 
         \STATE Use Algorithm \ref{alg:s3w_alt} for inputs $X_r, Y_r$  to compute $d_r$
         \STATE Update total distance: $d = d + \frac{1}{N_R}d_r$
       \ENDFOR
       \STATE Return $RI\text{-}S3W_p(\hat{\mu}, \hat{\nu}) \approx d$
    \end{algorithmic}
\end{algorithm}

\subsubsection{Implementation to Compute the $ARI\text{-}S3W_p$}

Although $RI\text{-}S3W$ is highly parallelizable via batch matrix multiplication, generating random rotation matrices can still be computationally expensive. That is especially true for high-dimensional data and training setups with small batch size where $RI\text{-}S3W$ gets called repeatedly. One way to mitigate this issue is by pregenerating a batch of $N$ rotation matrices, $\{R_i\}_{i=1}^N \subset \mathrm{SO}(d+1)$, and subsample from this batch to amortize the generation cost. We introduce the Amortized Rotationally Invariant $S3W$ ($ARI\text{-}S3W$) as algorithm \ref{alg:aris3w}. This process can be made seamless by managing the rotation matrices in a stateful manner. That is, we maintain a Rotation manager class with static variables to ensure that once a set of matrices is generated, they can be efficiently reused. We describe the abstract procedure to compute $ARI\text{-}S3W$ in Algorithm \ref{alg:aris3w}.

\begin{algorithm}[t]
\caption{Amortized Rotationally Invariant $S3W$ ($ARI\text{-}S3W$)}
\label{alg:aris3w}
\begin{algorithmic}
\STATE {\bfseries Input:} $\{\alpha_i, x_i\}_{i=1}^M \sim \hat{\mu}$, $\{\beta_j, y_j\}_{j=1}^N \sim \hat{\nu}$, $L$ projections, $p$-th order, pool size $N_{\text{total}}$, $N_R$ rotations, pregenerated rotations $\mathcal{R} = \{R_k\}_{k=1}^{N_{total}} \subset \mathrm{SO}(d+1)$ where $N_{total} \geq N_R$
\STATE Initialize $d = 0$
\STATE Randomly subsample $N_R$ rotations $\tilde{\mathcal{R}} = \{\tilde{R}_{r} \sim \mathcal{R}\}_{r=1}^{N_R}$ 
\FOR{each r in $N_R$}
\STATE Apply $\tilde{R}_r$ to obtain $X_r = \{\alpha_i \tilde{R}_r(x_i)\}_{i=1}^M$ and $Y_r =  \{\beta_j\tilde{R}_r(y_j)\}_{j=1}^N$
\STATE Use Algorithm \ref{alg:s3w_alt} where inputs are $\{\tilde{R}_r(x_i^{r})\}_{i=1}^N$ and $\{\tilde{R}_r(y_j^{r})\}_{j=1}^N$ to compute $d_{r}$
\STATE Update total distance: $d = d + \frac{1}{N_R}d_{r}$
\ENDFOR
\STATE Return $ARI\text{-}S3W_p(\hat{\mu}, \hat{\nu}) \approx d$
\end{algorithmic}
\end{algorithm}

\subsection{Study: Gradient Flow on the Sphere}
\label{section:gf}
\subsubsection{Background Overview}
\textbf{The von Mises-Fisher distribution} is a generalization of a Gaussian from $\mathbb{R}^d$ to $\mathbb{S}^d$. Its parametric form can be characterized by the mean direction $\mu \in \mathbb{S}^d$  and the concentration parameter $\kappa > 0$. That is

\begin{equation}
f(x; \mu, \kappa) = C_d(\kappa) \exp(\kappa \mu^T x)
\end{equation}

where  $C_d(\kappa) = \frac{\kappa^{d/2-1}}{(2\pi)^{d/2} I_{d/2-1}(\kappa)}$ is the normalization constant. In our experiment setting, we mostly work with $d=2$. The pdf is
\begin{equation}
f(x; \mu, \kappa) = \frac{\kappa}{4\pi \sinh(\kappa)} \exp(\kappa \mu^T x).
\end{equation}

\textbf{Gradient descent on the sphere.}
\label{subsubsec:gd}
The sphere $\mathbb{S}^d$ is a Riemannian manifold where Euclidean geometry holds only locally. This local linearization is sufficient to adapt gradient descent to follow geodesic paths, which replace straight lines in Euclidean space. For a function $f: \mathbb{R}^m \rightarrow \mathbb{R}$, the classical Euclidean gradient update rule is given by
\begin{equation}
    x_{t+1} = x_t - \gamma \nabla_{\text{Euc}} f(x_t),
\end{equation}

where $\nabla_{\text{Euc}}f(x_t)$ denotes the Euclidean gradient at $x_t$, and from now on we simply write $\nabla f(x_t)$. On manifolds, where straight lines are generalized to geodesic curves, we replace $\nabla f(x_t)$ with the Riemannian gradient $\nabla_{\text{Riemannian}} f(x) = \nabla f(x) - \langle \nabla f(x), x \rangle x$ where $\langle \cdot, \cdot \rangle$ denotes the inner product. While this gives local linearized direction, it does not specify how to move along the manifold itself. To address this, one could use a retraction\footnote{a mapping that approximates geodesics and projects the updated point in the tangent space back onto the manifold.}. When that retraction is normalization, the update rule becomes
\begin{equation}
\label{eqn:retracted_riemannian}
\begin{split}
x_{t+1} &= \text{Retr}_{x_t}\left(-\gamma \nabla_{\text{Riemannian}} f(x_t)\right) \\
&= \frac{x_t - \gamma \left(\nabla f(x_t) - \langle \nabla f(x_t), x_t \rangle x_t\right)}{\left\| x_t - \gamma \left(\nabla f(x_t) - \langle \nabla f(x_t), x_t \rangle x_t\right) \right\|}.
\end{split}
\end{equation}

If we use $\nabla f(x_t)$ instead of $\nabla_{\text{Riemannian}} f(x_t)$, then we have the familiar Projected Gradient Descent. Another option for retraction is using the exponential map (see Eq. \eqref{eqn:exp_map}) to project $\nabla_{\text{Riemannian}}f(x_t)$ from the tangent space $T_x \mathbb{R}^d$ back onto the manifold along the geodesic. The gradient update is

\begin{equation}
\begin{split}
x_{t+1} &= \text{exp}_{x_t}\left(-\gamma \nabla_{\text{Riemannian}} f(x_t)\right) \\
&= \cos\left(\left| - \gamma(\nabla f (x_t) - \langle \nabla f(x_t), x_t \rangle x_t) \right|\right)x_t \\
&\quad + \sin\left(\left| - \gamma (\nabla f (x_t) - \langle \nabla f(x_t), x_t \rangle x_t) \right|\right) \times \frac{- \gamma (\nabla f (x_t) - \langle \nabla f(x_t), x_t \rangle x_t)}{\left| - \gamma (\nabla f (x_t) - \langle \nabla f(x_t), x_t \rangle x_t) \right|}.
\end{split}
\end{equation}

This variant is Riemannian Gradient Descent \cite{absil2008optimization}. For simplicity, we use Projected Gradient Descent in our gradient-based optimization, similar to \cite{bonet2022spherical}.

\subsubsection{Experiment: Learning a Mixture of $12$ von Mises-Fisher Distributions}

Assuming we only have access to the target measure via its samples $\{y_j \in \hat{\nu_m}\}_{j=1}^M$ Our objective is to iteratively minimize $\text{argmin}_{\mu} d (\hat{\mu}_i, \hat{\nu}_{m_i})$. The Projected Gradient Descent algorithm for $S3W$ gives the following update rule

\begin{align}
\mathbf{x}'_{i, k+1} &= \mathbf{x}_{i, k} - \gamma \nabla_{\mathbf{x}_{i,k}} \text{S3W}_p^p (\hat{\mu}_k, \hat{\nu}_{m_i}), \label{eq:mini_batch_update} \\
\mathbf{x}_{i, k+1} &= \frac{\mathbf{x}'_{i, k+1}}{\|\mathbf{x}'_{i, k+1}\|_2}, \label{eq:mini_batch_normalization}
\end{align}

where $\gamma$ denotes the learning rate, $i$ indexes the mini-batches, and $k$ the gradient step.

\begin{figure}[H]
    \centering
    \includegraphics[width=0.6\linewidth]{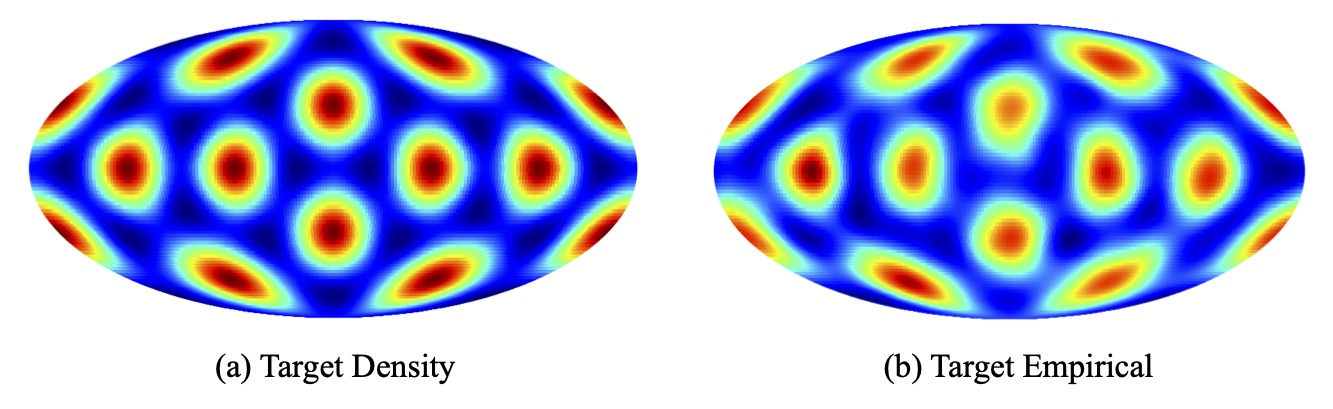}
    \label{fig: 12vmfs}
    \vspace{-5mm}
\end{figure}

\textbf{Implementation.}
We consider a mixture of $12$ vMFs as our target on $\mathbb{S}^2$, with each vMF centered on the vertices of an icosahedron determined by the golden ratio $\phi = (1 + \sqrt{5}) / 2$. We assume access to $2400$ samples from the mixture ($200$ per vMF), each with $\kappa = 50$, and set batch sizes to $|m_i|=\{200, 2400\}$. We fix the number of projecions to $1000$ for all distances. We optimize over $500$ gradient steps using the Adam Optimizer \cite{kingma2014adam} (learning rate of $\gamma=0.01$ for full-batch and $0.001$ for mini-batch) for $10$ in dependent runs. We report both qualitative and quantitative results (NLL, $W_2$); the latter comes with mean and standard deviation.

\textbf{Full-batch results.} In addition to the numerical results provided in the main paper (refer to Figure \ref{fig: gf_main}), we provide the Molleweide projections in Figure \ref{fig:gf_full_appendix} to show that all distances work comparatively well to learn the target distribution. In this setting, we also include the evaluation of the vertical sliced Wasserstein (VSW) distance \cite{quellmalz2023sliced} in Figure \ref{fig:gf_full_appendix} and Table \ref{table:gf_comparison} (see Section \ref{sec:vsw}).

\textbf{Mini-batch results.} We provide below our mini-batch results for all distances. In addition to the numerical results reported in Figure \ref{table:gf_comparison}, we also show the Mollweide projections for the learned distributions as well as particle scatter-plots at epochs $\{0, 100, 300, 500\}$. In Table \ref{table:gf_comparison}, $S3W$ shows the fastest runtime for both full-batch and mini-batch optimization. It is almost an order of magnitude faster than $SSW$. $RI\text{-}S3W (1)$ adds negligible computational overhead to $S3W$ with the single added rotation but performs almost on par with SSW. $RI\text{-}S3W$ (5) and $RI\text{-}S3W$ (10) are both high-performing candidates, exceeding $SSW$ albeit with some additional runtime compared to $S3W$. $ARI\text{-}S3W (30/1000)$ shows significant performance gain while being almost as fast as $S3W$. We remark that the smaller the batch size, the more frequently the distance function gets called, the bigger advantage $ARI\text{-}S3W$ will have over $RI\text{-}S3W$ since the rotation generation cost gets amortized over a large number of function calls and becomes negligible. 

Both $RI\text{-}S3W$ and $ARI\text{-}S3W$ can achieve additional efficiency with rotation scheduler. In particular, one could start from few rotations and gradually (i.e. linearly) increase the number of rotations over time for fine-tuning. We provide additional visualization and numerical results for $RI\text{-}S3W$ and $ARI\text{-}S3W$ with a linear schedule from $1$ to $30$ rotations over $500$ epochs. We observe that $ARI\text{-}S3W (1\text{-}30)$ outperforms $ARI\text{-}S3W (30)$ while being twice as fast.

\begin{figure}[H]
    \centering
    \subfloat[$SSW$]{\label{target_ps_2}\includegraphics[width=0.2\linewidth]{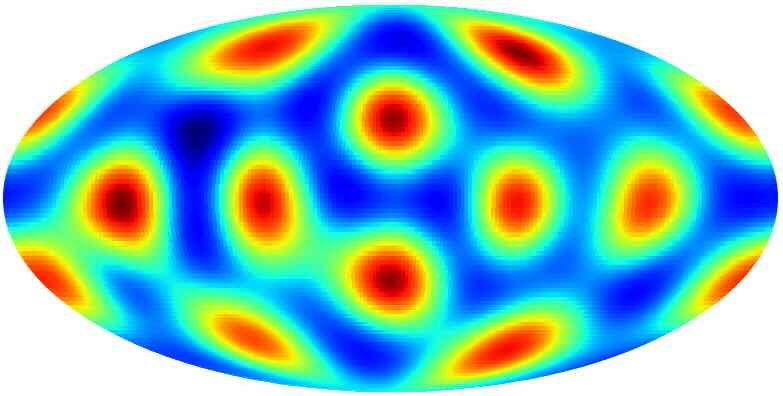}}\hspace{5mm}  
    \subfloat[$S3W$]{\label{evo_runtime_s3w_2}\includegraphics[width=0.2\linewidth]{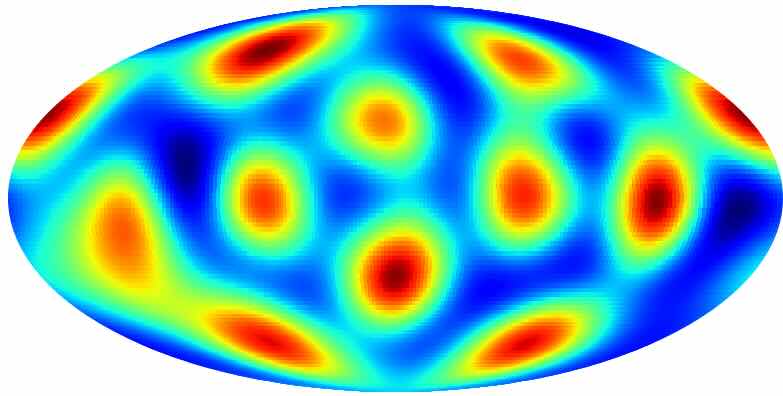}}\hspace{5mm}  
    \subfloat[$RI\text{-}S3W (1)$]
    {\label{evo_runtime_ri1_ap}\includegraphics[width=0.2\linewidth]{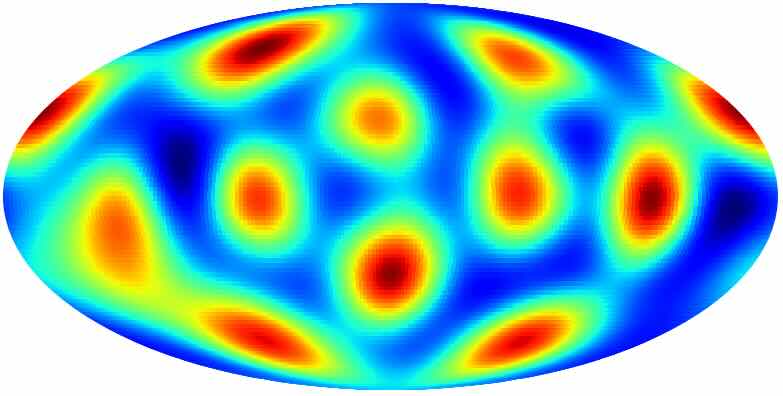}}\hspace{5mm}  
    \subfloat[$RI\text{-}S3W (5)$]{\label{evo_runtime_ri5_ap}\includegraphics[width=0.2\linewidth]{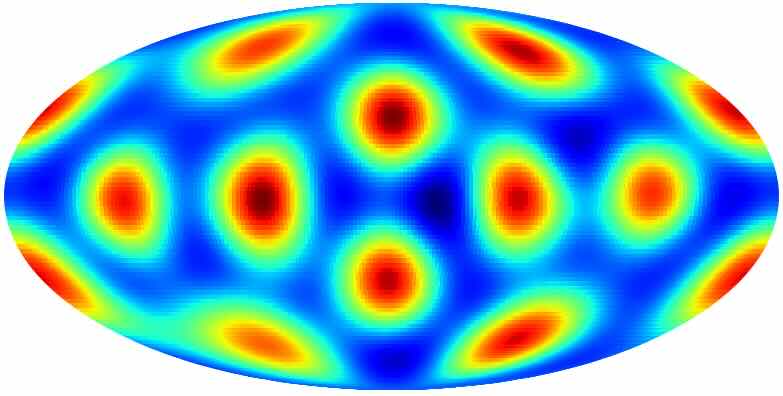}}\\
    
    \subfloat[$RI\text{-}S3W (10)$]{\label{evo_runtime_ris3w10_1}\includegraphics[width=0.2\linewidth]{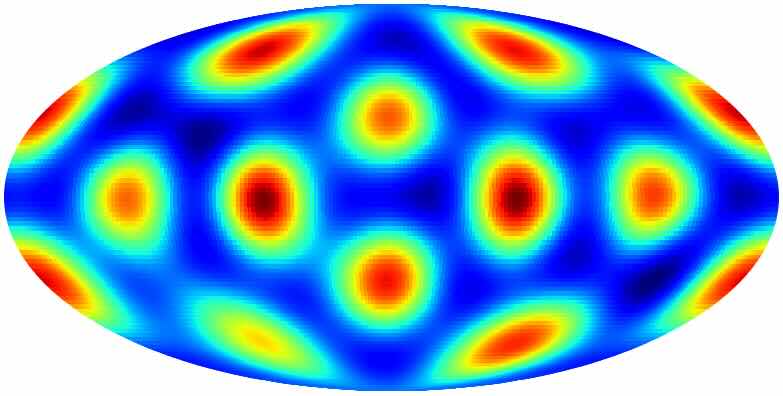}}\hspace{5mm}  
    \subfloat[$ARI\text{-}S3W (30)$]{\label{evo_runtime_ari30_ap}\includegraphics[width=0.2\linewidth]{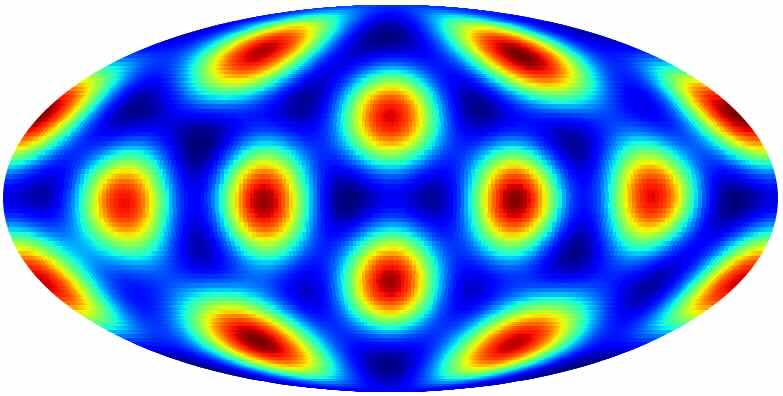}}\hspace{5mm}  
    \subfloat[$RI\text{-}S3W (1\text{-}30)$]{\label{evo_runtime_ri_30_ap}\includegraphics[width=0.2\linewidth]{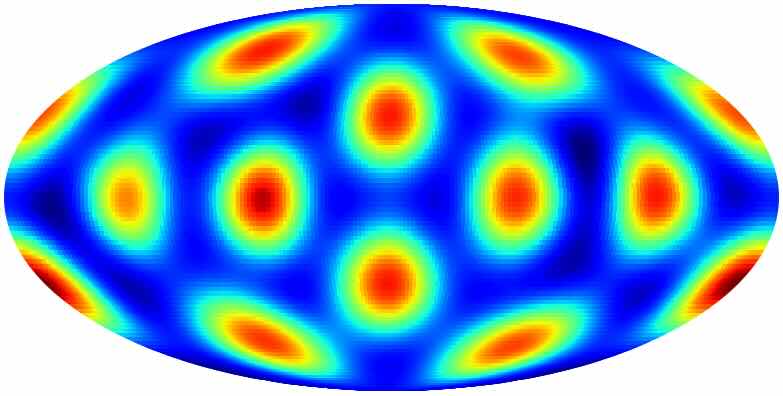}} \hspace{5mm}
    \subfloat[$ARI\text{-}S3W (1\text{-}30)$]{\label{evo_runtime_ari_30_ap}\includegraphics[width=0.2\linewidth]{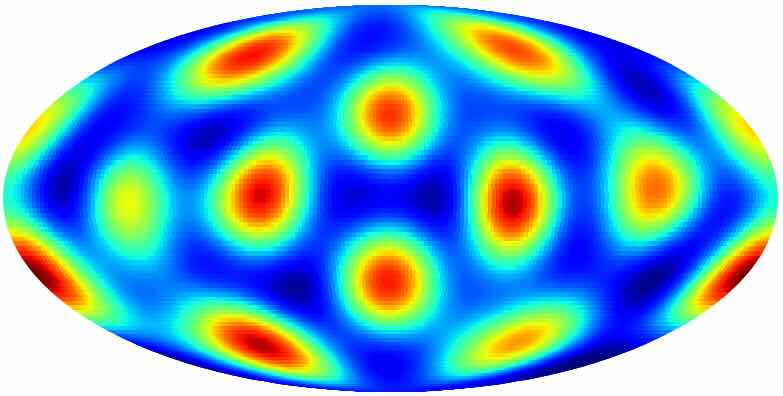}}

    \caption{The Mollweide projections for mini-batch projected gradient descent. $ARI\text{-}S3W$ has pool size of $1000$. $RI\text{-}S3W (1\text{-}30)$ and $ARI\text{-}S3W (1\text{-}30)$ denote $RI\text{-}S3W$ and $ARI\text{-}S3W$ with linear $N_R$ schedule from $1$ to $30$ over $500$ epochs.}
    \label{fig:gf_appendix}
\end{figure}

\clearpage

\begin{figure}[t]
    \centering
    \subfloat[$SSW$]{\label{target_ps}\includegraphics[width=0.2\linewidth]{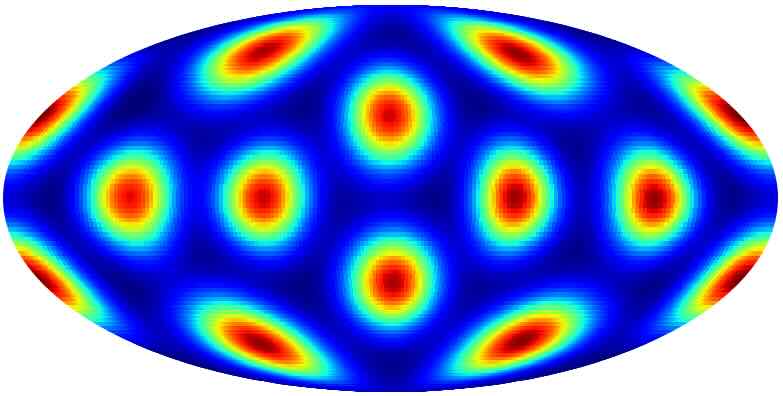}}\hspace{5mm}  
    \subfloat[$S3W$]{\label{evo_runtime_s3w}\includegraphics[width=0.2\linewidth]{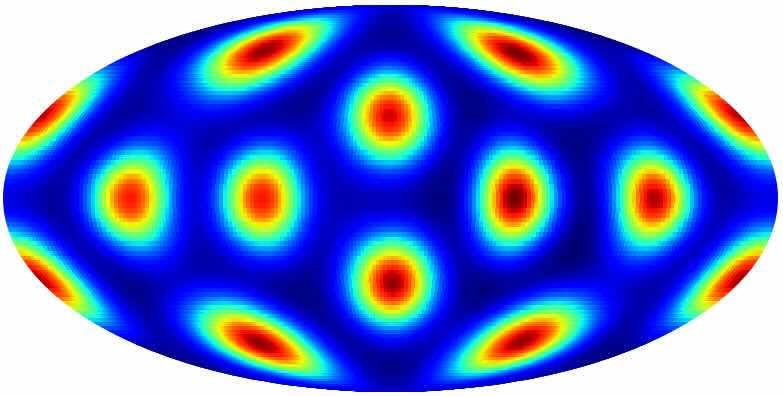}}\hspace{5mm}  
    \subfloat[$RI\text{-}S3W (1)$]
    {\label{evo_runtime_ri_full}\includegraphics[width=0.2\linewidth]{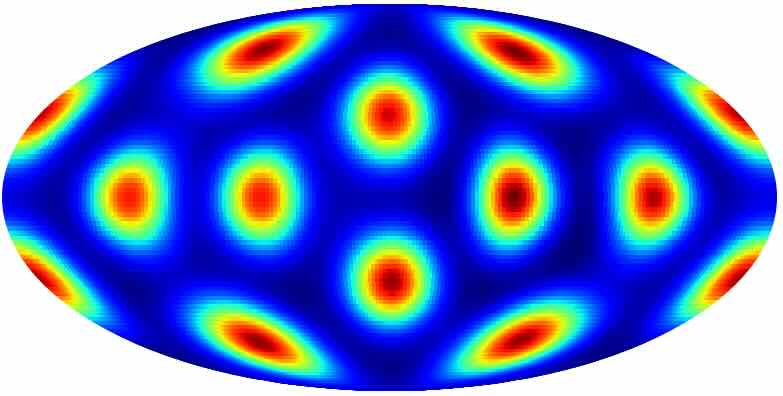}}\hspace{5mm}  
    \subfloat[$RI\text{-}S3W (5)$]{\label{evo_runtime_ri_full_ap}\includegraphics[width=0.2\linewidth]{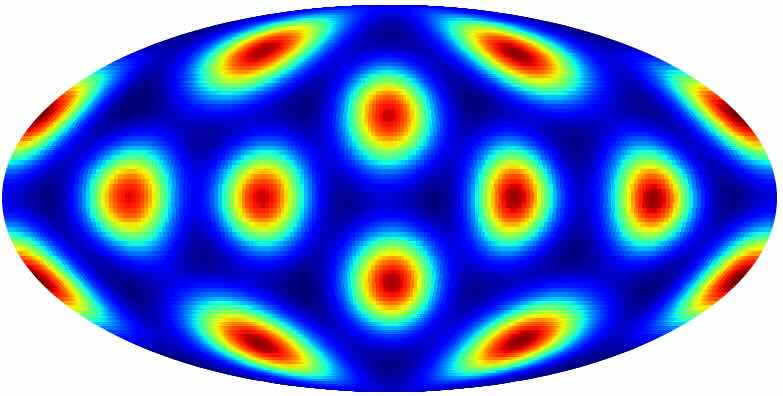}}\\
    
    \subfloat[$RI\text{-}S3W (10)$]{\label{evo_runtime_s3w10_1}\includegraphics[width=0.2\linewidth]{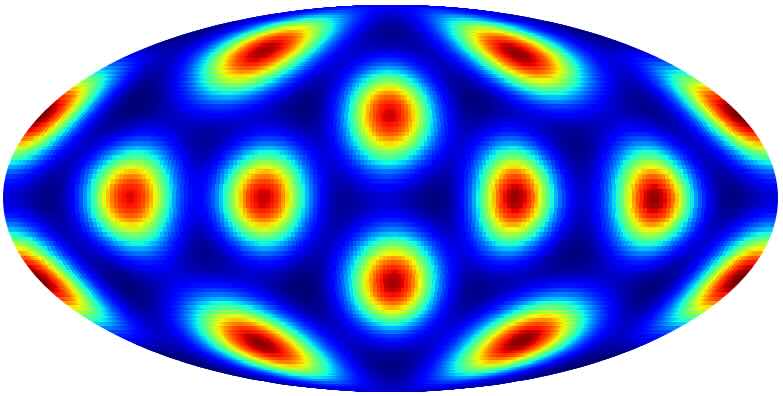}}\hspace{5mm}  
    \subfloat[$ARI\text{-}S3W (30)$]{\label{evo_runtime_s3w30_1}\includegraphics[width=0.2\linewidth]{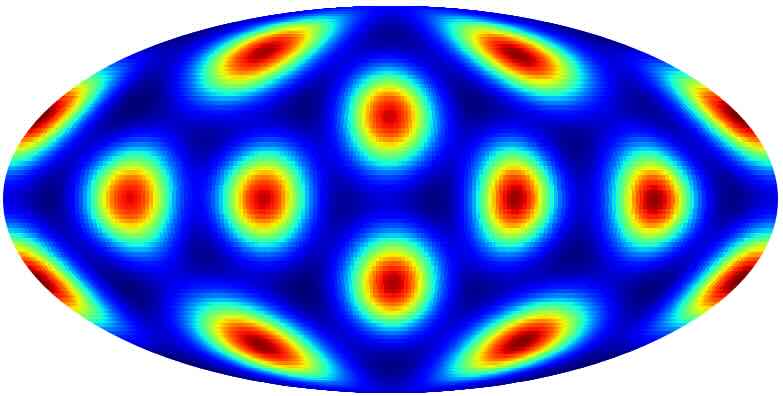}}\hspace{5mm}  
    \subfloat[$RI\text{-}S3W (1\text{-}30)$]{\label{evo_runtime_ri_30_full_ap}\includegraphics[width=0.2\linewidth]{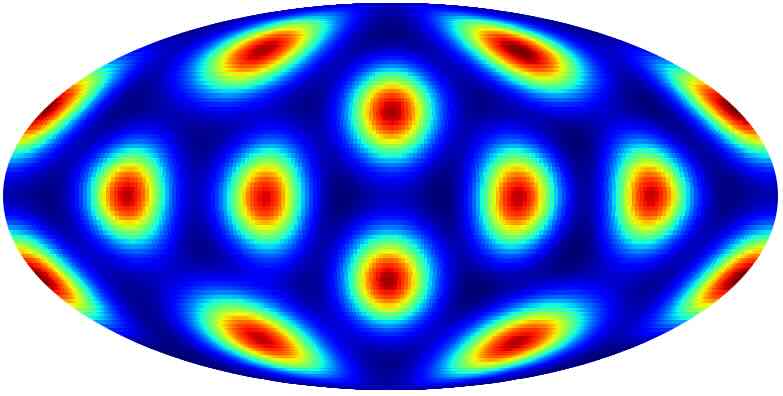}} \hspace{5mm}
    \subfloat[$ARI\text{-}S3W (1\text{-}30)$]{\label{evo_runtime_aris3w_1}\includegraphics[width=0.2\linewidth]{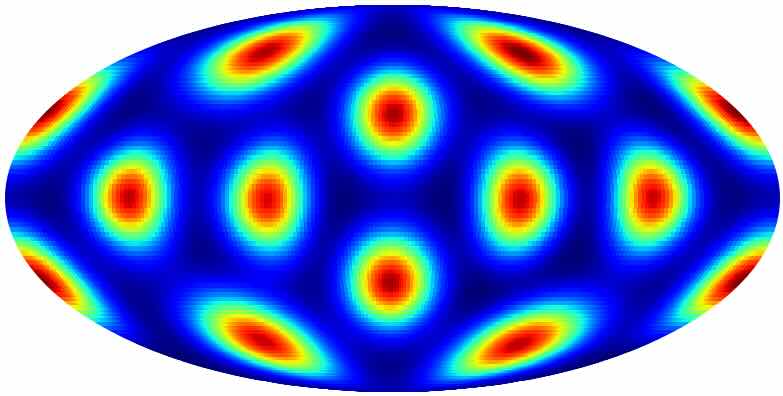}} \\

    \subfloat[$VSW$]{\label{gf_vsw}\includegraphics[width=0.2\linewidth]{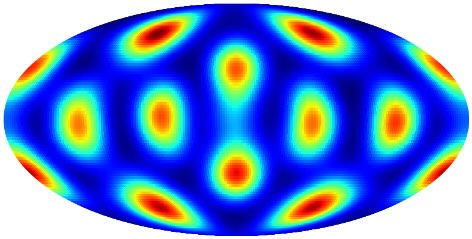}}\hspace{5mm}  
    \subfloat[$SW$]{\label{gf_sw}\includegraphics[width=0.2\linewidth]{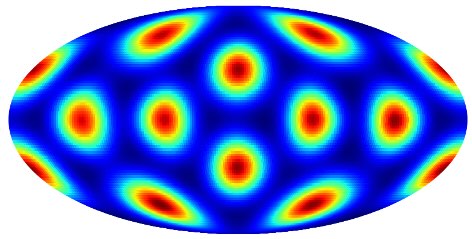}}\hspace{5mm}  

    \caption{The Mollweide projections for full-batch projected gradient descent. $ARI\text{-}S3W$ has pool size of $1000$. $RI\text{-}S3W (1\text{-}30)$ and $ARI\text{-}S3W (1\text{-}30)$ denote $RI\text{-}S3W$ and $ARI\text{-}S3W$ with linear $N_R$ scheduled from $1$ to $30$ over $500$ epochs.}
    \label{fig:gf_full_appendix}
\end{figure}

\begin{table}[t]
\centering
\caption{Comparison between different distances as loss for gradient flows.}
\renewcommand{\arraystretch}{1.1}
\begin{tabular}{clccc}
\hline
& Distance & Runtime (s) $\downarrow$ & NLL $\downarrow$ & $\log(W_2)$ $\downarrow$ \\
\hline
\multirow{5}{*}{\rotatebox[origin=c]{0}{Mini-batch}} 
& $SSW$ & 46.10 $\pm$ 0.28 & -284.61 $\pm$ 8.44 & -1.19 $\pm$ 0.05 \\
& $S3W$ & 5.16 $\pm$ 0.62 & -205.13 $\pm$ 17.56 & -1.17 $\pm$ 0.055 \\
& $RI\text{-}S3W$ (1) & 5.50 $\pm$ 0.96 & -254.54 $\pm$ 9.39 & -1.18 $\pm$ 0.06 \\
& $RI\text{-}S3W$ (5) & 8.62 $\pm$ 0.94 & -305.15 $\pm$ 11.79 & -1.22 $\pm$ 0.05 \\
& $RI\text{-}S3W$ (10) & 8.75 $\pm$ 0.61 & -320.74 $\pm$ 8.34 & -2.80 $\pm$ 0.12 \\
& $ARI\text{-}S3W$ (30/1000) & 6.75 $\pm$ 0.12 & \textbf{-343.71 $\pm$ 2.45} & \textbf{-3.02 $\pm$ 0.17} \\ 
& $RI\text{-}S3W$ (1-30) & 7.67 $\pm$ 0.10 & -322.35 $\pm$ 7.69 & -2.96 $\pm$ 0.20 \\
& $ARI\text{-}S3W$ (1-30/1000) & \textbf{4.38 $\pm$ 0.19} & -264.55 $\pm$ 13.31 & -2.84 $\pm$ 0.15 \\ 
\hline\hline
\multirow{1}{*}{\rotatebox[origin=c]{0}{Full-batch}}
& $SW$ & \textbf{0.64 $\pm$ 0.1} & -4891.85 $\pm$ 3.59 & -4.70 $\pm$ 0.22 \\
& $VSW$ & 0.66 $\pm$ 0.01 & -4858.75 $\pm$ 6.37 & -2.32 $\pm$ 0.08 \\
& $RI\text{-}S3W$ (1-30) & 26.90 $\pm$ 0.13 & -5003.21 $\pm$ 1.88 & \textbf{-6.68 $\pm$ 0.06} \\
& $ARI\text{-}S3W$ (1-30/1000) & 23.78 $\pm$ 0.36 & \textbf{-5048.43 $\pm$ 6.16} & -5.40 $\pm$ 0.03 \\ 
\end{tabular}
\label{table:gf_comparison}
\end{table}

\clearpage

\begin{figure}[H]
    \centering
    \includegraphics[width=0.90\linewidth]{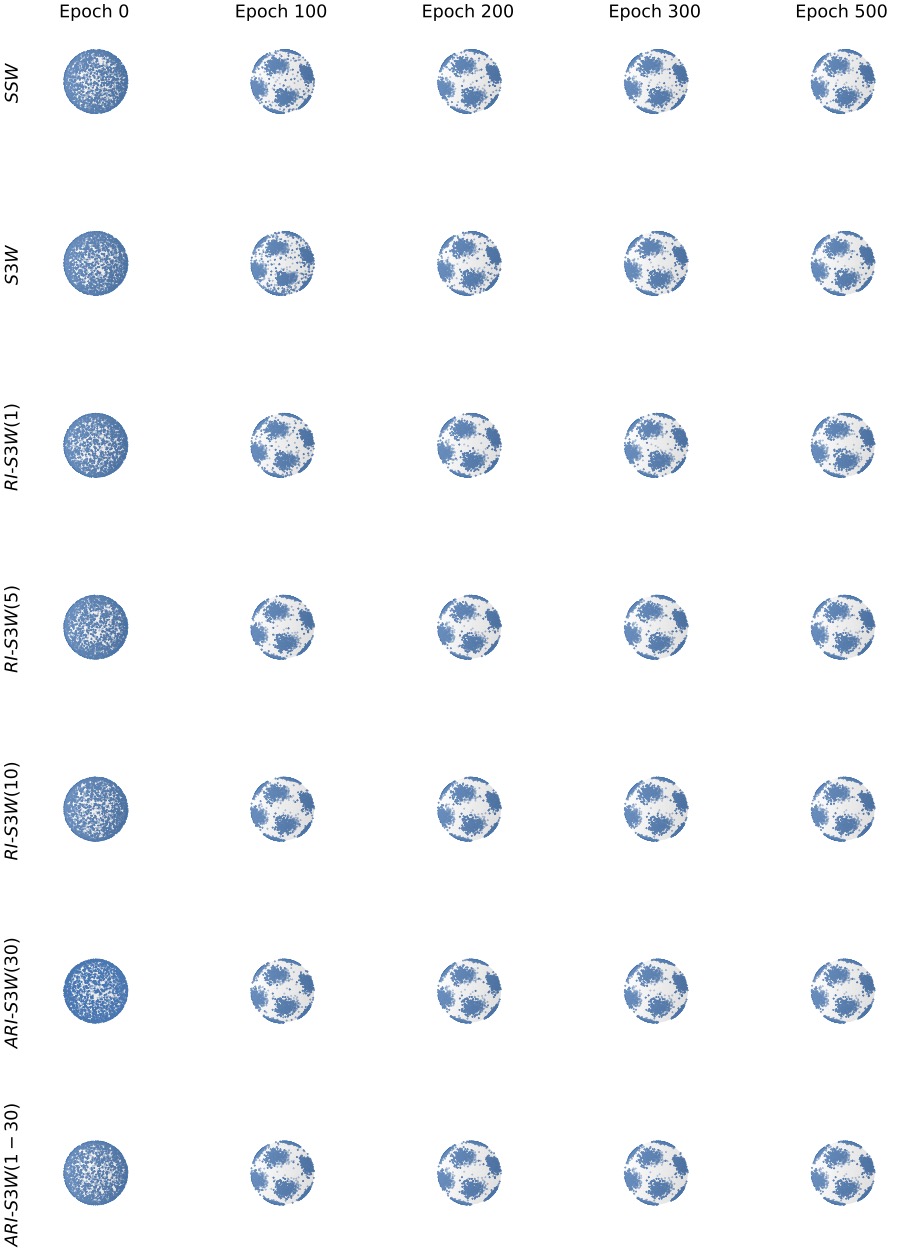}
    \caption{Particle Evolution for mini-batch projected gradient descent.}
\end{figure}

\clearpage

\subsection{Study: Stability of Metrics w.r.t. $\epsilon$-cap} \label{sec:eps_stability}
While in theory, one only needs to exclude the ``north pole'' from the stereographic projection (SP) operation, the image of points near the north pole under SP can be made arbitrarily large in norm. Numerically, this may result in overflows and make the optimization in our ML applications unstable. Hence, we fix an $\epsilon$-cap for SP which ensures the norm of projected samples are bounded; in particular, any point such that \(x_{d+1} > 1 - \epsilon\) is first mapped to the circle \(x_{d+1} = 1 - \epsilon\) prior to the SP operation. 

In this section, we provide additional analysis of the effect of this $\epsilon$-cap on the proposed metrics. We generate two vMFs (target at the North pole and source at the South pole) with $\kappa=50$ and $N=2048$ samples each. We then compute each of the proposed metrics between the source and target distributions as a function of $\epsilon$. We visualize the results over $100$ runs for each metric in Figure \ref{fig:eps_stability}. We note that it is expected for the behavior of $S3W$ to differ from $RI$-$S3W$ and $ARI$-$S3W$. Importantly, we observe that all of the proposed methods are stable for a wide range of values for $\epsilon$. We observe no notable difference when we vary the number of projections $L \in \{128,512\}$.

\begin{figure}[H]
    \centering
    \includegraphics[width=0.8\linewidth]{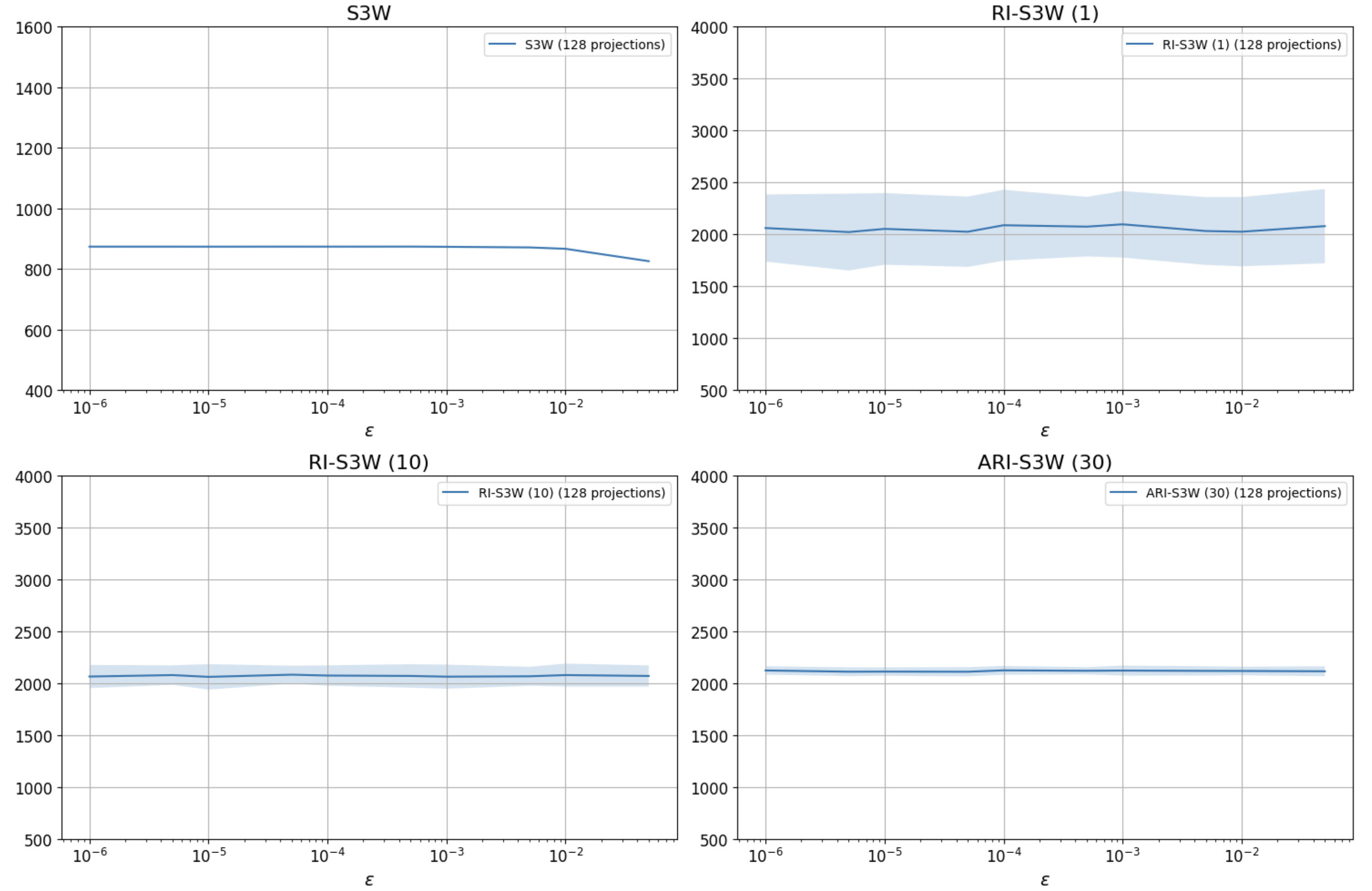}
    \caption{Stability of each metric w.r.t. $\epsilon$. Here, we use $N=2048$ samples of each distribution and fix $L=128$ projections.}
    \label{fig:eps_stability}
\end{figure}

\clearpage
\subsection{Study: Evolution of the $S3W$ Distance}
\label{section:evolution}
\subsubsection{Experiment:  Evolution of Metrics w.r.t. Iteration and Runtime}
\label{subsec:evo_loss_curve}
\textbf{Implementation.} In this experiment, we aim to compare the evolution of different distances when used as a loss for gradient optimization. We randomly initialize a uniform source distribution $\hat{\mu} \sim \text{Unif}(\mathbb{S}^2)$ of $2400$ samples. The objective is to learn a mixture $\hat{\nu}$ of $12$ vMFs similar to Section \ref{subsec:gf}, also consisting of $2400$ samples. We use the Adam optimizer for projected gradient descent (see Section \ref{section:gf}) with a learning rate of $\gamma=0.01$ for all distances. The $SSW$ baseline \cite{bonet2022spherical} will have an additional run with learning rate $\gamma' = 0.05$.

\textbf{Results.} Figure \ref{fig: gf_main} (in the main paper) shows comparative performance for: SSW, $S3W$, $RI\text{-}S3W$ (1 random rotation), $RI\text{-}S3W$ (5 random rotations), $ARI\text{-}S3W$ (30 random rotations, pool size of $1000$). We show the evolution of the $\log(W_2)$ loss w.r.t. the cumulative runtime. We observe that at learning rate of $0.01$, $SSW$ evolves nicely and starts converging around epoch $500$, which is similar in performance with $RI\text{-}S3W$ (1) and $RI\text{-}S3W$ (5), albeit an order of magnitude slower. For a learning rate of $0.05$, $SSW$ begins to display alternating behaviors but converges similarly to the case where learning rate is $0.01$. $S3W$ is the fastest but slightly underperform other distances. $ARI\text{-}S3W$ (30) converges to the lowest loss and twice at fast as $SSW$ for both learning rates. 

\subsubsection{Experiment: Evolution of Metrics w.r.t. \texorpdfstring{$\kappa$}{kappa} for Varying Dimensions}

\begin{figure}[H]
    \vspace{-.3in}
    \centering
    \hspace*{\fill}
    \subfloat[$\text{KL}(\text{vMF}(\mu,\kappa) \| \text{vMF}(\cdot,0))$]{\includegraphics[width=0.4\columnwidth]{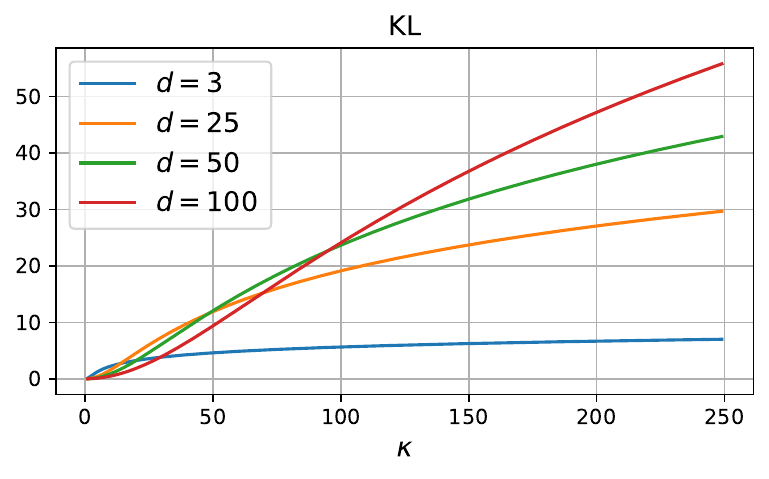}} \hfill 
    \subfloat[$SSW_2(\text{vMF}(\mu,\kappa) \| \text{vMF}(\cdot,0))$]{\includegraphics[width=0.4\columnwidth]{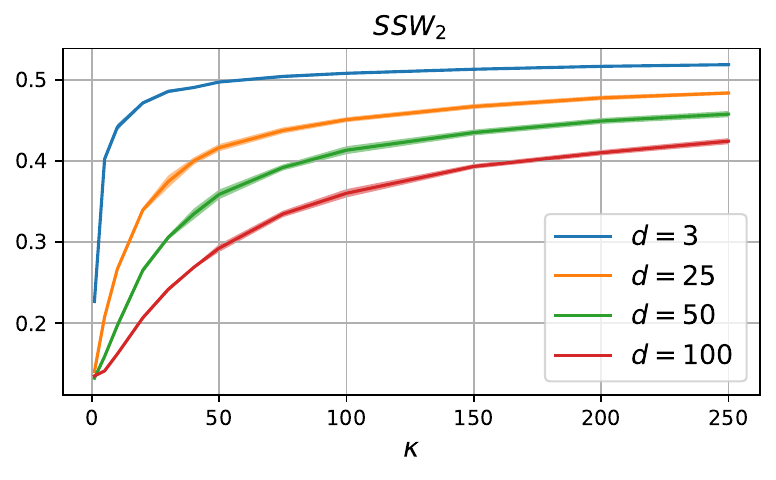}}
    \hspace*{\fill}
    \\
    \subfloat[$S3W_2(\text{vMF}(\mu,\kappa) \| \text{vMF}(\cdot,0))$]{\includegraphics[width=0.33\columnwidth]{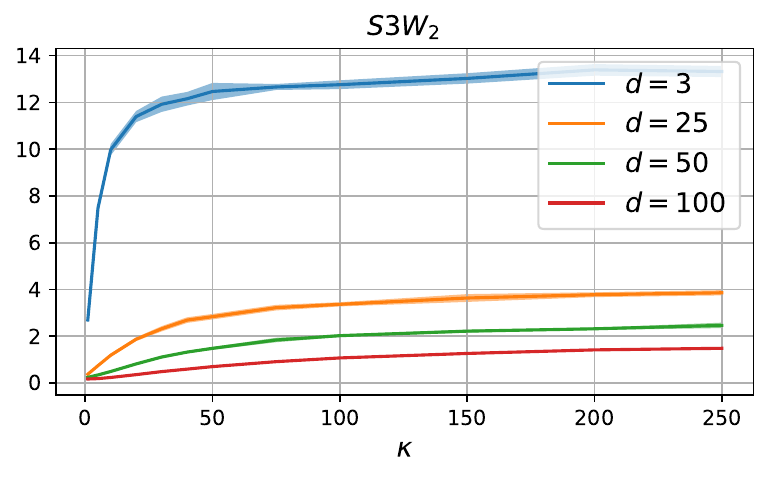}} \hfill 
    \subfloat[$RI\text{-}S3W_2(\text{vMF}(\mu,\kappa) \| \text{vMF}(\cdot,0))$]{\includegraphics[width=0.33\columnwidth]{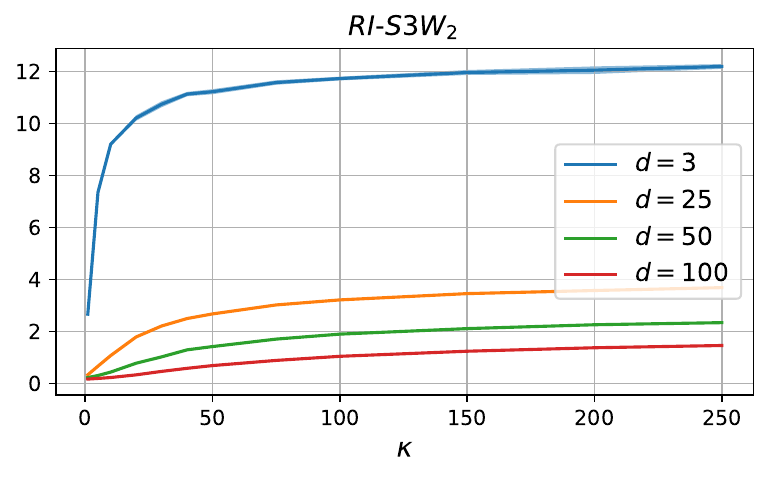}} \hfill
    \subfloat[$ARI\text{-}S3W_2(\text{vMF}(\mu,\kappa) \| \text{vMF}(\cdot,0))$]{\includegraphics[width=0.33\columnwidth]{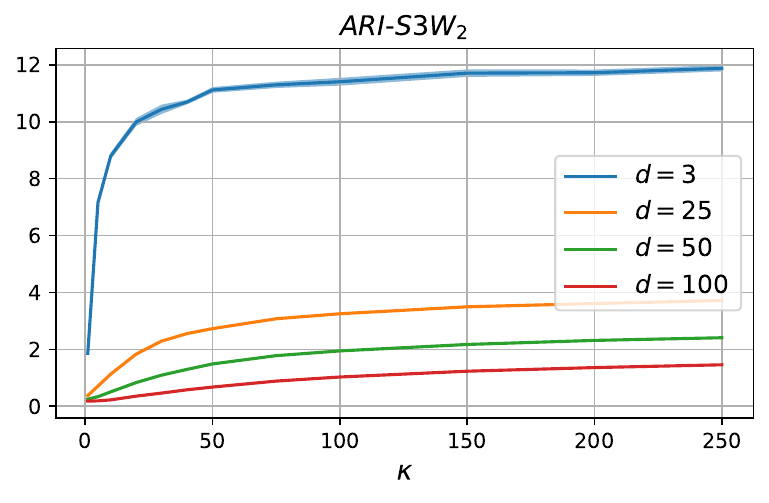}}
    
    \caption{Evolution between $\text{vMF}(\mu, \kappa)$ and $\text{vMF}(\cdot, 0)$ w.r.t. $\kappa$ for varying dimension. We use $L=200$ projections for all sliced metrics, $N_R=100$ rotations for $RI\text{-}S3W$ and $ARI$-$S3W$, and pool size of $1000$ for $ARI$-$S3W$. Each distribution has $500$ samples. For each $\kappa \in \{1,5,10,20,30,40,50,75,100,150,200,250\}$ we average each metric over 10 iterations.}
    \label{fig:evo_d_kappa}
\end{figure}

We illustrate in Figure \ref{fig:evo_d_kappa} the evolution of KL divergence, $SSW$, $S3W$, $RI$-$S3W$, and $ARI\text{-}S3W$ between $\text{vMF}(\mu, \kappa)$ and $\text{vMF}(\cdot, 0)$ w.r.t. $\kappa$ for varying dimension. Just as \cite{bonet2022spherical} found that $SSW$ gets lower with the dimension contrary to KL divergence, we find that $S3W$, $RI$-$S3W$, and $ARI\text{-}S3W$ follow a similar trend.

Here we use the analytic form for the KL divergence between the von Mises-Fisher distribution and the uniform distribution on $\mathbb{S}^d$ as derived in \cite{davidson2018hyperspherical, xu2018spherical}:
\begin{equation}
\begin{aligned}
    \mathrm{KL}\big(\mathrm{vMF}(\mu,\kappa)||\mathrm{vMF}(\cdot,0)\big) &= \kappa \frac{I_{(d+1)/2}(\kappa)}{I_{(d+1)/2-1}(\kappa)} + \left(\frac{d+1}{2}-1\right)\log \kappa \\
    &\quad - \frac{d+1}{2} \log(2\pi) - \log I_{(d+1)/2-1}(\kappa) \\
    &\quad + \frac{d+1}{2}\log \pi + \log 2 -\log \Gamma\left(\frac{d+1}{2}\right),
\end{aligned}
\end{equation}

where $I_v(\cdot)$ is the modified Bessel function of the first kind with order $v$, and $\Gamma(\cdot)$ is the gamma function.

\clearpage
\subsubsection{Experiment: Evolution of Metrics Between Rotated vMFs Distributions}

Similar to \cite{bonet2022spherical}, we also compare the evolution of $SSW$, $S3W$, $RI$-$S3W$, and $ARI\text{-}S3W$ between a fixed vMF distribution and the rotation of this distribution along a great circle. That is, we compute each metric between $\text{vMF}(\mu_0, \kappa_0)$ and $\text{vMF}((\cos\theta, \sin\theta,0,\dots), \kappa_0)$ for $\theta\in\{\frac{k\pi}{6}\}_{k=0}^{12}$, where $\mu_0 = (1,0,0,\dots)$ and $\kappa_0 = 10$. We observe similar behavior between all metrics, with each distance being maximal between $\text{vMF}(\mu_0, \kappa_0)$ and $\text{vMF}(-\mu_0, \kappa_0)$ (corresponding to $\theta = \pi$).

\begin{figure}[H]
    \vspace{-0.15in}
    \centering
    \hspace*{\fill}
    \subfloat[$SSW$]{\includegraphics[width=0.4\columnwidth]{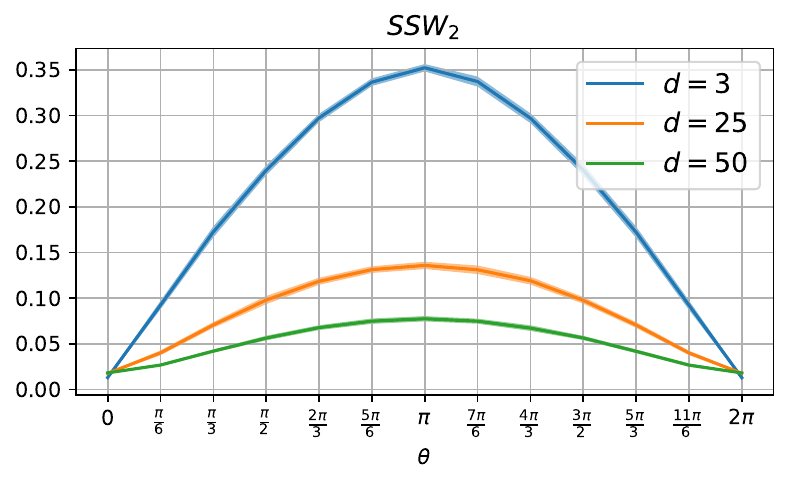}} \hfill 
    \subfloat[$S3W$]{\includegraphics[width=0.4\columnwidth]{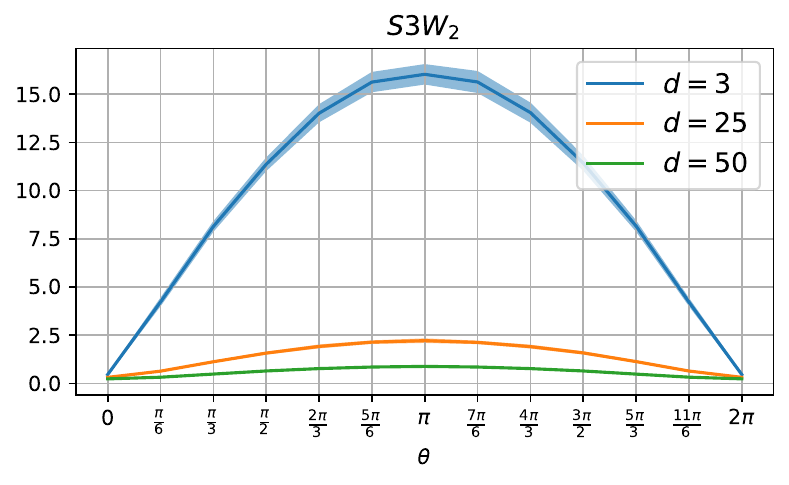}} 
    \hspace*{\fill}
    \\
    \hspace*{\fill}
    \subfloat[$RI\text{-}S3W$]{\includegraphics[width=0.4\columnwidth]{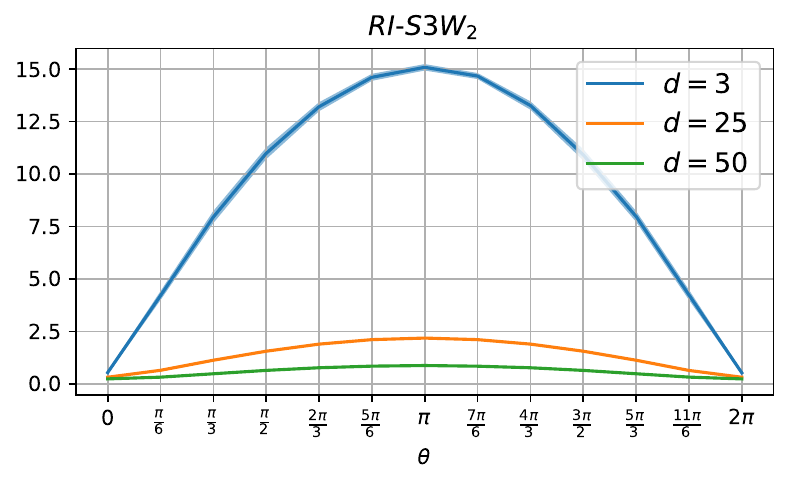}} \hfill  
    \subfloat[$ARI\text{-}S3W$]{\includegraphics[width=0.4\columnwidth]{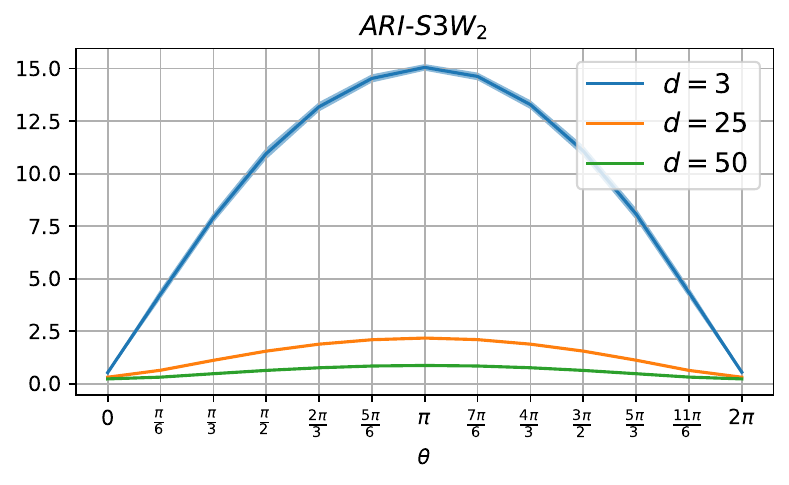}} \hspace*{\fill}
    \caption{Evolution between vMF distributions on $\mathbb{S}^{d-1}$. Here we use $L=200$ projections for all metrics, $N_R=100$ rotations for $RI\text{-}S3W$ and $ARI$-$S3W$, and a pool size of $1000$ for $ARI$-$S3W$. We average each metric over 100 iterations and use 500 samples for each distribution.} \hfill
    \label{fig:evo_angle}
\end{figure}

\subsubsection{Experiment: Evolution of \texorpdfstring{$S3W$}{S3W} w.r.t. Number of Slices and Number of Rotations}

We perform several experiments to understand how $S3W$, $RI\text{-}S3W$, and $ARI$-$S3W$ evolve w.r.t. the number of slices used, the number of random rotations used, and the pool size used, respectively. In each of the following experiments, we measure the distance between a source uniform distribution $\text{vMF}(\cdot, 0)$ and a target von Mises-Fisher distribution $\text{vMF}(\mu, \kappa)$, where we sample 500 points from each distribution.

Figure \ref{fig:evol_projections} demonstrates that beyond $L=100$ projections, variance becomes negligible across different dimensions and values of $\kappa$. Figure \ref{fig:evol_rot} demonstrates that $RI\text{-}S3W$ is generally stable for $R=10$ rotations across varying dimensions and $\kappa$. Finally, Figure \ref{fig:evol_pool} demonstrates that variance is stable beyond a pool size of $20$. In all plots, we average the metric over 20 iterations.  

\begin{figure}[H]
    \centering
    \hspace*{\fill}
    \subfloat{\includegraphics[width=0.45\columnwidth]{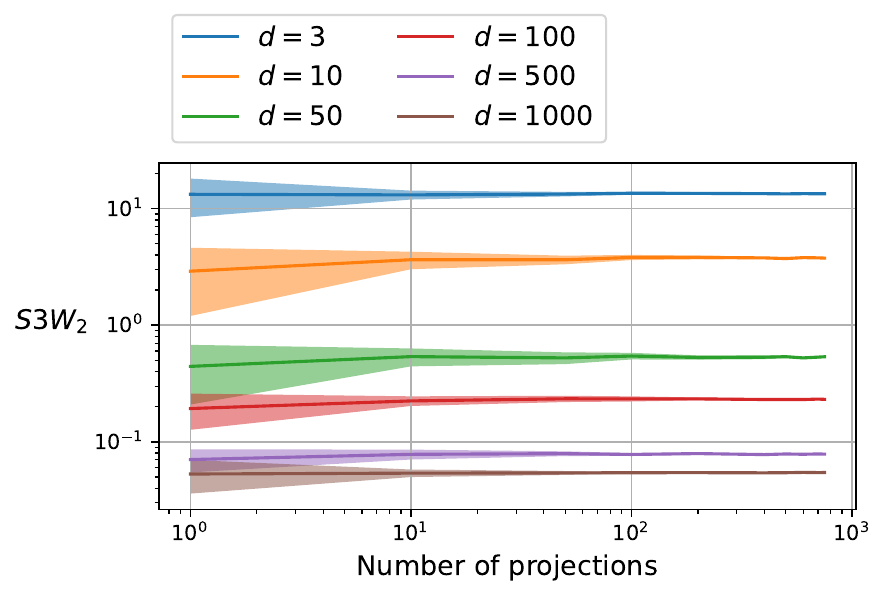}} \hfill 
    \subfloat{\includegraphics[width=0.45\columnwidth]{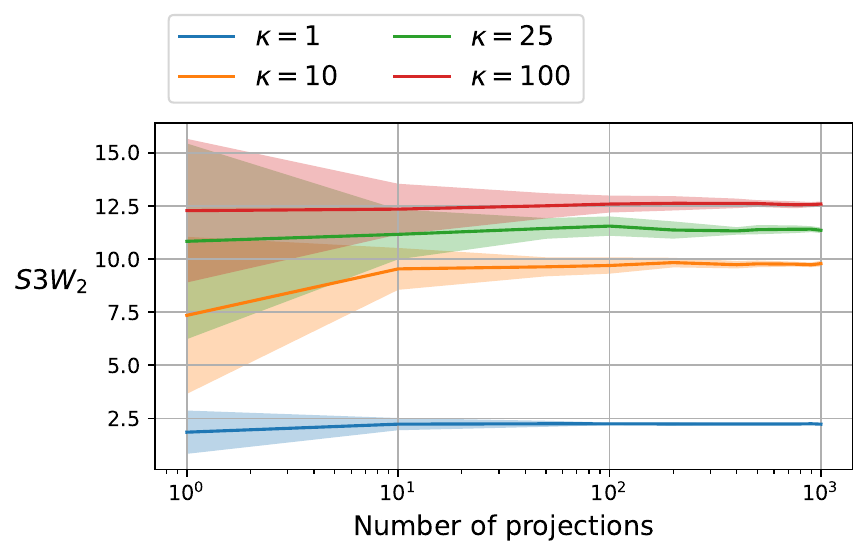}}
    \hspace*{\fill}
    \caption{Evolution of $S3W$ between the source and target vMFs w.r.t. number of projections used. For the plot on the left, we fix $\kappa = 10$ for the target distribution, and for the plot on the right we use $d=3$.} \hfill
    \label{fig:evol_projections}
    \vspace{-10mm}
\end{figure}

\begin{figure}[H]
    \centering
    \hspace*{\fill}
    \subfloat{\includegraphics[width=0.45\columnwidth]{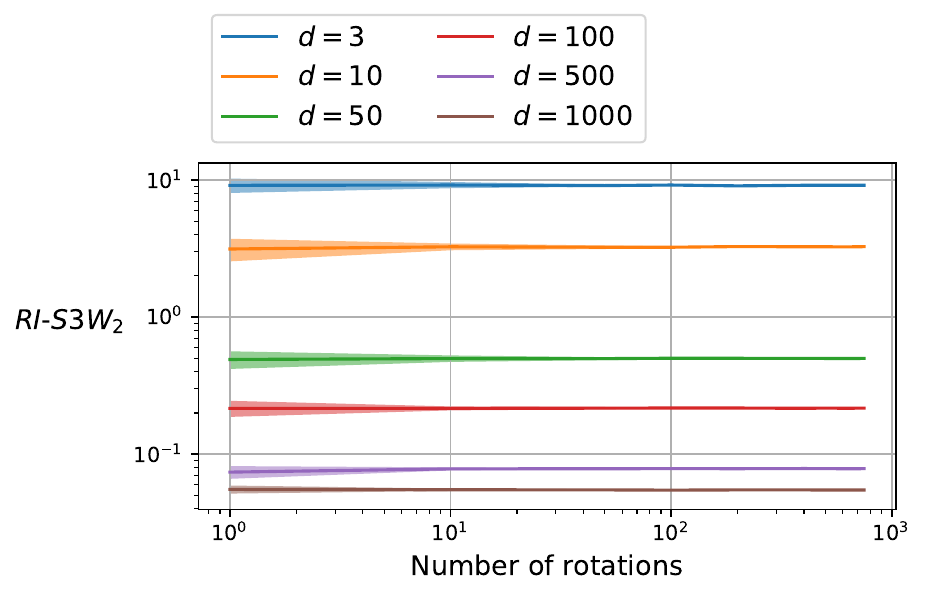}} \hfill 
    \subfloat{\includegraphics[width=0.45\columnwidth]{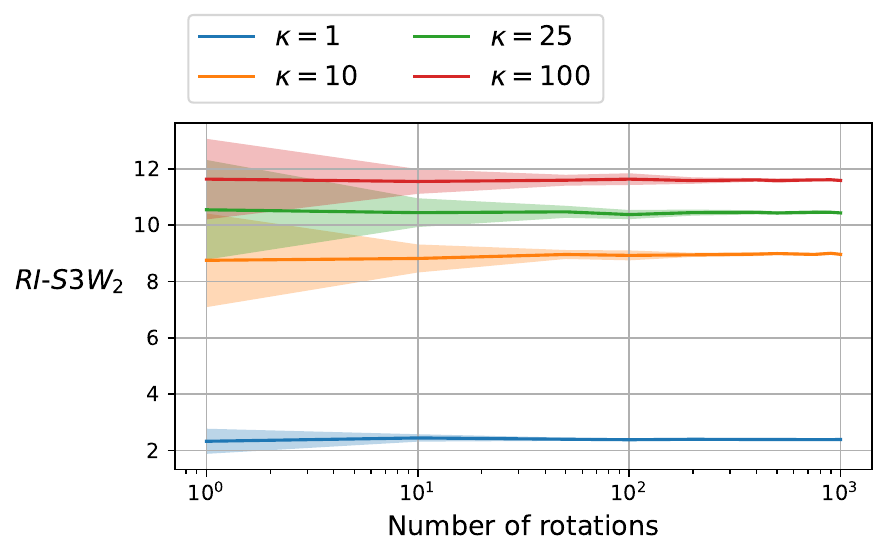}}
    \hspace*{\fill}
    \caption{Evolution of $RI$-$S3W$ between the source and target vMFs w.r.t. number of rotations used. We fix $L=10$ projections. For the plot on the left, we fix $\kappa = 10$ for the target distribution, and for the plot on the right we use $d=3$.} \hfill
    \label{fig:evol_rot}
    \vspace{-10mm}
\end{figure}

\begin{figure}[H]
    \centering
    \hspace*{\fill}
    \subfloat{\includegraphics[width=0.45\columnwidth]{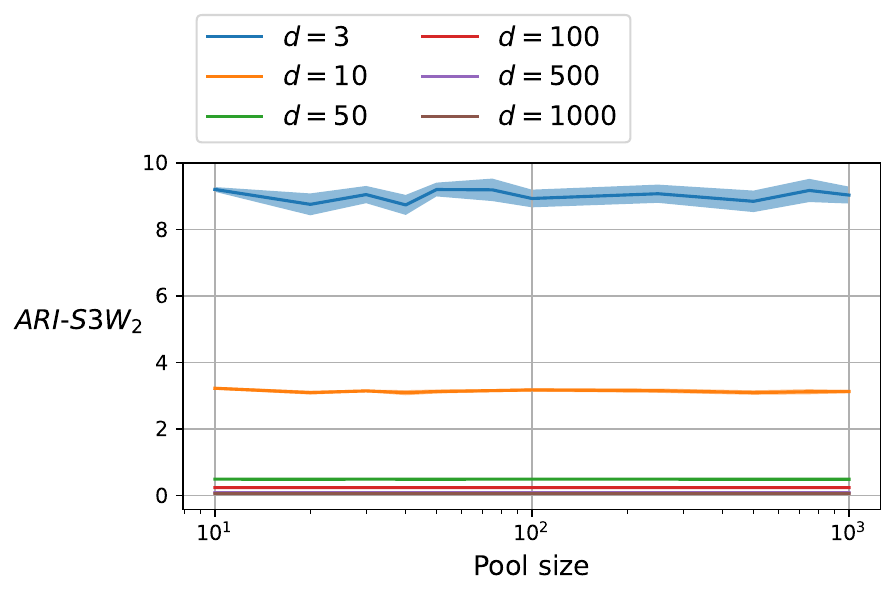}} \hfill 
    \subfloat{\includegraphics[width=0.45\columnwidth]{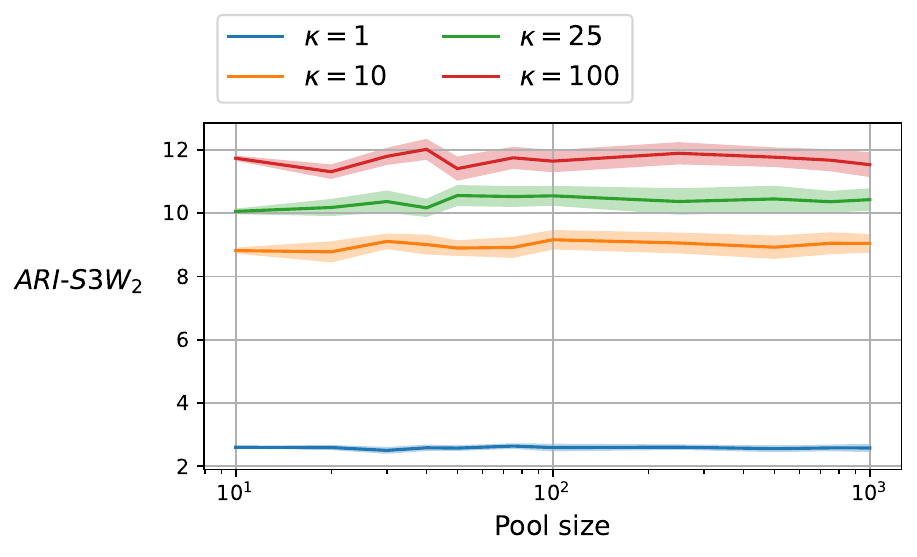}}
    \hspace*{\fill}
    \caption{Evolution of $ARI$-$S3W$ between the source and target vMFs w.r.t. pool size used. We fix $L=10$ projections and $N_R=10$ rotations. For the plot on the left, we fix $\kappa = 10$ for the target distribution, and for the plot on the right we use $d=3$.} \hfill
    \label{fig:evol_pool}
\end{figure}

\clearpage
\subsubsection{Experiment: Sample Complexity of Metrics}

We perform a series of experiments to better understand the sample complexity of $S3W$, $RI$-$S3W$, and $ARI$-$S3W$. In application, since we employ empirical probability measures to approximate their continuous versions, we seek to understand how each of the proposed metrics evolves as we vary the number of supports in the empirical distributions.

In each of the following experiments, we measured the proposed distances between $N$ samples from two von Mises-Fisher distributions, centered at the north and south pole of $\mathbb{S}^d$ with $\kappa = 10$. We compute the distance as a function of the number of samples for $d \in \{3, 10, 50, 100, 500, 1000\}$, repeating each computation over 50 iterations. The resulting average and standard deviation are visualized in Figure \ref{fig:sample_complexity}.

\begin{figure}[H]
    \centering
    \hspace*{\fill}
    \subfloat[$S3W$]{\includegraphics[width=0.33\columnwidth]{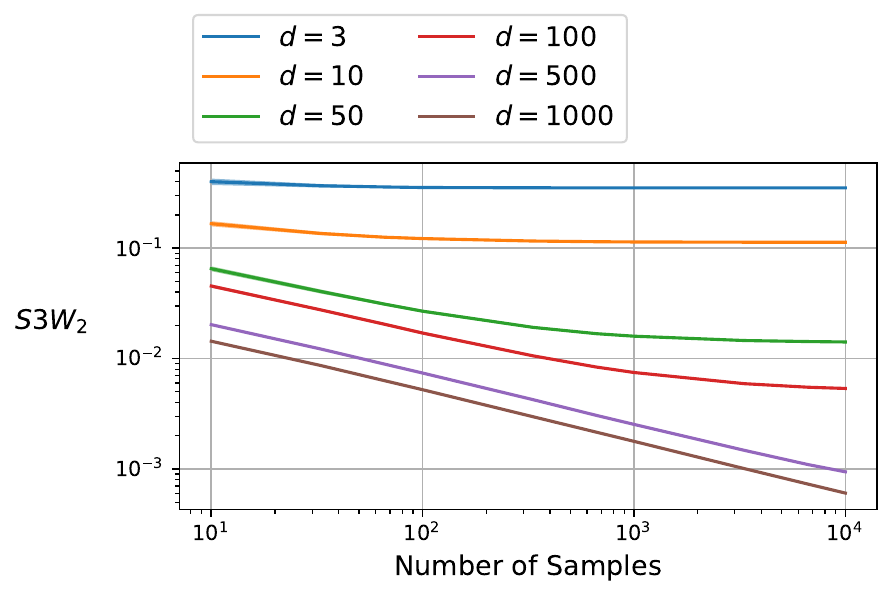}}
    \subfloat[$RI\text{-}S3W$]{\includegraphics[width=0.33\columnwidth]{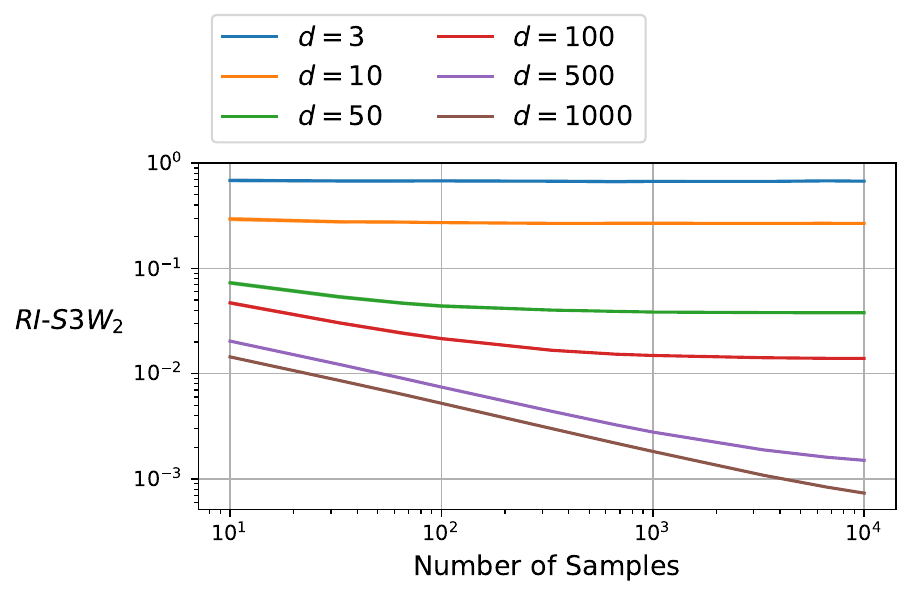}} 
    \hspace*{\fill}
    \subfloat[$ARI\text{-}S3W_2$]{\includegraphics[width=0.33\columnwidth]{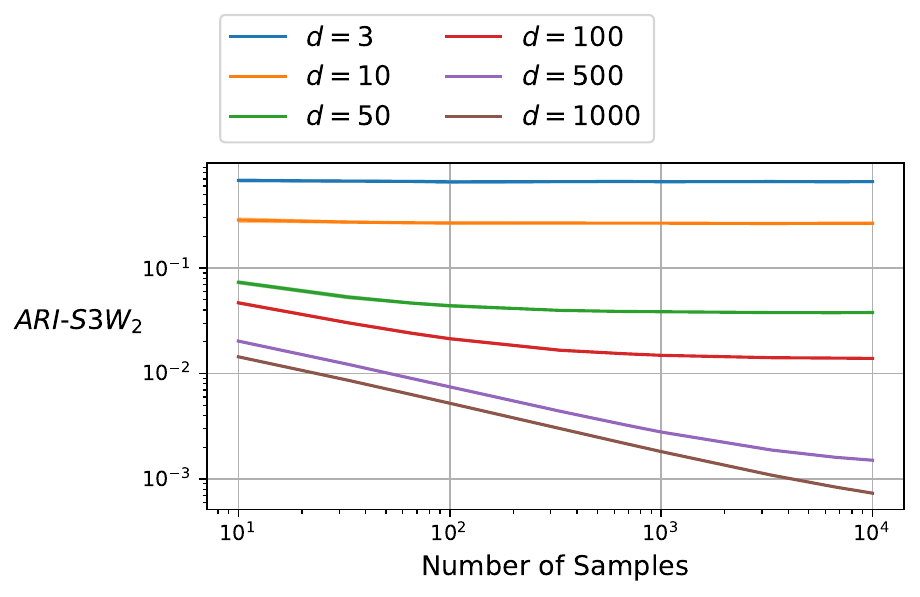}}
    \hspace*{\fill}
    
    \caption{Sample complexity of $S3W$, $RI$-$S3W$, and $ARI$-$S3W$. Distances are computed between two vMF distributions on $\mathbb{S}^d$. We use $L=1000$ projections for all metrics, $N_R=10$ rotations for $RI$-$S3W$ and $ARI$-$S3W$, and a pool size of 100 for $ARI$-$S3W$. Each distance is measured over 50 iterations.}
    \label{fig:sample_complexity}
\end{figure}

\clearpage
\subsection{Study: Runtime Analysis}
\label{section:runtime}
In this section, we provide additional runtime analysis of the proposed method w.r.t. varying parameters. Figure \ref{fig:runtime} illustrates a comparative analysis of the runtime for calculating $S3W$ and $RI\text{-}S3W$  as we vary the number of projections, samples, and rotations, across different dimensions $d$ of the data. In each of these experiments, we measure the distance between a source uniform distribution $\text{vMF}(\cdot, 0)$ and a target von Mises-Fisher distribution $\text{vMF}(\mu, 10)$. The figure indicates a linear or quasi-linear relationship between the runtime and each parameter. Moreover, in Figure \ref{fig:pool_gen_time} we illustrate the time required to generate a pool of rotation matrices in $\mathrm{SO}(d+1)$ as we vary the size of the pool and the dimension $d$. As expected, the runtime is linear in the size of rotation pool.

\begin{figure}[H]
    \centering
    \hspace*{\fill}
    \subfloat[]{\label{fig:runtime_a}\includegraphics[width=0.33\columnwidth]{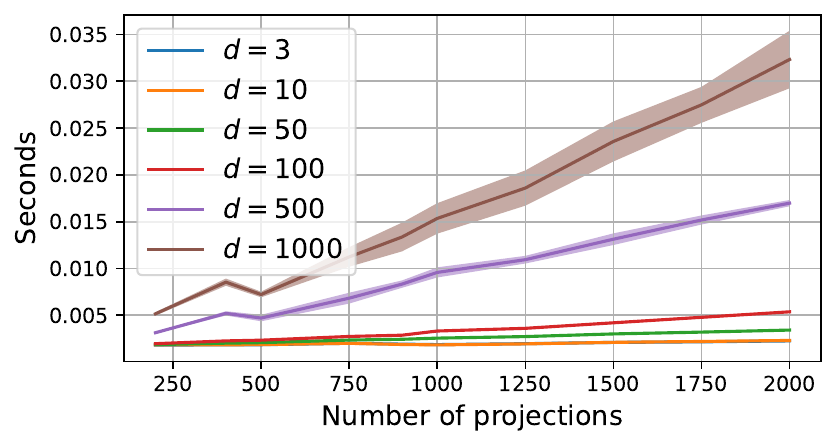}} \hfill 
    \subfloat[]{\label{fig:runtime_b}\includegraphics[width=0.33\columnwidth]{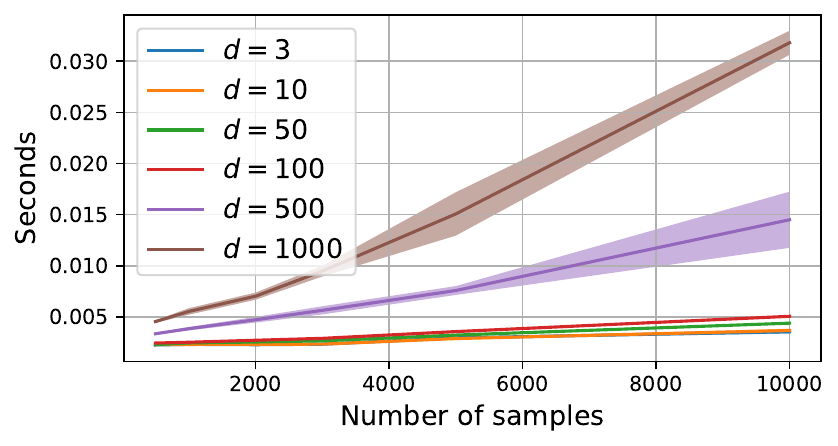}} \hfill
    \subfloat[]{\label{fig:runtime_c}\includegraphics[width=0.33\columnwidth]{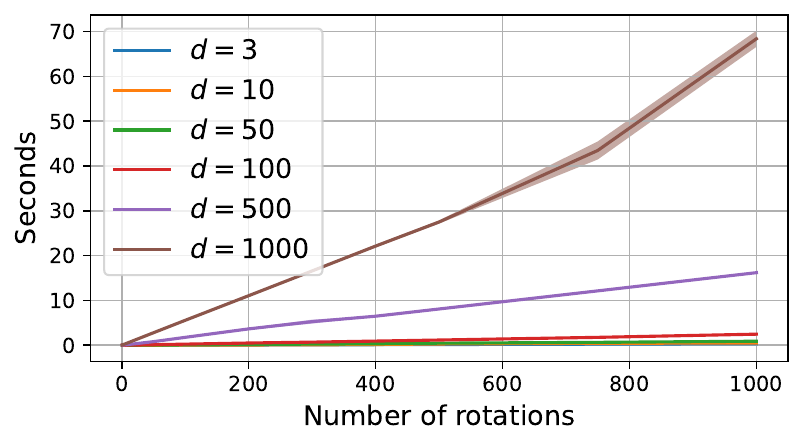}} \hfill
    \caption{Runtime w.r.t. varying number of projections, samples, and rotations, respectively. In \ref{fig:runtime_a}, \ref{fig:runtime_c} we use $N=500$ samples of each distribution; in \ref{fig:runtime_b},\ref{fig:runtime_c} we fix $L=100$ projections; and in \ref{fig:runtime_a},\ref{fig:runtime_b} we do not use any rotations.}
    \label{fig:runtime}
\end{figure}

\begin{figure}[H]
    \centering
    \includegraphics[width=0.33\columnwidth]{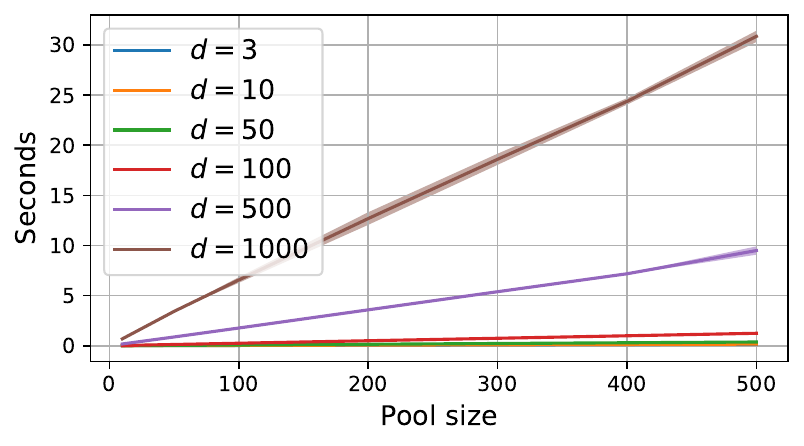}
    \caption{Runtime to generate pool of rotation matrices of varying sizes and dimension.}
    \label{fig:pool_gen_time}
\end{figure}

\subsection{Task: Density Estimation with Normalizing Flows}
\label{section:de}
\subsubsection{Background Overview}
\textbf{Normalizing flows} offer a framework for constructing complex probability distributions from simple ones through invertible transformations. It works by restricting the model class such that the likelihood function can be evaluated via simple sampling. Suppose we have access to the samples $\{x_i \sim \mu\}_{i=1}^n$ where $\mu$ is our target distribution. Let $p_Z$ be a base distribution over the latent $z$ (i.e. a Gaussian). We aim to find an invertible transformation $T$ such that the pushforward $T_{\#}p_{\mu}=p_Z$. Then, the density $p_{\mu}$ can be obtained through the change of variable formula

\begin{equation}
    p_{\mu}(x) = p_Z(T(x))|\det J_{T}(x)|,
\end{equation}

where $|\text{det}J_{T}(x)|$ denotes the determinant of the Jacobian of $T$ at $x$.

More concretely, consider the dataset $\mathcal{D}_{\text{train}} = \{x_1,...,x_n\}$ sampled i.i.d from the empirical measure $\hat{\mu}=\frac{1}{n}\sum_{i=1}^{n}\delta_{x_i}$ where $\delta_{x_i}$ is the Dirac delta at $x_i$. The log likelihood for training the neural network is given by

\begin{equation}
    \log p(\mathcal{D_{\text{train}}}|w) = \sum_{i=1}^n \log p_X(x_i|w) = \sum_{i=1}^n \left \{ \log p_Z(g(x_i,w)) + \log |\det J(x_i)| \right \}
\end{equation}

Here, $T^{-1}$ represents the inverse transformation of $T$, and $|\det J(x_i)|$ is the determinant of the Jacobian of $T^{-1}$ at $x_i$.

\noindent\textbf{Real NVP and Stereographic Projection-based Normalizing Flows with Real NVP:}

Real NVP (Real-valued Non-Volume Preserving) \cite{dinh2016density} uses a series of invertible layers to transform the base distribution into a more complex one. Each layer, known as a coupling layer, is designed such that its Jacobian is easy to compute. Specifically, given an input vector $x \in \mathbb{R}^m$, the coupling layer splits it into two parts $x_1 \in \mathbb{R}^{m_1}$ and $x_2 \in \mathbb{R}^{m_2}$ such that $m_1 + m_2 = m$. The transformation applied to $x_2$ is then conditioned on $x_1$ and defined by

\begin{equation}
    x'_2 = \text{exp}(s(x_1)) \odot x_2 + t(x_1),
\end{equation}

where $s: \mathbb{R}^{m_1} \rightarrow \mathbb{R}^{m_2}$ and $t: \mathbb{R}^{m_1} \rightarrow \mathbb{R}^{m_2}$ are scale and translation functions realized through neural networks. The operation $\odot$ denotes the Hadamard product. The key advantage of Real NVP is the traceability of both its inverse and the Jacobian determinant, that is

\begin{equation}
    \log |\det J_T(x)| = \sum s(x_1)
\end{equation}

which makes it a good candidate for high-dimensional density estimation.

To generalize to $\mathbb{S}^d$, \cite{gemici2016normalizing} proposes using a transformation $T$ composed of a stereographic projection $\phi$ (see section \ref{sec:background}) followed by a real NVP flow $f$, denoted as $T=f\circ \phi$. The log density of the target distribution under this transformation is given by

\begin{equation}
\log p(x) = \log p_Z(z) + \log |\det J_f(z)| - \frac{1}{2} \log |\det (J_T \phi^{-1} J_{\phi^{-1}}(\phi(x)))|,
\end{equation}

where $p_Z$ is the density of a prior on $\mathbb{R}^d$, typically a standard Gaussian, and $J_f(z)$ is the Jacobian of the Real NVP flow. The log det term accounts for the change in volume due to the stereographic projection $\phi$ and its inverse $\phi^{-1}$.

\noindent\textbf{Exponential Map and Exponential Map Normalizing Flows:} 

Exponential maps serve as a tool for mapping data residing on curved spaces. In essence, it maps a vector in the tangent space of a point on a manifold $\mathcal{M}$ to the manifold itself. Specifically, for a point $x\in \mathcal{M}$ and a tangent vector $v_x$, the exponential map $\text{exp}_x(v)$ traces a geodesic (i.e. shortest path) on $\mathcal{M}$. For the spherical manifold $\mathbb{S}^d$, it is given by

\begin{equation}
\label{eqn:exp_map}
    \text{exp}_x(v) = x \cos(|v|) + \frac{v}{|v|} \sin(|v|).
\end{equation}

Inspired by this, \cite{rezende2020normalizing} introduces the exponential map normalizing flows, which ensures that the construction of transformations happens directly on the manifold, avoiding the need for intermediate, non-diffeomorphic set mappings to Euclidean spaces (i.e. via the Stereographic projection). In this frame work, the transformation $T$ is defined by applying the exponential map to the projected gradient of a scalar field $\phi(x)$, that is

\begin{equation}
    T(x) = \text{exp}_x(\text{Proj}_x(\nabla \phi(x)))
\end{equation}

where $\text{Proj}_x$ projects $\nabla\phi(x)$ onto the tangent space at $x$, $T_x\mathbb{S}^d$. The scalar field $\phi(x)$ is a convex combination of simple functions:

\begin{equation}
\phi(x) = \sum_{i=1}^K \alpha_i \beta_i e^{\beta_i(x^T \mu_i - 1)},
\end{equation}

with constraints $\alpha_i \geq 0$, $\sum_{i=1}^K \alpha_i \leq 1$, each $\beta_i > 0$, and $\mu_i \in \mathbb{S}^d$. The density update is then computed as 

\begin{equation}
p(T(x)) = \frac{\pi(x)}{\sqrt{\det(E(x)^T J_T(x)^T J_T(x) E(x))}},
\end{equation}

where $J_T(x)$ is the Jacobian of $T$ at $x$, and $E(x)$ forms an orthonormal basis of $T_x\mathbb{S}^d$.

\subsubsection{Experiment: Estimating Density on the Earth Datasets}

\begin{wraptable}{r}{5.5cm} 
    \centering
    \vspace{-5mm}
    \caption{The Earth datasets.}
    \label{tab:earth_datasets}
    \vspace{-2mm}
    \small
    \begin{tabular}{@{}lccc@{}}
        \toprule
        & \textbf{Quake} & \textbf{Flood} & \textbf{Fire} \\ 
        \midrule
        Train & 4284 & 3412 & 8966 \\
        Test & 1836 & 1463 & 3843 \\
        Total & 6120 & 4875 & 12809 \\
        \bottomrule
    \end{tabular}
\end{wraptable}

Our main focus is on $\mathbb{S}^2$ for the purpose of demonstration and visualization. Similar to \cite{bonet2022spherical}, we use the three datasets introduced by \cite{mathieu2020riemannian}, representing the earth's surface as a perfect spherical manifold: volcano eruptions \cite{NOAA2020earthquake}, earthquakes \cite{NOAA2020earthquake}, and floods \cite{Brakenridge2017FloodArchive}. Our objective is to learn a normalizing flows model capable of reporting exact density at any point on the sphere, so that we can predict the likelihood of future events based on past data. 

\begin{figure}[H]
    \centering
    \hspace*{\fill}
    \subfloat[Earthquake]{\includegraphics[width=0.32\columnwidth]{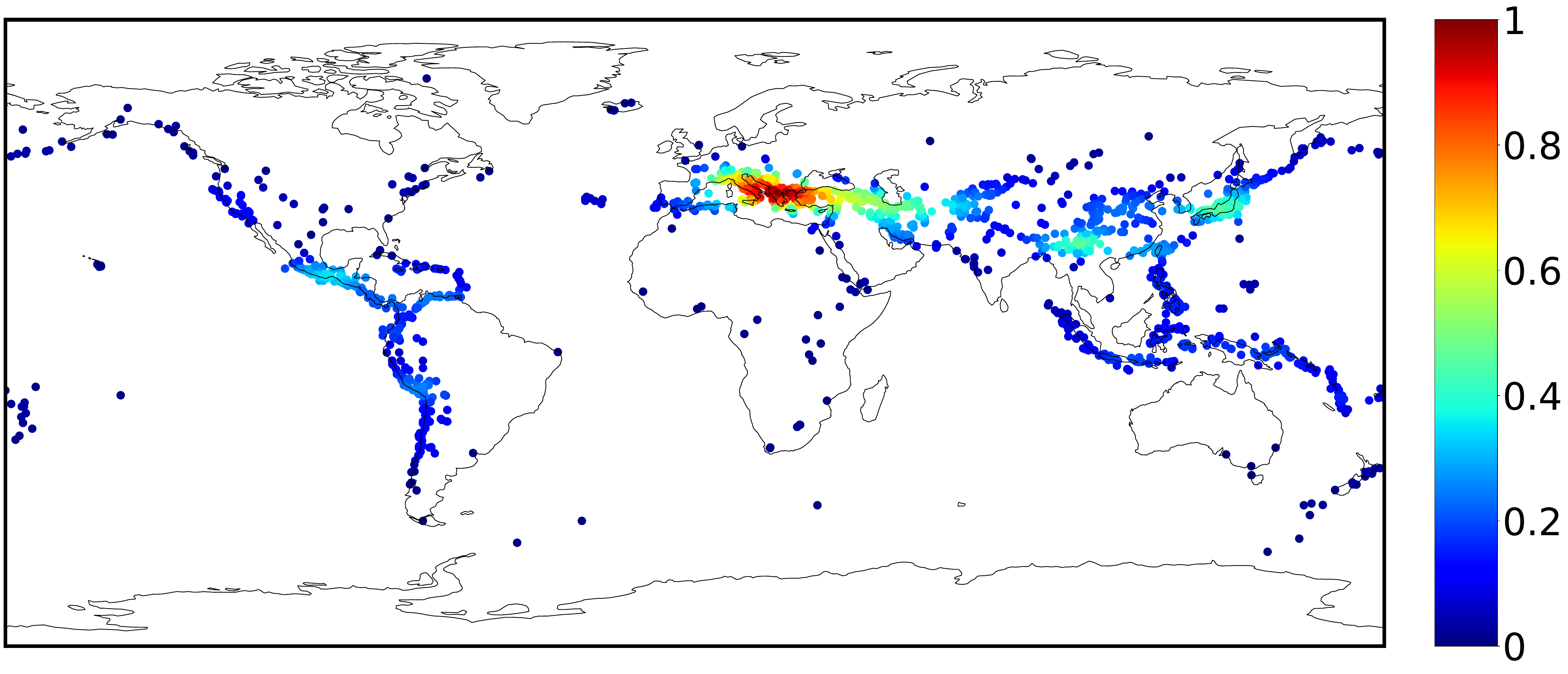}} \hfill 
    \subfloat[Flood]{\includegraphics[width=0.32\columnwidth]{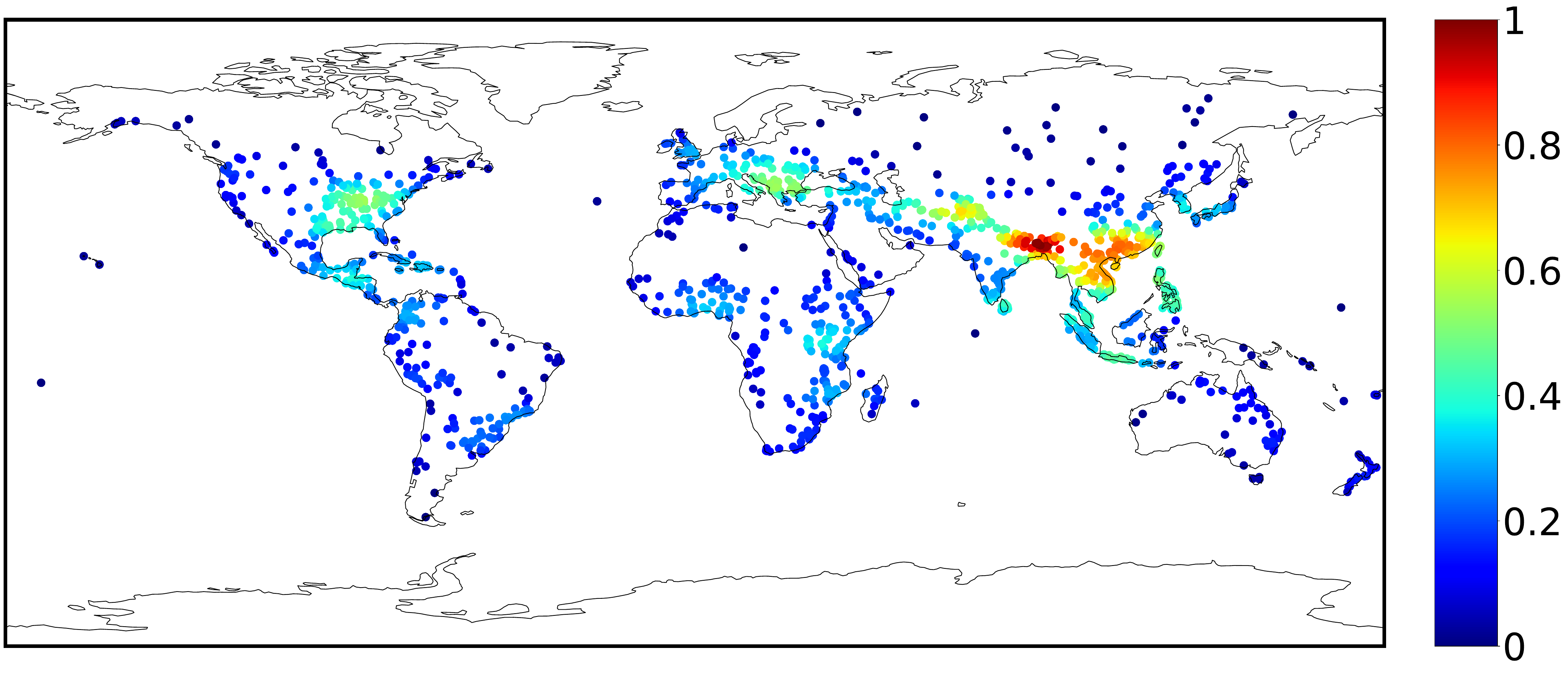}} \hfill
    \subfloat[Fire]{\includegraphics[width=0.32\columnwidth]{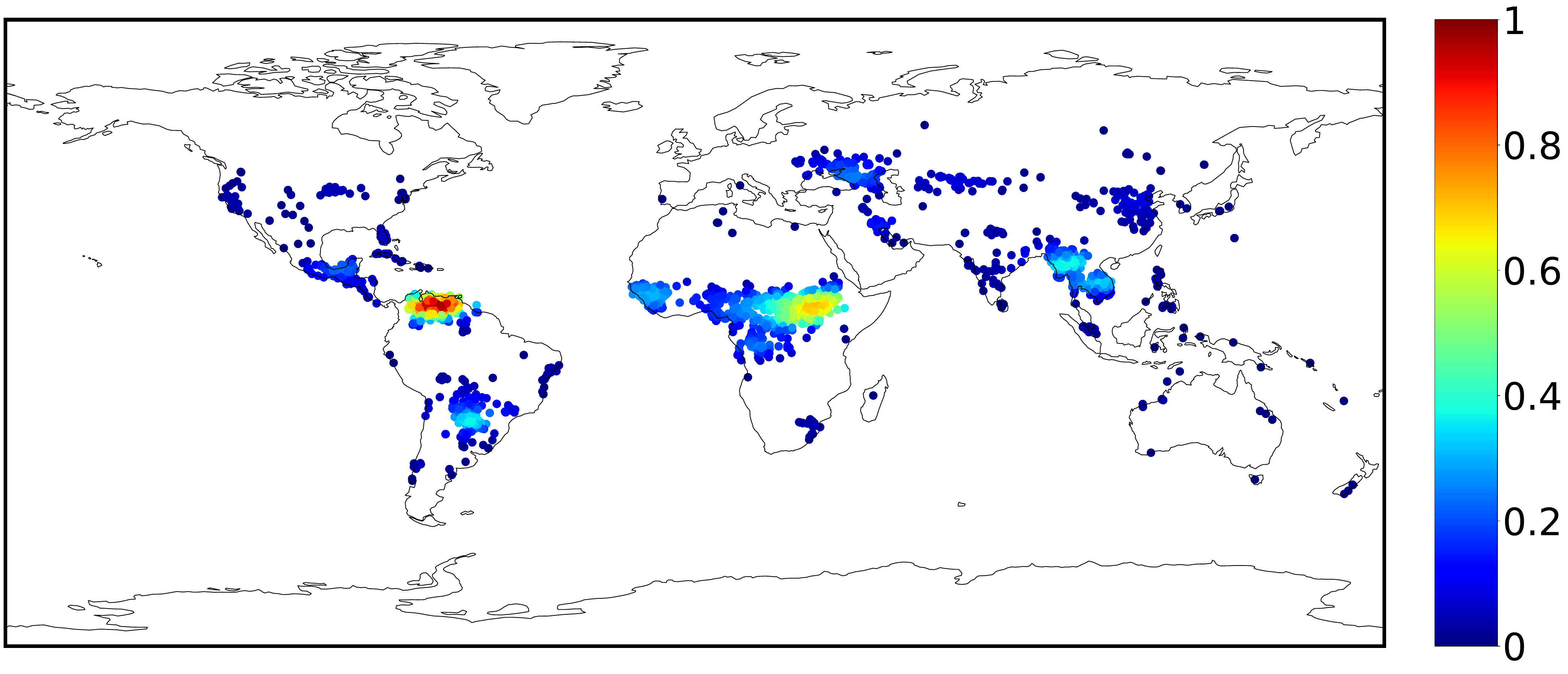}} \hfill
    \caption{Groundtruth as estimated with KDE (bandwidth $0.1$) using test data. Likelihoods are min-max normalized, mapping the smallest value to $0$ and the largest to $1$.} 
\end{figure}

Specifically, we demonstrate two approaches for training a normalizing flow model on spherical data. The first of which follows \cite{gemici2016normalizing}, stereographically projecting the data from $\mathbb{S}^2$ onto $\mathbb{R}^2$ and then training a Real NVP model on $\mathbb{R}^2$. The second utilizes spherical exponential maps \cite{rezende2020normalizing} to train a normalizing flow natively on $\mathbb{S}^2$, using both $SW$ and $SSW$ \cite{bonet2022spherical} as baseline losses and comparing with our proposed S3W. 


\textbf{Implementation.}
For exponential map normalizing flows, we construct the model using $48$ radial blocks, each consisting of $100$ components, totaling $24000$ trainable parameters, similar to \cite{bonet2022spherical}. Each block in the flow applies a transformation defined by a potential function $V$. The model projects the gradient of the potential function onto the sphere's tangent space and then maps it back to the sphere using the spherical exponential map. 

For Stereographic Projection-based normalizing flows (Stereo-NF), we first project the manifold $\mathbb{S}^2$ onto $\mathbb{R}^2$ using the stereographic projection. Then, we train a Real NVP model, consisting of $10$ Real NVP blocks with scaling and translation parameterized by an MLP ($10$ layers of $25$ neurons each, $\text{LeakyReLU}_{0.2}(\cdot)$ activation). This setup follow \cite{bonet2022spherical} and has $27520$ parameters. We train the models for $20000$ epochs each using full batch gradient descent via the Adam optimizer. For exponential map NF, we use the learning rate of $0.1$ for $SSW$ and $SW$ as in the original setup and $0.05$ for our $S3W$and its variants. For Stereo-NF, we use a learning rate of $0.001$.

\textbf{Results.} We provide normalized density maps of events, with the color scale representing the relative density on a scale from $0$ to $1$. We note that this normalization scheme does not change the underlying distribution but allows for an interpretable visualization, showing the relative differences in densities on a fixed scale. We observe that our proposed $S3W$ and its variants put relative density mass more accurately. 

Table \ref{table:earth_density_nll} provides the Negative Log-likelihood values for different distances. Despite having a higher parameter count, the increase in model complexity does not necessarily translate to better performance. This shows that estimating density natively on the sphere is superior for spherical data per this metric.

\clearpage


\begin{figure}[H]
\centering
\subfloat[Earthquake]{\includegraphics[width=0.32\columnwidth]{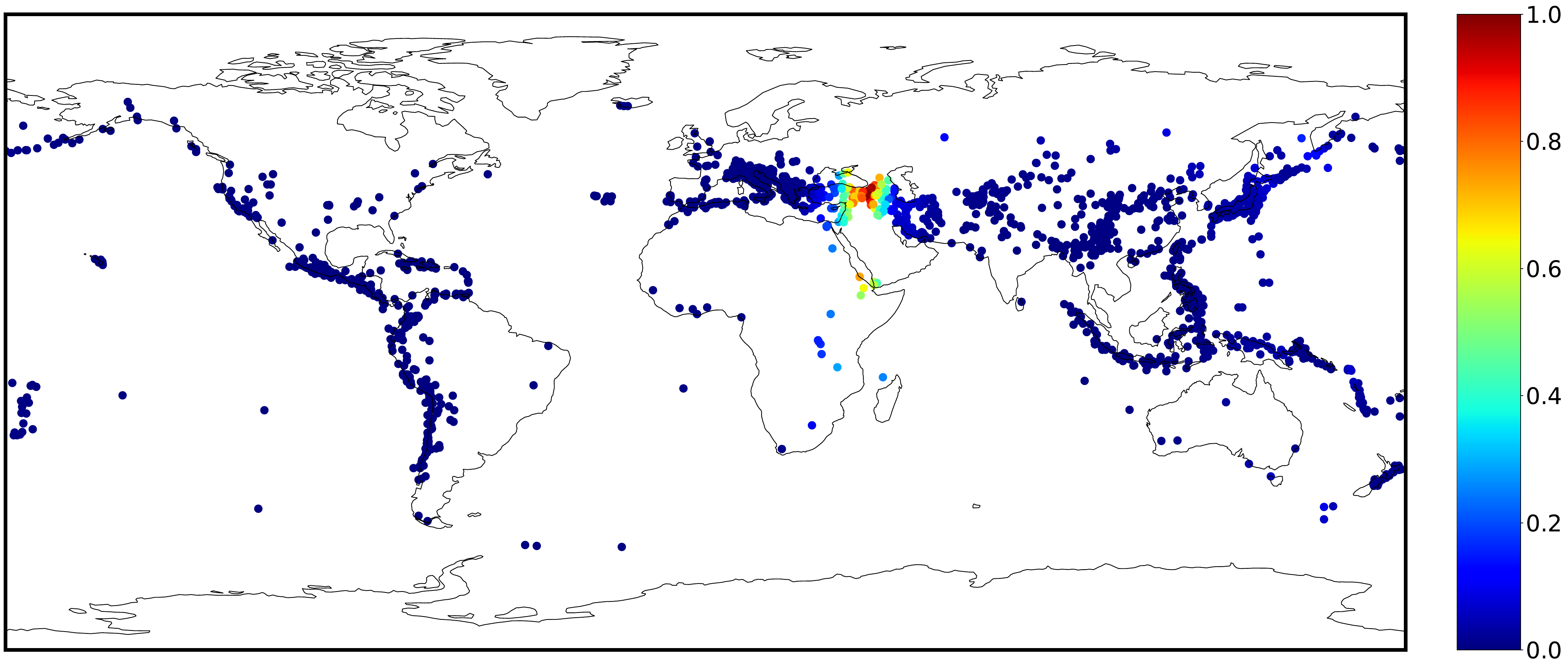}}
\subfloat[Flood]{\includegraphics[width=0.32\columnwidth]{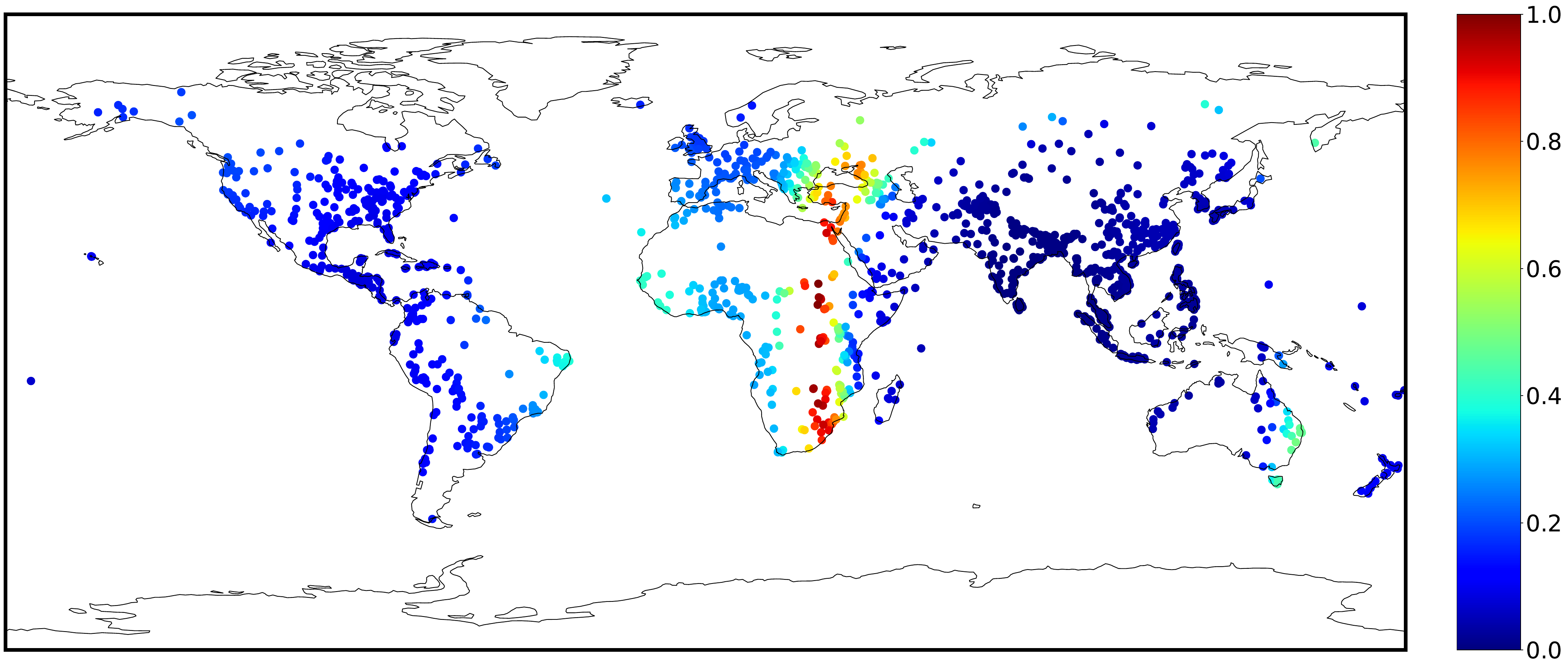}}
\subfloat[Fire]{\includegraphics[width=0.32\columnwidth]{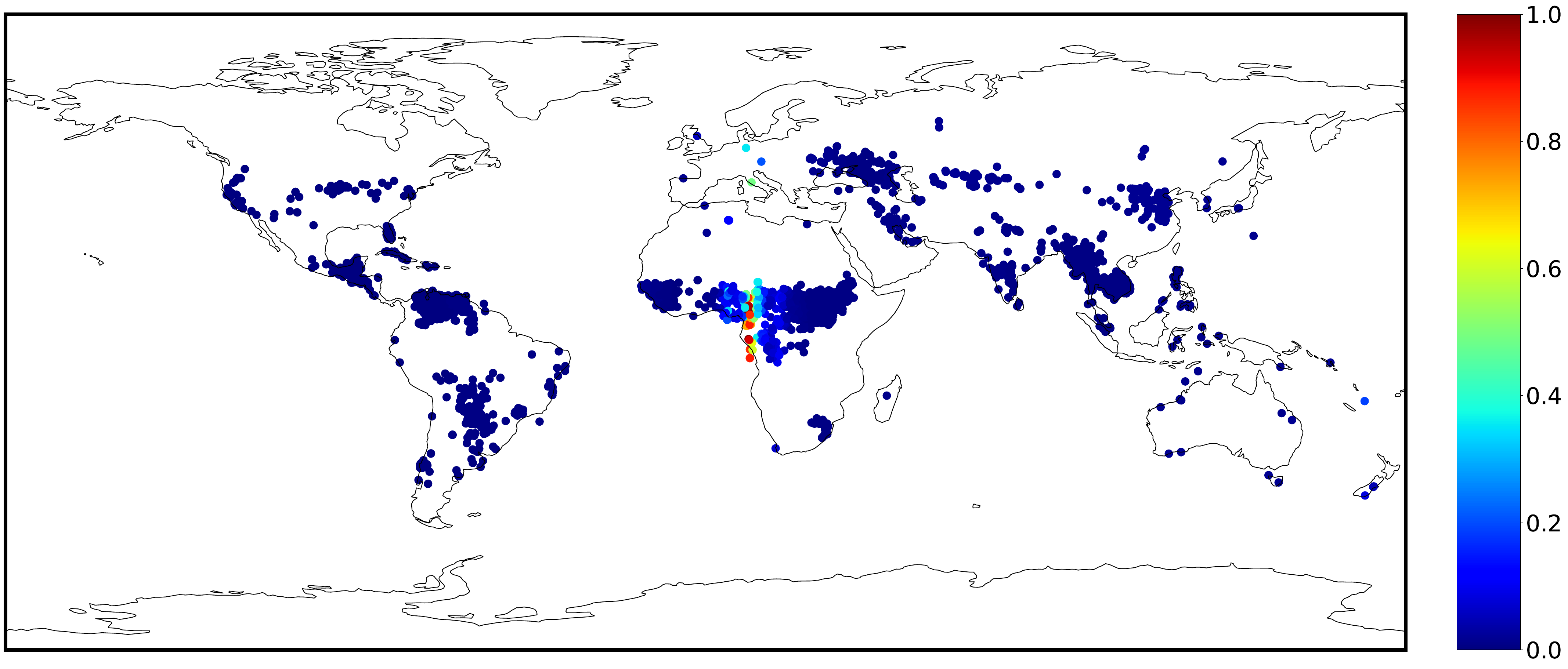}}
\caption{Stereo + RealNVP.}
\vspace{-.2in}
\end{figure}

\begin{figure}[H]
\centering
\includegraphics[width=0.32\columnwidth]{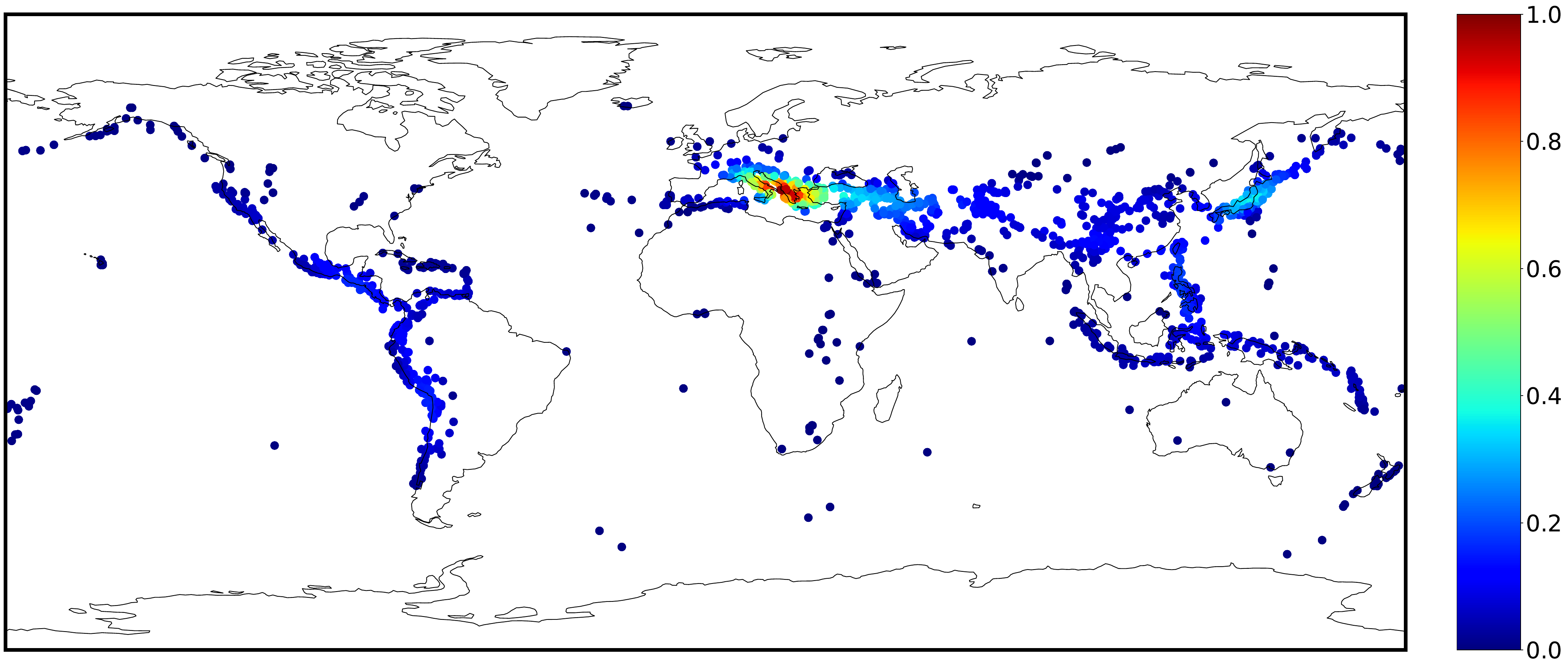}
\includegraphics[width=0.32\columnwidth]{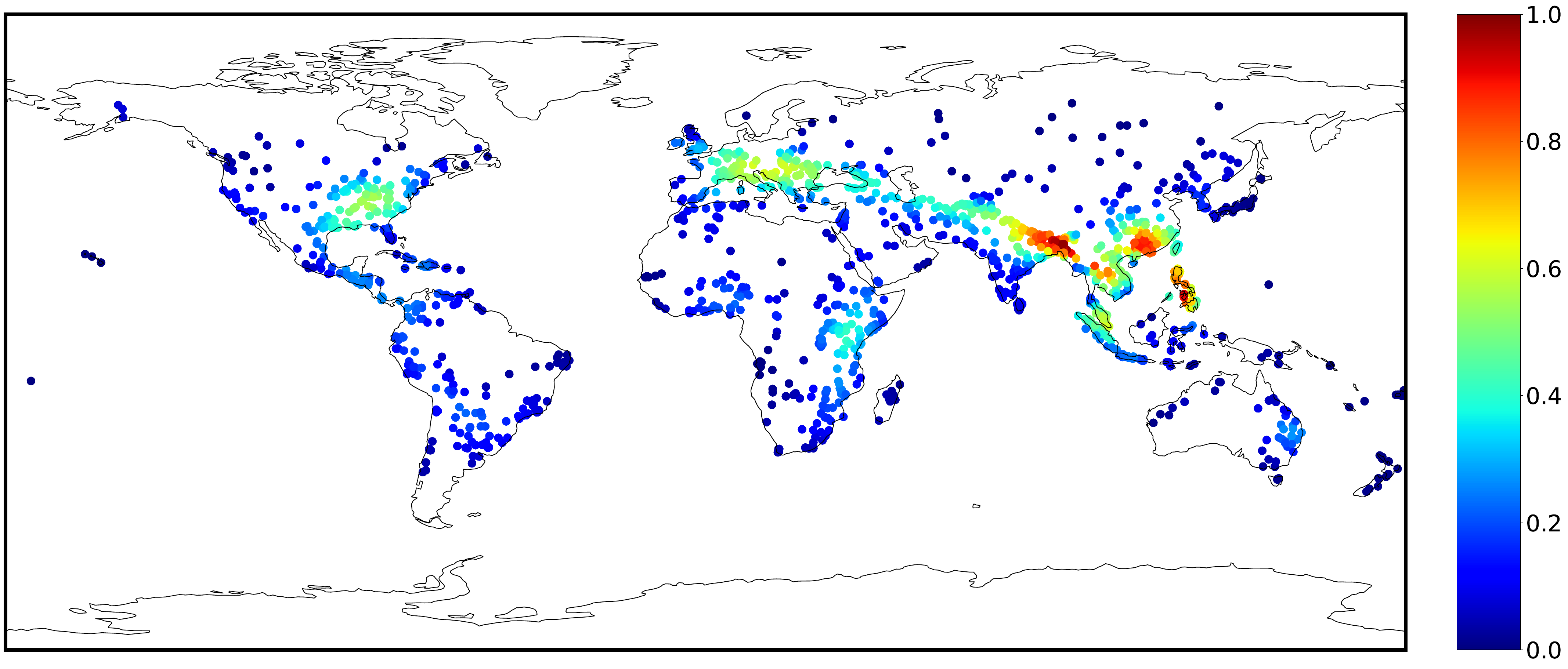}
\includegraphics[width=0.32\columnwidth]{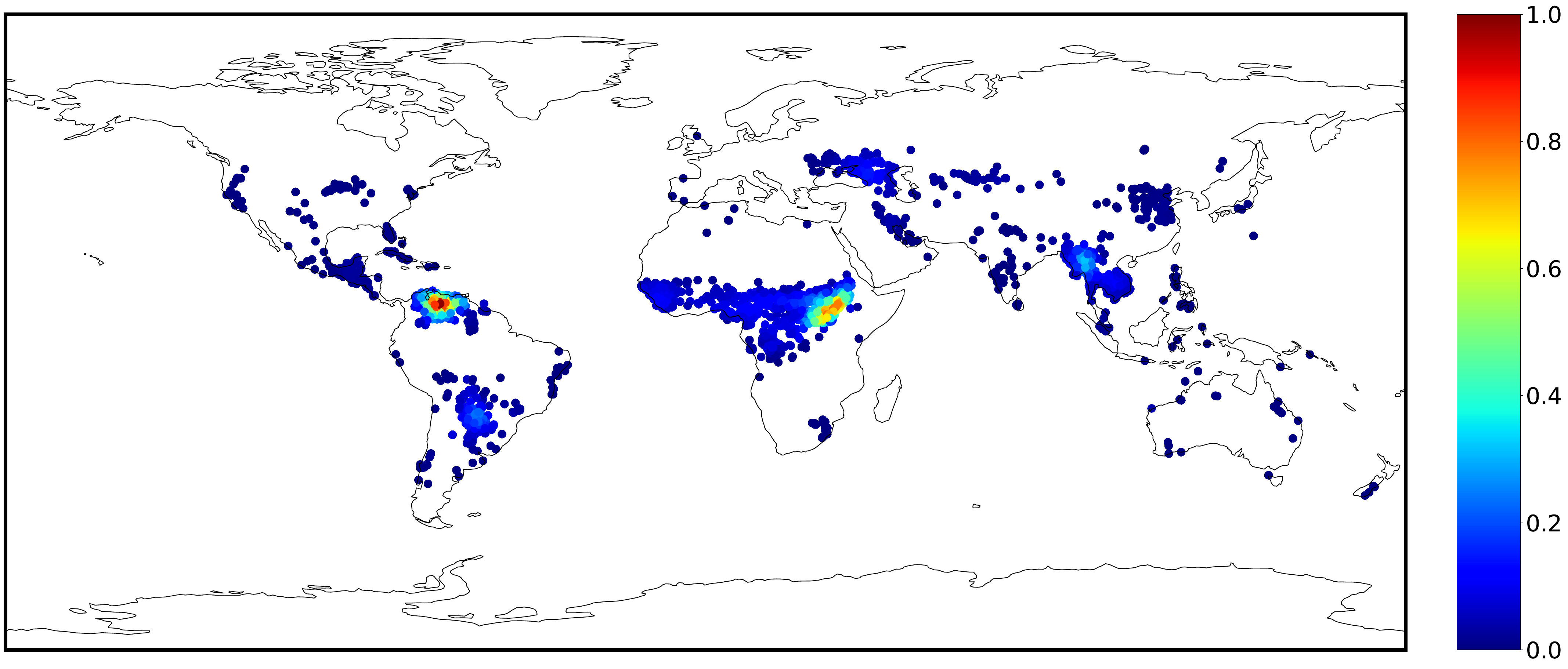}
\caption{Sliced-Wasserstein ($SW$).}
\vspace{-.2in}
\end{figure}

\begin{figure}[H]
\centering
\includegraphics[width=0.32\columnwidth]{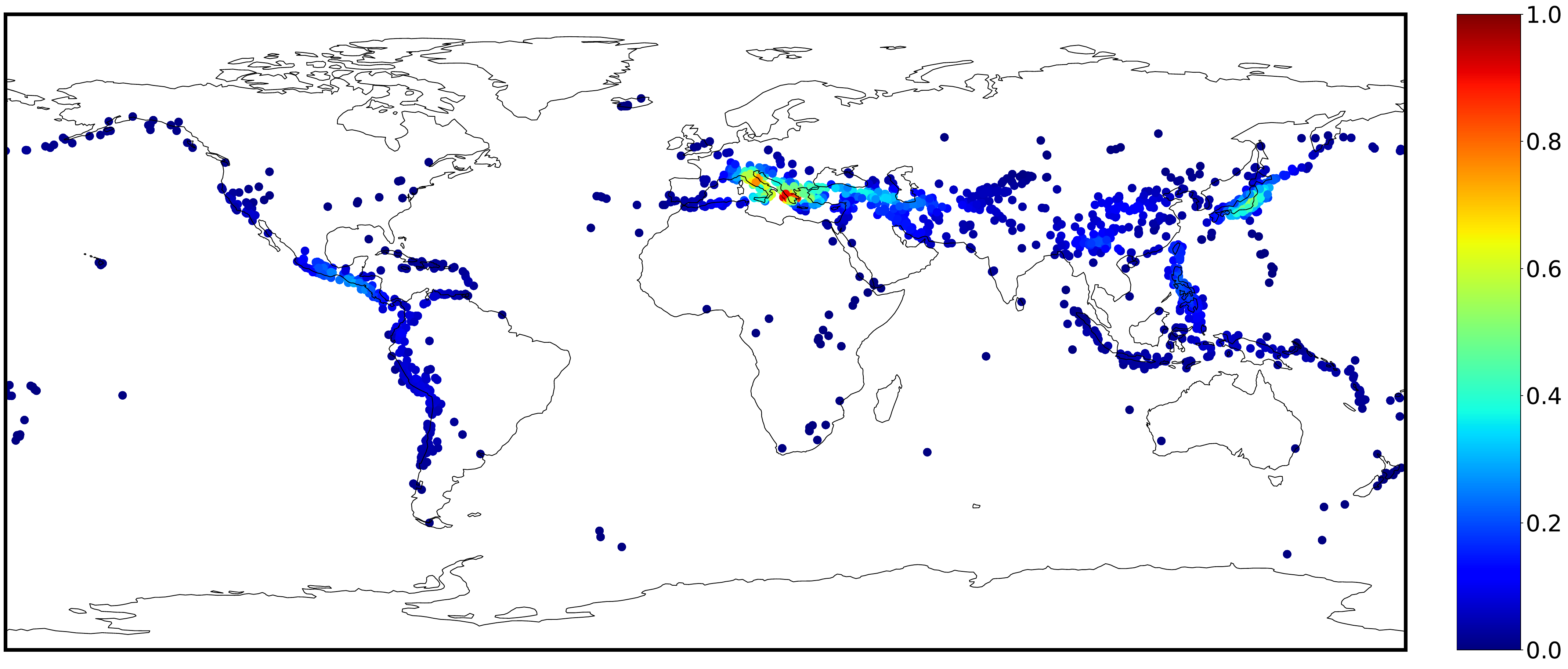}
\includegraphics[width=0.32\columnwidth]{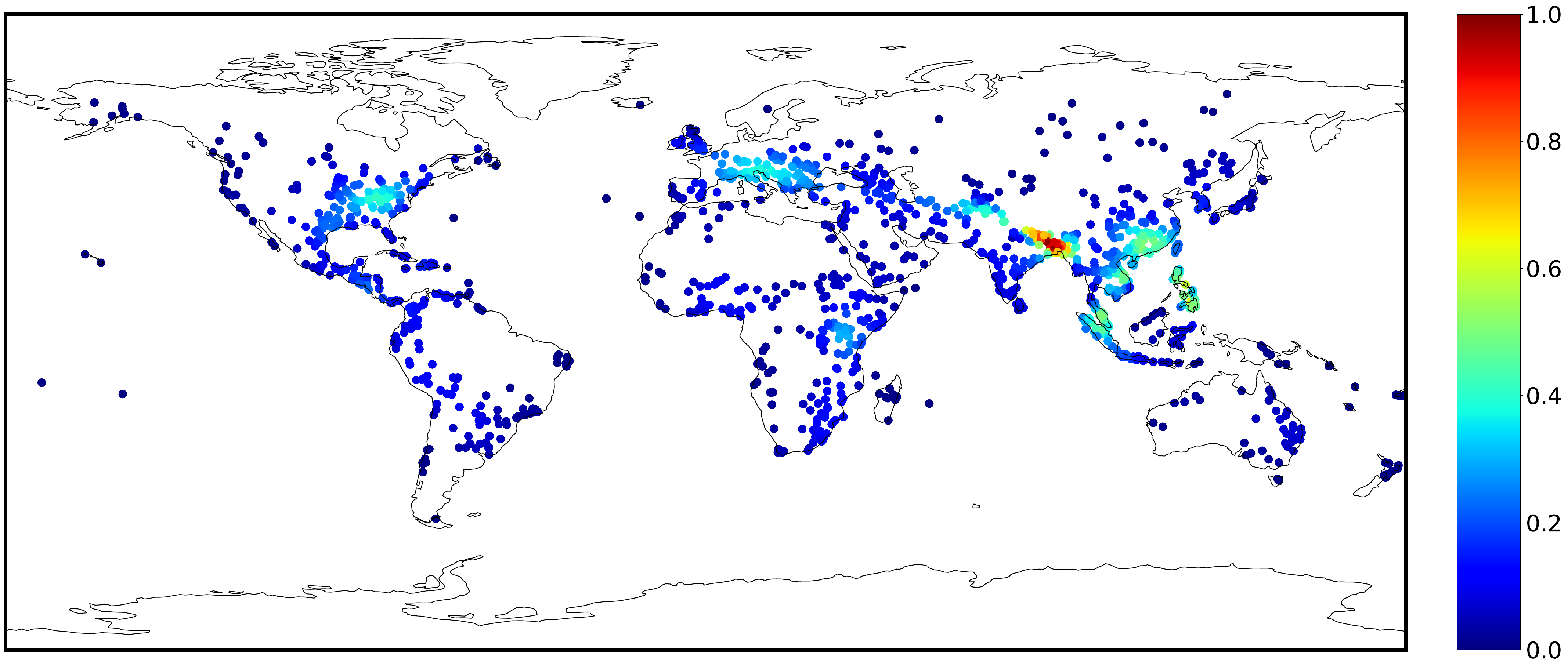}
\includegraphics[width=0.32\columnwidth]{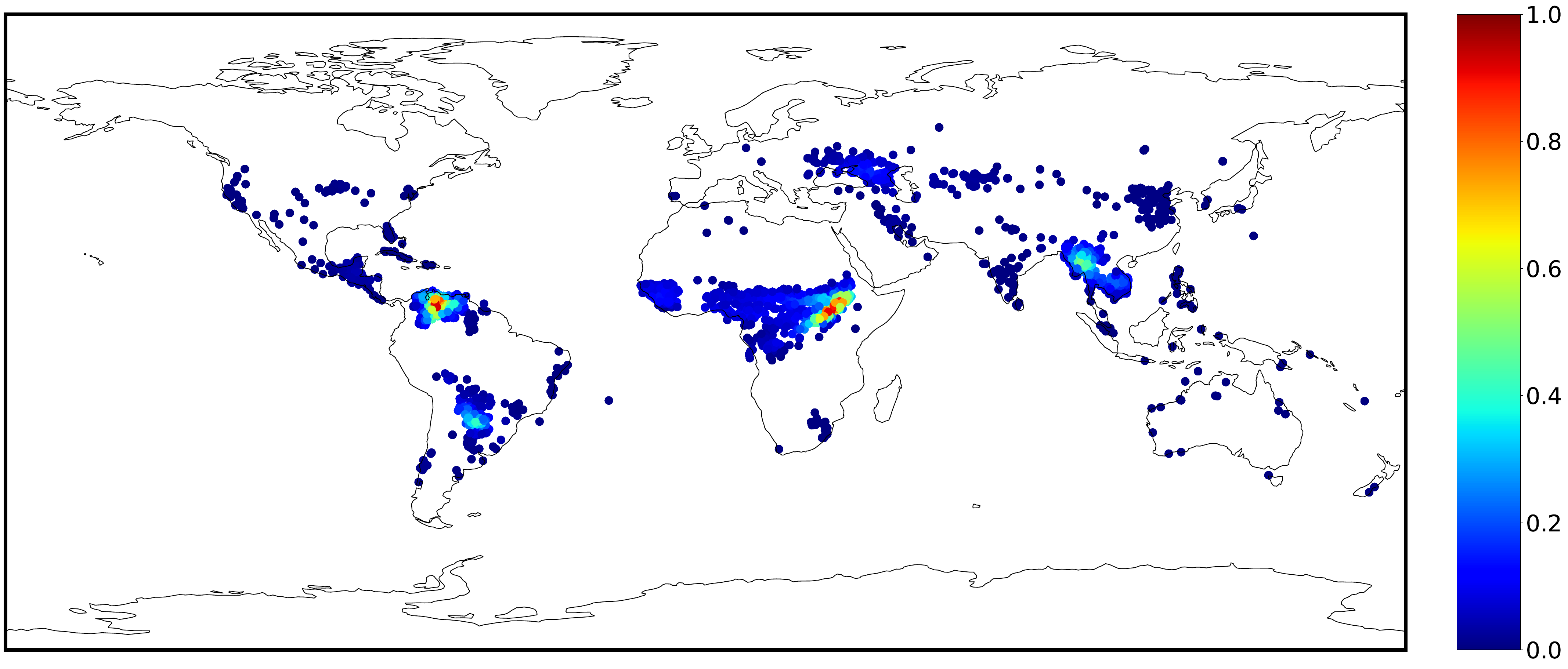}
\caption{Spherical Sliced-Wasserstein ($SSW$).}
\vspace{-.2in}
\end{figure}

\begin{figure}[H]
\centering
\includegraphics[width=0.32\columnwidth]{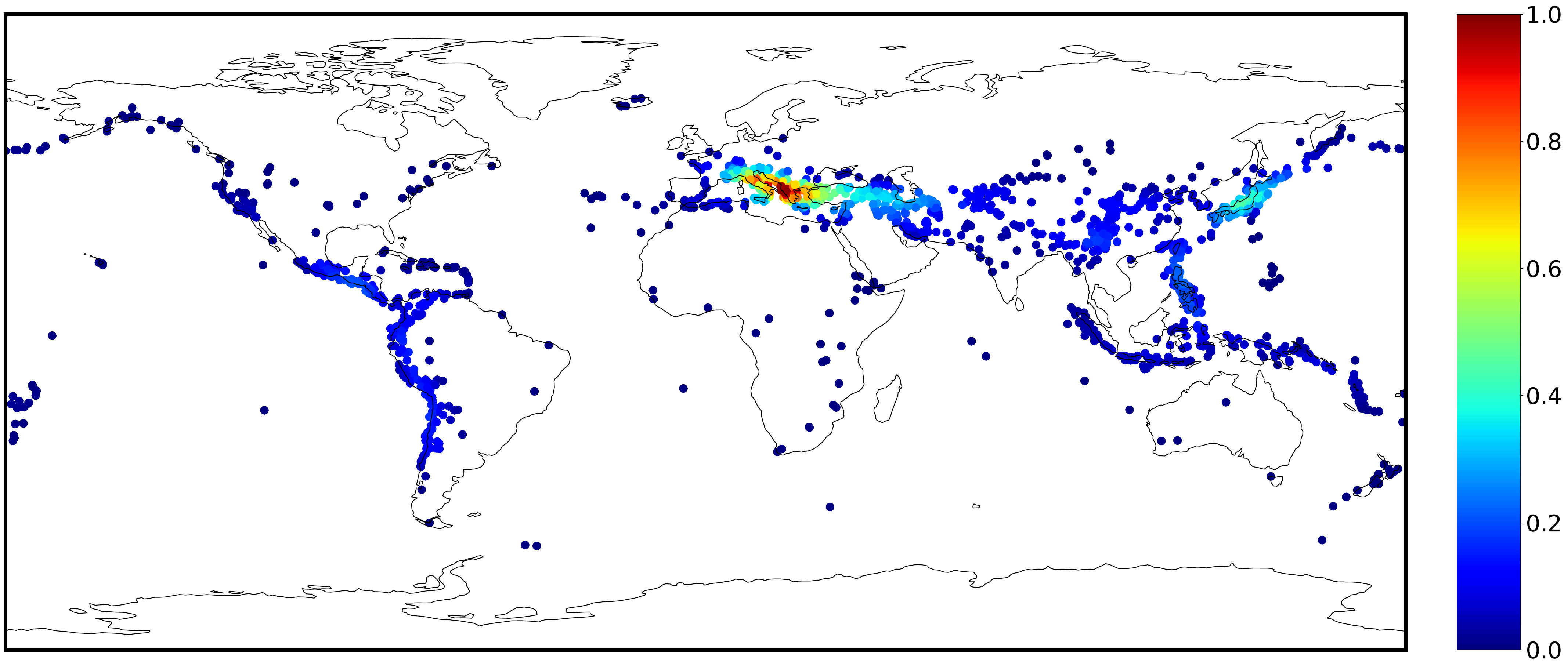}
\includegraphics[width=0.32\columnwidth]{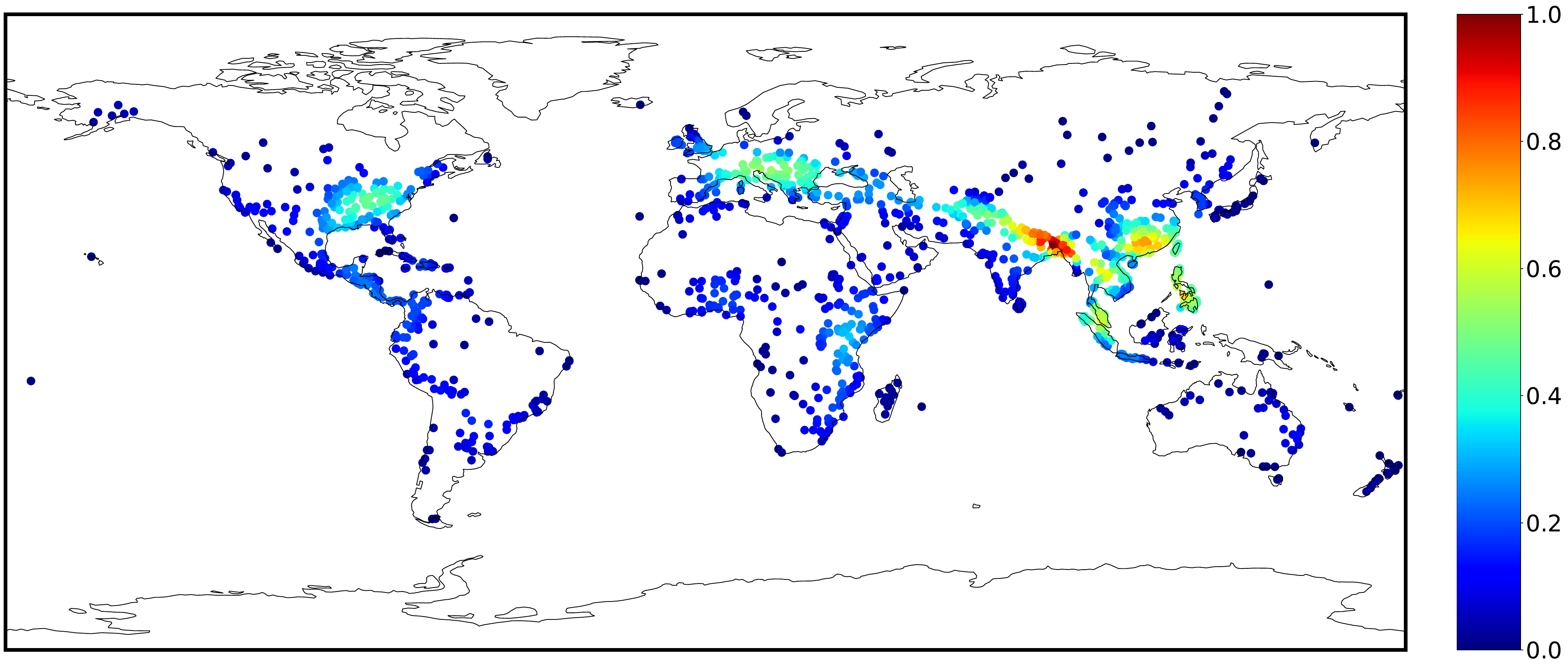}
\includegraphics[width=0.32\columnwidth]{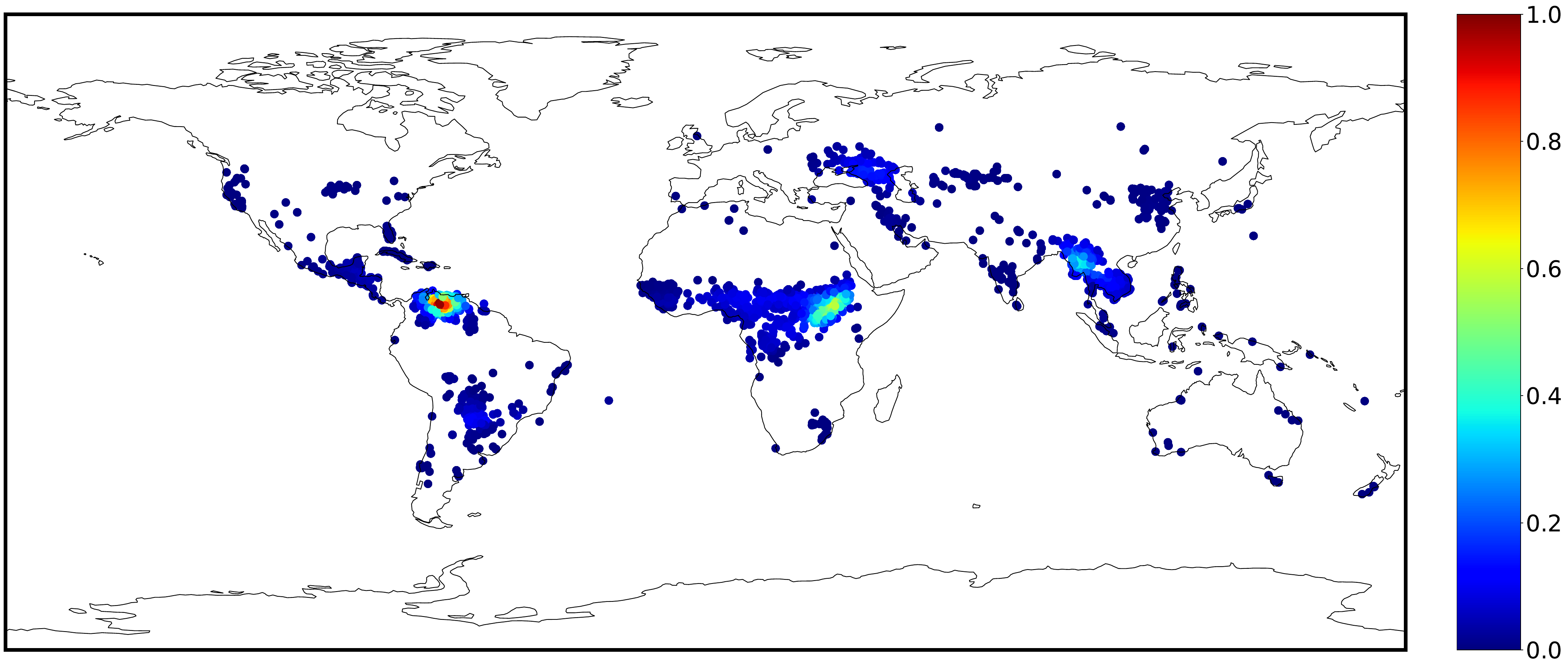}
\caption{Stereographic Spherical Sliced-Wasserstein ($S3W$).}
\vspace{-.2in}
\end{figure}

\begin{figure}[H]
\centering
\includegraphics[width=0.32\columnwidth]{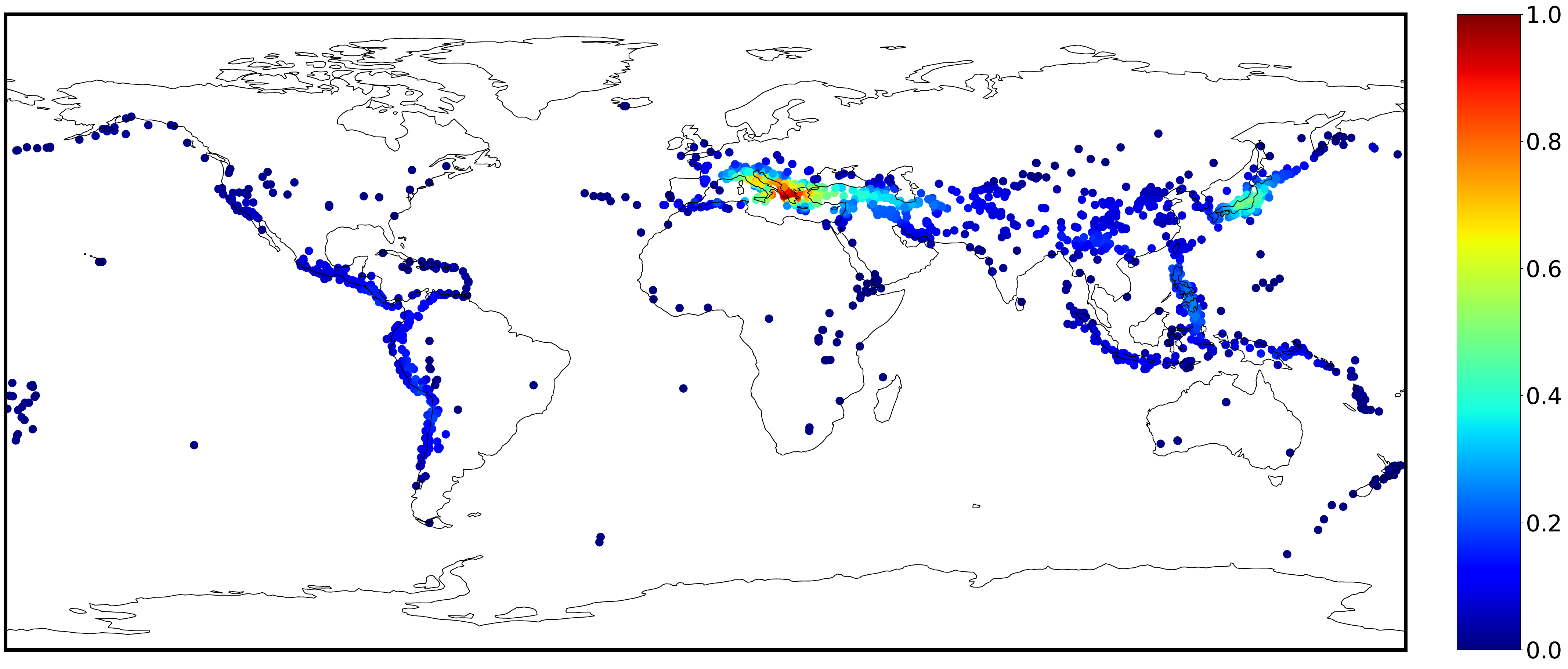}
\includegraphics[width=0.32\columnwidth]{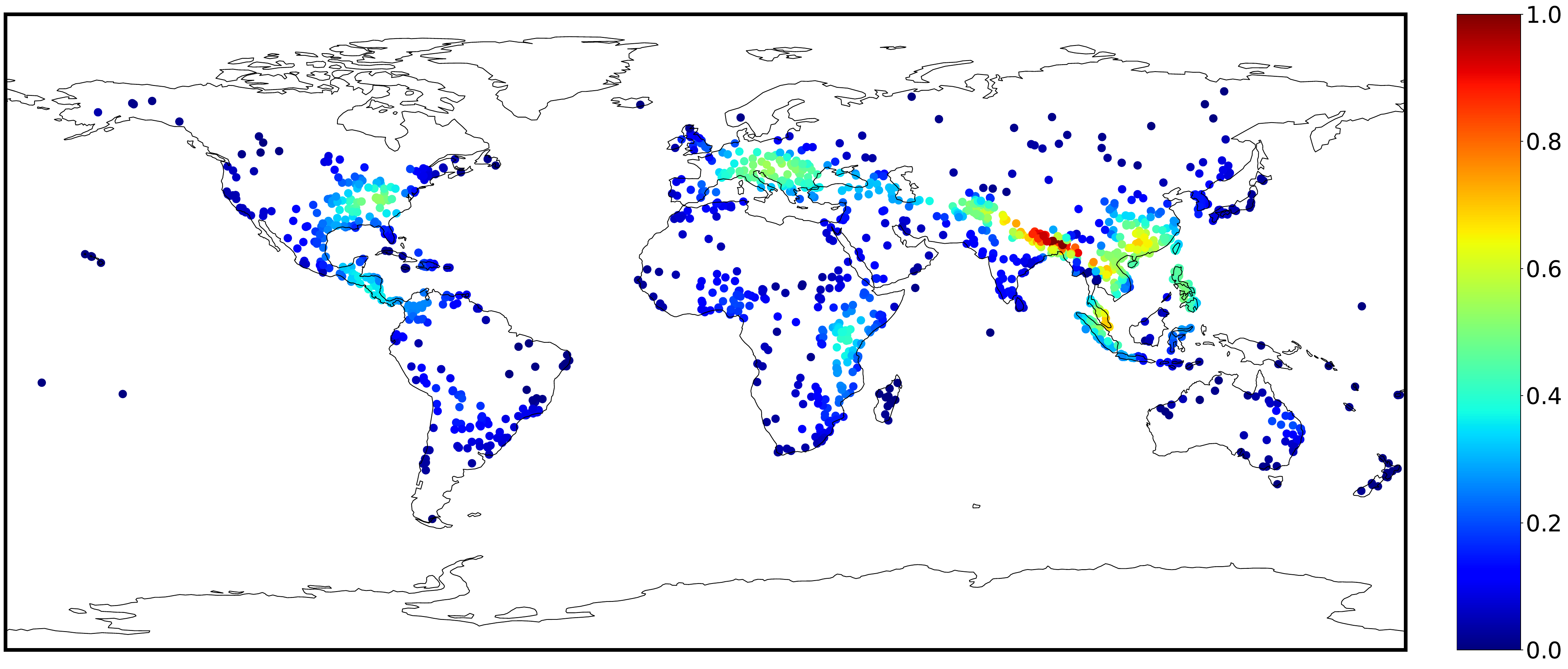}
\includegraphics[width=0.32\columnwidth]{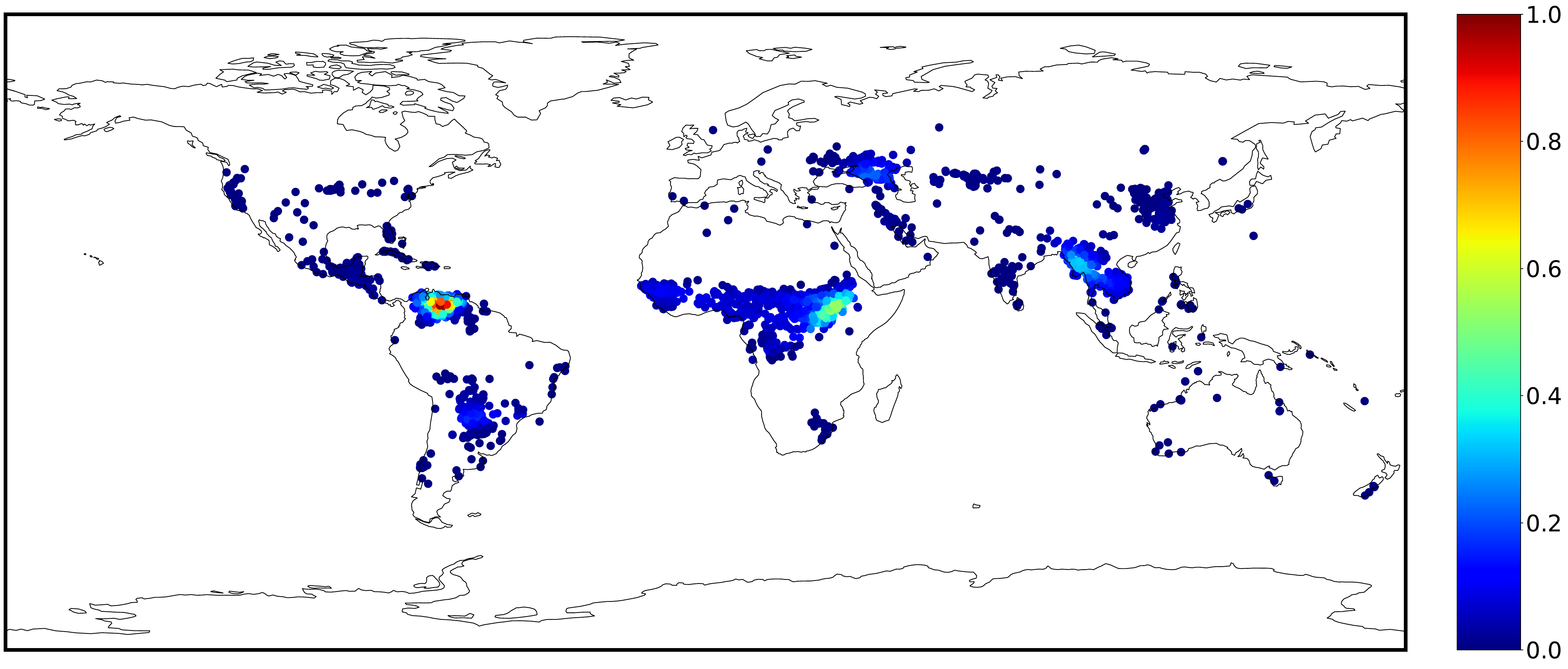}
\caption{Rotation Invariant $S3W$ ($RI$-$S3W$) with $1$ random rotation.}
\vspace{-.2in}
\end{figure}

\begin{figure}[H]
\centering
\includegraphics[width=0.32\columnwidth]{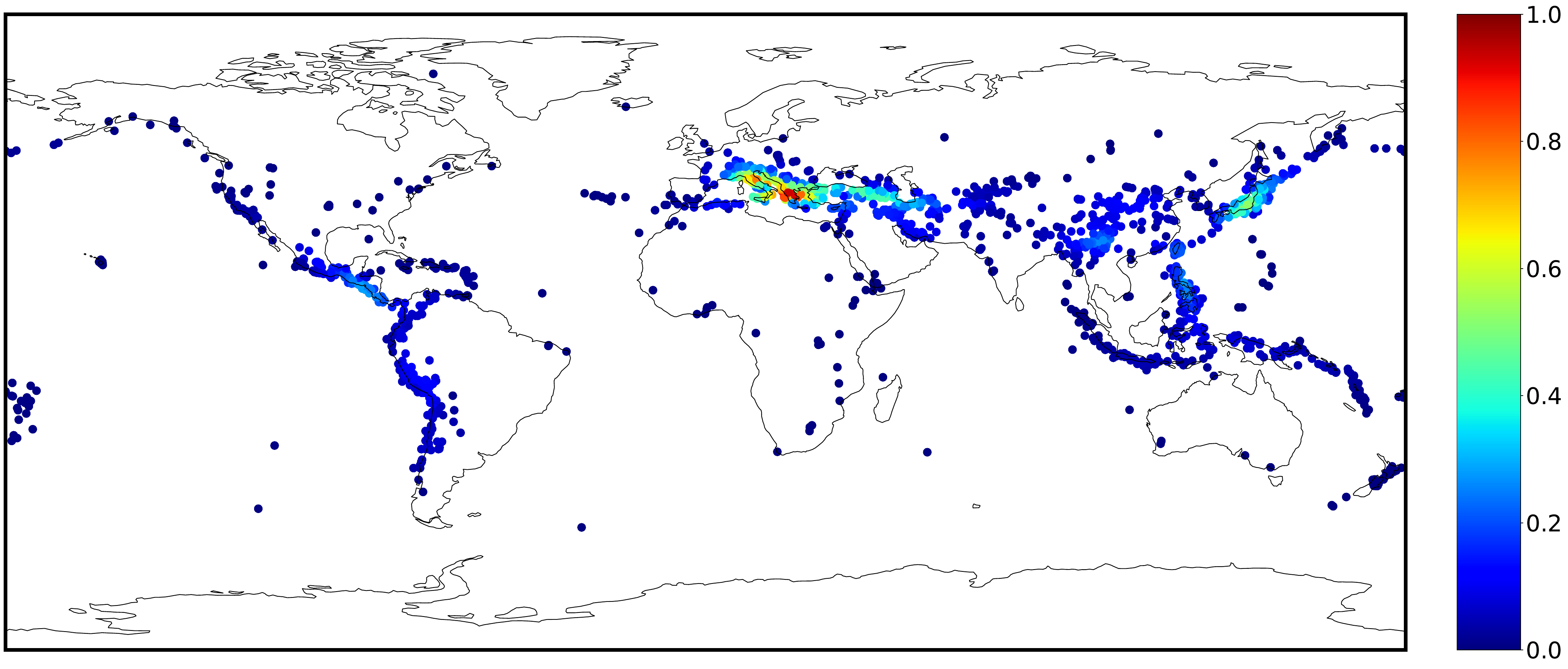}
\includegraphics[width=0.32\columnwidth]{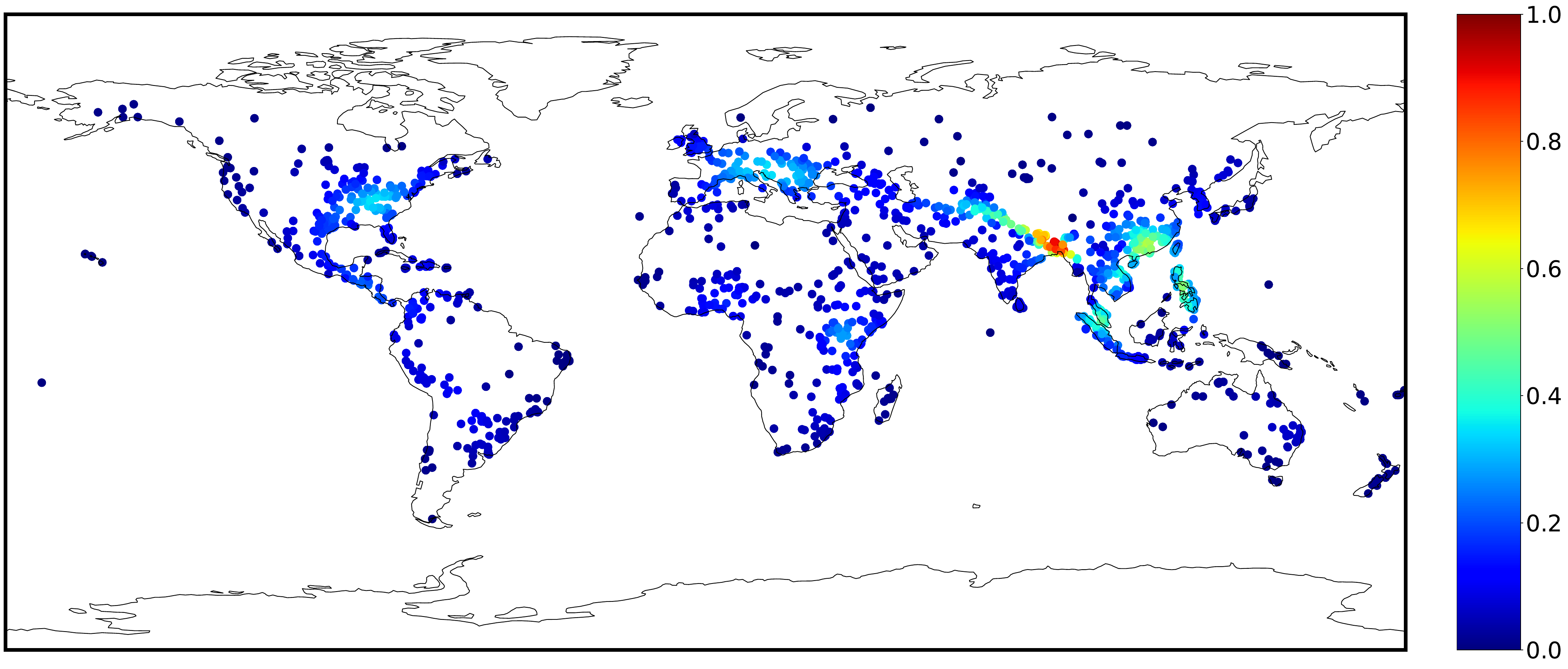}
\includegraphics[width=0.32\columnwidth]{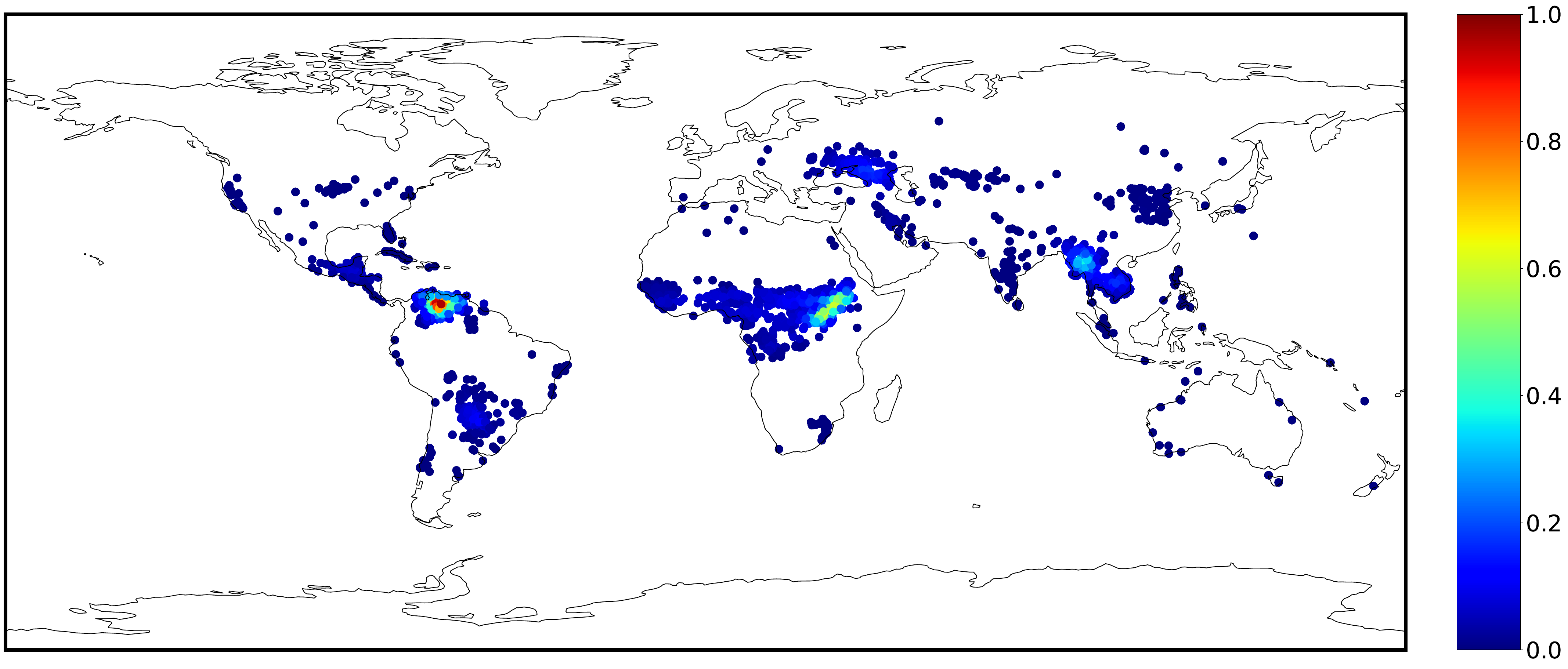}
\caption{Amortized Rotation Invariant $S3W$ ($ARI$-$S3W$) with $50$ random rotations, pool size of $1000$.}
\end{figure}

\clearpage

\subsection{Task: Self-Supervised Representation Learning on the Sphere}
\label{section:ssl}
The self-supervised learning (SSL) paradigm derives learning signals from its own training data without explicit labeling. Among various SSL approaches, contrastive learning-based methods have gained significant popularity due to their effectiveness. Previous works have found that constraining feature vectors to the hypersphere (i.e., having unit norm) offers additional benefits \cite{caron2020unsupervised, wu2018unsupervised}. In what follows, we provide a brief overview of (spherical) contrastive learning -- currently one of the most potent approaches in SSL -- and then outline the details of our experiment in Section \ref{subsubsection:ssl_exp}.
\subsubsection{Background Overview}

\textbf{Contrastive Learning with InfoNCE} in the SSL paradigm is fundamentally about distinguishing between similar (positive) and dissimilar (negative) pairs of data points. The objective is to minimize the distance between embeddings of similar data points while maximizing the distance between embeddings of dissimilar points. Let $p_{data}(\cdot)$ be the data distribution over $\mathbb{R}^m$ and $p_{pos}(\cdot, \cdot)$ the distribution of positive pairs over $\mathbb{R}^m \times \mathbb{R}^m$. Assuming symmetry, i.e., $\forall x,y: p_{pos}(x,y) = p_{pos}(y,x)$, and that the marginal distribution matches the data distribution, i.e., $\forall x: \int p_{pos}(x,y)dy = p_{data}(x)$, the popular InfoNCE \cite{oord2018representation} objective can be described as

\begin{equation}
\mathcal{L}{\small \text{contrastive}} = -\mathbb{E}{(x, y) \sim p_{\text{pos}}} \left[ \log \frac{\exp(\text{sim}(f(x), f(y)) / \tau)}{\sum_{k \in \text{Neg}(x)} \exp(\text{sim}(f(x), f(x_k)) / \tau)} \right],
\end{equation}

where $f(x)$ and $f(y)$ denote the embedding of the positive pairs, $\text{sim}(\cdot, \cdot)$ a similarity measure between two embeddings (e.g., the dot product), $\tau$ the temperature, and $\text{Neg}(x)$ the set of negative samples for $x$.

\noindent\textbf{Spherical Contrastive Loss as Alignment and Uniformity:}

\cite{wang2020understanding} provides valuable insights into the mechanics of contrastive learning by decomposing the contrastive loss into two components

\begin{equation}
    \mathcal{L}_{\text{align}}(f; \alpha) = - \mathbb{E}_{(x,y) \sim p_{\text{pos}}} \left[ \|f(x) - f(y)\|_{\alpha}^2 \right], \quad \alpha > 0.
\end{equation}
and 
\begin{equation}
    \mathcal{L}_{\text{uniform}}(f; t) = \log \mathbb{E}_{x, y \, \text{i.i.d.} \sim p_{\text{data}}} \left[ \exp(-t \|f(x) - f(y)\|_2^2) \right], t > 0.
\end{equation}

The overall objective is therefore
\begin{equation}
    \mathcal{L}_{\text{contrastive}}(f; \alpha, \beta) = \alpha \mathcal{L}_{\text{align}} + \beta \mathcal{L}_{\text{uniform}}
\end{equation}

where $\alpha, \beta$ denotes the component weight for $\mathcal{L}_{\text{align}}$ and $\mathcal{L}_{\text{uniform}}$, respectively. The alignment component $\mathcal{L}_{\text{align}}$ forces positive spherical views to be similar. The uniformity component $\mathcal{L}_{\text{uniformity}}$ encourages all spherical views to be spread out (i.e. uniformly distributed) on the hypersphere, which prevents representation collapse.

\subsubsection{Experiment: $S3W$-Based Uniformity Loss}
\label{subsubsection:ssl_exp}
To demonstrate the efficacy of our proposed $S3W$ in SSL, we follow the same setup as described in \cite{bonet2022spherical}. Specifically, they propose to replace the Gaussian kernel in $\mathcal{L}_{\text{uniform}}$, which operates on pairwise instances, with the distributional distance SSW, and optimize the following objective

\begin{equation}
    \mathcal L_{\text{SSW-SSL}} = \underbrace{\frac1n\sum_{i=1}^n \lVert z^A_i- z^B_i\rVert^2_2}_{\text{Alignment loss}} + \frac\lambda2\big(\underbrace{SSW^2_2(z^A, \nu) + SSW^2_2(z^B, \nu)}_{\text{Uniformity loss}}\big),
    \label{eq:ssw_ssl}
\end{equation}

where $z^A,z^B\in\mathbb R^{n\times (d+1)}$ are the spherical representations of two views (i.e. augmentation) of the same images, $\nu =\text{Unif}(\mathbb{S}^{d})$ is the uniform distribution on the hypersphere and $\lambda>0$ is the regularization coefficient. They achieved comparable performance with \cite{wang2020understanding, chen2020simple} while benefiting from the subquadratic $O(Ln(d + \log n))$.

We will demonstrate the effectiveness of $S3W$ in terms of performance and runtime by modifying Eq. \ref{eq:ssw_ssl} and instead optimize
\begin{equation}
    \mathcal{L}_{\text{S3W-SSL}} = \frac{1}{n}\sum_{i=1}^n \lVert z^A_i - z^B_i \rVert^2_2 + \frac{\lambda}{2} \left(\text{S3W}_2(z^A, \nu) + \text{S3W}_2(z^B, \nu)\right).
\end{equation}
We use analogous objectives for $RI$-$S3W$ and $ARI$-$S3W$.


\textbf{Implementation.}
Similar to \cite{bonet2022spherical}, we use a pretrained ResNet18 encoder \cite{He2015DeepRL} on CIFAR-10 \cite{Krizhevsky2009LearningML} with $1024$ dimensional penultimate features, projected and $\ell_2-$normalized to be unit vectors on $\mathbb{S}^{d}$. The models are pretrained for $200$ epochs with minibatch SGD (momentum $0.9$, weight decay $0.001$, initial learning rate $0.05$). We select the batch size to be $512$ samples and use the standard random augmentation set consisting of random crop, horizontal flipping, color jittering, and gray scale transformation, as done in \cite{bonet2022spherical, wang2020understanding}. 

For evaluation, we fit a linear classifier on the pretrained representations and report the test accuracy. We report the accuracy both when this linear classifier is fit on the encoded feature representations and when it is fit on the projected features in $\mathbb{S}^d$. We train this linear layer for $100$ epochs with the Adam optimizer (learning rate $0.001$, weight decay of $0.2$ at epochs $60$ and $80$). For comparison, we also test the hypersphere method in \cite{wang2020understanding} and SimCLR \cite{chen2020simple} to aid in evaluating the results. We also train a fully supervised model by training the encoder and linear classifier jointly with cross entropy loss, in order to serve as a baseline for performance comparison.

We use $L=200$ projections for all sliced distances, $N_R = 5$ rotations for $RI$-$S3W$ and $ARI$-$S3W$, and a pool size of $100$ for $ARI$-$S3W$. We first perform a series of experiments with $d=9$. In this setting, we let the regularization coefficient $\lambda = 0.5$ for $S3W$, $RI$-$S3W$, and $ARI$-$S3W$; $\lambda = 20.0$ for $SSW$; and $\lambda = 1.0$ for $SW$. These results are found in the main paper Table \ref{table:ssl_dim_10}. We also perform a series of experiments with $d=2$ in order to visualize the quality of the learned representations. In this setting, we instead use a regularization coefficient of $\lambda = 0.1$ for $S3W$, $RI$-$S3W$, and $ARI$-$S3W$. These visualizations for all the considered methods are provided in Figure \ref{fig:ssl_dim_3}.

\begin{figure}
    \centering
    \subfloat[Supervised]{\includegraphics[width=0.33\columnwidth]{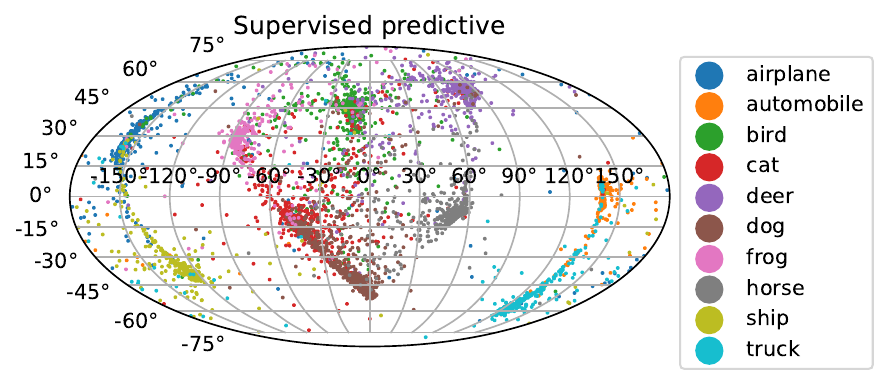}} \hfill
    \subfloat[SimCLR \cite{chen2020simple}]{\includegraphics[width=0.33\columnwidth]{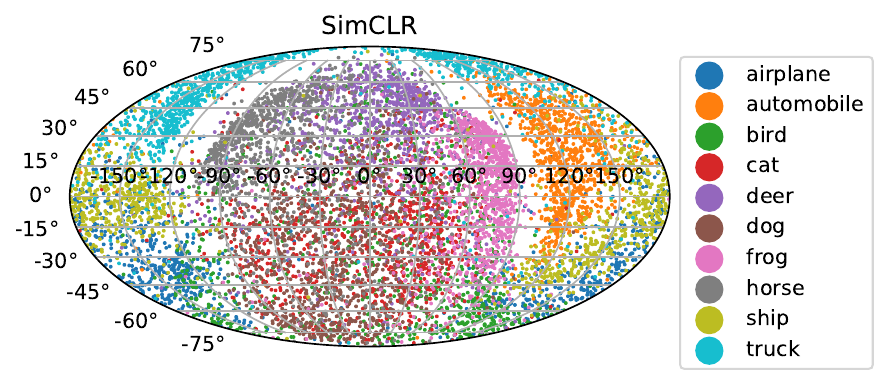}} \hfill
    \subfloat[\cite{wang2020understanding}]{\includegraphics[width=0.33\columnwidth]{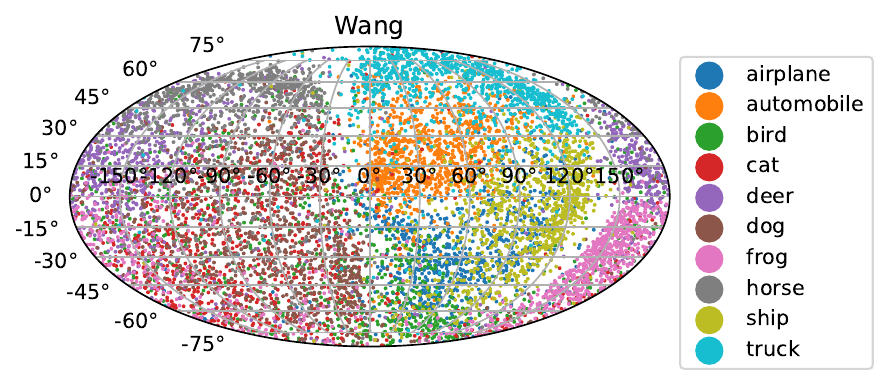}} 
    \\
    \subfloat[$SW$]{\includegraphics[width=0.33\columnwidth]{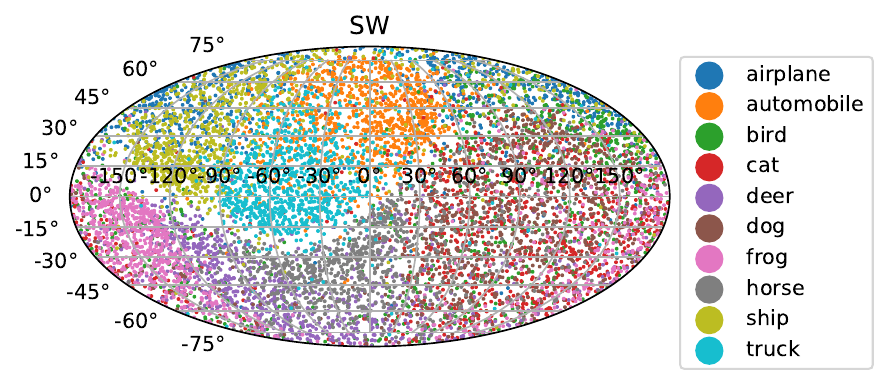}} \hfill
    \subfloat[$SSW$]{\includegraphics[width=0.33\columnwidth]{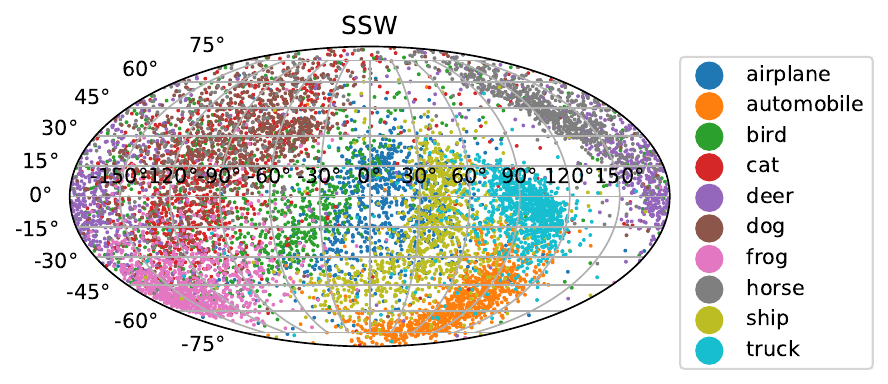}} \hfill
    \subfloat[$S3W$]{\includegraphics[width=0.33\columnwidth]{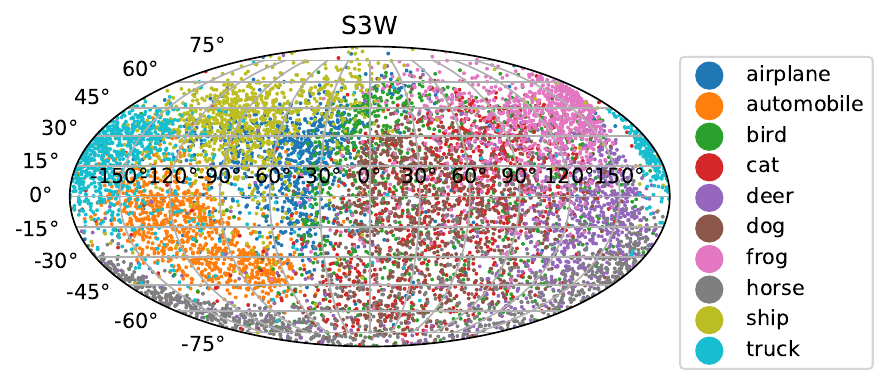}}
    \\
    \hspace*{\fill}
    \subfloat[$RI$-$S3W$]{\includegraphics[width=0.33\columnwidth]{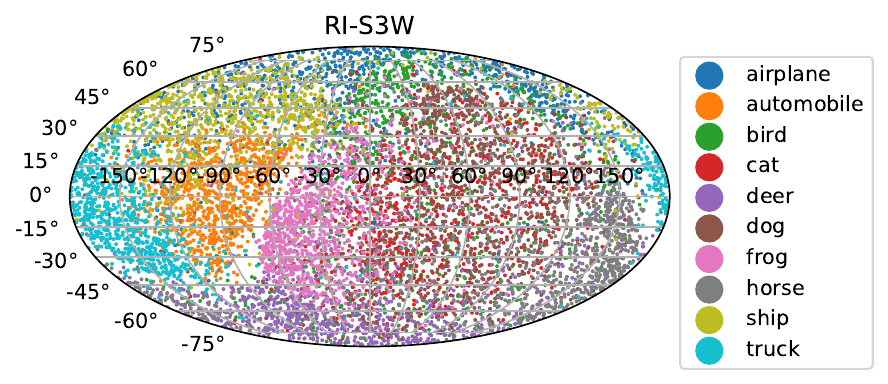}} \hfill
    \subfloat[$ARI\text{-}S3W$]{\includegraphics[width=0.33\columnwidth]{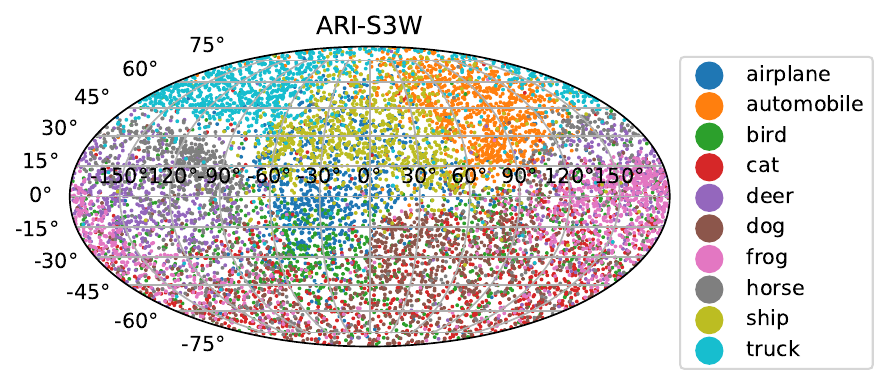}}
    \hspace*{\fill}

    \caption{Evolution between the source and target vMFs, $500$ samples each. For all sliced-Wasserstein variants, we use $200$ projections, and for RI-S3W, we use $100$ random rotations.}
    \label{fig:ssl_dim_3}
\end{figure}

\newpage
\subsection{Task: Sliced-Wasserstein Variational Inference on the Sphere}
\label{section:vi}

\subsubsection{Background Overview}
\textbf{Variational Inference} turns the Bayesian inference problem into a variational one, benefiting from the rich optimization literature and reduced computational costs. The usual goal is to approximate an unnormalized distribution by minimizing the KL divergence between a tractable density and the true posterior. Formally, let $p(\cdot| x)$ denote the target posterior and and $q \in \mathcal{Q}$ the approximate posterior from a family of tractable distributions. The standard objective is to minimize  

\begin{equation}
    \min_{q\in\mathcal{Q}}\ \mathrm{KL}(q||p(\cdot|x)) =  \int q(Z) \log\left(\frac{q(Z)}{p(Z|x)}\right) dZ = \mathbb{E}_q[\log\left(\frac{q(Z)}{p(Z|x)}\right)].
\end{equation}

\noindent\textbf{Sliced-Wasserstein Variational Inference (on the sphere):}

The non-metric nature of KL, namely its asymmetry and failure to satisfy the triangle inequality, can result in undesirable behaviors. As an alternative, \cite{Yi2022SlicedWV} introduces the sliced Wasserstein variational inference, optimized via MCMC without optimization or requiring a tractable $\mathcal{Q}$ family. Hence, this can be armortized with deep neural networks. In spherical data contexts, \cite{bonet2022spherical} recommends replacing $SW$ with $SSW$ and employs the Geodesic Langevin Algorithm (GLA) (see \ref{subsubsec:gla}).

To perform SWVI, we first select  a sampler \( q_\theta \) for the sphere. At each iteration $k$, we draw a batch of $N$ samples $ \{z_0^i\}_{i=1}^N$from $q_\theta$. These samples are then propagated using \( \ell \) MCMC steps on the sphere to obtain a new set of samples \( \{z_\ell^j\}_{j=1}^N \). The updates are constrained to the sphere by employing the exponential map \( \exp_{x_k} \) or normalization of the projection onto the tangent space \( \text{Proj}_{x_k} \) at each point $x_k  \in \mathbb{S}^d $. We then compute sliced Wasserstein distance (i.e. SSW, S3W) between the empirical $ \hat{\mu}_0 $ and $\hat{\mu}_\ell $, which is then used to compute gradients w.r.t. $\theta$. 

\textbf{Unadjusted Langevin Algorithm (ULA)} is used to sample from a probability distribution with density proportional to $e^{-V(x)}$ where $V(x)$ is a potential function (i.e. log density) and $x \in \mathbb{R}^d$. The update rule simply combines a deterministic gradient step with a stochastic noise term. That is

\begin{equation}
    x_{t+1} = x_t - \gamma \nabla V(x_t) + \sqrt{2\gamma} Z, \quad Z \sim \mathcal{N}(0, I)
\end{equation}

\textbf{Geodesic Langevin Algorithm (GLA)} \label{subsubsec:gla} \cite{wang2020fast} adapts ULA to a Riemannian manifold like $\mathbb{S}^d$ by modifying the update rule to respect the manifold structure. Similar to \ref{subsubsec:gd}, we first replace the Euclidean gradient with the Riemannian gradient, as follows

\begin{equation}
    x_{t+1} = \text{Retr}_{x_t}\left( -\gamma \nabla_{\text{Riemannian}} V(x_t) + \sqrt{2\gamma} Z \right), \quad Z \sim \mathcal{N}(0, I),
\end{equation}

Here, $\mathrm{Retr}_{x_t}$ is a retraction. If we select $\mathrm{Retr}_{x_t}= \exp_{x_t}$ (see Eq. \ref{eqn:exp_map}) then we arrive at the Geodesic Langevin Algorithm (GLA) whose update rule is simply

\begin{equation}
    x_{t+1} = \exp_{x_t}\left( -\gamma \nabla_{\text{Riemannian}} V(x_t) + \sqrt{2\gamma} Z \right), \quad Z \sim \mathcal{N}(0, I)
\end{equation}

where $\nabla_{\text{Riemannian}} V(x_t) = \nabla V(x_t) - \langle \nabla V(x_t), x_t \rangle x_t$.

In our experiments, since $\mathbb{S}^d$ has constant curvature and the step size is small, we use normalization as the retraction for simplicity. The update rule is hence given by:

\begin{equation}
\label{eqn:ula_normalized}
    x_{t+1} = \frac{\tilde{x}_{t+1}}{\|\tilde{x}_{t+1}\|}, \quad \text{where} \quad \tilde{x}_{t+1} = x_t - \gamma \nabla_{\text{Riemannian}} V(x_t) + \sqrt{2\gamma} Z,
\end{equation}

where \( Z \sim \mathcal{N}(0, I) \) is the Gaussian noise term. Putting it all together, we arrive at

\begin{equation}
    x_{t+1} = \frac{x_t - \gamma \left(\nabla V(x_t) - \langle \nabla V(x_t), x_t \rangle x_t\right) + \sqrt{2\gamma} Z}{\left\| x_t - \gamma \left(\nabla V(x_t) - \langle \nabla V(x_t), x_t \rangle x_t\right) + \sqrt{2\gamma} Z \right\|}.
\end{equation}

\textbf{Power Spherical Distribution} \cite{de2020power} provides a parametric alternative to vMFs (refer to H2). It demonstrates greater stability and has a univariate marginal with a closed-form expression for both its CDF and its inverse. This enables fast and efficient sampling, in contrast to the vMF, which depends on rejection sampling. The PDF of the Power Spherical is defined as

\begin{equation}
    p_X(x)(x; \mu, \kappa) \propto (1 + \mu^{\top}x)^{\kappa},
\end{equation}

where $\mu \in \mathbb{S}^d$, $\kappa > 0$, for all $x\in \mathbb{S}^d$. We sample by drawing $Z \sim Beta(\frac{d-1}{2} + \kappa, \frac{d-1}{2})$ and $v \sim Unif(\mathbb{S}^{d-1})$. These samples are then used to construct $T=2Z-1$ and subsequently a vector $Y=[T, v^{\top}\sqrt{1-T^2}]^{\top}$. The final sample is obtained by applying a Householder\footnote{a linear operation $H(v)=v -2u(u^{\top}v)$ reflecting $v$ across the hyperplane perpendicular to $u$.} reflection about the mean direction $\mu$ to $Y$.

\textbf{Effective Sample Size (ESS)} \label{background:ess} estimates the equivalent number of independent samples that would provide the same amount of information as the correlated samples generated by an MCMC process \cite{doucet2001sequential}. For a chain of $n$ samples, the autocorrelation at lag\footnote{the number of time steps separating sequential data points in a time series, used for measuring temporal correlation.} $t$ ($t\geq0$) is given by

\begin{equation}
    \rho_t = \frac{1}{\sigma^2} \int (\theta(n)-\mu)(\theta(n+t)-\mu)p(\theta) d\theta.
\end{equation}

The ESS is then calculated from the total number of samples, $N$, adjusted for the sum of autocorrelation at all lags

\begin{equation}
    ESS = \frac{N}{1+2\sum_{t=1}^{\infty}\rho_t}
\end{equation}

Direct calculation for the ESS is often impractical. \cite{rezende2020normalizing} proposes estimating it using importance sampling weights $w_s$. That is

\begin{equation}
    \mathrm{ESS} = \frac{\mathrm{Var}_{Unif}(e^{-\beta u(X)})}{\mathrm{Var}_q\left(\frac{e^{-\beta u(X)}}{q_\eta(X)}\right)} \approx \frac{\left(\sum_{s=1}^S w_s\right)^2}{\sum_{s=1}^S w_s^2},
\end{equation}

where $w_s=e^{-\beta u(x_s)/q_\eta(x_s)}$. Here, in the context of normalizing flows (see experiments \ref{exp:vi_powerspherical}, \ref{exp:vi_vmfs}), it is reported as a percentage of the sample size. If the ESS can be estimated reliably, then higher ESS means the flow matches the target better. 

\subsubsection{Experiment 1: Learning a Power Spherical Distribution}
\label{exp:vi_powerspherical}

\textbf{Implementation.}
Following the setup described in \cite{bonet2022spherical}, we set initial Power Spherical parameters $\mu=(1,1,1)$ and $\kappa=0.1$ for source,$\mu=(0,1,0)$ and $\kappa=10$ for target. We perform $2000$ Riemannian Gradient Descent \cite{absil2008optimization} steps and then $20$ steps of GLA (step size $0.001$) with $1000$ projections (learning rate $2$). We select $K=2000$ steps with $N=500$ particles.

We report runtimes and parameter convergence behaviors for all the distances. Although they all converge similarly, the runtimes differ drastically. For SSW, we report an average of $215.5$ seconds over $5$ independent runs. For $S3W$, $RI\text{-}S3W$ (1), $RI\text{-}S3W$ (5), $RI\text{-}S3W$ (10), $ARI\text{-}S3W$ ($10$ rotations, pool size $1000$), they are $55.3$, $57.3$, $63.6$, $73.1$, $63.04$ seconds , respectively.

\textbf{Results.}

\begin{figure}[h!]
    \centering
    \hspace*{\fill}
    \subfloat[$SSW$]{\includegraphics[width=0.6\linewidth]{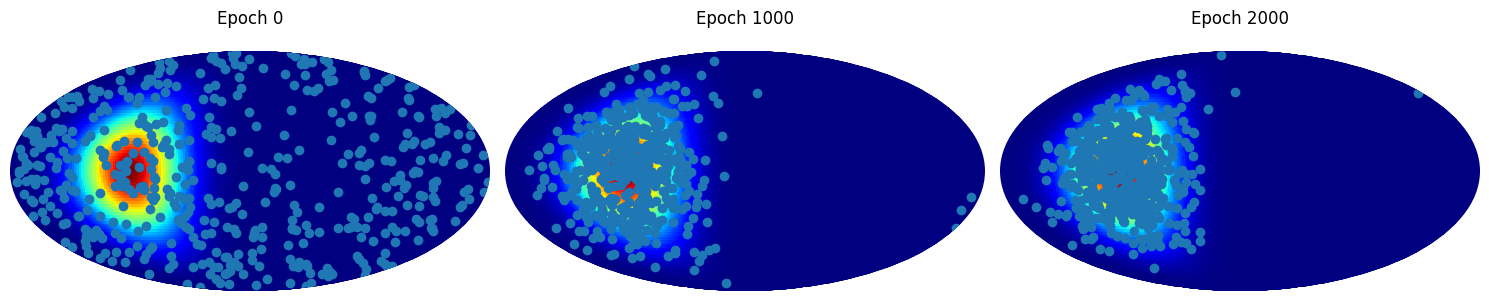}}
    \hspace*{\fill}
    \\
    \hspace*{\fill}
    \subfloat[$S3W$]{\includegraphics[width=0.6\linewidth]{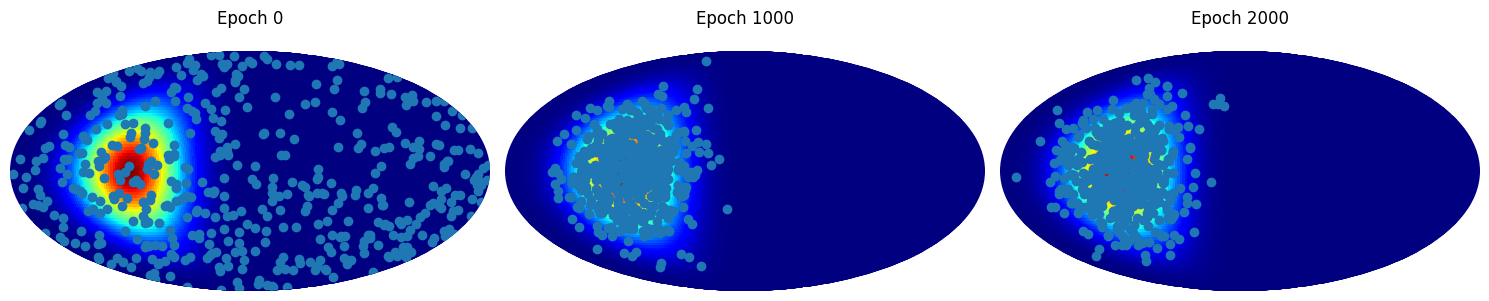}}
    \hspace*{\fill}
    \\
    \hspace*{\fill}
    \subfloat[$RI\text{-}S3W (1)$]{\includegraphics[width=0.6\linewidth]{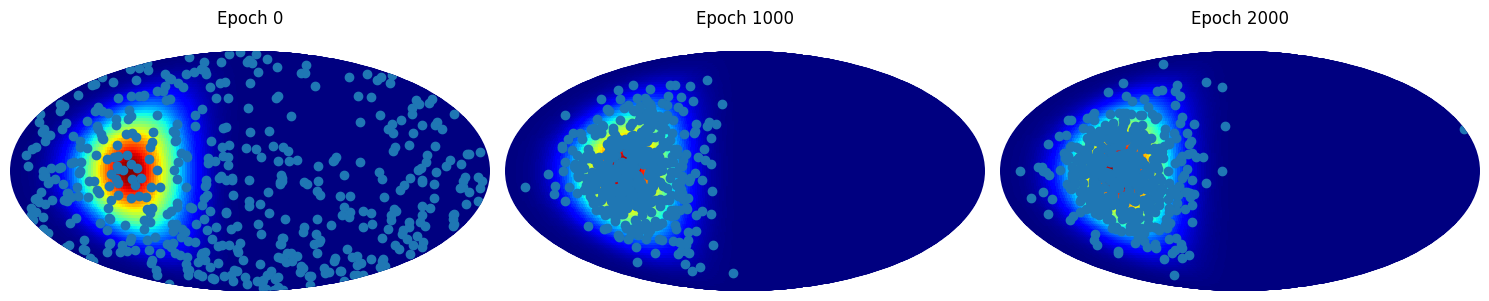}} 
    \hspace*{\fill}
    \\
    \hspace*{\fill}
    \subfloat[$RI\text{-}S3W (5)$]{\includegraphics[width=0.6\linewidth]{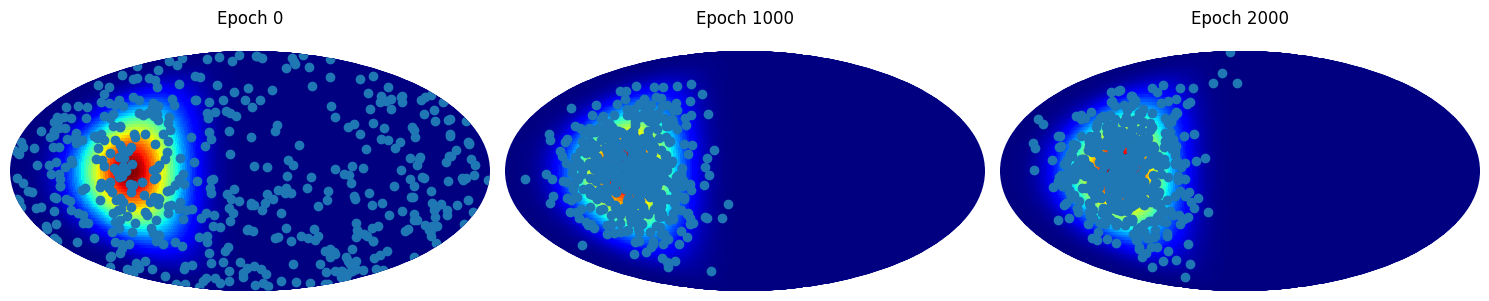}} 
    \hspace*{\fill}
    \\
    \hspace*{\fill}
    \subfloat[$RI\text{-}S3W (5)$]{\includegraphics[width=0.6\linewidth]{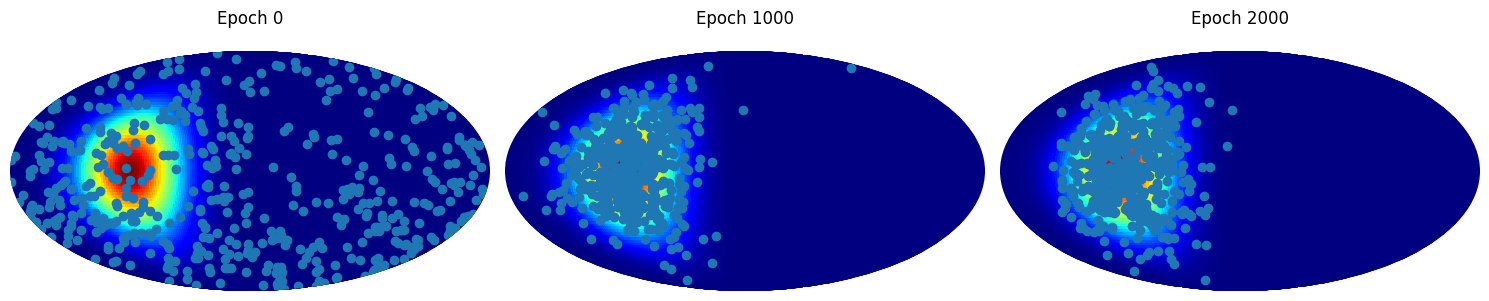}} 
    \hspace*{\fill}
    \\
    \subfloat[$RI\text{-}S3W (10)$]
    {\includegraphics[width=0.6\columnwidth]{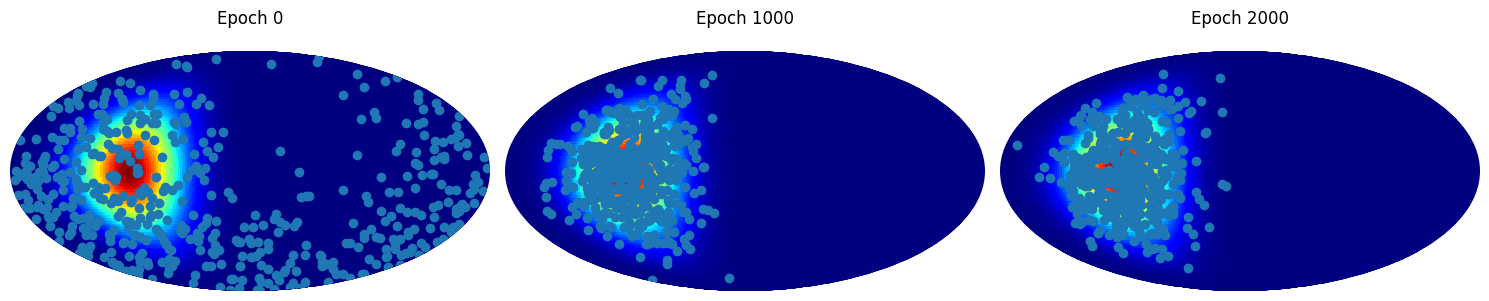}} \hfill 
    \caption{Learning a mixture of $4$ von Mises-Fisher distributions. To obtain the Mollweide projections, we perform Kernel Density Estimation with the Scott adaptive bandwidth.}
    \label{fig:rociofix}
    \label{fig:swvi_power_spherical1}
\end{figure}

\subsubsection{Experiment 2: Learning a mixture of $4$ von Mises-Fisher distributions}
\label{exp:vi_vmfs}

\textbf{Implementation.} We perform the Geodesic Langevin Algorithm (GLA) (see \ref{subsubsec:gla}) to generate samples that track the target distribution and used to guide our variational model, which is an exponential map normalizing flows $f_{\mu_T}(x;T_{exp})$ ($N=6$ blocks, $5$ components). The objective of $f_{\mu_T}$ is to transform uniform noise on the sphere into a close approximation of the target, which is a mixture of $4$ von Mises-Fisher distributions. The parametric target has the mean directions $\mu_1 = (1.5, 0.7 + \frac{\pi}{2}), \mu_2 = (1.0, -1.0 + \frac{\pi}{2}), \mu_3 = (5.0, 0.6 + \frac{\pi}{2}), \mu_4 = (4.0, -0.7 + \frac{\pi}{2})$ and uniform concentration parameters $\kappa=10$, equally weighted. The optimization is run for $10000$ iterations with $20$ GLA steps each, using the Adam optimizer (learning rate of $0.01$, matching the baseline).

\textbf{Results.} Figure \ref{fig:4vmfs} demonstrates the qualitative results for our experiments. While all models are capable of closely matching the target, it appears that $ARI\text{-}S3W$ (30), $RI\text{-}S3W$ (10), $RI\text{-}S3W$ (5) outperform $RI\text{-}S3W (1)$ , which fares better than $SSW$ and $S3W$. 

\begin{figure}[H]
    \centering
    \hspace*{\fill}
    \subfloat[Target Density]{\includegraphics[width=0.22\linewidth]{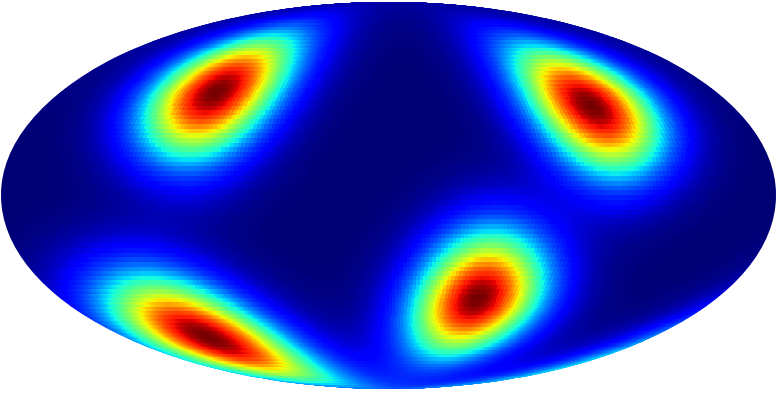}} \hspace{2mm} 
    \subfloat[Target Empirical]{\includegraphics[width=0.22\linewidth]{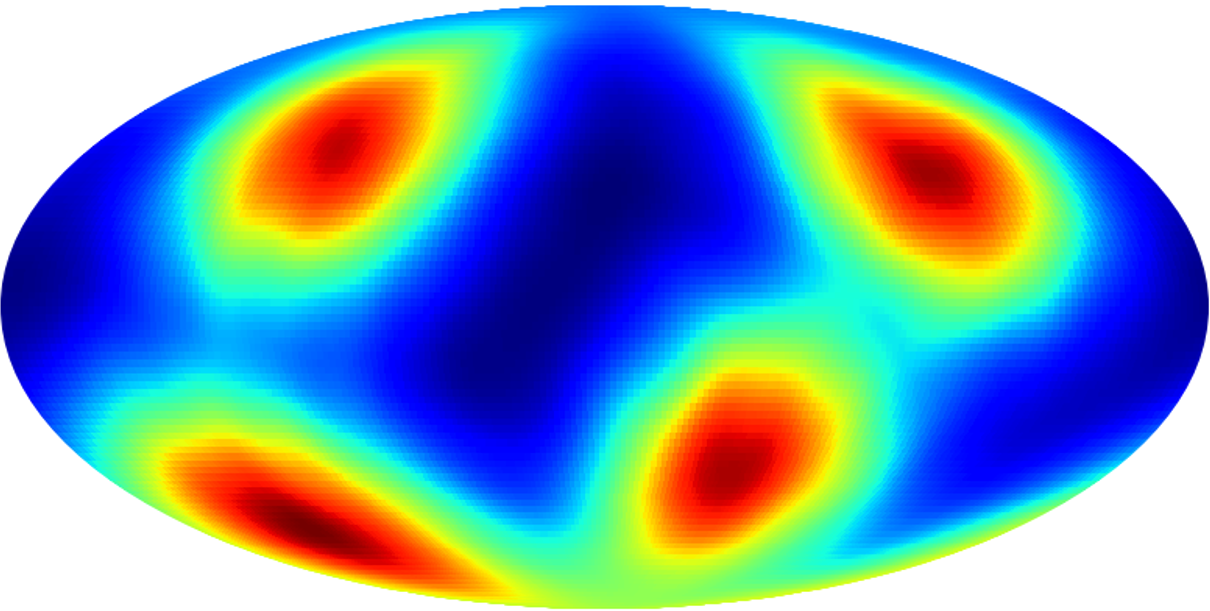}} \hspace{2mm} 
    \subfloat[$SSW$]{\includegraphics[width=0.22\columnwidth]{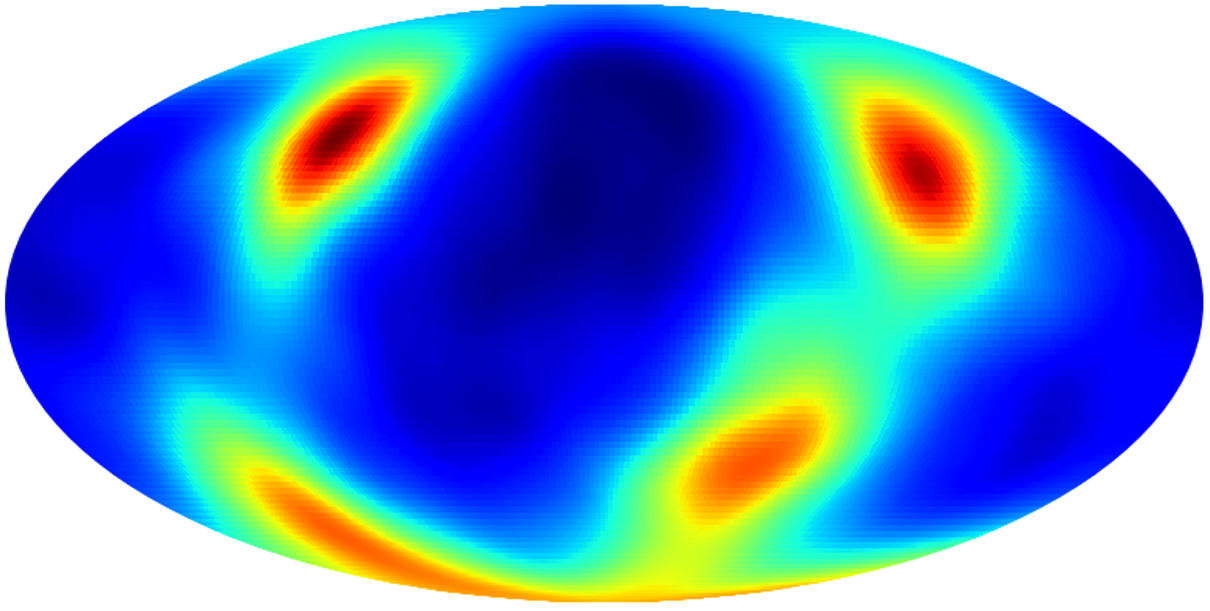}} \hspace{2mm} 
    \subfloat[$S3W$]{\includegraphics[width=0.22\columnwidth]{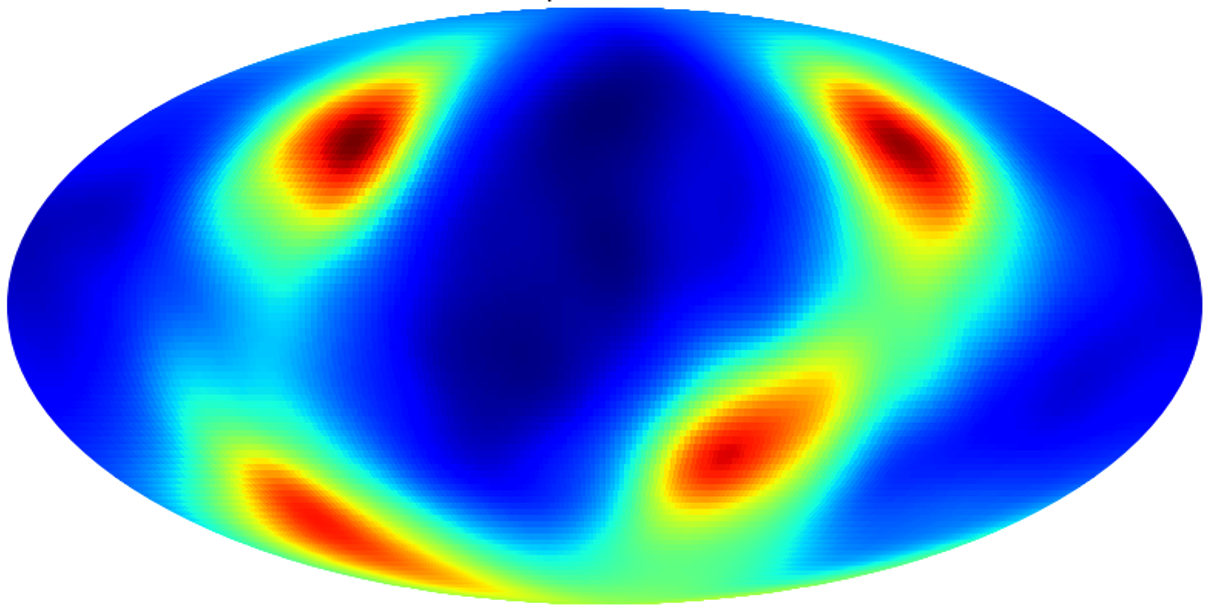}}  
    \hspace*{\fill}
    \\
    \hspace*{\fill}
    \subfloat[$RI\text{-}S3W$ ($1$)]{\includegraphics[width=0.22\columnwidth]{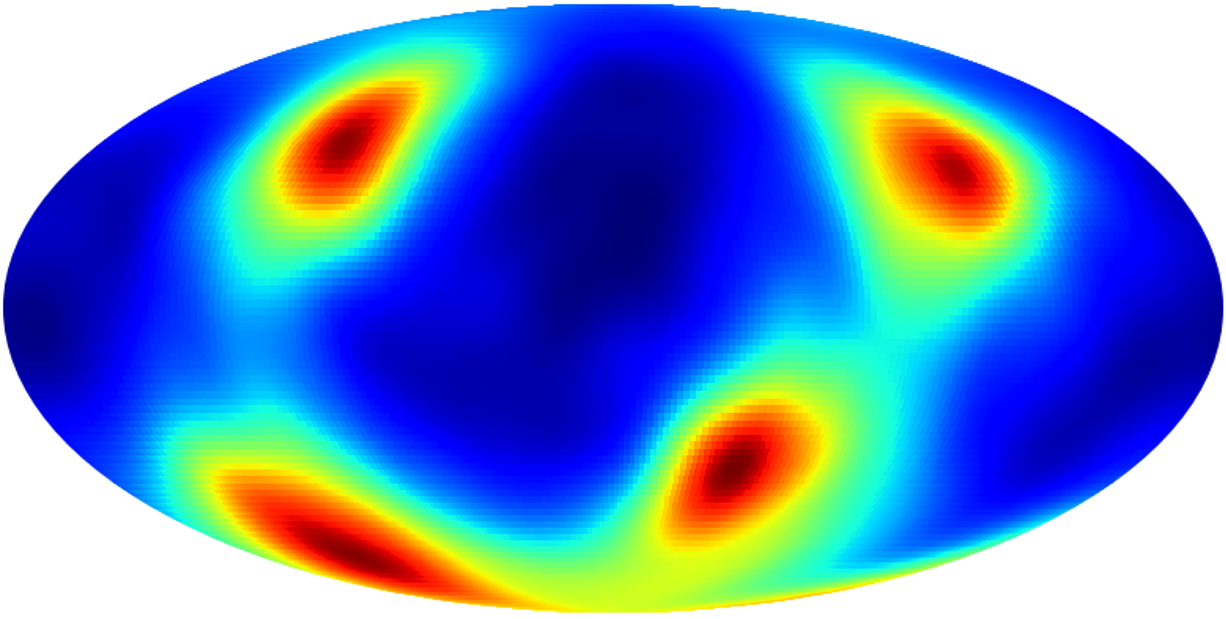}}  \hspace{2mm} 
    \subfloat[$RI\text{-}S3W$ ($5$)]
    {\includegraphics[width=0.22\columnwidth]{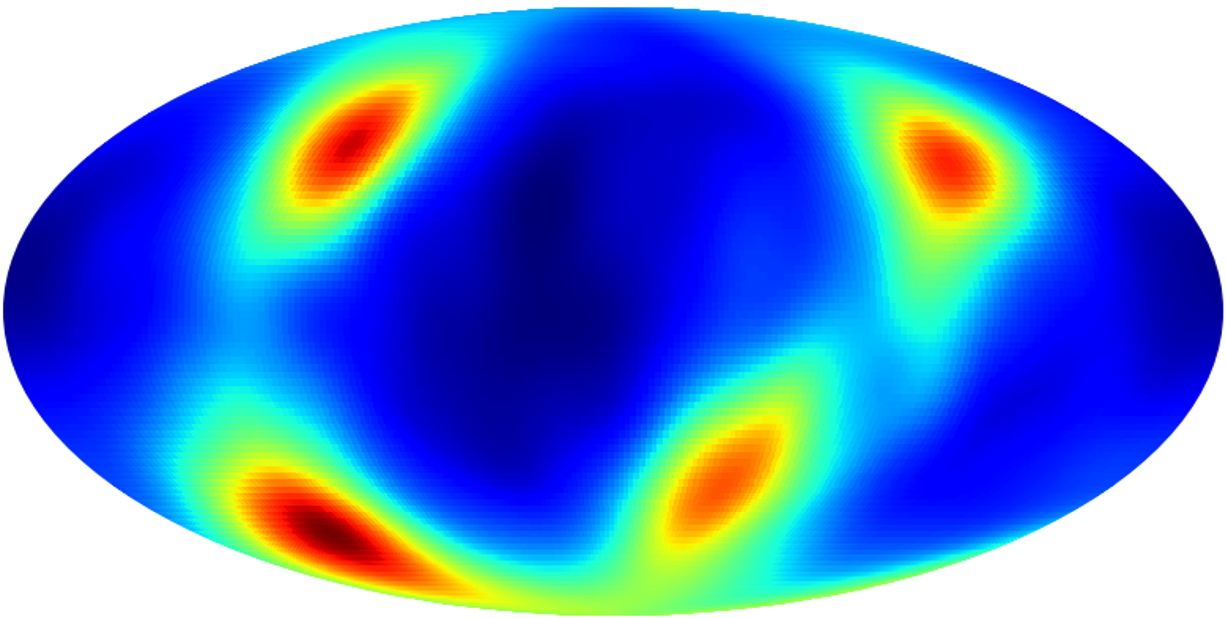}}  \hspace{2mm} 
    \subfloat[$RI\text{-}S3W$ ($10$)]{\includegraphics[width=0.22\columnwidth]{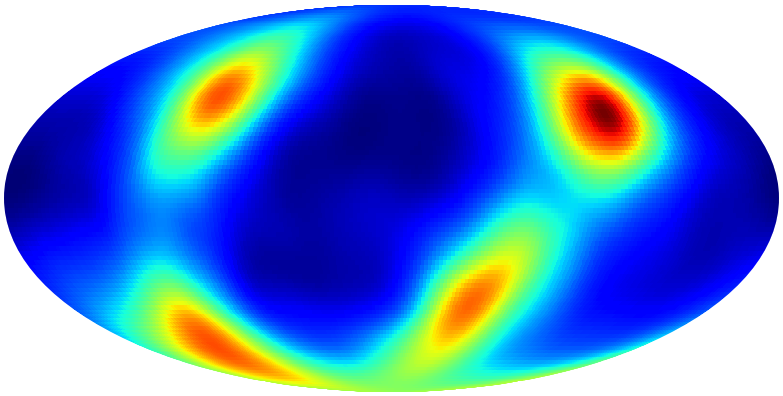}}  \hspace{2mm} 
    \subfloat[$ARI\text{-}S3W$ ($30$)]{\includegraphics[width=0.22\columnwidth]{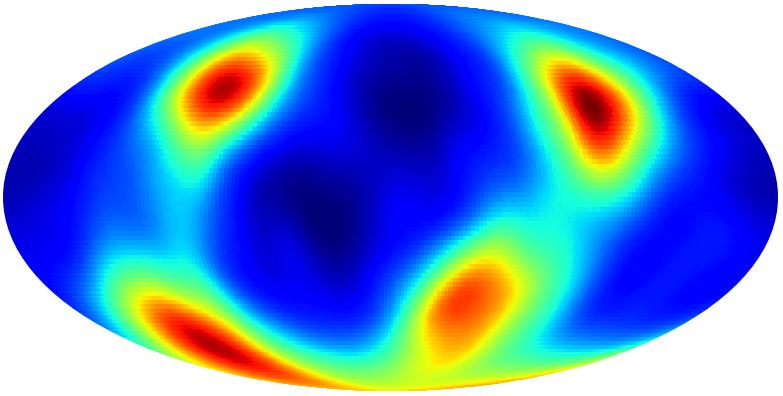}} \hspace*{\fill} 
    \caption{Learning a mixture of $4$ von Mises-Fisher distributions. To obtain the Mollweide projections, we perform KDE with the Scott adaptive bandwidth. For $ARI\text{-}S3W$, we use pool size of $1000$.}
    \label{fig:4vmfs}
    \vspace{-10mm}
\end{figure}

\begin{figure}[H]
    \centering
    \hspace*{\fill}
    \subfloat[]{\includegraphics[width=0.42\columnwidth]{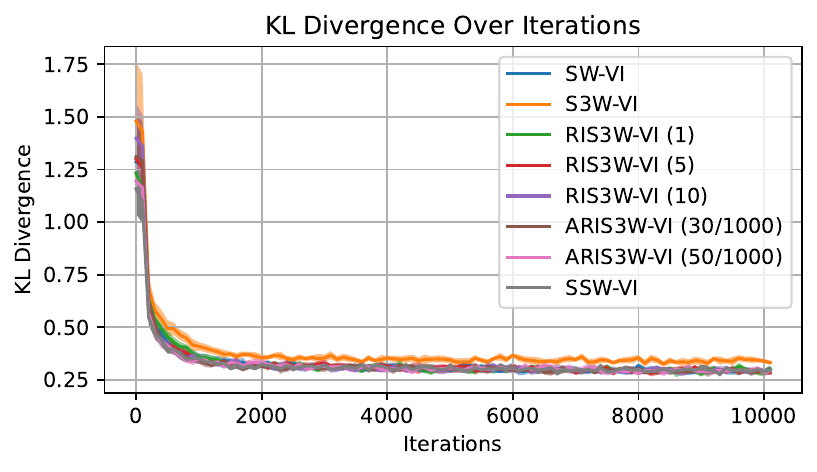}} \hfill 
    \subfloat[]{\includegraphics[width=0.42\columnwidth]{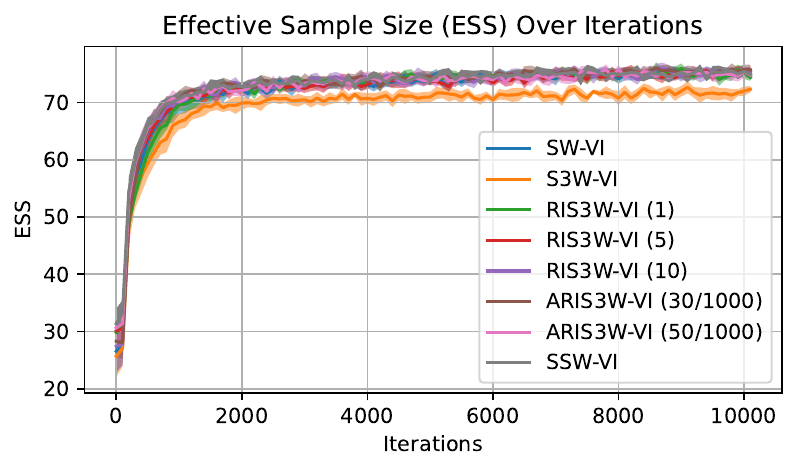}}
    \caption{Evolution between the source and target vMFs, $500$ samples each. For all sliced-Wasserstein variants, we use $200$ projections, and for $ARI\text{-}S3W$, we use the pool size of $1000$ random rotations.} \hfill
    \label{fig:kl_ess}
\end{figure}
\vspace{-5mm} 

To compare the performance of the different distances more rigorously, we use the KL Divergence and the Effective Sample Size (ESS, see \ref{background:ess}), similar to \cite{bonet2022spherical}. We perform $10$ runs per distance, and also add $ARI\text{-}S3W (50)$ for comparison. Figure \ref{fig:kl_ess} shows the convergence trend across different distances. The KL Divergence plot shows most models (with different distances) similarly and rapidly converge to approximating the target distribution. $S3W$ performs slightly worse than the rest while $SSW$ is on par with other variants of $S3W$, in contrast to our qualitative finding based on figure \ref{fig:4vmfs}. The ESS plot demonstrates sampling efficacy. Again, similar trend is observed, with $S3W$ slightly underperforming other distances. We note that greater ESS values signal more independence among drawn samples.

Additionally, we tried the learning rate of $0.001$ (not shown) for all distances and observed a slightly different trend. $SSW$ still outperforms $S3W$ and is on par with $RI\text{-}S3W (1)$, while underperforming the rest by a small margin.

\subsection{Task: Generative Modeling with Sliced-Wasserstein Autoencoder (SWAE)}
\label{section:swae}
Autoencoders (AEs) map an input \( x \in \mathbb{R}^d \) back to itself through an intermediate latent space and play a central role in discovering low-dimensional representations useful for downstream tasks. Deterministic AEs focus on point estimates in the latent space, while their probabilistic counterparts extend this concept to probability distributions. A common requirement for both is regularization to avoid the trivial solutions. SWAEs \cite{kolouri2016sliced}, a type of probabilistic AE, offer a simple yet effective approach to regularizing the latent space. In this section, we show the effectiveness of our proposed $S3W$ method when the SWAE latents are constrained to the unit sphere.

\subsubsection{Experiment: $S3W$-Based Latent Regularization Loss}

\begin{algorithm}[ht]
\caption{S3W-AE}
\label{alg:s3wae}
\begin{algorithmic}
\REQUIRE Reg. coefficient $\lambda$, $L$ projections, encoder $f_{\theta}$, decoder $g_{\eta}$, dimension $d$ of the hypersphere.

\WHILE{$f_{\theta}$ and $g_{\eta}$ have not converged}
\STATE Sample ${x_1,...,x_M}$ from the training set (i.e., $p_X$).
\STATE Encode the samples: ${z_1,...,z_M} \leftarrow {f_{\theta}(x_1),...,f_{\theta}(x_M)}$.
\STATE Normalize encoded samples to $\mathbb{S}^d$: ${\hat{z}_1,...,\hat{z}_M} \leftarrow \text{Normalize}({z_1,...,z_M})$.
\STATE Sample ${\tilde{z}_1,...,\tilde{z}_M}$ from $q_Z$ and normalize to $\mathbb{S}^d$.
\STATE Use algorithm \ref{alg:s3w} or its variants to calculate $\mathcal{L}_{\text{S3W}}^{(l)}(\{\tilde{z}_m\},\{\hat{z}_m\})$.
\STATE Compute $\hat{\mathcal{L}}_{\text{S3W}} = \frac{1}{L} \sum_{l=1}^L \mathcal{L}_{\text{S3W}}^{(l)}$.
\STATE Update $\theta$ and $\eta$ by gradient descending
\begin{equation*}
\hat{\mathcal{L}} = \sum_{m=1}^M c(x_m, g_{\eta}(\hat{z}_m)) + \lambda \cdot \hat{\mathcal{L}}_{\text{S3W}}
\end{equation*}
\ENDWHILE
\end{algorithmic}
\end{algorithm}

We perform experiments with both MNIST and CIFAR-10. The details of the architectures used in our experiments can be found in Table \ref{tab:swae_arch}. In all experiments, we train the networks with the Adam optimizer (learning rate of $10^{-3}$) with a batch size of $500$ over $100$ epochs. The procedure is described in Algorithm \ref{alg:s3wae}

For all experiments, we use the standard binary cross entropy (BCE) loss as our reconstruction loss, with a tradeoff parameter $\lambda$. For CIFAR-10, we use $\lambda = 10^{-3}$ for $SW$, $S3W$, $RI\text{-}S3W$, and $ARI\text{-}S3W$, and $\lambda = 10$ for $SSW$. We use $L=100$ projections for all sliced methods, $N_R = 5$ rotations, and a pool size of $100$ random rotations. Moreover, we use a vMF mixture with 10 components for the latent prior. These results are given in Table \ref{table:main_swae}.

\begin{wraptable}{r}{6cm} 
    \centering
    \small
    \caption{\small FID scores for SWAE models.}
    \vspace{-2mm}
    \begin{tabular}{lc}
        \hline
        Method & FID $\downarrow$ \\
        \hline
        $SW$ & 74.2301 $\pm$ 2.7727 \\
        $SSW$ & 75.0354 $\pm$ 2.8609 \\
        $S3W$ & 75.0717 $\pm$ 2.6026 \\
        $RI$-$S3W$ (1) & 73.5115 $\pm$ 3.4937 \\
        $RI$-$S3W$ (10) & 70.2262 $\pm$ 3.4730 \\
        $ARI$-$S3W$ (30) & \textbf{69.5562 $\pm$ 1.5386} \\
        \hline
    \end{tabular}
    \label{table:fid_scores}
\end{wraptable}

For MNIST, we provide a comprehensive evaluation of the considered methods as we vary the regularization parameter $\lambda$. These results can be found in Table \ref{table:swae_mnist_uniform}. In these experiments, we once again use $L=100$ projections for all sliced methods and a pool size of $100$ for $ARI$-$S3W$. Here, we use the uniform distribution $\text{Unif}(\mathbb{S}^2)$ as our latent prior. Lastly, we repeat a set of experiments for each distance using the MNIST dataset and compare the Fr\'echet Inception Distance (FID) \cite{Heusel2017GANsTB} of each resulting model in Table \ref{table:fid_scores}. We compute the FID using 10000 samples and report the mean and standard deviation over 5 independent runs. Moreover, we visualize samples generated by each model in Figure \ref{fig:swae_samples}. For these experiments, we again use $L=100$ projections for all methods and a pool size of $100$ for $ARI$-$S3W$, as well as a uniform prior on $\mathbb{S}^2$. We use $\lambda = 10^{-3}$ for all models for fair comparison. 

\begin{figure}[!h]
    \centering
    \subfloat[Autoencoder $(\lambda=1)$]{\label{fig:aep4}\includegraphics[width=0.49\columnwidth]{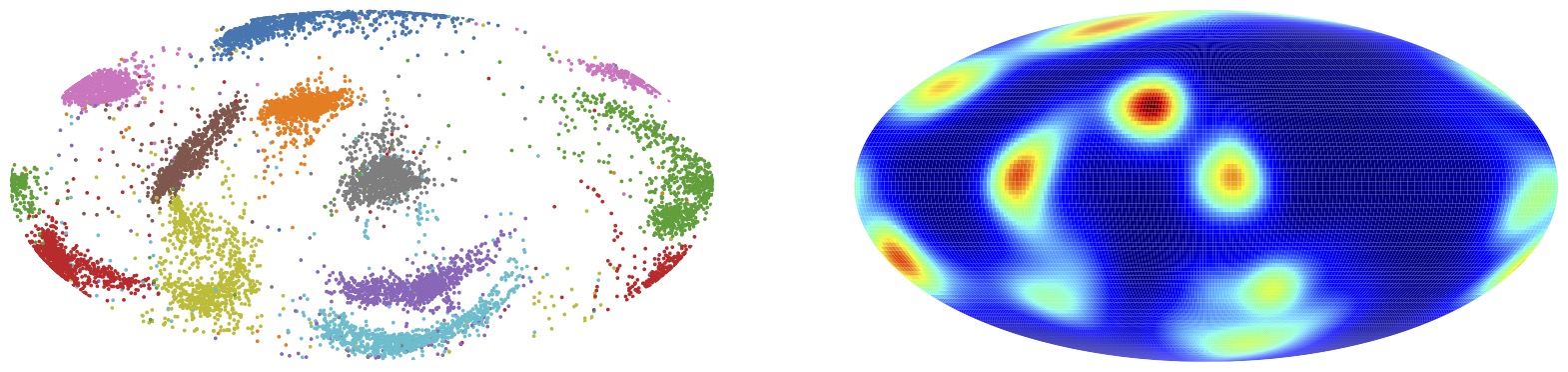}} \hfill
    \subfloat[$SW (\lambda=1000)$]{\label{fig:aep_sw}\includegraphics[width=0.49\columnwidth]{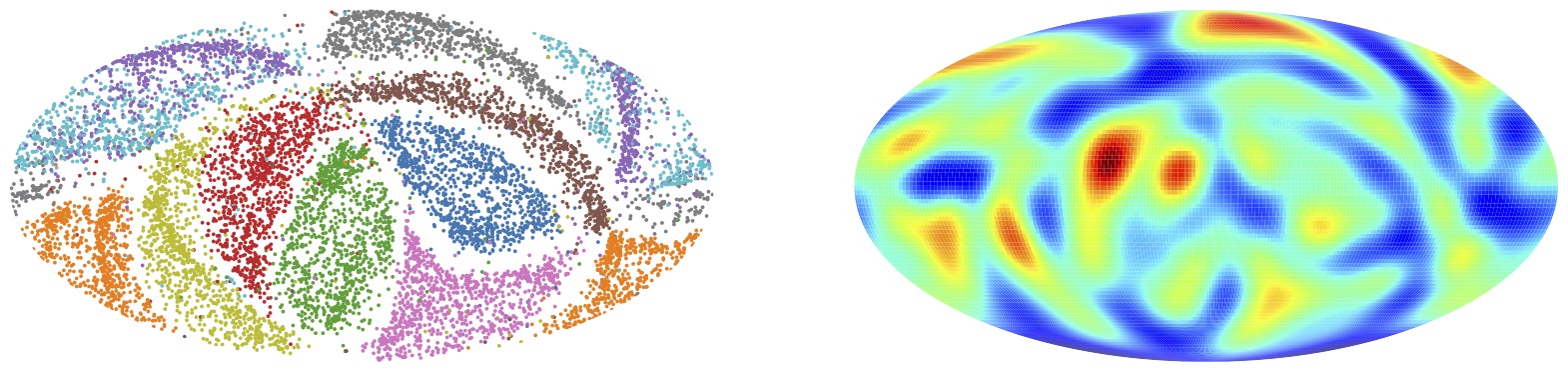}} \\
    \subfloat[$SSW (\lambda=1000)$]{\label{fig:aep_ssw_beta_1e3}\includegraphics[width=0.49\columnwidth]{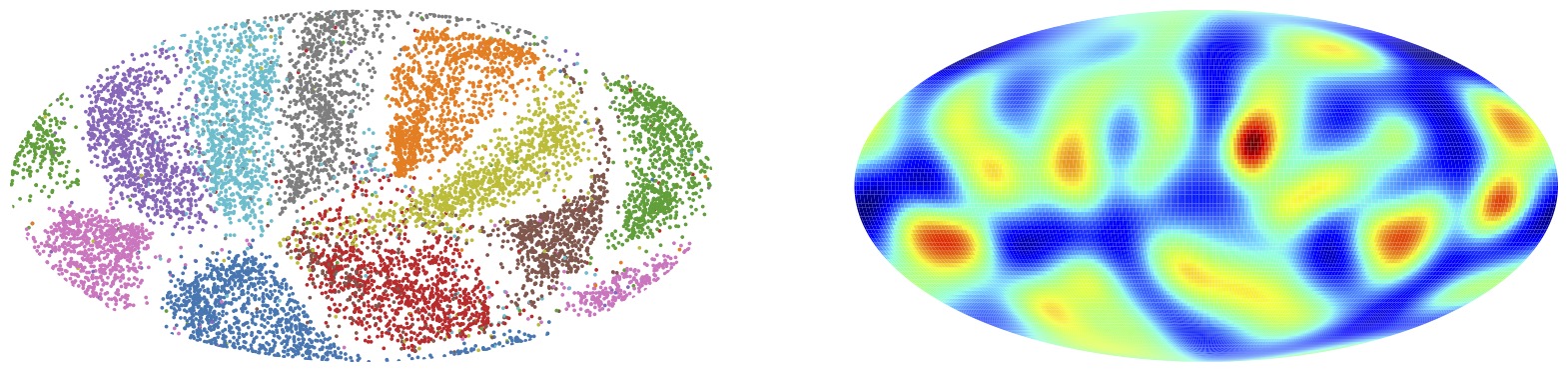}} \hfill
    \subfloat[$S3W (\lambda=1)$]{\label{fig:aep_s3wd_beta_1e0_4}\includegraphics[width=0.49\columnwidth]{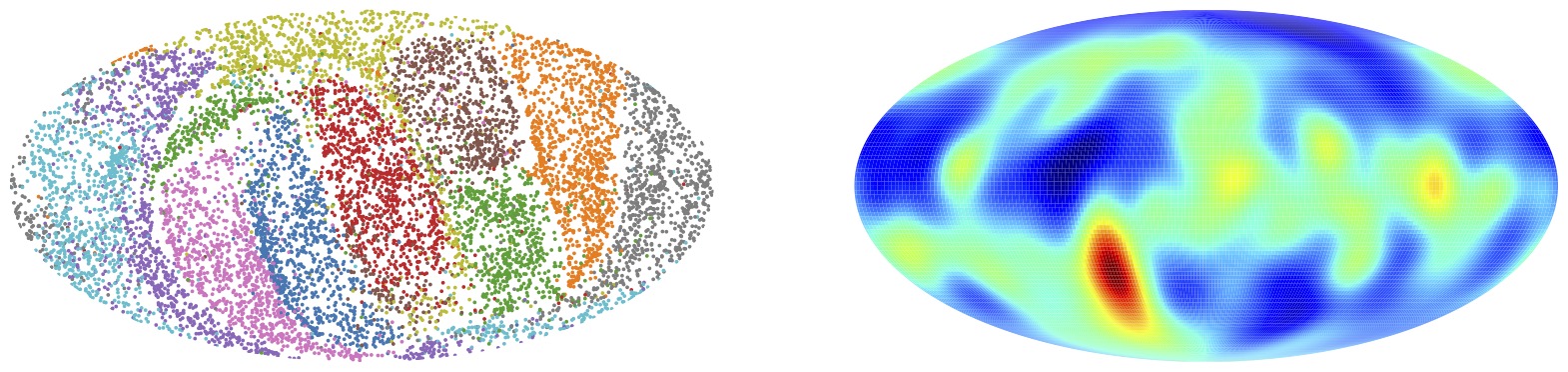}} \\
    \subfloat[$RI\text{-}S3W$ ($1$ rotations, $\lambda=1$)]{\label{fig:aep_ri1_beta_1e0_4}\includegraphics[width=0.49\columnwidth]{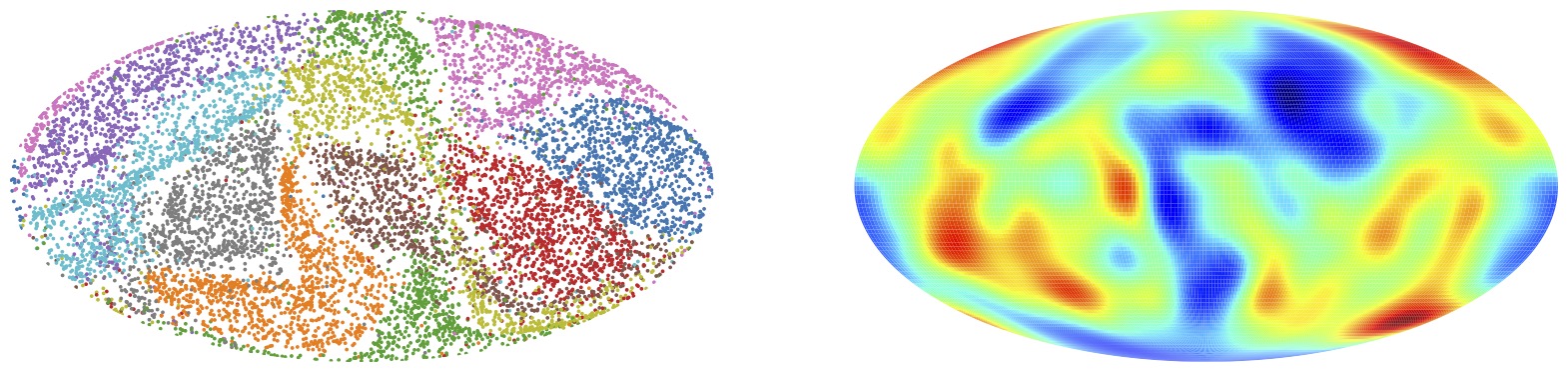}} \hfill
    \subfloat[$RI\text{-}S3W$ ($10$ rotations, $\lambda=1$)]{\label{fig:aep_ri10_beta_1e0}\includegraphics[width=0.49\columnwidth]{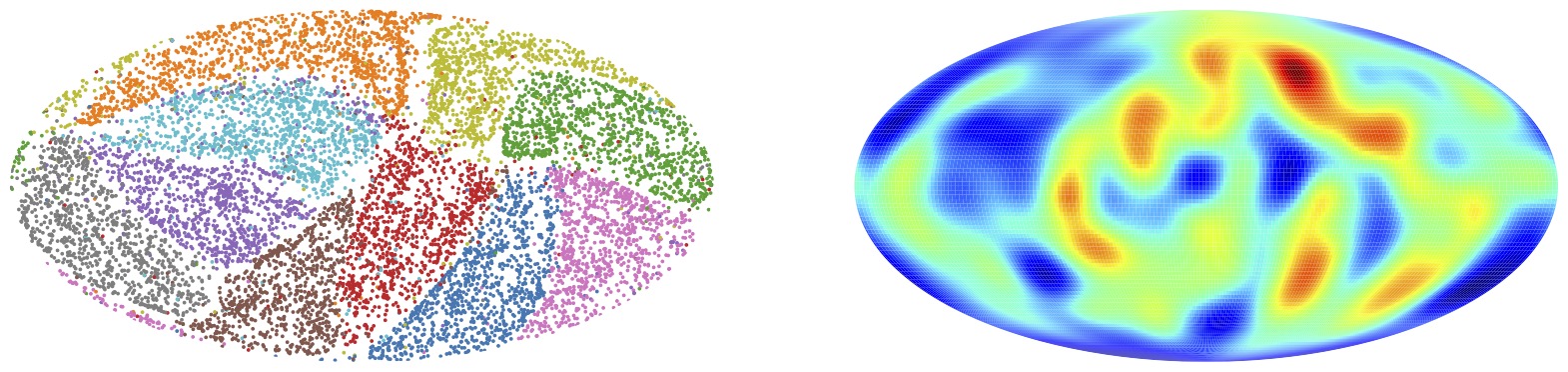}} \\
    \subfloat[$ARI\text{-}S3W$ ($10$ rotations, $\lambda=1$)]{\label{fig:aep_ari10_beta_1e0_4}\includegraphics[width=0.49\columnwidth]{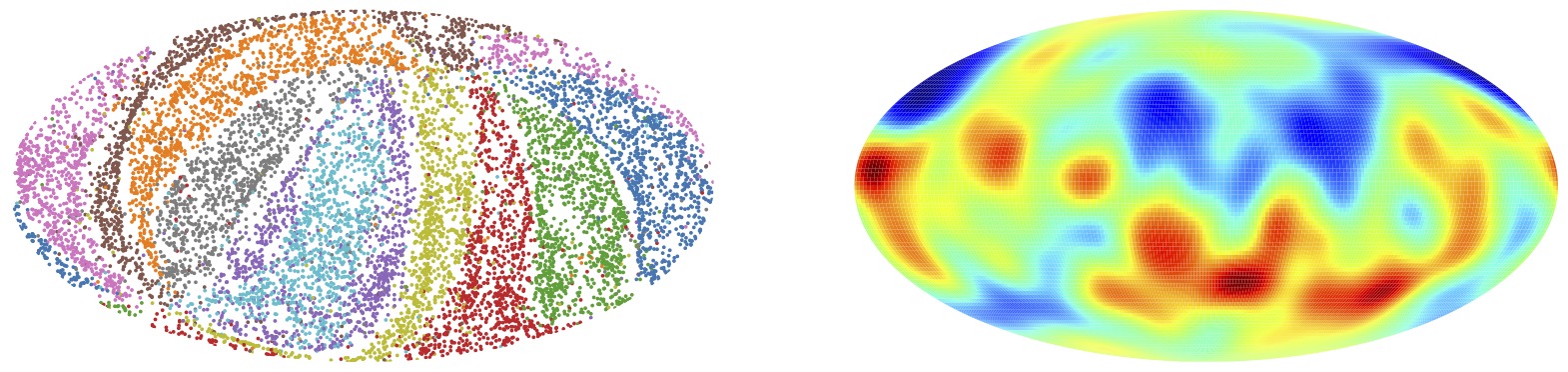}} \hfill
    \subfloat[$ARI\text{-}S3W$ ($30$ rotations, $\lambda=1$)]{\label{fig:aep_ari30_beta_1e0_1}\includegraphics[width=0.49\columnwidth]{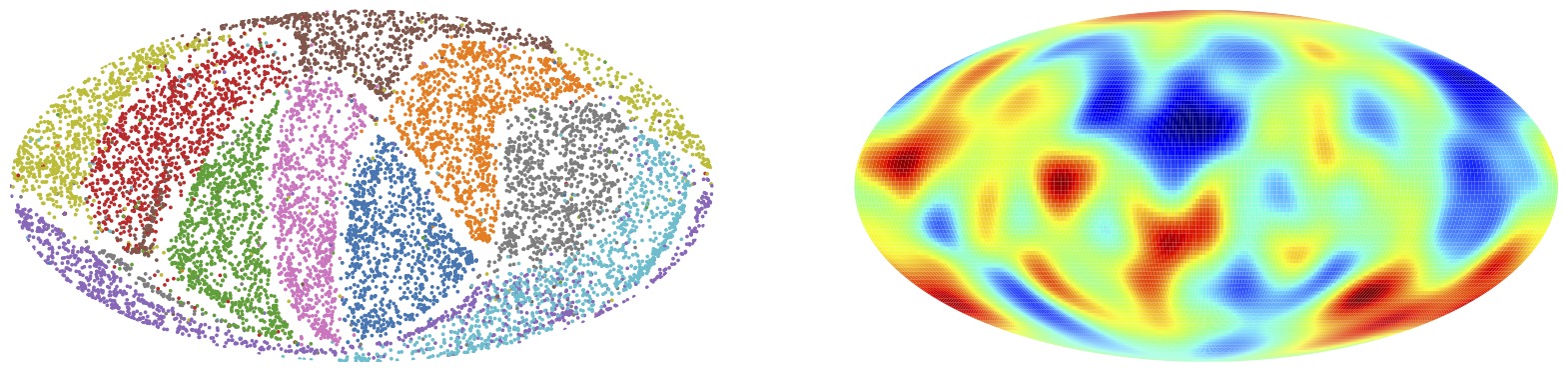}}
    \caption{Latent space visualization (MNIST).}
    \label{fig:models}
\end{figure}

\begin{table}[h]
\centering
\setlength{\tabcolsep}{5pt} 
\renewcommand{\arraystretch}{1.2} 
\begin{tabular}{|c|c|c|}
\hline
\textbf{Dataset} & \textbf{Encoder} & \textbf{Decoder} \\ \hline

\multirow{8}{*}{\rotatebox[origin=c]{90}{MNIST}} & \(x\in \mathbb{R}^{28\times 28} \to \mathrm{Conv2d}_{32} \to \mathrm{ReLU}\) & \(s\in\mathbb{S}^{2} \to \mathrm{FC}_{512} \to \mathrm{FC}_{512} \to \mathrm{ReLU}\) \\
 & \( \to \mathrm{Conv2d}_{32} \to \mathrm{ReLU}\) & \(\to \mathrm{Reshape(128 \times 2 \times 2)} \) \\
 & \( \to \mathrm{Conv2d}_{64} \to \mathrm{ReLU}\) & \(\to \mathrm{Conv2dT}_{128} \to \mathrm{ReLU}\)  \\
 & \(\to \mathrm{Conv2d}_{64} \to \mathrm{ReLU}\) & \(\to \mathrm{Conv2dT}_{64} \to \mathrm{ReLU}\)  \\
 & \(\to \mathrm{Conv2d}_{128} \to \mathrm{ReLU}\) & \(\to \mathrm{Conv2dT}_{64} \to \mathrm{ReLU}\) \\
 & \(\to \mathrm{Conv2d}_{128}\) & \(\to \mathrm{Conv2dT}_{32} \to \mathrm{ReLU}\) \\
 & \(\to \mathrm{Flatten} \to \mathrm{FC}_{512} \to \mathrm{ReLU}\) & \(\to \mathrm{Conv2dT}_{32} \to \mathrm{ReLU}\)\\
 & \(\to \mathrm{FC}_{3} \to \ell_2 \text{ normalization} \to s \in \mathbb{S}^{2}\) & \(\to \mathrm{Conv2dT}_{1} \to \mathrm{Sigmoid}\) \\
\hline

\multirow{8}{*}{\rotatebox[origin=c]{90}{CIFAR-10}} & \(x\in \mathbb{R}^{3\times 32 \times 32} \to \mathrm{Conv2d}_{32} \to \mathrm{ReLU}\) & \(s\in\mathbb{S}^{2} \to \mathrm{FC}_{512} \to \mathrm{FC}_{2048} \to \mathrm{ReLU}\) \\
 & \( \to \mathrm{Conv2d}_{32} \to \mathrm{ReLU}\) & \(\to \mathrm{Reshape(128 \times 4 \times 4)} \) \\
 & \( \to \mathrm{Conv2d}_{64} \to \mathrm{ReLU}\) & \(\to \mathrm{Conv2dT}_{128} \to \mathrm{ReLU}\)  \\
 & \(\to \mathrm{Conv2d}_{64} \to \mathrm{ReLU}\) & \(\to \mathrm{Conv2dT}_{64} \to \mathrm{ReLU}\)  \\
 & \(\to \mathrm{Conv2d}_{128} \to \mathrm{ReLU}\) & \(\to \mathrm{Conv2dT}_{64} \to \mathrm{ReLU}\) \\
 & \(\to \mathrm{Conv2d}_{128}\) & \(\to \mathrm{Conv2dT}_{32} \to \mathrm{ReLU}\) \\
 & \(\to \mathrm{Flatten} \to \mathrm{FC}_{512} \to \mathrm{ReLU}\) & \(\to \mathrm{Conv2dT}_{32} \to \mathrm{ReLU}\)\\
 & \(\to \mathrm{FC}_{3} \to \ell_2 \text{ normalization} \to s \in \mathbb{S}^{2}\) & \(\to \mathrm{Conv2dT}_{3} \to \mathrm{Sigmoid}\) \\
\hline
\end{tabular}
\caption{Architecture of the Encoder and Decoder for MNIST and CIFAR datasets.}
\label{tab:swae_arch}
\end{table}

\begin{table}[h]
\centering
\begin{minipage}{.85\linewidth}
    \caption{SWAE with different regularizations on the MNIST dataset with uniform prior for $d=2$. Here we compare among the Sliced Wasserstein ($SW$), Spherical Sliced Wasserstein ($SSW$) \cite{bonet2022spherical}, $S3W$, $RI\text{-}S3W$ ($1$ rotation), $RI\text{-}S3W$ ($10$ rotations), $ARI\text{-}S3W$ ($10$ rotations), $ARI\text{-}S3W$ ($30$ rotations, pool size of $100$). The metrics are $log_{10}(W_2)$ and Cross Entropy (CE). For $SSW$, we include a wide range of regularization parameter $\lambda=\{0.001, 0.01, 0.1, 1, 10, 100, 1000\}$ for thorough comparison. For other methods, we include either $\lambda=\{0.001, 0.01, 0.1, 1\}$ or $\lambda=\{1, 10, 100, 1000\}$.}
    \label{table:swae_mnist_uniform}
\end{minipage} 
\renewcommand{\arraystretch}{1.2}
\footnotesize
\begin{tabular}{|p{2.2cm}|>{\raggedright\arraybackslash}p{1.2cm}|>{\centering\arraybackslash}p{2.8cm}|>{\centering\arraybackslash}p{2.8cm}|>{\centering\arraybackslash}p{2.8cm}|}
\hline
\rowcolor[gray]{0.99}
Method & $\lambda$ & $\log(W_2)$ $\downarrow$ & CE $\downarrow$ & Time(s/ep.)\\ 
\hline
Supervised AE & $1$ & $-1.8438 \pm 0.1667$ & $0.1585 \pm 0.0018$ & $5.2500 \pm 0.0502$ \\ 
\hline\hline
\multirow{4}{*}{$SW$} 
& $1$ & $-1.1294 \pm 0.0102$ & $0.1619 \pm 0.0608$ & $5.2113 \pm 0.1191$ \\
& $10$ & $-1.1827 \pm 0.0096$ & $0.1652 \pm 0.0631$ & $5.4121 \pm 0.9013$ \\
& $100$ & $-1.6033 \pm 0.0148$ & $0.1641 \pm 0.0631$ & $5.7512 \pm 0.3019$ \\
& $1000$ & $-1.7856 \pm 0.0296$ & $0.1725 \pm 0.0668$ & $6.1400 \pm 0.4853$ \\
\hline\hline
\multirow{7}{*}{$SSW$} 
& $0.001$ & $-0.7585 \pm 0.0020$ & $0.1620 \pm 0.0615$ & $16.1380 \pm 1.2041$ \\ 
& $0.01$ & $-0.8523 \pm 0.0097$ & $0.1619 \pm 0.0623$ & $14.4800 \pm 1.4353$ \\
& $0.1$ & $-0.8555 \pm 0.0119$ & $0.1609 \pm 0.0610$ & $12.4125 \pm 0.0844$ \\
& $1$ & $-0.9393 \pm 0.0123$ & $0.1681 \pm 0.0647$ & $12.1478 \pm 1.462$ \\
& $10$ & $-0.9168 \pm 0.0099$ & $0.1603 \pm 0.0604$ & $13.3211 \pm 0.2393$ \\
& $100$ & $-1.1430 \pm 0.0119$ & $0.1598 \pm 0.0608$ & $12.3703 \pm 0.2108$ \\
& $1000$ & $-1.3658 \pm 0.0198$ & $0.1624 \pm 0.0616$ & $12.9221 \pm 0.1441$ \\
\hline\hline
\multirow{4}{*}{$S3W$} 
& $0.001$ & $-1.4883 \pm 0.0298$ & $0.1649 \pm 0.0634$ & $6.0980 \pm 0.1298$ \\
& $0.01$ & $-1.6289 \pm 0.0230$ & $0.2013 \pm 0.0706$ & $6.1580 \pm 0.0264$ \\
& $0.1$ & $-1.7331 \pm 0.0543$ & $0.2262 \pm 0.0643$ & $6.1620 \pm 0.0117$ \\
& $1$ & $-1.7446 \pm 0.0403$ & $0.2188 \pm 0.0669$ & $6.1400 \pm 0.0303$ \\
\hline\hline
\multirow{4}{*}{$RI\text{-}S3W (1)$} 
& $0.001$ & $-1.4061 \pm 0.0098$ & $0.1648 \pm 0.0624$ & $5.8920 \pm 0.0546$ \\
& $0.01$ & $-1.6955 \pm 0.0422$ & $0.1655 \pm 0.0422$ & $5.9120 \pm 0.0286$ \\
& $0.1$ & $-1.8754 \pm 0.0384$ & $0.2097 \pm 0.0721$ & $5.8780 \pm 0.0293$ \\
& $1$ & $-1.8308 \pm 0.0618$ & $0.2315 \pm 0.0699$ & $5.9100 \pm 0.0460$ \\
\hline\hline
\multirow{4}{*}{$RI\text{-}S3W (10)$} 
& $0.001$ & $-1.6410 \pm 0.0205$ & $0.1599 \pm 0.0613$ & $5.7860 \pm 0.4198$ \\
& $0.01$ & $-1.6724 \pm 0.0256$ & $0.1642 \pm 0.0644$ & $5.9725 \pm 0.0286$ \\
& $0.1$ & $-1.7934 \pm 0.0401$ & $0.1784 \pm 0.0679$ & $5.9900 \pm 0.0374$ \\
& $1$ & $-1.7332 \pm 0.0437$ & $0.2178 \pm 0.0718$ & $6.0040 \pm 0.0162$ \\
\hline\hline
\multirow{4}{*}{$ARI\text{-}S3W (10)$} 
& $0.001$ & $-1.5523 \pm 0.0347$ & $0.1649 \pm 0.0638$ & $6.2960 \pm 0.0372$ \\
& $0.01$ & $-1.5835 \pm 0.0311$ & $0.1572 \pm 0.0018$ & $6.3120 \pm 0.0271$ \\
& $0.1$ & $-1.6448 \pm 0.0381$ & $0.2141 \pm 0.0698$ & $6.3240 \pm 0.0242$ \\
& $1$ & $-1.7876 \pm 0.0586$ & $0.2273 \pm 0.0688$ & $6.1980 \pm 0.0970$ \\
\hline\hline
\multirow{4}{*}{$ARI\text{-}S3W (30)$} 
& $0.001$ & $-1.6453 \pm 0.0154$ & $0.1952 \pm 0.0176$ & $5.9540 \pm 0.5100$ \\
& $0.01$ & $-1.6290 \pm 0.0126$ & $0.1756 \pm 0.0647$ & $6.1600 \pm 0.0253$ \\
& $0.1$ & $-1.7889 \pm 0.0458$ & $0.2305 \pm 0.0631$ & $6.1625 \pm 0.0286$ \\
& $1$ & $-1.8890 \pm 0.0506$ & $0.2263 \pm 0.0637$ & $6.1840 \pm 0.0185$ \\
\hline
\end{tabular}
\end{table}

\begin{figure}[!h]
    \centering
    \subfloat[$SW$]{\label{fig:sample_sw}\includegraphics[width=0.2\columnwidth]{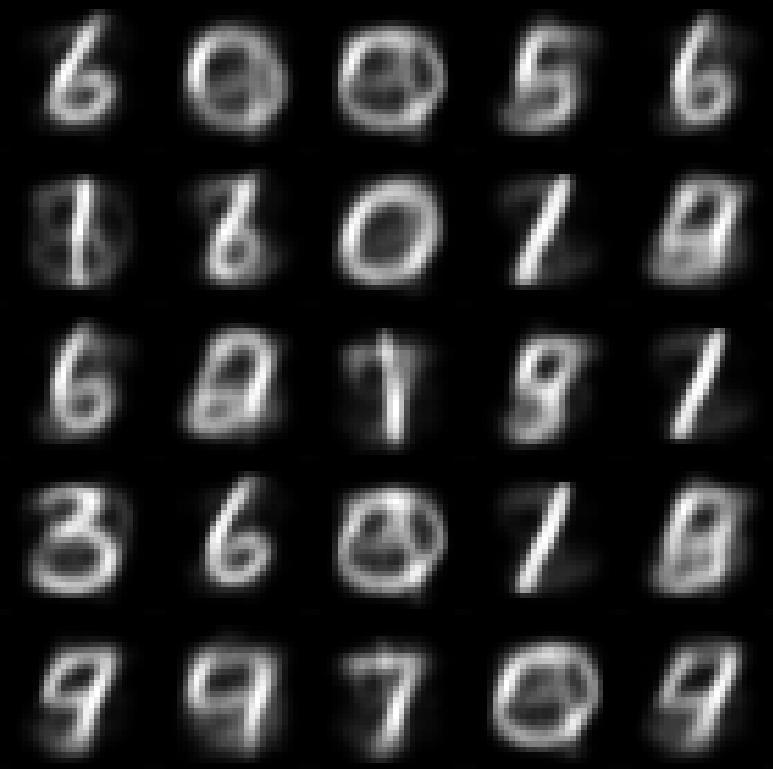}} \hspace{0.2in}
    \subfloat[$SSW$]{\label{fig:sample_ssw}\includegraphics[width=0.2\columnwidth]{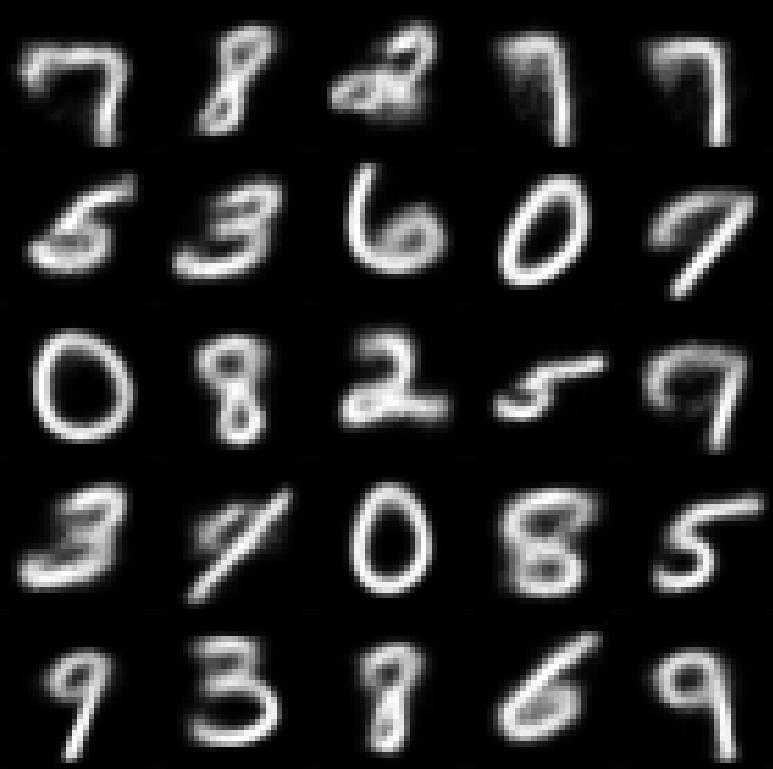}} \hspace{0.2in}
    \subfloat[$S3W$]{\label{fig:sample_s3w}\includegraphics[width=0.2\columnwidth]{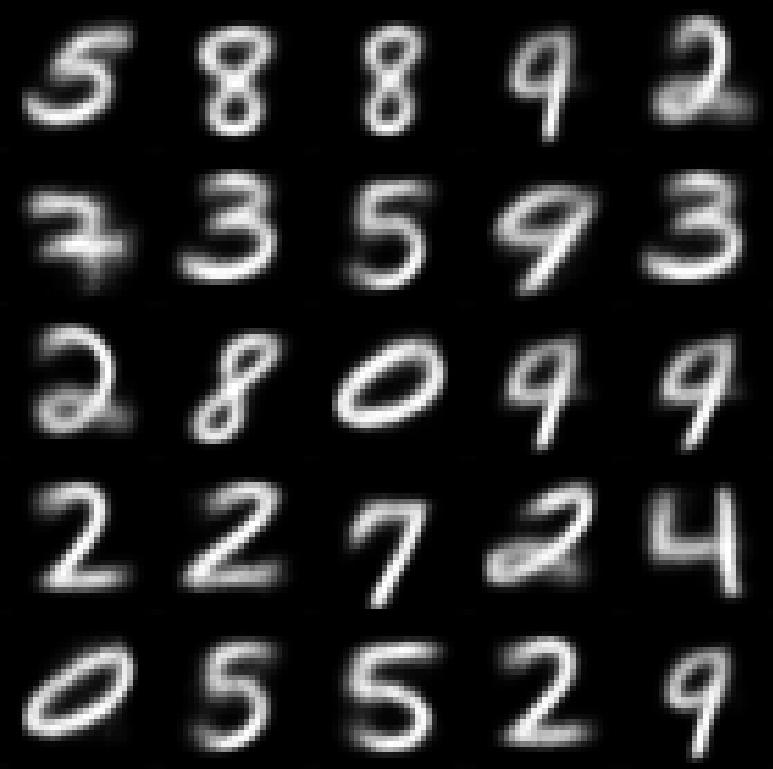}} \\
    \subfloat[$RI$-$S3W$ (1)]{\label{fig:sample_ris3w1}\includegraphics[width=0.2\columnwidth]{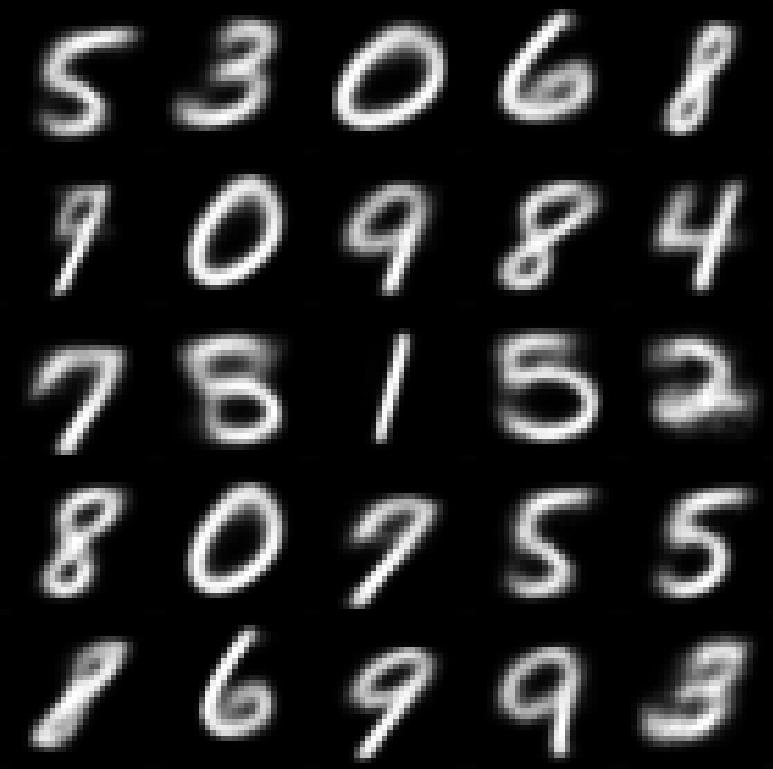}} \hspace{0.2in}
    \subfloat[$RI$-$S3W$ (10)]{\label{fig:sample_ris3w10}\includegraphics[width=0.2\columnwidth]{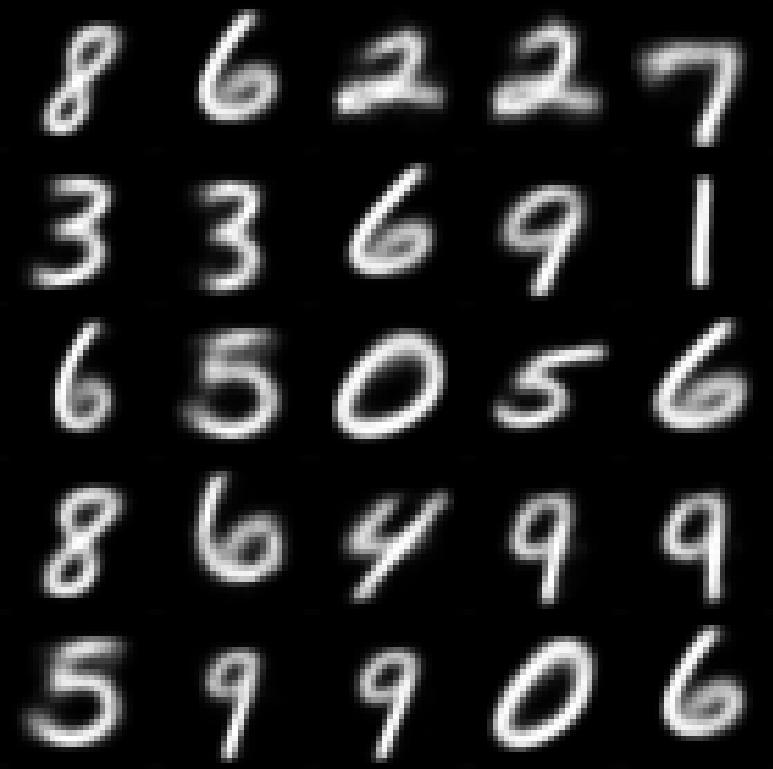}} \hspace{0.2in}
    \subfloat[$ARI$-$S3W$ (30)]{\label{fig:sample_aris3w30}\includegraphics[width=0.2\columnwidth]{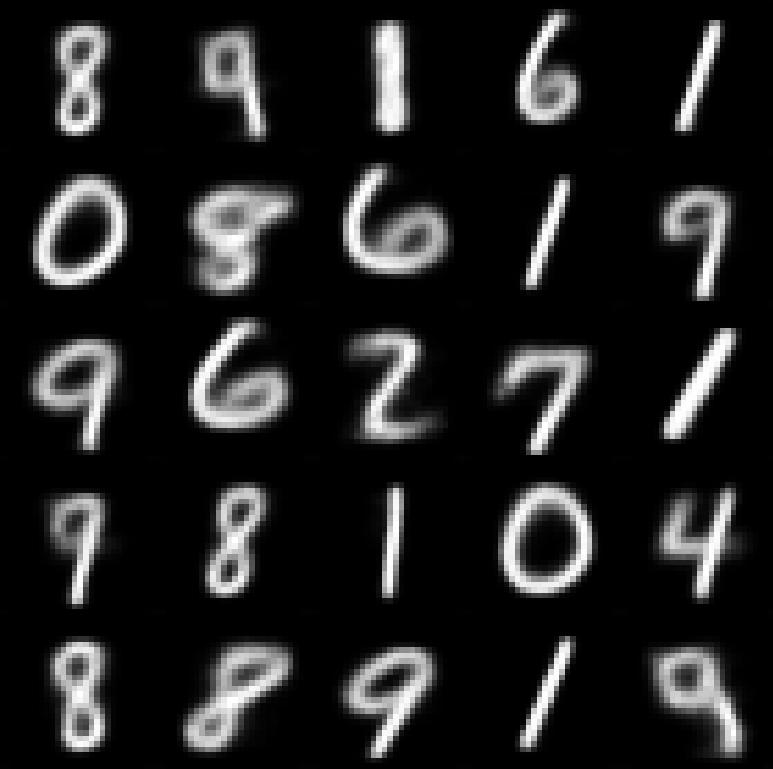}}
    \caption{Samples generated by SWAE models with a uniform prior on $\mathbb{S}^2$.}
    \label{fig:swae_samples}
\end{figure}


\end{document}